\documentclass[11pt]{article}
\usepackage{graphicx}
\usepackage{enumerate}
\usepackage{natbib}
\usepackage{url} %
\usepackage[margin=1in]{geometry}
\usepackage{booktabs,colortbl}
\usepackage{amsmath, amssymb, outlines,mathtools,subcaption,shortcuts,amsthm,xcolor}
\usepackage{enumitem}
\usepackage{multirow,nicefrac}
\usepackage[capitalise]{cleveref}
\Crefname{assumption}{Assumption}{Assumptions}

\newtheorem{theorem}{Theorem}[section]
\newtheorem{corollary}{Corollary}[section]
\newtheorem{lemma}{Lemma}[section]
\newtheorem{proposition}{Proposition}[section]
\newtheorem{assumption}{Assumption}
\newtheorem{definition}{Definition}

\newcommand{\pr}{\mathbb{P}}

\newcommand{\expect}{\mathbb{E}}

\newcommand{\expectnk}{\hat{\expect}_{k}}

\newcommand{\hexpect}{\hat{\expect}}
\newcommand{\score}{\text{SC}}

\newcommand{\na}{\mathsf{NA}}

\newcommand{\et}{e^*}

\newcommand{\eo}{e_0}
\newcommand{\he}{\hat{e}}
\newcommand{\hek}{\hat{e}_k}
\newcommand{\rt}{r^*}
\newcommand{\ro}{r_0}
\newcommand{\hr}{\hat{r}}
\newcommand{\hrk}{\hat{r}_k}
\newcommand{\mut}{\mu^*}
\newcommand{\muo}{\mu_0}
\newcommand{\hmu}{\hat{\mu}}
\newcommand{\hmuk}{\hat{\mu}_k}
\newcommand{\mutb}{{\mu}^{*2}}
\newcommand{\tmut}{\tilde{\mu}^*}
\newcommand{\tmuo}{\tilde{\mu}_0}
\newcommand{\htmu}{\hat{\tilde{\mu}}}
\newcommand{\htmuk}{\hat{\tilde{\mu}}_k}
\newcommand{\tmutb}{\tilde{\mu}^{*2}}
\newcommand{\tmu}{\tilde{\mu}}

\newcommand{\rhoe}{\rho_{N, e}}
\newcommand{\rhor}{\rho_{N, r}}
\newcommand{\rhomu}{\rho_{N, \mu}}
\newcommand{\rhotmu}{\rho_{N, \tilde{\mu}}}

\newcommand{\ON}{\overline{N}}

\allowdisplaybreaks

\newcommand*\samethanks[1][\value{footnote}]{\footnotemark[#1]}
\title{On the Role of Surrogates in the Efficient Estimation of Treatment Effects with Limited Outcome Data}
\author{Nathan Kallus$^1$\thanks{Alphabetical order.}, 
\quad Xiaojie Mao$^2$\samethanks}
\date{\small
$^1$ Cornell Tech, Cornell University, New York, NY 10044, USA; \\
$^2$ School of Economics and Management, Tsinghua University, Beijing, 100084, China.
}

\begin{document}
\maketitle

\begin{abstract}
In many experimental and observational studies, the outcome of interest is often difficult or expensive to observe, reducing effective sample sizes for estimating average treatment effects (ATEs) even when identifiable. 
{We study how incorporating data on units for which only surrogate outcomes not of primary interest are observed can increase the precision of ATE estimation.
We refrain from imposing stringent surrogacy conditions, which permit surrogates as perfect replacements for the target outcome.
Instead, we supplement the available, albeit limited, observations of the target outcome  with abundant observations of surrogate outcomes, without any assumptions beyond unconfounded treatment assignment and missingness and corresponding overlap conditions.}
To quantify the potential gains, we derive the difference in efficiency bounds on ATE estimation with and without surrogates, both when an overwhelming or comparable number of units have missing outcomes. We develop robust ATE estimation and inference methods that realize these efficiency gains. We empirically demonstrate the gains by studying long-term-earning effects of job training.
\end{abstract}

\noindent%
{\it Keywords:} 
Surrogate Observations, Causal Inference, Average Treatment Effect, Semiparametric Efficiency, Double Robustness.

\section{Introduction}
In many causal inference applications, it may be expensive, inconvenient or infeasible to measure the outcome of primary interest. 
Nevertheless, some auxiliary variables that are faster or easier to measure may be available. 
In clinical trials for AIDS treatment, the primary outcome is often mortality, which may take years of follow-up to fully reveal. 
But clinically relevant biomarkers like viral loads or CD4 counts can be measured quite rapidly \citep{fleming1994surrogate}.  
In comparative effectiveness research for long-term impact of therapies, e.g., long-term quality of life measures, many patients may drop-out so their responses are missing. But short-term outcomes, e.g., responses shortly after the therapy, may be well recorded \citep{post2010analysis}. 
In program evaluation for addiction prevention projects, accurately measuring the primary outcome, e.g., smoking behavior, may require costly chemical analysis of saliva samples for the presence of cotinine, and thus are available for only a limited number of participants. Yet self-report data are relatively inexpensive to collect \citep{pepe1992inference}. 
In offline conversion analysis, we wish to assess the effect of a digital marketing campaign on visitation to a brick-and-mortar location. And, while we can only observe visitation for individuals for whom we have cellphone geolocation data and who we can match to ad identifiers, we can observe digital ad clicks for all units.
We refer to these easy-to-obtain auxiliary variables as {surrogate outcomes} or simply {surrogates}, which are often informative about or correlate with the {primary outcome} of interest.

There has been considerable interest in using surrogates as a {replacement} for the missing primary outcome to reduce data collection costs in causal inference. For example, the U.S. Food and Drug Administration (FDA) launched the Accelerated Approval Program to allow for early approval of drugs based on clinically relevant surrogates, aiming to speed up clinical trials for drug approval \citep{FDA2016}. 
This program is spurred by the urgent need to determine the efficacy of new drugs quickly and economically. 
As stated by the National Center for Advancing Translational Sciences (NCATS) of the U.S. National Institutes of Health, many thousands of diseases known to affect humans do not have any approved treatment yet;  meanwhile, a novel drug can ``take 10 to 15 years and more than \$2 billion to develop'' \citep{NIH2019}.
Therefore, accelerating drug approval is of great value and urgency to both pharmaceutical companies and patients.
Using surrogates that can be measured more easily provides a promising way toward this goal.

However, one major challenge is that surrogates may not be perfectly indicative of the primary outcome, so a misuse may lead to severe or even disastrous consequences. 
For example, three drugs (encainide, flecainide, and moricizine) were approved by FDA based on early success of supressing ventricular arrhythmia (surrogate), but in later follow-up trials the drugs alarmingly {increased} mortality (primary outcome) \citep{fleming1996surrogate,echt1991mortality}. 
To resolve these problems, a wide variety of surrogacy criteria have been proposed to ensure that it is adequate to base causal inference solely on the surrogate without observations of the primary outcome.
 However, using these criteria to search for  valid surrogates  is still extremely challenging, since the criteria impose stringent assumptions that may 
 {often} be 
 violated in practice (see Related Literature in \Cref{sec: literature} and \Cref{sec: criteria}). For example, the popular statistical surrogacy condition \citep{prentice1989surrogate,athey2016estimating} requires the primary outcome to be conditionally independent of the treatment given surrogates, i.e., surrogates must fully explain away the dependence of the outcome on the intervention meant to affect it. 
Not only does this condition require full mediation of the treatment effect, it is also easily invalidated if there is any other common cause of both surrogates and the primary outcome, which may often be unavoidable even in ideal randomized trials (see \Cref{sec: criteria}). 
Thus this surrogacy condition and similarly many other criteria are very 
prone to violation
in practice. 

{In this paper, we refrain from imposing such surrogacy conditions. Consequently, the surrogate outcomes 
alone are insufficient as complete replacements for the target outcome.
Nonetheless, we continue to refer to them as surrogates as our proposed method does use them to predict the target outcome. Specifically, our paper views these outcomes as imperfect surrogates and uses them as {supplements}, rather than {replacements}, for the primary outcome.
}%
  We  consider {combining} surrogates with the primary outcome, and investigate how this proposal can improve the {efficiency} of treatment effect estimation. 
Such combination is possible because in practice {paired} observations of both the primary outcome and surrogates are often available for at least some units.
By incorporating a limited number of primary-outcome observations, we can avoid the aforementioned problems resulting from relying on surrogates alone, and circumvent stringent surrogacy conditions. 
Instead, we only assume standard causal inference assumptions and a typical missing data assumption that the primary outcome is {missing (conditionally) at random} (MAR), i.e., any interdependence between the primary outcome value and whether it is observed or not may be explained by other observed variables (i.e., pre-treatment covariates, treatment, surrogates).
Similar missingness conditions are also commonly assumed in previous literature that combine different datasets \cite[e.g.,][]{athey2016estimating,cheng2018efficient,zhang2019high}. 
Under only these standard assumptions, and in particular no overly restrictive surrogacy conditions,
we aim to investigate the role of surrogates in estimating treatment effects when the primary-outcome observations are limited. 

We first study the possible extent of benefit achievable by leveraging surrogate information.
Using the theory of semiparametric efficiency {\cite[e.g.,][]{bickel1993efficient,robins1995semiparametric,tsiatis2007semiparametric}},  
we derive the efficiency lower bound of estimating the average treatment effect (ATE) on the {primary} outcome (\Cref{thm: efficient-if}). This lower bound characterizes the fundamental statistical limit in estimating ATE under our assumptions, in that it is the {best} possible precision of ATE estimation that can be asymptotically achieved by {any} regular estimator.
By comparing the efficiency lower bounds both with and without the presence of surrogates, and bounds in several intermediary settings (\Cref{thm: efficiency-comparison,corollary: efficiency-loss}), we precisely quantify the efficiency gains from surrogates, namely, the benefit of surrogates in terms of allowing us to estimate treatment effects up to the {same} precision with {fewer} observations of the primary outcome. 

We find that using surrogates is particularly advantageous when (i) the primary outcome is missing for a large number of units, and (ii) the surrogates are reasonably {predictive} of the primary outcome, in that they can account for large variations of the primary outcome, but they need not determine them exactly or render them independent of treatment.
These theoretical results provide insightful guidelines for understanding when surrogates can yield significant benefits.
Moreover, we show that essentially the same efficiency lower bound (under appropriate reformulation) reigns across two different regimes: when the size of the unlabeled data is comparable to the size of the labeled data (\Cref{thm: efficient-if}), and when the former is much larger than the latter (\Cref{thm: vanishing-if,thm: normality2}). 
In the second regime, the commonly assumed overlap condition (\Cref{assump: overlap} condition \eqref{condition: overlap-2}) fails and the efficiency analysis under MAR setting becomes more challenging.
Our paper tackles this very practical setting when enormous amounts of cheap unlabeled data may be available.

We further propose an ATE estimator that can optimally leverage the efficiency gains from surrogates and achieves the efficiency lower bound. 
The proposed estimator involves some nuisance parameters that are of no intrinsic interest but need to be estimated first. By employing a cross-fitting technique \citep[e.g.,][]{chernozhukov2018double,zheng2011cross}, we can allow for any flexible machine learning estimators to be used for the nuisance parameters as long as they satisfy some generic convergence rate conditions.
We show that the proposed estimator converges to the true ATE value, even if only some but not all nuisance parameters are consistently estimated (\Cref{thm: DR}). 
If all nuisance parameters are indeed consistently estimated under generic rate conditions, then the proposed estimator is asymptotically normal centered at the true ATE value and its asymptotic variance attains the efficiency lower bound (\Cref{thm: normality,thm: normality2}).
Furthermore, we construct asymptotically valid confidence intervals based on a simple plug-in estimator for the asymptotic variance of our ATE estimator (\Cref{thm: conf-interval}). 
In summary, we propose an ATE estimator that can leverage the power of {flexible} machine learning estimators for nuisance estimation, is {robust} to nuisance estimation errors, achieves full asymptotic {efficiency} in leveraging surrogate information, and may be combined with {easy-to-use} inference.

Our paper is organized as follows. In \Cref{sec: setup}, we set up the problem of treatment effect estimation with surrogates when the primary outcome is not fully observed, and introduce our notation. 
In \Cref{sec: efficiency}, we derive the efficiency lower bound for ATE estimation in our setting, and compare it with bounds in other benchmark settings to characterize the efficiency gains from surrogates. 
We then construct an asymptotically efficient estimator and prove its asymptotic properties in \Cref{sec: estimation}. 
In \Cref{sec: extension}, we extend the efficiency and estimation results to the setting where the amount of unlabeled data is much larger than amount of labeled data. 
{In \Cref{sec: empirics}, we use our methods to study the effect of a job training intervention on earnings at a later follow up using data from a large-scale randomized controlled trial \citep{hotz2006evaluating,athey2016estimating} and we demonstrate the gains due to employing surrogates and due to our methods.}
We provide concluding remarks in \Cref{sec: conclusion}. In \Cref{sec: connection,sec: criteria,sec: more-extension}, we provide supplementary discussions about the statistical surrogacy condition, the connection of our work to some previous literature, and additional details that 
expand on our results from \Cref{sec: extension}, respectively.

\subsection{Problem Setup}\label{sec: setup}
Let $T \in \{0, 1\}$ denote a treatment indicator variable  (i.e., $T = 1$ means being treated with a therapy of interest, and $T = 0$ means control), $X  \in \mathcal{X} \subseteq \mathbb{R}^{d_x}$ denote baseline covariates  measured prior to treatment (e.g., patients' demographic characteristics and health measurements before treatment), and $Y \in \mathbb{R}$ denotes the  outcome variable of primary interest (e.g., patients' health outcome after treatment). Following the Neyman-Rubin potential outcome framework \citep{neyman1923applications,rubin2005causal}, we assume the existence of two potential outcomes $Y(1), Y(0)$ corresponding to the outcomes that would have been realized under each treatment option. We assume that the actual observed outcome is the potential outcome corresponding to the actual treatment, i.e., $Y = Y(T)$, which encapsulates the non-interference and
consistency assumptions in causal inference \citep{imbens2015causal}. Our goal is to estimate the {average treatment effect} (ATE):
\begin{align}\label{eq: ATE}
\delta^* = \xi_1^* - \xi_0^*,\quad\text{where}\quad \xi_t^*= \expect[Y(t)]~\text{for}~t=0,1. 
\end{align}
{If} we could observe $(X, T, Y)$ for all units, then we could estimate the ATE  by many existing methods \citep[e.g., ][]{imbens2015causal}.

In this paper, we consider 
a more challenging 
setting where the {primary outcome} $Y$ {cannot} be observed for all units, due to long follow-up, drop-out, budget constraints, etc. Nonetheless, we can observe for all units some {surrogates} $S \in \mathcal{S} \subseteq \mathbb{R}^{d_s}$ (i.e., intermediate outcomes) that may be informative about the primary outcome $Y$ (i.e., a long-term outcome). Since surrogates are measured after the treatment assignment, they may also be affected by the treatment. Thus we hypothesize the existence of two potential surrogate outcomes $S(1), S(0)$ analogously, and assume $S = S(T)$. We use $R \in \{0, 1\}$ to denote the indicator of whether the primary outcome $Y$ is observed. 

In summary, we can observe a {labeled} subset $\{(X_i, T_i, S_i, Y_i, R_i = 1): i \in \mathcal{I}^l\}$, and an {unlabeled} subset $\{(X_i, T_i, S_i, Y_i = \na, R_i = 0): i\textbf{} \in \mathcal{I}^u\}$, where $\na$ stands for ``not available'' (missing value), and $\mathcal{I}^l$ and $\mathcal{I}^u$ are the index sets for labeled data and unlabeled data respectively. We denote $N_l = |\mathcal{I}^l|$, $N_u = |\mathcal{I}^u|$ as the corresponding sample sizes for these two datasets, and $N = N_l + N_u$ as the total sample size. We represent the $i$th data point as $W_i = (X_i, T_i, S_i, Y_i, R_i)$, and assume that each data point in both datasets is given by coarsening an independent and identically distributed (i.i.d.) draw from a population $W^*=(X, T, S(0), S(1), Y(0), Y(1), R)$ whose distribution is characterized by a probability measure $\pr^*$. In particular, the coarsening map is given by 
\begin{equation}\label{eq:coarsening}\mathcal C:(X, T, S(0), S(1), Y(0), Y(1), R)\mapsto(X,T,S(T),R\times Y(T)+(1-R)\times(\na),R).\end{equation} We let $\pr$ denote the distribution on $W=\mathcal C(W^*)$ induced by $\pr^*$. Depending on the context, 
we may use $\E$ to denote expectation with respect to either $\pr^*$ or $\pr$. {The addition and multiplication operation involving a missing value ``$\na$'' in \Cref{eq:coarsening} can be understood as regular arithmic operations. For example, $R\times Y(T)+(1-R)\times(\na)$ equals $Y(T)$ if $R = 1$ and $\na$ if $R = 0$. }

\begin{assumption}[Unconfoundedness]\label{assump: unconfound}
For $t = 0, 1$,
\begin{align}
(Y(t), S(t)) \perp T &\mid X \label{condition: unconfound-1}.
\end{align}
\end{assumption}

\Cref{assump: unconfound} assumes that the treatment assignment is unconfounded in the combined population of labelled and unlabelled data. 
In \Cref{lemma: MAR-implication}, we will show that this condition is guaranteed if the treatment is unconfounded in {both} the labelled and unlabelled  subpopulation, {separately}, when additional missing-at-random assumptions are imposed. 
This condition requires that $X$ include all confounders that can affect the primary outcome and treatment simultaneously, or the surrogate and treatment simultaneously.
It is trivially satisfied by design in clinical trials where the treatment $T$ is assigned totally at random.

\begin{assumption}[Missing at random]\label{assump: mar-1}
For $t = 0, 1$, 
\begin{align}\label{condition: unconfound-2}
R \perp Y(t) \mid X, S(t), T.
\end{align}
\end{assumption}

In \Cref{assump: mar-1}, we assume that the primary outcome is {missing (conditionally) at random} (MAR), i.e., the indicator $R$ depends on only observed variables, including pre-treatment covariates $X$, the surrogates $S$, and the treatment $T$. 
This condition guarantees that the distribution of the primary outcome on the labeled data and unlabeled data are comparable after accounting for the observed variables, so that we can use the labeled data to infer information about the missing primary outcome in the unlabeled data.  
This condition is considerably weaker than the  {missing completely at random} (MCAR) condition typically assumed in previous semi-supervised inference literature \citep[e.g.,][]{cheng2018efficient,zhang2019high}, since MCAR does {not} allow the missingness of the primary outcome to depend on {any} other variable. 
\Cref{assump: mar-1} may be satisfied by design in a two-phase sampling scheme \citep[e.g.,][]{wang2009causal,cochran2007sampling}: in the first phase, relatively cheap measurements of $T, X, S$ are available for all units, and in the second phase, expensive measurements of the primary outcome $Y$ are collected for a validation subsample selected according to variables measured in the first phase. 
For example, we may want to oversample units who self-report no-smoking behavior for further chemical analysis, if we suspect more misreporting in this subpopulation.

We next define some important quantities for ATE estimation. 
We first define the regression function $\tmut$ of the primary outcome in the labeled dataset, conditional on treatment, covariates, and surrogates, and also the projection of $\tmut$ onto the whole population, conditional on only treatment and covariates:
\begin{align}\tmut(t, x, s) &= \expect[Y \mid T = t, X = x, S = s, R = 1] \label{eq: nuisance-mu-tilde},\\\mut(t, x) &= \expect[\tmut(T, X, S) \mid T = t, X = x].\label{eq: nuisance-mu}
\end{align}

We also define the propensity scores for treatment and labeling: 
\begin{align}\label{eq: propensity}
\begin{array}{c}
\et(x) = \pr(T = 1 \mid X = x), ~~ \et(x, s)= \pr(T = 1 \mid X = x, S = s),  \\ 
\rt(t, x, s) = \pr(R = 1 \mid T = t, X = x, S = s).
\end{array}
\end{align}
Although these quantities are useful for estimating the ATE, they are of no intrinsic interest by themselves, so we refer to them as {nuisance parameters}. We let $\eta^* = (\et, \rt, \tmut, \mut)$ be the collection of the {true} nuisances. We further assume the following overlap condition. 

\begin{assumption}[Strict Overlap]
\label{assump: overlap}
There exist $\epsilon \in (0, 1/2)$ such that almost surely we have 
\begin{align}
\epsilon &\le \et(X, S) \le 1 - \epsilon,  \label{condition: overlap-1} \\
\epsilon & \le  \rt(T, X, S) \le  1.
\label{condition: overlap-2}
\end{align}
\end{assumption}

This assumption states that units with any given values of the conditioning variables above have at least probability of $\epsilon$ to receive each treatment option, and to get their primary outcome measured.  
This overlap assumption is very common in causal inference and missing data literature \citep[e.g.,][]{imbens2015causal,little2019statistical}. 
Note condition \eqref{condition: overlap-2} implies that $\pr(R = 1) \ge \epsilon$, so the  unlabeled and labelled  data  necessarily have comparable sizes, i.e., $N_u \asymp N_l$ (unless all data is labeled).
In \Cref{sec: extension}, we will relax this condition and consider the setting where enormous cheap unlabeled data are available so that $N_u \gg N_l$. 

Below we show identification of the the  ATE parameter $\delta^*$.
\begin{lemma}\label{lemma: identification-1}
If \Cref{assump: mar-1,assump: unconfound,assump: overlap} hold, then 
\begin{align}\label{eq: identif-0}
\begin{aligned}
    \delta^* 
&= \Eb{\Eb{\Eb{Y \mid T = 1, R = 1, X, S}\mid X, T = 1}} \\
&\qquad\qquad\qquad\qquad - \Eb{\Eb{\Eb{Y \mid T = 0, R = 1, X, S}\mid X, T = 0}}.
\end{aligned}
\end{align}
\end{lemma}

In this paper, we focus on the efficient estimation of ATE  (i.e., $\delta^*$ in \cref{eq: ATE}) when the primary outcome $Y$ is missing for many units while surrogates $S$ can be fully observed for all. Notably, we only assume \Cref{assump: mar-1,assump: unconfound,assump: overlap} (and some straightforward variants) that are very typical in causal inference and missing data literature. 
In particular, we do not assume any strong surrogacy conditions such as the statistical surrogacy condition, $Y \perp T \mid S, X, R= 1$, which may impose  restrictions that can easily be violated in practice (see \Cref{sec: literature} and \Cref{sec: criteria} for more discussion).

\paragraph*{Notation.} We use $O, o, O_p, o_p$ to denote the nonstochastic and stochastic asymptotic orders, respectively. For nonstochastic sequences $a_N\geq0$ and $b_N>0$, $a_N = O(b_N)$ if $\limsup_{N \to \infty} a_N/b_N < \infty$ and $a_N = o(b_N)$ if $\lim_{N \to \infty} a_N/b_N = 0$. For a random variable sequence $Z_N$, we denote $Z_N = O_p(a_N)$ if for any positive constant $\varepsilon$, there exists a finite positive constant $M$ such that $\pr(|Z_N/a_N| > M) < \varepsilon$, and we denote $Z_N = o_p(a_N)$ if for any positive constant $\varepsilon$, $\pr(|Z_N/a_N| > \varepsilon) \to 0$ as $N \to \infty$.
We also use the notation $\asymp$ and $\gg, \ll$ for asymptotic orders (both stochastic and nonstochastic). For example, for nonstochastic asymptotic order, $a_N \asymp b_N$ if $a_N = O(b_N)$ and $b_N = O(a_N)$, $a_N \gg b_N$ if $b_N/a_N = o(1)$, and $a_N \ll b_N$ if $a_N/b_N = o(1)$.
For an appropriately measurable and integrable function $f$, we use $\|f\|$, $\|f\|_p$, $\|f\|_{\infty}$ to denote the $L_2$, $L_p$ and $L_\infty$ norms with respective to the measure $\pr$: $\|f\| = \left\{\expect\left[f^2(W)\right]\right\}^{1/2}$, $\|f\|_p = \left\{\expect\left[|f(W)|^p\right]\right\}^{1/p}$, and $\|f\|_{\infty} = \inf\{c \ge 0: \pr(|f(W)| \le c) = 1\}$.
 Throughout this paper, we use $\phantom\cdot^*$ to denote unknown population quantities like $\delta^*$ and $\eta^*$, and use $\hat{\phantom\cdot}$   denote estimators, i.e., $\hat{\delta}$.

\subsection{Related Literature}\label{sec: literature}
\paragraph*{Causal inference with surrogates.} Many different surrogate criteria have been proposed to ensure that the treatment effect  on a surrogate will reliably predict the treatment effect on the primary outcome. 
The statistical surrogacy criterion proposed by \cite{prentice1989surrogate} was the first such criterion, which requires the primary outcome to be conditionally independent of the treatment, given the surrogate. 
Since then, many other criteria have been proposed, such as the {principal surrogate} criterion \citep{frangakis2002principal}, {strong surrogate}
criterion \citep{lauritzen2004discussion}, {consistent surrogate} criterion \citep{chen2007criteria}, among many others. 
However, almost all of these criteria involve unidentifiable quantities, so they are unverifiable in practice. 
Moreover, many of them can easily run into a logical paradox  described by 
\cite{chen2007criteria}.
See \cite{vanderweele2013surrogate} for a comprehensive review of surrogate criteria and \Cref{sec: criteria} for a detailed discussion about the statistical surrogacy condition.

While the literature above mostly focus on a {single} surrogate,
\cite{price2018estimation,wang2020model} propose to estimate  transformations of multiple surrogates to optimally approximate the primary outcome using labelled experimental data.
Their optimal transformations can avoid the surrogate paradox described in \cite{chen2007criteria}. 
\cite{athey2016estimating}  consider identifying and estimating the average treatment effect with multiple surrogates in the setting where the primary outcome cannot be observed simultaneously with treatment variable and are instead observed in two separate datasets, connected only by the surrogates and covariates. 
This setting is practically very challenging, since the two datasets have no complete observations at all, with the primary outcome missing in one dataset and treatment missing in the other. 
To fuse these two incomplete datasets and have hope of relating the effect of treatments on downstream outcomes, they have to assume the {statistical surrogacy condition}, which, however, may be too strong in practice (see \cref{sec: criteria}).
\cite{athey2020combining,imbens2022long} use surrogates to combine experimental data with short-term observations and confounded observational data with long-term observations, the former using a latent unconfoundedness assumption and latter using multiple sequential surrogates as proxy variables. 
\cite{chen2021semiparametric} further study semiparametric inference of the average treatment effect in the settings of \cite{athey2016estimating,athey2020combining}. However, these works and our paper use different assumptions. In particular, these works leverage surrogates for identification under either the statistical surrogacy condition \citep{athey2016estimating} or the latent unconfoundedness condition \citep{athey2020combining}. 
In contrast, our work focuses on leveraging surrogates to improve efficiency in already-identified settings under missing-at-random assumptions. In \Cref{sec: empirics}, we consider an empirical study where the statistical surrogacy condition is very likely to fail, and the corresponding estimators in \cite{athey2016estimating,chen2021semiparametric} have high bias. In \Cref{sec: aside}, we further  compare our assumptions with the identification assumptions in \cite{athey2020combining}.

\cite{cheng2018efficient} study efficient ATE estimation when combining a small number of primary-outcome observations with many observations of the surrogates, without assuming any surrogate criteria like those mentioned above. 
Their setting is closest to ours, except that they focus on the case when the unlabeled dataset is much larger than the labeled dataset, i.e., $N_u \gg N_l$ and they assume that the primary outcome is MCAR. In contrast, our paper studies both $N_u \gg N_l$ and $N_u \asymp N_l$ and considers a more general MAR setting.
By studying both $N_u \gg N_l$ and $N_u \asymp N_l$, we discover that essentially the same efficiency lower bound governs both regimes. 
Moreover, \cite{cheng2018efficient} consider certain specialized estimators based on parametric regressions and kernel smoothing, 
while 
our proposed estimator can leverage flexible machine learning nuisance estimation.
See \Cref{sec: connection} for a more detailed comparison of our work with \cite{cheng2018efficient}.

Our paper studies a missing data setting where the primary outcome is either observed or completely missing, following many previous literature \citep[e.g., ][]{cheng2018efficient,athey2016estimating,wang2020model,price2018estimation}. This is different from the censored data setting in some surrogate literature \citep[e.g., ][]{prentice1989surrogate,lin1997estimating,ghosh2008semiparametric,parast2017evaluating}. In the latter literature, the primary outcome is typically a time-to-event outcome subject to right (or interval) censoring. So even when the primary outcome is not perfectly observed, we at least know a range of its value. Since the primary outcome is not perfectly observed, additional surrogate observations can also be beneficial. It is interesting to extend our results to this important censored data setting in the future.

\paragraph*{Semi-supervised inference.} Our paper is related to the growing body of  literature on parameter estimation and inference in the  semi-supervised setting where a small labeled dataset is enriched with a large unlabeled dataset. 
A stream of research has investigated how to the use unlabeled data to aid in the estimation of a wide variety of parameters, including regression coefficients \citep{azriel2016semi,chakrabortty2018efficient,hou2021surrogate}, population mean and average treatment effect \citep{zhang2019semi,zhang2019high,chakrabortty2022general,zhang2021double}, 
quantiles and quantile treatment effect \citep{chakrabortty2022semi,chakrabortty2022general}, etc. 
Nearly all of these literature implicitly or explicitly assume that labels are MCAR. 
Our paper relaxes this assumption by allowing the labeling process to depend on pre-treatment covariates, the treatment, and even the surrogates. Moreover, while we consider partially labelled outcomes, we also focus on the use of surrogates as a source of extra information.
Interestingly, when viewing the surrogates in our paper as empty, our results also recover results in existing semi-supervised inference literature, for example, \cite{zhang2019high}. See \Cref{sec: connection} for details.  

\paragraph*{Measurement error problems with a validation sample.} 
We can also view our problem as a measurement error problem: abundant  mismeasurements of the primary outcome (i.e., the surrogate observations) are available, while accurate measurements (i.e., the primary outcomes itself) are  observed only on a small {validation sample} (i.e., the labeled dataset). 
In similar settings, many methods have been proposed to leverage observations with measurement noise to improve the efficiency of estimating regression coefficients \citep[e.g., ][]{pepe1994auxiliary,pepe1992inference,reilly1995mean,engel1991increasing,carroll1991semiparametric,chen2000unified} or solutions to estimation equations \citep[e.g.,][]{chen2008improving,chen2003information,chen2005measurement,chen2008semiparametric}.
Some literature also cast this type of problem as a missing data problem where the variables of primary interest are missing for all units not in the validation sample \citep[e.g.,][]{yu2006revisit,chen2004semiparametric}.
Our paper builds on the missing data framework to study the efficiency of estimating treatment effects in presence of surrogates. Thus our paper is closely related to the broader literature on semiparametric inference with missing data or more general data coarsening \citep{robins1995semiparametric,robins1994estimation,vanderLaan2003,tsiatis2007semiparametric}.
In contrast to the missing data literature that commonly assume the proportion of complete observations to be bounded away from $0$, our paper allows the complete-case proportion to vanish to $0$ in order to model the setting with enormous amounts of unlabeled surrogate data.

\section{Efficiency Analysis}\label{sec: efficiency}

In this section we derive the efficiency lower bounds and efficient influence functions in a sequence of models ranging from no surrogate observations to full outcome observations on all data points, crucially including our primary setting of interest as a practical middle ground (see \cref{table: setting3}). This serves to quantify both the information gain from surrogate observations relative to no surrogate observations and the gap remaining relative to full outcome observations. 

\subsection{Efficiency Analysis in the Presence of Surrogates}\label{sec: efficiency main setting}

We first derive the semiparametric efficiency lower bound for ATE estimation in our primary setting of interest as described in \cref{sec: setup}. 

\begin{theorem}\label{thm: efficient-if}
Let $\mathcal M$ be the set of {all} distributions $\pr$ on $W$ induced by the coarsening map $\mathcal C$ in \cref{eq:coarsening} applied to any distribution $\pr^*$ on $W^*$ satisfying \cref{assump: unconfound,assump: overlap,assump: mar-1}. 
The semiparametric efficiency lower bound for $\delta^*$ under  model $\mathcal M$ is $V^* = \expect[\psi^2(W; \delta^*, \eta^*) ]$ where 
\begin{align}\label{eq: efficient-IF}
        &\psi(W; \delta^*, \eta^*) =   \mut(1, X) - \mut(0, X) - \delta^* + \frac{T - \et(X)}{\et(X)\prns{1-\et(X)}}(\tmut(T, X, S) - \mut(T , X)) \nonumber \\
 &\quad+ \frac{TR}{\et(X)\rt(1, X, S)}(Y - \tmut(1, X, S)) - \frac{(1 - T)R}{(1 - \et(X))\rt(0, X, S)}(Y - \tmut(0, X, S)) 
\end{align}
{Moreover, the efficiency bound remains the same if either of $\et$ or $\rt$ or both are known.}
\end{theorem}

\Cref{thm: efficient-if} reveals the fundamental statistical limit in estimating $\delta^*$ with surrogates under {\Cref{assump: unconfound,assump: overlap,assump: mar-1}}: for any regular estimator $\hat{\delta}$, the variance of the limiting distribution of $\sqrt{N} (\hat{\delta} - \delta^*)$ must be no smaller than $V^*$.
In other words, $V^*$ is the best possible precision we can aim to achieve asymptotically among all regular estimators. 
The function $\psi(W; \delta^*, \eta^*)$ is the efficient influence function for $\delta^*$, which will be used to construct efficient estimators for $\delta^*$ in \Cref{sec: estimation}. Notably,  we show that the efficiency bound does not change if the propensity scores are known. This is because the efficient influence function can be shown to be orthogonal to the parts of tangent space corresponding to the propensity scores. 

Notably, the efficiency bound here corresponds to the model that only assumes {\Cref{assump: unconfound,assump: overlap,assump: mar-1}}, but not any strong surrogacy condition.
To study the role of surrogates in the efficient estimation of ATE, 
we next consider the efficiency bound in a few other settings.

\subsection{Efficiency Analysis in Other Settings}\label{sec: efficiency-settings}

\begin{table}
    \begin{subtable}[t]{.24\linewidth}%
    \centering%
    \begin{tabular}{|cccc|c|}
    \hline 
        $X$ & $T$ & $S$ & $Y$ & $R$ \\
    \hline 
        \checkmark & \checkmark & ? & \checkmark & \multirow{3}{*}{$1$}   \\
        \vdots & \vdots & \vdots & \vdots &    \\
        \checkmark & \checkmark & ? & \checkmark &    \\
    \hline 
         \checkmark & \checkmark & ? & ? & \multirow{3}{*}{$0$}  \\
         \vdots & \vdots & \vdots & \vdots &   \\
         \checkmark & \checkmark & ? & ? &   \\
    \hline 
    \end{tabular}
    \caption{Setting I}\label{table: setting1}
  \end{subtable}
  \begin{subtable}[t]{.24\linewidth}%
    \centering%
    \begin{tabular}{|cccc|c|}
    \hline 
        $X$ & $T$ & $S$ & $Y$ & $R$ \\
    \hline 
        \checkmark & \checkmark & \checkmark & \checkmark & \multirow{3}{*}{$1$}   \\
        \vdots & \vdots & \vdots & \vdots &    \\
        \checkmark & \checkmark & \checkmark & \checkmark &    \\
    \hline 
         \checkmark & \checkmark & ? & ? & \multirow{3}{*}{$0$}  \\
         \vdots & \vdots & \vdots & \vdots &   \\
         \checkmark & \checkmark & ? & ? &   \\
    \hline 
    \end{tabular}
    \caption{Setting II}\label{table: setting2}
  \end{subtable}
  \begin{subtable}[t]{.24\linewidth}%
    \centering%
        \begin{tabular}{|cccc|c|}
    \hline 
        $X$ & $T$ & $S$ & $Y$ & $R$ \\
    \hline 
        \checkmark & \checkmark & \checkmark & \checkmark & \multirow{3}{*}{$1$}   \\
        \vdots & \vdots & \vdots & \vdots &    \\
        \checkmark & \checkmark & \checkmark & \checkmark &    \\
    \hline 
         \checkmark & \checkmark & \checkmark & ? & \multirow{3}{*}{$0$}  \\
         \vdots & \vdots & \vdots & \vdots &   \\
         \checkmark & \checkmark & \checkmark & ? &   \\
    \hline 
    \end{tabular}
    \caption{Setting III}\label{table: setting3}
  \end{subtable}
    \begin{subtable}[t]{.24\linewidth}%
    \centering%
    \begin{tabular}{|cccc|c|}
    \hline 
        $X$ & $T$ & $S$ & $Y$ & $R$ \\
    \hline 
        \checkmark & \checkmark & \checkmark & \checkmark & \multirow{3}{*}{$1$}   \\
        \vdots & \vdots & \vdots & \vdots &    \\
        \checkmark & \checkmark & \checkmark & \checkmark &    \\
    \hline
         \checkmark & \checkmark & \checkmark & \checkmark & \multirow{3}{*}{$0$} \\
         \vdots & \vdots & \vdots & \vdots &   \\
         \checkmark & \checkmark & \checkmark & \checkmark &   \\
    \hline 
    \end{tabular}
    \caption{Setting IV}\label{table: setting4}
  \end{subtable} 
\caption{Illustrations for the observed data in Setting I to Setting IV. Here ``\checkmark'' stands for an observed value, and ``?'' stands for a missing value. 
}\label{table: four-settings}
\end{table}

To quantify the benefit of surrogates in estimating ATE, we compare the efficiency lower bounds in following different settings. 
\begin{definition}[Four different settings]\label{def: settings}
\begin{description}
 \item[Setting I: no surrogate (\Cref{table: setting1}).] We observe $(X, T, Y)$ for $R = 1$ and observe $(X, T)$ for $R = 0$;
 \item[Setting II: surrogate only on labeled data (\Cref{table: setting2}). ] We observe $(X, T, S, Y)$ for $R = 1$ and observe $(X, T)$ for $R = 0$;
 \item[Setting III: surrogate on all data (\Cref{table: setting3}).] We observe $(X, T, S, Y)$ for $R = 1$ and observe $(X, T, S)$ for $R = 0$;
 \item[Setting IV: fully labeled data (\Cref{table: setting4}).] We observe $(X, T, S, Y)$ for all units.
 \end{description} 
\end{definition}

From setting I to setting IV in \Cref{def: settings}, more information is increasingly observed. 
 Setting I corresponds to one extreme where no surrogates are observed at all, and setting IV corresponds to the other extreme where all variables (including the primary outcome) are always completely observed. 
In the intermediate setting II, we observe surrogates only for units whose primary outcome is already observed, and setting III corresponds to our primary problem setup in \Cref{sec: setup}, where surrogates are always observed. {Note that the joint distribution of the variables $(X, T, S(1), S(0), Y(1), Y(0), S, Y)$ is taken to be the same in all four settings, even though some of these variables are not fully observed or even entirely missing in some settings. In particular, the functions $\tmut, \mut, \et, \rt$ in \Cref{eq: propensity,eq: nuisance-mu,eq: nuisance-mu-tilde} are well-defined and identical in the four settings.}

Each of these settings can be described by different choices for the coarsening map $\mathcal C$. {For example, the coarsening map for setting I is $\mathcal{C}: W^* \mapsto (X,T, \na, R \times Y (T) + (1 - R) \times (\na), R)$, and the coarsening maps for other settings can be defined analogously.} To compare the efficiency gains of the additional information in each setting, we can consider the efficiency bound corresponding to the {same} $\pr^*$ as {different} coarsening maps $\mathcal C$ are applied, each corresponding to one of the above settings.
Each map induces a model given by the distributions $\pr$ induced by all $\pr^*$ that satisfy \cref{assump: unconfound,assump: overlap,assump: mar-1}.
Crucially, we will need that in each setting we have {identifiability},
meaning that if $\pr^{*'}$ and $\pr^*$ induce the same data distributions, $\pr'=\pr$, under $\mathcal C$, then they also induce the same ATE, $\delta^*$, so that $\delta^*$ {is} a valid function of $\pr$. This ensures we are in fact considering the same estimand in each of the models. In our primary setting (i.e., setting III), restricting to $\pr^*$  satisfying \cref{assump: unconfound,assump: overlap,assump: mar-1} is enough to ensure identifiability. In settings I and II, since surrogates $S$ are not observed for some units,
we need to further assume that whether the primary outcome is observed or not, i.e., indicator variable $R$, does not depend on surrogates.

\begin{assumption}[Missing at random, cont'd]\label{assump: MAR2}
For $ t= 0, 1$, $R \perp S(t) \mid {T = t}, X$. 
\end{assumption}

With this additional assumption, the ATE parameter is identifiable in all four settings, so we can  compare the efficiency of estimating the same ATE in these different settings. 
\begin{lemma}\label{lemma: identification-2}
If \Cref{assump: unconfound,assump: overlap,assump: mar-1,assump: MAR2} all hold, then the ATE parameter $\delta^*$ is identified in all four settings in \Cref{def: settings}. 
\end{lemma}

In the following lemma, we summarize some additional implications of \Cref{assump: MAR2}.
\begin{lemma}\label{lemma: MAR-implication}
 If \Cref{assump: mar-1,assump: MAR2} hold, then \Cref{assump: unconfound} holds if and only if 
 \begin{align}\label{eq: unconfound-both-data}
  (Y(t), S(t)) \perp T \mid X, R = i, ~~ i \in \braces{0, 1}.
  \end{align}
Moreover, when \Cref{assump: MAR2} holds,  $\rt(t, x, s) =\rt(t, x) \coloneqq \pr(R = 1 \mid T  = t, X = x)$ and $\mut(t, x) = \expect[Y \mid T = t, X = x,R = 1]$. 
\end{lemma} 

In \Cref{lemma: MAR-implication}, \cref{eq: unconfound-both-data}
shows that under the missing-at-random assumptions in \Cref{assump: MAR2,assump: mar-1}, 
the treatment 
unconfoundedness over the {combined} population of the labelled and unlabelled data in \Cref{assump: unconfound} is equivalent to unconfoundedness over the two subpopulations respectively. 
Moreover, \Cref{lemma: MAR-implication} shows that \Cref{assump: MAR2} can also simplify two nuisances that appear in the efficient influence function in \Cref{thm: efficient-if}. 
This is very beneficial because the simplified nuisances are easier to estimate. 
For example, the nuisance function $\mut(t, x) = \expect[Y \mid T = t, X = x,R = 1]$ can be directly estimated by running regressions. 
In contrast, estimating the nuisance function $\mut$ in \cref{eq: nuisance-mu} requires first estimating another nuisance $\tmut$ in \cref{eq: nuisance-mu-tilde} and then further projecting the estimated nuisances.

In the following theorem, we derive efficiency lower bounds for ATE in the four settings in \Cref{def: settings}. We impose \Cref{assump: MAR2} even in settings III and IV where it is not needed for identification, else the four settings would not be comparable.

\begin{theorem}\label{thm: efficiency-comparison}
Under \Cref{assump: unconfound,assump: overlap,assump: MAR2,assump: mar-1}, the efficiency lower bounds for $\delta^*$ in setting $j$ is $V^*_{j} = \expect[\psi^2_j(W; \delta^*, \eta^*)]$ for $j = \text{I}, \dots, \text{IV}$, where 
\begin{align*}
\psi_\text{I}(W; \delta^*, \eta^*) &= \psi_\text{II}(W; \delta^*, \eta^*) = \mut(1, X) - \mut(0, X) - \delta^*  \\
&\qquad\quad + \frac{TR}{\et(X)\rt(1, X)}(Y - \mut(1, X)) - \frac{(1 - T)R}{(1 - \et(X))\rt(0, X)}(Y - \mut(0, X)), \\
\psi_\text{III}(W; \delta^*, \eta^*) 
    &= \mut(1, X) - \mut(0, X)  - \delta^*  + \frac{T - \et(X)}{\et(X)\prns{1  - \et(X)}}(\tmut(T, X, S) - \mut(T , X))
    \\&\phantom{=}+ \frac{TR}{\et(X)\rt(1, X)}(Y - \tmut(1, X, S)) - \frac{(1 - T)R}{(1 - \et(X))\rt(0, X)}(Y - \tmut(0, X, S)),  \\
\psi_\text{IV}(W; \delta^*, \eta^*) 
    &=  \mut(1, X) - \mut(0, X) - \delta^* + \frac{T - \et(X)}{\et(X)\prns{1 - \et(X)}}(Y - \mut(T, X)). 
\end{align*}
{Moreover, the efficiency bounds remain the same if either of $\et$ or $\rt$ or both are known.}
\end{theorem}

\Cref{thm: efficiency-comparison} proves that the efficiency  bound for setting III 
is identical to the bound in \Cref{thm: efficient-if}, meaning that the additional \Cref{assump: MAR2} has no impact on the efficiency bound. {This is because the \Cref{assump: MAR2} only imposes restrictions on the conditional distribution of $R$ given $S, T, X$ while 
the efficient influence function derived in \Cref{thm: efficient-if} is orthogonal to the part of tangent space corresponding to that conditional distribution (see also remarks below \Cref{thm: efficient-if}). We also prove that the efficient influence functions in other settings are also orthogonal to parts of the tangent spaces corresponding to propensity scores, so that the resulting efficiency lower bounds are again invariant to the knowledge of the propensity scores.}
Moreover, even though we have access to surrogates for at least a subset of units in setting II and IV, their efficiency lower bounds do not depend on surrogates $S$.  
This means that surrogates cannot improve the efficiency of ATE estimation if surrogates are observed only when the primary outcome is already observed.
Indeed, for units whose primary outcome is already observed, surrogates can provide no extra information for ATE, especially considering that we do not restrict the relationship between surrogates and the primary outcome at all.
In contrast, for units whose primary outcome is missing, the observed surrogates do provide extra information, because under \Cref{assump: unconfound,assump: overlap,assump: mar-1}, we can learn the relationship between surrogates and the primary outcome based on the labeled data and extrapolate it to the unlabeled data to impute the missing primary outcome.

\begin{corollary}\label{corollary: efficiency-loss}
Suppose \Cref{assump: unconfound,assump: overlap,assump: MAR2,assump: mar-1} hold. Then: 
\begin{enumerate}
\item The efficiency gain from observing the surrogates on all units is measured by 
\begin{align*}
\textstyle
V_\text{I}^* - V^*_\text{III} 
    &= \expect\bigg[\sum_{t\in\{0,1\}}\frac{1 - \rt(t, X)}{(t\et(X) + (1-t)(1-\et(X)))\rt(t, X)}\var[\tmut(t, X, S(t))\mid X]\bigg].
\end{align*}
{(Note $\var[\tmut(t, X, S(t))\mid X] = \var[\Eb{Y(t) \mid X, S(t)}\mid X]$ for $t = 0, 1$.)} 
\item The information loss due to not fully observing the primary outcome 
is measured by 
\begin{align*}
V^*_\text{III}  - V^*_\text{IV} = \expect\bigg[\sum_{t\in\{0,1\}}\frac{1 - \rt(t, X)}{(t\et(X) + (1-t)(1-\et(X)))\rt(t, X)}\var[Y(t) \mid X, S(t)]\bigg].
\end{align*}
\item Observing additional surrogates on the labeled data alone provides no improvement, that is, $V_\text{I}^* = V^*_\text{II}$.
\end{enumerate}
\end{corollary}

\Cref{corollary: efficiency-loss} quantifies the  optimal efficiency gain from surrogates, and the efficiency gap to the ideal setting where the primary outcome is fully observed. 
It shows that the efficiency benefits of surrogates depend on two factors: the {predictiveness} of the surrogates with respect to the primary outcome 
and the extent of {missingness} of the primary outcome.

\textbf{Predictiveness of the surrogates.} The efficiency gain due to surrogates (i.e., $V_\text{I}^* - V^*_\text{III}$) positively depends on the term {$\var\left[\tmut(t, X, S(t))\mid X\right] = \var[\Eb{Y(t) \mid X, S(t)}\mid X]$ for $t \in \{0, 1\}$ (see \Cref{lemma: mu-tilde-fun}). 
This means that the efficiency gain due to surrogates depends on the variations of the primary outcome that can be explained by surrogates but {not} by pre-treatment covariates.} Similarly, the efficiency loss compared to the ideal setting (i.e., $V^*_\text{III}  - V^*_\text{IV}$) positively depends on $\var\left[Y(t) \mid X, S(t)\right]$ for $t = 0, 1$, i.e., the residual variations of the primary outcome that cannot be explained by either the surrogates or the pre-treatment covariates. 
This means that the more predictive the surrogates are, the more efficiency improvement can be achieved by leveraging the surrogates (i.e., larger $V_\text{I}^* - V^*_\text{III}$), and the closer the efficiency bound is to the ideal limit with fully observed primary outcome (i.e., smaller $V_\text{III}^* - V^*_\text{IV}$).   
At one extreme, if $\var\left[Y(t)\mid X,S(t)\right]=0$, i.e., outcomes are given by an (unknown) deterministic function of surrogates and covariates, then observing surrogates is equivalent to observing the primary outcome, and thus $V_\text{III}^* = V^*_\text{IV}$. At the other extreme, if $Y(t)\perp S(t)\mid X$, then surrogates have no predictive power at all and we have $\var[\tmut(t, X, S(t))\mid X]=0$, so there is no benefit to observing surrogates and $V_\text{III}^* = V^*_\text{I}$. In between these extremes, we have $V_\text{I}^*<V_\text{III}^* < V^*_\text{IV}$.

\textbf{Missingness of the primary outcome.} 
{Both quantities in \Cref{corollary: efficiency-loss} increase with the odds of not labeling the outcome, \ie,  $(1-\rt(1, X))/\rt(1, X)$ and $(1-\rt(0, X))/\rt(0, X)$, so they decrease with the labeling propensity scores $\rt(1, X)$ and $\rt(0, X)$.}
This means that when the primary outcome is less missing (i.e., overall higher labeling propensity scores), the efficiency gains from additionally observing surrogates or the primary outcome both decrease. Indeed, if the primary outcome is already observed for most of the units, then the room for extra efficiency gain from observing surrogates (or, from observing more primary outcomes, for that matter) is small.  

The efficiency analysis above provides important guidelines on when leveraging surrogates can improve the efficiency of ATE estimation. It shows that surrogates are particularly beneficial for ATE estimation when (1) surrogates can account for large variations of the primary outcome that cannot be explained by the pre-treatment covariates, and (2) the primary outcome for a large number of units is missing. 

{In \Cref{sec: missing-pattern}, we further extend the analyses of this subsection to allow for  additional missingness patterns. Specifically, in the setting II with partially observed surrogates, the missingness patterns for the surrogates and the primary outcome are identical. In \Cref{sec: missing-pattern}, we further allow the number of surrogate observations to be anywhere between those in setting I and setting II or between those in setting II and setting III.

\subsection{Aside: Other Target Populations}\label{sec: aside}

In the above we considered our estimand to be the ATE on the whole population described by $\pr^*$. 
To identify this estimand, we require the  treatment assignment to be  unconfounded on the whole population (\Cref{assump: unconfound}). 
According to \Cref{lemma: MAR-implication}, under the missing-at-random assumptions in \Cref{assump: MAR2,assump: mar-1}, the whole-population unconfoundedness \Cref{assump: unconfound} amounts to unconfoundedness on both the labelled ($R = 1$) and unlabelled ($R = 0$) subpopulations, separately. 

We could easily consider the ATE on other target populations, for example, $\expect[Y(1)-Y(0)\mid R=i]$ for either $i=0,1$, that is, the ATE on the unlabeled  or labeled  subpopulation. We may be interested in $\delta_1^* = \expect[Y(1)-Y(0)\mid R=1]$ if, for example, the unlabeled data is collected from an auxiliary source to augment a small study already involving the population of interest. 
Or, we may be interested in $\delta_0^* = \expect[Y(1)-Y(0)\mid R=0]$ if the unlabelled data are easier to collect and more representative of the population of interest. 
For these alternative estimands, we only need the treatment assignment to be unconfounded for the target population of interest.

When the target is the ATE on the labelled population $\delta_1^*$, we only need  unconfoundedness on the labelled population, \ie,  $Y(t) \perp T \mid X, R = 1$.
Moreover, we only need strict overlap assumption on the treatment assignment (namely, \Cref{condition: overlap-1} in \Cref{assump: overlap}). 
In particular, 
the missing-at-random assumptions in \Cref{assump: MAR2,assump: mar-1} are not required. 
In this case, the surrogates are not useful since we already fully observe the primary outcome in the labelled population of interest. 
The semiparametric efficiency analysis of $\delta_1^*$ immediately follow from restricting the analysis in \cite{hahn1998role} to the labelled subpopulation. 

When the target is the ATE on the unlabelled population $\delta_0^*$, we only need unconfoundedness on the unlabelled population.  
In the following theorem, we show the identification of $\delta^*_0$ and its semiparametric efficiency bound. 

\begin{theorem}\label{thm: effect-unlabelled}
If $\prns{Y(t), S(t)} \perp T \mid X, R = 0$ and \Cref{assump: mar-1,assump: overlap} hold, then 
\begin{align}\label{eq: effect-unlabelled}
\begin{aligned}
\delta_0^*  &= \Eb{\Eb{\Eb{Y \mid S, X, T = 1, R = 1} \mid X, T = 1, R = 0} \mid R = 0} \\
&\quad\quad - \Eb{\Eb{\Eb{Y \mid S, X, T = 0, R = 1} \mid X, T=0, R = 0} \mid R = 0}.
\end{aligned}
\end{align}
The corresponding semiparametric efficiency bound is $V_0^* = \Eb{\psi^2_0(W; \delta_0^*)}$, where 
\begin{align*}
\textstyle
\psi_0(W; \delta_0^*) 
   &= \frac{1-R}{\Prb{R=0}} \prns{\mut_0(1, X) - \mut_0(0, X) - \delta_0^*} \\
   &+ \frac{1-R}{\Prb{R=0}}\frac{T-\et(0,X)}{\et(0, X)(1-\et(0, X))}\prns{\tmut(T, X, S) - \mut_0(T, X)} \\
&+\frac{R}{\Prb{R=0}}\frac{\Prb{R=0\mid S, X, T}}{\Prb{R=1\mid S, X, T}}{
\frac{T-\et(0,X)}{\et(0, X)(1-\et(0, X))}\prns{Y - \tmut(T, X, S)}},
\end{align*}
and  
$\mut_0(t, x)= \expect[\tmut(T, X, S) \mid X = x, T = t, R = 0]$, $\et(0, X) = \Prb{T=1\mid R=0, X}$.
\end{theorem}

Based on the efficient influence function in \Cref{thm: effect-unlabelled}, we can easily adapt our estimation method in \Cref{sec: estimation} to construct efficient estimators for the parameter $\delta_0^*$. 

{Below, we show that the conclusions in  \Cref{thm: effect-unlabelled} also hold under an alternative set of identification assumptions. We then relate these assumptions to those in \cite{athey2020combining}.} 
{
\begin{proposition}\label{prop: latent-unconf}
If $Y(t) \perp T \mid X, S(t), R = 1$, $S(t) \perp T \mid X, R = 0$ and $Y(t) \perp R \mid X, S(t)$ and \Cref{assump: overlap}, then the conclusions in \Cref{thm: effect-unlabelled} still hold. 
\end{proposition}
}

{
In contrast to \Cref{thm: effect-unlabelled}, which assumes a full unconfoundedness assumption on only the unlabeled subpopulation, 
\Cref{prop: latent-unconf} assumes two conditions related to the confoundedness on the labeled and unlabeled sub-populations separately. These alternative assumption are closely related to the assumptions in \cite{athey2020combining} that combine experimental and observational data to estimate long term treatment effects. 
In their setting, the observational data record observations of both short-term and long-term outcomes, but the experimental data record observations of only short-term outcomes. 
We can re-interpret our labeled ($R = 1$) data as their observational data, our unlabeled data ($R = 0$) as their experimental data, and our surrogates $S$ and primary outcome $Y$ as their short-term and long-term outcomes, respectively. Then the assumptions in \Cref{prop: latent-unconf} recover the identification assumptions in \cite{athey2020combining}. Specifically, the condition $Y(t) \perp T \mid X, S(t), R = 1$ corresponds to their ``latent unconfoundedness'' condition.  
The condition $S(t) \perp T \mid X, R = 0$ corresponds to their unconfoundedness condition on the experimental data, and the condition $Y(t) \perp R \mid X, S(t)$ corresponds to their  external validity condition. 
As a result, \Cref{thm: effect-unlabelled} also applies to the average treatment effect over the experimental population in the setting of \cite{athey2020combining}. This efficiency analysis for the problem of \cite{athey2020combining} matches that in the Online Causal Inference Seminar discussion of that paper by the present authors\footnote{\url{https://sites.google.com/view/ocis/past-talks/summer-2021-talks}} and the subsequent \cite{chen2021semiparametric} 
(see their Theorem B.2). 
}

{
In this paper, we are more interested in the conditions in \Cref{thm: effect-unlabelled} than the setting of \cite{athey2020combining}, since our aim is to study the efficiency gains from surrogate observations when the treatment is unconfounded and identification is not the primary problem. 
Although the assumptions in  \Cref{prop: latent-unconf} happen to justify the same identification formula and  they can be re-interpreted as conditions in \cite{athey2020combining}, they are not the focus of our paper. 
Specifically, \cite{athey2020combining} need the short-term outcomes to achieve identification with a confounded observational study. In contrast, we directly assume unconfoundedness on the labeled population, so we do not even need surrogates to achieve identification, but instead consider surrogates from an efficiency perspective. 
}

\section{Treatment Effect Estimator}\label{sec: estimation}

In this section we develop our treatment effect estimator that efficiently leverages surrogates and achieves the bound derived in \cref{sec: efficiency main setting}.
We first show how to construct the estimator in \Cref{sec: est-construction}. Then we establish the asymptotic guarantees for our estimator in \Cref{sec: est-asymptotic}.

\subsection{Constructing an Efficient Estimator}\label{sec: est-construction}

Our analysis in \cref{sec: efficiency main setting} not only provides the efficiency bound for ATE estimation, but also guides us directly in the construction of an efficient estimator.
In particular, \cref{thm: efficient-if} suggests one {hypothetical} estimator {if} the nuisance parameters $\eta^* = (\et, \rt, \tmut, \mut)$ were known: specifically, the efficient influence function itself in \cref{eq: efficient-IF} gives the estimator $\hat\delta_0$ that solves the following estimating equation: 
\begin{align}\label{eq: infeasible-est}
\frac{1}{N}\sum_{i = 1}^N \psi(W_i; \hat\delta_0, \eta^*) = 0.
\end{align}
It is then easy to verify by the Central Limit Theorem that 
$\sqrt{N}(\hat{\delta}_0 - \delta^*) \overset{d}{\to} \mathcal{N}(0, V^*)$, which validates the efficiency of $\hat\delta_0$.

However, in practice, we do {not} know the nuisance parameters, so the estimator $\hat{\delta}_0$ is {infeasible}. Instead, our approach will be to construct some nuisance parameter estimators $\hat{\eta} = (\he, \hr, \hmu, \htmu)$ first, and then plug them into \cref{eq: infeasible-est} in place of $\eta^*$.
We could estimate $\eta^*$ using parametric models (e.g., generalized linear models), but this could risk model misspecification and lead to inconsistent estimates. 
This is particularly a concern when either covariates $X$ or surrogates $S$ are rich, which should normally be regarded as a good thing as it can make \cref{assump: unconfound} more defensible as well as increase surrogates' predictiveness and hence the efficiency gains from surrogate observations.
Hence, we prefer flexible machine learning estimators 
 that avoid restrictive parametric  assumptions on the nuisance parameters to avoid misspecification error.
For example, estimating $\et, \rt$ amounts to learning conditional probabilities from binary classification data, and estimating $\mut, \tmut$ is essentially learning two regression functions. For both tasks many successful machine learning methods exist \citep[e.g.,][]{breiman2001random,chen2016xgboost,goodfellow2016deep}.

Although flexible machine learning estimators are less prone to model misspecification, we must be careful that their slow convergence and possible biases do not impact our estimator badly so that the resulting {feasible} ATE estimator is still root-$N$ consistent and asymptotically normal, just like the {hypothetical} estimator in  \cref{eq: infeasible-est}. 
Luckily, the efficient influence function we derive in \cref{eq: efficient-IF} has a special multiplicative bias structure that makes it insensitive to errors in $\eta^*$.

\begin{lemma}\label{lemma: multiplicative-bias}
{There exists a universal  constant $c_0 > 0$ that only depends on $\epsilon$, such that}
for any $\eta_0 = (e_0, r_0, \tilde\mu_0, \mu_0)$ with $e_0, r_0$ satisfying \cref{assump: overlap} and any $\delta$,  
\begin{align*}
\begin{aligned}
      &\abs{\Eb{\psi(W; \delta, \eta_0)-\psi(W; \delta, \eta^*)}} 
        \\
       &\qquad \le c_0\big(\|e_0 - \et\|_2\|\mu_0 - \mut\|_2 + \|r_0 - \rt\|_2\|\tmu_0 - \tmut\|_2 + \|e_0 - \et\|_2\|\tmu_0 - \tmut\|_2\big).
\end{aligned}
\end{align*}
\end{lemma}

\Cref{lemma: multiplicative-bias} suggests that replacing $\eta^*$ with $\hat\eta$ in \cref{eq: infeasible-est}, our estimate remains consistent even if some nuisance estimates are inconsistent (see \Cref{thm: DR} below). 
More crucially, \Cref{lemma: multiplicative-bias} suggests that if all nuisance estimators are consistent but converge slowly, then the overall error in using $\hat\eta$ in place of $\eta^*$ in \cref{eq: infeasible-est} will converge as the {product} of the slow rates. If this product is faster than the $O_p(N^{-1/2})$ convergence of $\hat\delta_0$ itself, e.g., each rate is $o_p(N^{-1/4})$, then our {feasible} estimator will asymptotically behave the same as the {infeasible} one (see \Cref{thm: normality} below for the  formal statements).
The property shown in \Cref{lemma: multiplicative-bias} implies the Neyman orthogonality property that plays a central role in the recent debiased machine learning literature \citep[e.g., ][]{chernozhukov2018double,newey2018cross}. 

To construct our estimator, we further employ cross-fitting in nuisance estimation.
We divide the data into multiple folds, use data in all but one fold to estimate nuisances, and apply the estimated nuisance only to the hold-out fold. 
This technique prevents each nuisance estimator from overfitting to the data where it is evaluated, and eschews stringent Donsker conditions on the nuisance estimators, which has been widely used in semiparametric estimation \citep[e.g., ][]{chernozhukov2018double,zheng2011cross}.

\begin{definition}[Cross-fitted Estimator]\label{def: estimator}
Let $K$ be a fixed positive interger. Take $K$-fold random partitions $\{\mathcal{I}_k^l\}_{k = 1}^K$ and $\{\mathcal{I}_k^u\}_{k = 1}^K$ of the labeled and unlabeled index sets $\mathcal{I}^l$ and $\mathcal{I}^u$ respecitvely. Then $\{\mathcal{I}_k = \mathcal{I}_k^l \cup \mathcal{I}_k^u\}_{k = 1}^K$ constitutes a K-fold random partition of the whole index set $\{1, \dots, N\}$. 
For each $k = 1, \dots, K$, we define $\mathcal{I}_k^c = \{1, \dots, N\}\setminus \mathcal{I}_k$ and use all but the $k$th fold data
to train machine learning estimators for the nuisance parameters:  $\hat{\eta}_k = \hat{\eta}(\{W_i\}_{i \in \mathcal{I}_k^c})$. The final ATE estimator is $\hat\delta$ that solves the following equation: 
\begin{align}\label{eq: DML-est}
\textstyle
\frac{1}{K}\sum_{k = 1}^K \expectnk\bracks{\psi(W; \hat\delta, \hat\eta_k)} = \frac{1}{K}\sum_{k = 1}^K \frac{1}{\abs{\mathcal{I}_k}}\sum_{i \in \mathcal{I}_k} \psi(W_i; \hat\delta, \hat\eta_k) = 0,
\end{align}
where $\expectnk$ denotes the sample average over the $k^{\text{th}}$ fold. This estimator can be also written as
\begin{align*}
\textstyle
\hat{\delta}   
    &= \frac{1}{K}\sum_{k = 1}^K\expectnk\bigg[\hmuk(1, X) - \hmuk(0, X) + \frac{T - \hek(X)}{\hek(X)\prns{1 - \hek(X)}}(\htmuk(T, X, S) - \hmuk(T , X))
    \\&\qquad\quad+ \frac{TR}{\hek(X)\hrk(1, X, S)}(Y - \htmuk(1, X, S)) - \frac{(1 - T)R}{(1 - \hek(X))\hrk(0, X, S)}(Y - \htmuk(0, X, S)) \bigg].
\end{align*}
\end{definition}

\subsection{Asymptotic Properties of the Estimator}\label{sec: est-asymptotic}

In this section we establish the insensitivity of our estimator to the nuisance estimation errors. Namely, we establish both a double robustness property as well as efficiency. We then proceed to use our results to also construct valid confidence intervals.

Our results will depend on the asymptotic behavior of our nuisance estimates, $\hat\eta_k$. To state our results, we use the next assumption to define both the limit point of the estimates and the convergence rate. Note it is only an ``assumption'' once we specify a certain limit point and rate. In particular, the below allows the nuisance estimators to be misspecified in that the limit point $\eta_0$ need not be equal to $\eta^*$.

\begin{assumption}[Nuisance Estimator Convergence Rate]\label{assump: convergence}
For $k = 1, \dots, K$, the nuisance estimators $\hat{\eta}_k = (\hek, \hrk, \hmuk, \htmuk)$ converge to their limit $\eta_0 = (\eo, \ro, \muo, \tmuo)$ in mean sqaured error at the following rates:
\begin{align*}
\|\hek - \eo\|_2 = {O}_p(\rhoe), ~ \|\hrk - \ro\|_2 = {O}_p(\rhor), ~ \|\hmuk - \muo\|_2 = {O}_p(\rhomu), ~ \|\htmuk - \tmuo\|_2 = {O}_p(\rhotmu). 
\end{align*}
Furthermore, the propensity score estimators and their asymptotic limits are almost surely bounded: $\hek(X), \eo(X) \in [\epsilon, 1 - \epsilon]$ and $\hrk(X), \ro(X) \in [\epsilon, 1]$ with probability $1$. 
\end{assumption}

We further assume the following boundedness on the variance of the primary outcome.
\begin{assumption}[Bounded Moments]\label{assump: bounded}
There exist constants $C > 0,\,q > 2$ such that 
\begin{align*}
&\|\var\{Y \mid X, S, T\}\|_{\infty} \le C, ~~ \|\var\{Y \mid X, T\}\|_{\infty} \le C, \\
 &\|\var\{\tmut (T, X, S) \mid T, X\}\|_{\infty} \le C, ~~ \|Y(1)\|_q \vee \|Y(0)\|_q \le C.
\end{align*}
\end{assumption}

Our next result establishes formally the doubly robust property of our estimator $\hat\delta$.

\begin{theorem}[Double Robustness]\label{thm: DR}
Given \Cref{assump: unconfound,assump: overlap,assump: convergence,assump: bounded,assump: mar-1}, if we further assume that $\rhoe, \rhor, \rhomu, \rhotmu$ are all $o(1)$, $(\tmuo - \tmut)(\ro - \rt) = 0$, $(\tmuo - \tmut)(\eo - \et) = 0$, $(\muo - \mut)(\eo - \et) = 0$, 
and the asymptotic bias $\|\tmuo - \tmut\|$ and $\|\muo - \mut\|$ of the outcome regressions are almost surely bounded by the positive constant $C$, 
then $\hat{\delta} \overset{p}{\to} \delta^*$  as $N \to \infty$.
\end{theorem}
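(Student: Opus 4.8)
The plan is to use the cross-fitting structure to reduce everything to two pieces on each fold and then average over the (fixed number of) folds. Writing $\hat\delta - \delta^* = \frac{1}{K}\sum_{k=1}^K \expectnk[\psi(W;\delta^*,\hat\eta_k)]$ (the $+\delta^*$ inside each summand cancels the outer $-\delta^*$) and conditioning on the out-of-fold data $\{W_i\}_{i \in \MI_k^c}$ that determine $\hat\eta_k$, I would split
\[
\expectnk[\psi(W;\delta^*,\hat\eta_k)] = \big(\expectnk - \expect\big)\big[\psi(W;\delta^*,\eta)\big]\big|_{\eta = \hat\eta_k} + \expect\big[\psi(W;\delta^*,\eta)\big]\big|_{\eta=\hat\eta_k}.
\]
Conditional on $\MI_k^c$ the estimator $\hat\eta_k$ is frozen and the fold-$k$ sample is i.i.d.\ and independent of it, so the first (``empirical process'') term is a centered sample average and the second (``bias'') term is a deterministic functional of $\hat\eta_k$. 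It then suffices to show each is $o_p(1)$, since $K$ is fixed and $n \asymp N \to \infty$.

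For the empirical-process term I would argue via a conditional second-moment bound: conditionally on $\MI_k^c$ its variance is at most $n^{-1}\expect[\psi^2(W;\delta^*,\hat\eta_k)\mid\MI_k^c]$. I would bound this conditional second moment by an $O_p(1)$ quantity using the overlap bounds of \Cref{assump: convergence} ($\hek,\eo \in [\epsilon,1-\epsilon]$ and $\hrk,\ro \in [\epsilon,1]$) to control the inverse-propensity weights by $\epsilon^{-2}$, the moment bounds of \Cref{assump: bounded} (bounded conditional variances of $Y$ and $\norm{Y(t)}_q \le C$) to control the residuals, and the a.s.\ bounded biases $\norm{\tmuo - \tmut}_\infty, \norm{\muo - \mut}_\infty \le C$ to keep $\htmuk,\hmuk$ square-integrable. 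A conditional Chebyshev inequality followed by integrating out the conditioning then gives this term $= O_p(n^{-1/2}) = o_p(1)$.

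The heart of the proof --- and the step I expect to be the main obstacle --- is the bias term $b(\eta) := \expect[\psi(W;\delta^*,\eta)]$ evaluated at $\eta = \hat\eta_k$. I would compute $b(\eta)$ for a generic $\eta$ by iterated expectations, conditioning first on $(X,T,S)$ and then on $X$ and invoking \Cref{assump: unconfound}: namely $\expect[R \mid X,T=t,S] = \rt(t,X,S)$, $\expect[Y\mid X,T=t,S,R=1] = \tmut(t,X,S)$, $\expect[T\mid X] = \et(X)$, and $(Y(t),S(t)) \perp T \mid X$ (which collapses $\tmut$ integrated over $S$ into the marginal outcome regression and yields $b(\eta^*) = 0$). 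The influence function is built so that the first-order dependence of $b$ on each nuisance at the truth cancels (Neyman orthogonality); what remains is a sum over $t \in \{0,1\}$ of products of two nuisance errors in the conjugate pairings $(\rt,\tmut)$ and $(\et,\mut)$, of the schematic form
\[
b(\eta) = \sum_{t}\pm\,\expect\!\left[\frac{\et(X)}{e(X)}\,\frac{(r-\rt)(\tmu-\tmut)}{r}(t,X,S(t))\right] + \sum_{t}\pm\,\expect\!\left[(\mu-\mut)(t,X)\,\frac{e(X)-\et(X)}{e(X)}\right],
\]
where $(e,r,\mu,\tmu)$ denote the components of $\eta$ and the signs follow from $\delta^* = \xi_1^* - \xi_0^*$.

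To finish I would plug in $\eta = \hat\eta_k$ and, in each product, decompose every error as (estimator $-$ limit) $+$ (limit $-$ truth), e.g.\ $\hrk - \rt = (\hrk - \ro) + (\ro - \rt)$ and $\htmuk - \tmut = (\htmuk - \tmuo) + (\tmuo - \tmut)$, producing four cross terms per pair. The (estimator$-$limit)$\times$(estimator$-$limit) term is $O_p(\rhor\rhotmu) = o_p(1)$ by Cauchy--Schwarz; the two mixed terms are $o_p(1)$ because one factor is $O_p(\rho)=o_p(1)$ while the other is bounded in $L_\infty$ (by $C$ for the regression biases, by $1$ for the propensity biases, using again \Cref{assump: convergence,assump: bounded}); and crucially the (limit$-$truth)$\times$(limit$-$truth) term vanishes \emph{exactly} by the double-robustness hypotheses $(\tmuo-\tmut)(\ro-\rt)=0$ and $(\muo-\mut)(\eo-\et)=0$. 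Since we assume only consistency ($\rhoe,\rhor,\rhomu,\rhotmu = o(1)$) and not rates, it is essential that this last cross term be eliminated exactly rather than merely bounded; that is precisely what the product conditions provide, and isolating it through the orthogonal product structure is the main technical work. Combining, $b(\hat\eta_k) = o_p(1)$ for each $k$, and together with the empirical-process bound and the fixed number of folds we conclude $\hat\delta \overset{p}{\to} \delta^*$.
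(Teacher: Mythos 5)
Your proof is correct and follows essentially the same route as the paper's: cross-fitting to freeze $\hat{\eta}_k$, conditional Chebyshev/second-moment bounds (made unconditional via \Cref{lemma: chernozhukov}) for the stochastic fluctuation, and the product-of-nuisance-errors form of the bias, whose (limit$-$truth)$\times$(limit$-$truth) part vanishes exactly under $(\tmuo-\tmut)(\ro-\rt)=0$ and $(\muo-\mut)(\eo-\et)=0$. The only difference is bookkeeping: you expand around $\hat{\eta}_k$ and compute the generic bias functional $b(\eta)$ in closed form before splitting each error as (estimator$-$limit)$+$(limit$-$truth), whereas the paper (\cref{eq: error-terms}) expands around the limits $\eta_0$ from the outset, so the (estimator$-$limit) pieces are absorbed into nine remainder terms and the exact cancellation shows up only in the mean of the leading i.i.d.\ term evaluated at $\eta_0$.
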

\Cref{thm: DR} states that the proposed estimator converges to the true ATE, as long as all nuisance estimators converge to a limit point and at least one of the limit points, but not necessarily both, in each pair of {$(\tmuo, \ro)$, $(\muo, \eo)$, and $(\tmuo, \eo)$} is equal to the corresponding true value. 
Thus the consistency of our estimator does not require all nuisance parameters to be correctly estimated, nor the knowledge of which one is correctly estimated. 
This means that our estimator is robust to misspecification errors of estimating some nuisance parameters, as long as  the rest are consistently estimated. This property is called ``double robustness'' in causal inference literature \citep[e.g., ][]{scharfstein1999adjusting,kang2007demystifying}.

Our next result formalizes the notion that slow convergence rates in nuisance estimation multiply, causing the effect of estimating nuisances to be negligible in analyzing $\hat\delta$ and its first-order behavior to be similar to $\hat\delta_0$ that uses the true nuisances. 
\begin{theorem}[Asymptotic Normality]\label{thm: normality}
Under assumptions in \Cref{thm: DR}, if we  assume $\max\{\rhor\rhotmu, \rhoe\rhotmu, \rhoe\rhomu\}  = o(N^{-1/2})$, and that all nuisance components are correctly specified so that $\tmuo - \tmut = \ro - \rt = \muo - \mut = \eo - \et = 0$, then as $N \to \infty$,
\begin{align*}
\sqrt{N}(\hat{\delta}  - \delta^*) \overset{d}{\to} \mathcal{N}(0, V^*),
\end{align*}
where $V^*$ is the efficiency lower bound in \Cref{thm: efficient-if}.
\end{theorem}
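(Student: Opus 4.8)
The plan is to run the standard debiased/cross-fitting analysis, decomposing $\sqrt N(\hat\delta-\delta^*)$ into a sample average of the true efficient influence function plus remainder terms that the rate conditions force to be $o_p(1)$. First I note that the $-\delta^*$ inside $\psi(W;\delta^*,\hat{\eta}_k)$ cancels the $+\delta^*$ in \cref{eq: DML-est}, so $\hat\delta$ is genuinely computable and $\hat\delta-\delta^*=\frac1K\sum_{k=1}^K\expectnk[\psi(W;\delta^*,\hat{\eta}_k)]$. Adding and subtracting the oracle quantity $\psi(W;\delta^*,\eta^*)$ and using $Kn=N$ gives
\begin{align*}
\sqrt N(\hat\delta-\delta^*)=\frac1{\sqrt N}\sum_{i=1}^N\psi(W_i;\delta^*,\eta^*)+\frac1K\sum_{k=1}^K\sqrt N\,\expectnk\big[\psi(W;\delta^*,\hat{\eta}_k)-\psi(W;\delta^*,\eta^*)\big].
\end{align*}
Under correct specification $\eta_0=\eta^*$, the first sum is an i.i.d.\ average of the mean-zero, square-integrable function $\psi(\cdot;\delta^*,\eta^*)$ (finiteness of $V^*=\expect[\psi^2]$ follows from \Cref{assump: overlap,assump: bounded}), so the ordinary CLT gives $\frac1{\sqrt N}\sum_i\psi(W_i;\delta^*,\eta^*)\overset{d}{\to}\mathcal N(0,V^*)$. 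It then suffices to show the remainder is $o_p(1)$ and invoke Slutsky.

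For the remainder I would analyze each fold separately, conditioning on the out-of-fold sample $\{W_i\}_{i\in\mathcal I_k^c}$ so that $\hat{\eta}_k$ is a fixed (nonrandom) function independent of $\{W_i\}_{i\in\mathcal I_k}$. Writing $\Delta_k:=\psi(\cdot;\delta^*,\hat{\eta}_k)-\psi(\cdot;\delta^*,\eta^*)$, split the fold-$k$ contribution as
\begin{align*}
\expectnk[\Delta_k]=\underbrace{\big(\expectnk-\expect[\,\cdot\mid\hat{\eta}_k]\big)[\Delta_k]}_{\text{empirical process }T_{1,k}}+\underbrace{\expect[\Delta_k\mid\hat{\eta}_k]}_{\text{bias }T_{2,k}}.
\end{align*}
The term $T_{1,k}$ is, conditionally on $\hat{\eta}_k$, a centered average of $n$ i.i.d.\ summands, so its conditional variance is bounded by $n^{-1}\expect[\Delta_k^2\mid\hat{\eta}_k]$. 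Using the overlap bounds of \Cref{assump: overlap} (so the inverse-propensity weights in $\psi$ are bounded) together with the moment bounds of \Cref{assump: bounded}, $\psi$ depends on $\eta$ in a Lipschitz manner on the relevant region, whence $\expect[\Delta_k^2\mid\hat{\eta}_k]=O_p(\rhoe^2+\rhor^2+\rhomu^2+\rhotmu^2)=o_p(1)$ by \Cref{assump: convergence}. A conditional Chebyshev argument then yields $\sqrt n\,T_{1,k}=o_p(1)$, and since $K$ is fixed this makes every $\sqrt N\,T_{1,k}/K$ term vanish.

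The crux of the proof --- and the step I expect to be the main obstacle --- is the bias term $T_{2,k}=\expect[\Delta_k\mid\hat{\eta}_k]$, where I must exploit the Neyman orthogonality of the efficient influence function \cref{eq: efficient-IF}. The key property is that the map $\eta\mapsto\expect[\psi(W;\delta^*,\eta)]$ has vanishing Gateaux derivative at $\eta=\eta^*$, so that $T_{2,k}$ contains no term linear in a single nuisance error and is purely second order. I would verify this by taking the conditional expectation of each summand of $\psi(\cdot;\delta^*,\hat{\eta}_k)$ through iterated expectations, substituting the true conditional means $\expect[Y\mid T{=}t,R{=}1,X,S]=\tmut(t,X,S)$, $\expect[R\mid T{=}t,X,S]=\rt(t,X,S)$ and $\expect[T\mid X]=\et(X)$ (all justified by \Cref{assump: unconfound}), and recombining; the structure of \cref{eq: efficient-IF} makes all first-order pieces cancel, leaving a finite sum of cross-products of the estimation errors. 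Bounding each such product by Cauchy--Schwarz (the weights being bounded by overlap and the regression biases bounded by $C$ from \Cref{assump: bounded}) gives $|T_{2,k}|=O_p(\rhoe\rhomu+\rhor\rhotmu+\rhoe\rhotmu)$, so $\sqrt N\,T_{2,k}/K=O_p\!\big(\sqrt N(\rhoe\rhomu+\rhor\rhotmu+\rhoe\rhotmu)\big)=o_p(1)$ under the assumed product-rate conditions. Summing the $o_p(1)$ contributions of $T_{1,k}$ and $T_{2,k}$ over the fixed number of folds shows the remainder is $o_p(1)$, and Slutsky delivers $\sqrt N(\hat\delta-\delta^*)\overset{d}{\to}\mathcal N(0,V^*)$. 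The delicate part throughout is the orthogonality bookkeeping: because $S=S(T)$ is post-treatment, each tower-rule step must be justified via the conditional-independence structure of \Cref{assump: unconfound}, and one must track carefully which pairs of nuisance errors actually survive so that no product outside the three controlled by the rate hypotheses appears.
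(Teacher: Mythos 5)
Your proposal is correct and follows essentially the same route as the paper's proof: the paper expands $\psi(W;\delta^*,\hat\eta_k)-\psi(W;\delta^*,\eta^*)$ into nine explicit remainder terms (eq. \eqref{eq: error-terms}, derived in the proof of \Cref{thm: DR}), shows each has conditional mean either zero (by iterated expectations under correct specification, i.e.\ your orthogonality cancellation) or $O_p(\rhor\rhotmu+\rhoe\rhotmu+\rhoe\rhomu)$ (by Cauchy--Schwarz), bounds conditional second moments to control the $n^{-1/2}$-scale fluctuations, and converts to unconditional orders via \Cref{lemma: chernozhukov} before applying the CLT and Slutsky. Your per-fold empirical-process/bias split is just the aggregate form of this same term-by-term bookkeeping, and the surviving cross-products are indeed exactly the three controlled by the rate hypotheses.
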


\Cref{thm: normality} further shows that if all nuisance estimators converge to the truth at sufficiently fast rate, then the proposed estimator $\hat{\delta}$ converges at rate $O_p(N^{-1/2})$, and it is asymptotically normal with the efficiency lower bound $V^*$ as its limiting variance. The rate requirement is lax and can be satisfied even if all nuisance estimators converge to true values at $o_p(N^{-1/4})$ rates, i.e., much slower than the parametric rate $O_p(N^{-1/2})$. Notably, we do not restrict the nuisance estimators to Donsker or bounded entropy classes \citep{van2000asymptotic}, thereby permitting flexible machine learning methods. Moreover, the product rate condition allows estimators converging at faster rate to compensate for those converging at slower rate. 
For example, if we have strong domain knowledge about the labeling process and treatment assignment process (e.g., in a randomized experiment with two-phase sampling design) so that we can estimate the labeling propensity score $\rt$ and treatment propensity score $\et$ at very fast rate, then we can allow for very flexible regression estimators $\hmu, \htmu$ that converge at slow rates. 
Therefore, the estimation errors of machine learning nuisance estimators may not undermine the asymptotic behavior of our ATE estimator and it can still achieve the efficiency bound of \cref{thm: efficient-if} similarly to the infeasible estimator $\hat\delta_0$.

In the next result we propose a way to consistently estimate the efficient variance, $V^*$, which immediately lends itself to confidence interval construction.

\begin{theorem}[Confidence Interval]\label{thm: conf-interval}
Under the assumptions in \cref{thm: normality},
\begin{align*}
\textstyle 
\hat{V} = \frac{1}{K}\sum_{k = 1}^K\expectnk[\psi^2(W; \hat{\delta}, \hat{\eta}_k)] \overset{p}{\to} V^* \text{ as } N \to \infty. 
\end{align*}
Consequently, the following $(1 - \alpha) \times 100\%$ confidence interval 
\begin{align}\label{eq: CI}
\operatorname{CI} = (\hat{\delta} - \Phi^{-1}(1 - \alpha/2){(\hat{V}/N)}^{1/2}, ~~ \hat{\delta} + \Phi^{-1}(1 - \alpha/2){(\hat{V}/N)}^{1/2})
\end{align}
with $\Phi$ as the cumulative density function of standard normal distribution satisfies that 
\[
    \pr(\delta^* \in \operatorname{CI}) \to 1 - \alpha \text{ as } N \to \infty.
\]
\end{theorem}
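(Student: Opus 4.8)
The plan is to first establish the variance consistency $\hat{V} \overset{p}{\to} V^*$, and then derive the coverage statement as a routine consequence of \Cref{thm: normality} via Slutsky's theorem. For the consistency, the central observation is that $\psi$ depends on its $\delta$-argument only through an additive shift, so $\psi(W; \hat{\delta}, \hat{\eta}_k) = \psi(W; \delta^*, \hat{\eta}_k) - (\hat{\delta} - \delta^*)$. Squaring and averaging over fold $k$ gives
\begin{align*}
\expectnk[\psi^2(W; \hat{\delta}, \hat{\eta}_k)] = \expectnk[\psi^2(W; \delta^*, \hat{\eta}_k)] - 2(\hat{\delta} - \delta^*)\,\expectnk[\psi(W; \delta^*, \hat{\eta}_k)] + (\hat{\delta} - \delta^*)^2.
\end{align*}
Since \Cref{thm: normality} gives $\hat{\delta} - \delta^* = O_p(N^{-1/2}) = o_p(1)$ and the fold-wise averages $\expectnk[\psi(W; \delta^*, \hat{\eta}_k)]$ are $O_p(1)$, the last two terms are $o_p(1)$. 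This reduces the problem to showing $\frac{1}{K}\sum_{k=1}^K \expectnk[\psi^2(W; \delta^*, \hat{\eta}_k)] \overset{p}{\to} V^*$.

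For each fold I would compare the plug-in second moment to the oracle one by writing $\expectnk[\psi^2(W; \delta^*, \hat{\eta}_k)] = \expectnk[\psi^2(W; \delta^*, \eta^*)] + \expectnk[\psi^2(W; \delta^*, \hat{\eta}_k) - \psi^2(W; \delta^*, \eta^*)]$. The first term converges to $V^* = \expect[\psi^2(W; \delta^*, \eta^*)]$ by the ordinary law of large numbers, the efficient influence function having finite second moment under \Cref{assump: overlap,assump: bounded}. For the second term I factor $a^2 - b^2 = (a-b)(a+b)$ and apply the empirical Cauchy--Schwarz inequality to bound it by $\{\expectnk[(\psi(W;\delta^*,\hat{\eta}_k) - \psi(W;\delta^*,\eta^*))^2]\}^{1/2}\{\expectnk[(\psi(W;\delta^*,\hat{\eta}_k) + \psi(W;\delta^*,\eta^*))^2]\}^{1/2}$. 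Because $\hat{\eta}_k$ is trained on data outside fold $k$, I can condition on the out-of-fold sample and treat $\hat{\eta}_k$ as fixed: the conditional mean of the first empirical average equals $\|\psi(\cdot;\delta^*,\hat{\eta}_k) - \psi(\cdot;\delta^*,\eta^*)\|^2$ and that of the second is at most $2\|\psi(\cdot;\delta^*,\hat{\eta}_k)\|^2 + 2V^*$. A conditional Markov inequality then shows the first factor is $o_p(1)$ and the second is $O_p(1)$.

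The crux is therefore the $L_2$-continuity estimate $\|\psi(\cdot;\delta^*,\hat{\eta}_k) - \psi(\cdot;\delta^*,\eta^*)\| = o_p(1)$. I would expand $\psi$ term by term and bound each difference — the inverse-weight factors $1/(\et\rt)$ and $1/\et$ together with the regression differences $\htmuk - \tmut$ and $\hmuk - \mut$ — using the triangle inequality. The overlap condition \eqref{condition: overlap-1}, the lower bound on the labeling propensity in \Cref{assump: convergence}, and the moment bounds in \Cref{assump: bounded} keep all weights bounded and all products integrable, so each difference is controlled by the corresponding nuisance error $\rhoe,\rhor,\rhomu,\rhotmu$, all $o(1)$ under \Cref{thm: normality}. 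This is the main obstacle: the bookkeeping of products of estimated ratios requires care to ensure no weight blows up and that the assumed moments supply enough integrability — note that only second moments of $\psi$ are needed, which the bound $\|Y(t)\|_q \le C$ with $q > 2$ furnishes, so the argument never requires fourth moments. Collecting the fold-wise bounds yields $\hat{V} \overset{p}{\to} V^*$.

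Finally, for the confidence interval, \Cref{thm: normality} gives $\sqrt{N}(\hat{\delta} - \delta^*)/\sqrt{V^*} \overset{d}{\to} \mathcal{N}(0,1)$, and since $\hat{V} \overset{p}{\to} V^* > 0$, Slutsky's theorem yields $\sqrt{N}(\hat{\delta} - \delta^*)/\sqrt{\hat{V}} \overset{d}{\to} \mathcal{N}(0,1)$. The event $\delta^* \in \operatorname{CI}$ is exactly $|\sqrt{N}(\hat{\delta} - \delta^*)|/\sqrt{\hat{V}} \le \Phi^{-1}(1 - \alpha/2)$, whose probability converges, by convergence in distribution and continuity of the standard normal CDF at the threshold, to $\Phi(\Phi^{-1}(1-\alpha/2)) - \Phi(-\Phi^{-1}(1-\alpha/2)) = 1 - \alpha$, completing the proof.
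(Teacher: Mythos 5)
Your proposal is correct and follows essentially the same route as the paper's proof: split $\hat{V} - V^*$ into a sampling-error piece and a nuisance-estimation piece, handle the latter with the difference-of-squares factorization plus Cauchy--Schwarz, reduce everything to the $L_2$-consistency of $\psi$ evaluated at the plugged-in nuisances (which is exactly the term-by-term bounds $\expectnk \mathcal{R}_{k,j}^2 = o_p(1)$ carried over from the proofs of \Cref{thm: DR,thm: normality}), and finish the coverage claim with Slutsky's theorem and continuity of $\Phi$. The paper absorbs the $\hat{\delta}$-substitution inside the bound $\mathcal{R}_{12} \le 2\expectnk(\hat{\delta}-\delta^*)^2 + 2\expectnk\big[\psi(W;\delta^*,\hat{\eta}_k)-\psi(W;\delta^*,\eta^*)\big]^2$, which rests on the same additivity-in-$\delta$ observation you make explicit at the outset, so that step is equivalent. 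The one place you genuinely diverge is the sampling-error term $\expectnk[\psi^2(W;\delta^*,\eta^*)] - \expect[\psi^2(W;\delta^*,\eta^*)]$ (the paper's $\mathcal{R}_{11}$): you invoke the weak law of large numbers, which needs only $\expect[\psi^2]<\infty$ and hence only $q>2$ in \Cref{assump: bounded}; the paper instead proves a rate, $O_p(N^{-[(1-2/q)\vee 1/2]})$, using Chebyshev/Markov when $q\ge 4$ and the von Bahr--Esseen inequality when $2<q<4$. Your remark that fourth moments are never required is exactly right --- the paper's case analysis exists only because it bounds this deviation quantitatively; for the pure consistency statement of the theorem, your WLLN argument is simpler and fully sufficient, at the cost of not yielding a convergence rate for $\hat{V}$.
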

\Cref{thm: conf-interval} shows that under the same conditions, we can consistently estimate the efficiency lower bound by forming the sample analogue of $\expect[\psi^2(W; {\delta^*}, {\eta^*})]$ with cross-fitting nuisance estimators and the proposed estimator $\hat{\delta}$. 
The resulting confidence interval in \cref{eq: CI} asymptotically achieves correct coverage probability. 
Also, since the proposed estimator $\hat{\delta}$ asymptotically achieves the smallest possible variance, the confidence interval in \cref{eq: CI} tends to be shorter than confidence intervals based on less efficient estimators.

\section{Extension: Very Large Unlabeled Data}\label{sec: extension}
In this section, we consider the setting where the size of unlabeled data is much larger than the size of the labeled data, i.e., $N_u \gg N_l$.
This setting is practically relevant since the number of units being followed 
in a study with labeled outcome may often be much smaller than the massive amount of unlabeled data cheaply collected fro existing 
 databases, such as from electronic medical records \citep{cheng2018efficient}.
Despite its practical relevance, this setting cannot be directly accommodated by our efficiency and estimation theory in \Cref{sec: efficiency,sec: estimation}.
This is because  previous results all hinge on the overlap condition \eqref{condition: overlap-2} in \Cref{assump: overlap}, 
which implies that the 
marginal labeling probability is positive, $\pr(R = 1) \geq \epsilon$, and thus $N_u \asymp N_l$ with high probability. This rules out the setting with many more unlabeled data, that is, $N_u \gg N_l$ and $\pr(R = 1) = 0$. 
In fact, in the very-many-unlabeled-data setting, we can show that $\rt(T, X, S) = 0$ almost surely (\Cref{lemma: no-positivity}), so previous efficiency lower bounds based on $1/\rt$ are invalid.
Despite these drastic differences, we will show that essentially the same efficiency results and estimation strategy actually still apply in the very-many-unlabeled-data setting, as long as we change the perspective  and scaling appropriately. 

\subsection{Efficiency Analysis}

{To accommodate this  setting,
we change the efficiency considerations mainly in two aspects. 
First, instead of using $\rt$, we characterize efficiency in terms of the following density ratio: 
\begin{align}
\label{eq: density-ratio}
\lambda^*(S, X, t) 
&\coloneqq \frac{f^*(S, X \mid T = t)}{f^*(S, X \mid T = t, R  = 1)}, ~~ \text{for } t = 0, 1, 
\end{align}
where $f^*(\cdot \mid T = t, R = \cdot)$ and $f^*(\cdot \mid T = t)$ are the conditional density functions of $S$ and $X$ corresponding to the target distribution $\pr$.
In particular, in the limit that $\Prb{R = 1} = 0$ (meaning that the unlabeled data dominates the whole data) we further have 
\begin{align}\label{eq: density-ratio-0}
 \lambda^*(S, X, t) = \lambda^*_0(S, X, t) \coloneqq \frac{f^*(S, X \mid T = t, R = 0)}{f^*(S, X \mid T = t, R  = 1)}. 
 \end{align} 
Notably, this density ratio is well-defined and bounded as long as the distribution of $S, X$ on the labeled and unlabeled data overlap sufficiently, 
even if $N_l \ll N_u$ and $\Prb{R = 1} = 0$.\footnote{{We use the conditional distribution given $R = 1$ to denote the distribution from which the labeled data is sampled. 
It is well-defined 
even though $\Prb{R = 1} = 0$. In \Cref{sec: est-vanish}, we will rationalize this choice by an observation model where the chance of labeling a data point is strictly positive but it converges to $0$ as the sample size grows to $\infty$.}}}

Second, we will characterize  convergence rates of  ATE estimators in terms of the labeled data size $N_l$ instead of the total sample size $N$. 
This is crucial since,
in the current setting, the size of the labeled data becomes the bottleneck for accurate ATE estimation, noting that the primary outcome observed in the labeled data is the primary source of information on the ATE, but its sample size, $N_l$, is on a different scale than the total sample size, $N$.  

Since $N_l \ll N_u$, from the perspective of the behavior as $N_l\to\infty$, the size of the unlabeled dataset appears infinitely larger and thus its distribution, i.e., the distribution of $(X, T, S)$ given $R = 0$, appears virtually known. 
Moreover, in the asymptotic limit, the labeled dataset is negligible, and the combined dataset is virtually identical to the unlabeled dataset ($N_u/N\to1$).
Thus the unconditional distribution of $(X, T, S)$ is in the limit identical to  their conditional distribution  given $R = 0$. 
 Therefore, the unconditional distribution of $(X, T, S)$ can also be viewed as known from the perspective of labeled data. 
The semiparametric efficiency lower bound for ATE from this perspective is formalized in the following theorem.

\begin{theorem}\label{thm: vanishing-if}
Consider the data consisting of 
i.i.d. draws 
from the unknown conditional distribution of $(X, T, S, Y)$ given $R = 1$.   
Suppose \Cref{assump: unconfound,assump: mar-1} hold, $\lambda^*(S, X, t) < \infty$ almost surely for $t = 0, 1$, and $\Prb{T  = 1 \mid R = 1, X, S} \in (\epsilon', 1-\epsilon')$ almost surely for some $\epsilon' \in (0, 1/2)$. 
Then the semiparametric efficiency lower bound for the ATE parameter with respect to a known unconditional distribution of $(X, T, S)$ 
is
$\tilde{V}^* =  \expect[\tilde{\psi}^2(W; \delta^*, \tilde{\eta}^*) \mid R = 1]$, where the new nuisance functions are $\tilde{\eta}^* = (\et, \lambda^*, \tmut, \mut)$ and
\begin{align}\label{eq: vanishing-if}
\begin{aligned}
        \tilde{\psi}(W; \delta^*, \tilde{\eta}^*) 
      &= \frac{T\lambda^*(S, X, T)}{\et(X)}\frac{\Prb{T=1}}{\Prb{T=1\mid R=1}}(Y - \tmut(1, X, S))  \\
      &\quad - \frac{(1 - T)\lambda^*(S, X, T)}{1 - \et(X)}\frac{\Prb{T=0}}{\Prb{T=0\mid R=1}}(Y - \tmut(0, X, S)).
\end{aligned}
\end{align}
\end{theorem}
{\Cref{thm: vanishing-if} does not assume the full   overlap condition in \Cref{assump: overlap}  that excludes the very-many-unlabeled-data setting. Instead, it only assumes a  treatment overlap condition on the labeled data, which can be shown to be weaker than \Cref{assump: overlap} (\Cref{sec: proof-extension} \Cref{lemma: overlap-t})}. 
Moreover, \Cref{thm: vanishing-if} 
 assumes the MAR assumptions in 
\Cref{assump: mar-1},
 which strictly generalizes the results in \cite{cheng2018efficient} under the more restrictive MCAR condition. 
This generalization is possible mainly because we formulate the efficiency lower bound in terms of the density ratio, as opposed to the inverse labeling propensity score formulation that is more commonly used but ill-defined in the current setting. 

{
One may wonder the connection between the efficiency bound in \Cref{thm: vanishing-if} and those in \Cref{sec: efficiency} when the overlap condition in \Cref{assump: overlap} holds. This connection is revealed by following proposition that rescales the efficiency bound in \Cref{thm: efficient-if}.}

\begin{proposition}\label{prop: efficient-bound-rescale}
Suppose \Cref{assump: unconfound,assump: mar-1,assump: overlap} holds and let $V^*$ and $\tilde V^*$ be the semiparametric efficiency lower bounds given in \Cref{thm: efficient-if,thm: vanishing-if}, respectively. Then for any asymptotically efficient estimator $\hat\delta$ such that $\sqrt{N}(\hat\delta - \delta^*) \overset{d}{\to} \mathcal{N}(0, V^*)$ as $N \to \infty$,  we have $\sqrt{N_l}(\hat\delta - \delta^*) \overset{d}{\to} \mathcal{N}(0, \pr(R=1)V^*)$, where $\pr(R=1)V^*$ has the following decomposition:  
\begin{align*}
\textstyle 
\pr(R = 1)V^* 
      &= \tilde V^* + \pr(R = 1)\expect\left[\prns{\mut(1, X) - \mut(0, X) - \delta^*}^2\right]  \\
        &+\pr(R = 1)\expect\bigg[\prns{\frac{T - \et(X)}{\et(X)\prns{1 - \et(X)}}(\tmut(T, X, S) - \mut(T , X))}^2\bigg].
\end{align*}
\end{proposition}

According to \Cref{prop: efficient-bound-rescale}, the efficiency bound in \Cref{thm: efficient-if}, when rescaled according to the size of labeled data, can be decomposed into the efficiency bound in \Cref{thm: vanishing-if} and some additional terms. These additional terms quantify the intrinsic estimation uncertainty due to not knowing the distribution of $(X, T, S)$. Notably, the additional terms vanish as $\Prb{R = 1}\to0$, so the efficiency bound in \Cref{thm: vanishing-if} can be viewed as a limit of the efficiency bound in \Cref{thm: efficient-if}. This means that essentially the same efficiency results reign in both the regime in \Cref{sec: efficiency} and the very-large-unlabeled-data setting in this section.

We can also extend  \Cref{thm: vanishing-if,prop: efficient-bound-rescale} to the ATE parameter on the unlabelled population $\delta^*_0$ in \Cref{sec: aside}. In fact, the same efficiency lower bound applies to  $\delta^*_0$. In the current setting, where unlabelled dataset dominates the combined dataset, the unlabeled and combined population distributions become identical in the limit. 
Consequently, the corresponding ATEs $\delta^*_0$ and $\delta^*$ are also identical and share the same asymptotic efficiency bound. 
The  extensions are formally stated in \Cref{sec: effect-unlabeled-vanishing}.
As  mentioned in \Cref{sec: aside}, the $\delta_0^*$ parameter can be reinterpreted as  long-term treatment effect for the experimental population in the setting of \cite{athey2016estimating}. Therefore, our efficiency analysis  indirectly provides the corresponding efficiency bound when the experimental dataset is much larger than the observational dataset. This complements the analyses in \cite{chen2021semiparametric}, which focus exclusively on two datasets of comparable size. 

{
Although the condition of exactly knowing the distribution of $(X, T, S)$ in \Cref{thm: vanishing-if} seems idealized, we will confirm in the next subsection that this indeed characterizes the role of unlabeled data in an asymptotic sense from the perspective of labeled data. In particular, we will show that essentially the same estimator in \Cref{def: estimator} can still attain the efficiency bound under appropriate conditions.  
}

\subsection{Asymptotically Efficient Estimator}\label{sec: est-vanish}

Our efficiency analysis above is asymptotic, where asymptotically we have $N_l\ll N_u$ and hence $\pr(R = 1) = 0$. In terms of estimation from an actual finite sample, however, we do have {some} labeled data, albeit much less than unlabeled data. Therefore, the observation model of having $N$ i.i.d. draws from $\pr$ is inappropriate as it implies that we observe {no} outcome data with probability 1, which would make estimation impossible. 

{We consider a different observation model that allows for the marginal probability of labeling, which we denote as $\pi_N$, to vary with the total sample size $N$. We  require that $\pi_N>0,\,\pi_N \to 0$ and that the expected labeled sample size $\ON_l = \pi_N N$ grows to $\infty$ as $N \to \infty$.  
We then consider the observation model where we have $N$ i.i.d. draws, for each of which with probability $\pi_N$ we sample the observation from the fixed conditional distribution of $(X, T, S, Y)$ given $R=1$ and with probability $1-\pi_N$ we sample from the fixed conditional distribution given $R=0$. That is, we define $\pr^{(N)}(\mathcal{E}) = \pr(\mathcal{E} \mid R = 1)\pi_N + \pr(\mathcal{E} \mid R = 0)\prns{1 - \pi_N}$ for any event $\mathcal{E}$ measurable with respect to $(X, T, S,Y)$ and we observe $N$ i.i.d. draws from $\pr^{(N)}$. 
This means that the data distribution could change with the sample size $N$ only because of the  labling probability $\pi_N$, but 
the conditional distributions given $R = 1$ and $R = 0$ do not change with the sample size $N$.
This specification has the same spirit as the observation model in \cite{zhang2021double}. It aptly models the regime of interest:  
the labeled data are always available in finite sample, and asymptotically its absolute size grows to infinity despite the fact that its relative size as a fraction of $N$ vanishes as $N \to \infty$.
In fact, this specification is very general: it can accommodate both the current setting by letting $\pi_N  \to 0$, and the original setting (see \Cref{sec: setup}) by letting $\pi_N=\pr(R=1)>0$.}
 
 {Under this observation model, we have a fixed density ratio $\lambda^*_0$ as given in \Cref{eq: density-ratio-0}, but we have a finite-sample  counterpart $\lambda^*_N$ for the limiting density ratio $\lambda^*$ in \Cref{eq: density-ratio}:
 \begin{align}
  &\lambda^*_N(S, X, t) = (1 - \pi_{N, t})\lambda_0^*(S, X, t) + \pi_{N, t},\label{eq: lambda*} \\ 
 \text{ where }& \pi_{N, t} = \pr^{(N)}(R = 1 \mid T = t) = \frac{\Prb{T = t \mid R = 1}\pi_N}{\Prb{T = t \mid R = 1}\pi_N + \Prb{T = t \mid R = 0}(1-\pi_N)}. \nonumber 
 \end{align}
The finite-sample density ratio $\lambda^*_N$ is well-defined and bounded whenever the labeled and unlabeled data distributions sufficiently overlap so that  $\lambda^*_0$ is well-defined and bounded.}

 Our observation model also  
   induces the following $N$-dependent labeling  propensity score  
\begin{align}
\rt_N(t, X, S) 
  &\coloneqq \pr^{(N)}(R = 1 \mid T = t, X, S)  = \frac{\pi_{N, t}}{\lambda^*_N(S, X, t)}, \label{eq: prop-N}
\end{align}
and similarly a  treatment propensity score $
\et_N(X) \coloneqq \pr^{(N)}(T = 1 \mid X)$.  
When the density ratio $\lambda^*_0(S, X, t) < \infty$ almost surely and $\pi_{N, t} > 0$ for any finite $N$, the induced labeling propensity score typically satisfies $r^*_N(t, X, S) > 0$. This means that although in the limit $\rt(t, X, S) = 0$ almost surely, we have $r^*_N(t, X, S) > 0$ for any finite $N$.

{In \Cref{def: estimator}, we propose an ATE estimator $\hat\delta$ based on the semiparametric efficient influence function in \Cref{thm: efficient-if}. This ATE estimator needs 
 nuisance estimators $\{\hat e_k, \hat r_k, \hat \mu_k, \hat{\tilde{\mu}}_k\}_{k=1}^K$ for the unknown functions $(\et, \rt, \mut, \tmut)$ in the setting with $N_l \asymp N_u$. According to \Cref{prop: efficient-bound-rescale}, the efficiency bound in \Cref{thm: efficient-if} is closely related to the efficiency bound in \Cref{thm: vanishing-if} for the very-large-unlabeled-data setting with $N_l \ll N_u$. It is natural to consider whether this estimator in \Cref{def: estimator} also applies to the $N_l \ll N_u$ setting. Moreover, since the efficiency bound in \Cref{thm: vanishing-if} involves the density function $\lambda^*$, we may alternatively estimate the density ratio in order to estimate the average treatment effect. An estimator  following this idea and \Cref{def: estimator} is provided in the definition below. }

\begin{definition}[Revised Estimator]\label{def: estimator2}
We take the $K$-fold random partitions as in \Cref{def: estimator}, and analogously construct nuisance estimators $\hat{\tilde{\eta}}_k  = (\he_k, \hat{\lambda}_k, \htmuk, \hmuk), k = 1, \dots, K$ for the nuisance functions $\tilde\eta^* = (\et, \lambda^*, \tmut, \mut)$. 
We use $\hat{\pi}_N = N_l/N > 0$ to estimate the proportion of labeled data $\pi_N$ and use $\hat \nu_1, \hat\nu_0$ to estimate the probability ratios $\nu_1^* \coloneqq \pr^{(N)}\prns{T=1}/\pr\prns{T=1 \mid R = 1}$ and $\nu_0^* \coloneqq \pr^{(N)}\prns{T=0}/\pr\prns{T=0 \mid R =1 }$ respectively. 
The revised ATE estimator is   
\begin{align*}
\hat\delta^{\op{rev}}
    &= \frac{1}{K}\sum_{k = 1}^K\hexpect_{k} \bigg\{ \hmuk(1, X) - \hmuk(0, X) + \frac{T - \hek(X)}{\hek(X)\prns{1 - \hek(X)}}(\htmuk(1, X, S) - \hmuk(1 , X))  \\
    &\quad + 
    \frac{TR}{\hek(X)}\frac{\hat \nu_1\hat{\lambda}_k(S, X, T)}{\hat{\pi}_N}(Y - \htmuk(1, X, S)) - \frac{(1-T)R}{1 - \hek(X)}\frac{\hat \nu_0\hat{\lambda}_k(S, X, T)}{\hat{\pi}_N}(Y - \htmuk(0, X, S))\bigg\}.
\end{align*}
\end{definition}

The estimator $\hat\delta^{\op{rev}}$ is almost identical to the estimator $\hat\delta$ in \Cref{def: estimator}, except that it replaces the inverse of the estimated labeling propensity score $1/\hrk(t, X, S)$ in \Cref{def: estimator} by the estimated density term $\hat\nu_t\hat\lambda_k(S, X, t)/\hat \pi_N$. 
In both $\hat\delta$ and $\hat\delta^{\op{rev}}$, the preliminary nuisance estimators $\hek, \hmuk, \htmuk$ can be straightforwardly obtained by regressing the treatment $T$ and the outcome $Y$ on suitable predictors like 
the covariates $X$ and the short-term outcomes $S$, using the full data or the labeled data. 
The estimators $\hat r_k$ and $\hat\lambda_k$ are more tricky, considering that the labeling variable $R$ is very imbalanced in the current $N_l \ll N_u$ setting. 
In this subsection, we hypothesize some generic nuisance estimators $\hat r_k$ and $\hat\lambda_k$ and investigate  high-level conditions needed for the resulting estimators $\hat\delta$ and $\hat\delta^{\op{rev}}$ to be asymptotically normal and efficient. We will further study the construction of $\hat r_k$ and $\hat\lambda_k$ in \Cref{sec: nuisance-vanish} and \Cref{sec: simulation}.

{Again, we need to specify the convergence rates of the nuisance estimators. }

\begin{assumption}[Nuisance Estimator Convergence Rates, Cont'd]\label{assump: nuisance-rate-vanish}
For $k = 1, \dots, K$, the nuisance estimators $(\hek, \hmuk, \htmuk, \hrk, \hat\lambda_k)$ converge to $(\et_N,  \mut, \tmut, \rt_N, \lambda^*_N)$ at the following rates:
\begin{align*}
\begin{array}{c}
\|\hek - \et_N\|_2 = {O}_p(\rho_{N, e}), ~ \|\hmuk - \mut\|_2 = {O}_p(\rho_{\bar{N}_l, \mu}), ~ \|\htmuk - \tmut\|_2 = {O}_p(\rho_{\bar{N}_l, \tilde{\mu}}), \\
\left\|{\rt_N}/{\hrk} - 1\right\|_2 = {O}_p(\rho_{\bar{N}_l, r}), ~ \|\hat\lambda_k - \lambda^*_N\|_2 = {O}_p(\rho_{\bar{N}_l, \lambda}),
\end{array} 
\end{align*}
where $\bar{N}_l = \pi_N N$ is the expected size of the labeled data. 
Furthermore, the treatment propensity score estimator is almost surely bounded: $\hek(X)\in [\epsilon, 1 - \epsilon]$, and the labeling propensity score estimator $\hrk(t, S, X) > 0$ almost surely for $t \in \{0, 1\}$.
\end{assumption}

{
  \Cref{assump: nuisance-rate-vanish} is similar to \Cref{assump: convergence} except in three aspects. 
  First, the target of the propensity score estimators $\hek, \hrk$ are set as $\et_N, \rt_N$, since  these are the propensity scores induced by our observation model as we discuss above. 
  Second, the error of the labeling propensity score estimator $\hrk$ is quantified in terms of $\|{\rt_N}/{\hrk} - 1\|_2$ rather than $\|\hrk - \rt_N\|$ to  properly characterize the convergence when $\rt_N$ itself vanishes to $0$. Third, the  error rates in estimating $\tmu, \tmut, \rt_N, \lambda^*_N$ are indexed by the expected size of the labeled data $\bar{N}_l$ rather than the full sample size $N$, since their estimation are all limited by the size of labeled data.  
  In contrast, the error rate in estimating $\et_N$ is still indexed by $N$ since $\hek$ can be obtained by regressing the treatment variable on the full data. 
}

The following theorem derives the asymptotic distributions of  the estimators in \Cref{def: estimator,def: estimator2}  when $N_l \ll N_u$, under suitable high-level nuisance estimation conditions.

\begin{theorem}\label{thm: normality2}
Let $\hat\delta$ and $\hat\delta^{\op{rev}}$ be the estimators in \Cref{def: estimator,def: estimator2} respectively and $\tilde V^*$ be the  efficiency bound in \Cref{thm: vanishing-if}. 
Suppose assumptions in \Cref{thm: vanishing-if} and \Cref{assump: nuisance-rate-vanish} hold, the expected proportion of labeled data $\pi_N$ is strictly positive for any finite $N$ while $\pi_N \to 0$, the expected size of labeled data $\bar{N}_l = \pi_N N \to \infty$ as $N \to \infty$, and mild moment regularity conditions given in \Cref{sec: more-extension} \Cref{assump: moment} also hold. 
\begin{enumerate}
\item If $\rho_{\bar{N}_l, r}\rho_{\bar{N}_l, \tilde{\mu}} = o(\bar{N}_l^{-1/2})$, $\rho_{N, e}\rho_{\bar{N}_l, {\mu}} = o(\bar{N}_l^{-1/2})$, and $\rho_{N, e}\rho_{\bar{N}_l, {\tmu}} = o(\bar{N}_l^{-1/2})$, then $\sqrt{N_l}(\hat\delta - \delta^*) \overset{d}{\to} \mathcal{N}(0, \tilde V^*)$ as $N \to \infty$.
\item  If $\max\{\rho_{\bar{N}_l, \lambda}, \abs{\hat \nu_1 - \nu_1^*}, \abs{\hat \nu_0 - \nu_0^*}, |{\frac{r^*_N}{\hat r_N}-1}|\}\rho_{\bar{N}_l, {\tmu}} = o(\bar{N}_l^{-1/2})$, $\rho_{N, e}\rho_{\bar{N}_l, {\mu}} = o(\bar{N}_l^{-1/2})$, and $\rho_{{N}, e}\rho_{\bar{N}_l, \tilde{\mu}} = o(\bar{N}_l^{-1/2})$,  then $\sqrt{N_l}(\hat\delta^{\op{rev}} - \delta^*) \overset{d}{\to} \mathcal{N}(0, \tilde V^*)$ as $N \to \infty$. 
\end{enumerate}
\end{theorem}

  \Cref{thm: normality2} shows that in the $N_l \ll N_u$ regime and under some nuisance rate conditions, the estimators in \Cref{def: estimator,def: estimator2} are  asymptotically equivalent and their convergence rates are  $O_p({N}_l^{-1/2})$ rather than $O_p({N}^{-1/2})$. In particular, they are both asymptotically normal and  attain the semiparametric efficiency bound in \Cref{thm: vanishing-if}. The rate conditions for the nuisance estimators used to construct  $\hat\delta$ is analogous to the conditions in \Cref{thm: normality}. The main difference is that the  product  rate conditions are accordingly weakened to $o(\bar{N}^{-1/2}_l)$ rather than $o(N^{-1/2})$. 
  Nevertheless, the conditions may be still non-trivial, especially for the rate $\rho_{\bar{N}_l, r}$ in estimating the vanishing labeling propensity $\rt_N$.  
  The revised estimator $\hat\delta^{\op{rev}}$ involves  estimating the density ratio $\lambda^*$ and  some additional nuisance parameters $\pi_N, \nu^*_0, \nu_1^*$, which according to \cref{eq: prop-N} is equivalent to estimating the labeling propensity score $\rt_N$. So the first product rate conditions in  statements $1$ and $2$ play equivalent roles. Since the estimators $\hat\nu_1, \hat\nu_0, \hat \pi_N$ only need to estimate certain probabilities by sample frequencies, their convergence rates are typically already $O_p(\bar{N}_l^{-1/2})$, so the primary requirement of the conditions is $\rho_{\bar{N}_l, \lambda}\rho_{\bar{N}_l, {\tmu}} = o(\bar{N}_l^{-1/2})$. 
  Notably, the second and third conditions in  both statements that involve the treatment propensity score error rate $\rho_{N, e}$ should be  easy to hold,  because the estimators $\hek$'s are constructed using the full sample data of size $N$ and thus should achieve a convergence rate much faster than $\bar{N}_l^{-1/2}$. Therefore, the major bottlenecks are  the first conditions related to $\rho_{\bar{N}_l, r}$ and $\rho_{\bar{N}_l, \lambda}$, which we will further discuss in \Cref{sec: nuisance-vanish}. 

  \Cref{thm: normality2} validates the efficiency analysis in \Cref{thm: vanishing-if}: when $N_l / N \to 0$, our estimator achieves the efficiency bound in \cref{thm: vanishing-if} assuming a known unconditional distribution of $(T, X, S)$. 
  This justifies our intuition that the unconditional distribution of $(T, X, S)$ can be viewed as known from the perspective of much smaller labelled data.  
These results thus reveal the whole spectrum of efficiency in estimating ATE with surrogates, and feature a smooth transition from the regime $N_l \asymp N_u$ to the regime $N_l \ll N_u$.
In Appendix \Cref{thm: efficiency-comp-extension}, we further extend \Cref{thm: efficiency-comparison,corollary: efficiency-loss} to the regime $N_l \ll N_u$,
and we again confirm that more predictive surrogates result in bigger efficiency gains.

While the estimators $\hat\delta$ and $\hat\delta^{\op{rev}}$ have desirable asymptotic properties under suitable conditions, they involve estimating a vanishing propensity score or a density ratio function, which poses a new challenge in the current $N_l \ll N_u$ regime. Before delving into the details of estimating these quantities, we remark that this problem does not occur if the primary outcome is MCAR.  In this special setting, $R$ is independent with all other variables, so the labeling propensity score $\rt_N$ is identical to the marginal probability $\pi_N$ and thus can be easily estimated by $\hat \pi_N = N_l/N$. Accordingly, the density ratio $\lambda^*$ and the parameters $\nu_1^*, \nu_0^*$ are always $1$ so they do  not need estimation. Consequentally, the estimators $\hat\delta, \hat\delta^{\op{rev}}$ are equivalent and they reduce to the following form: 
\begin{align*}
    \hat{\delta}^{\op{rev}} 
    =\frac{1}{K}\sum_{k = 1}^K\expect_{k}\bigg\{&\hmuk(1, X) - \hmuk(0, X) +  \frac{T - \hek(X)}{\hek(X)\prns{1 - \hek(X)}}(\htmuk(T, X, S) - \hmuk(T , X))\nonumber \\
    &+  \frac{TR}{\hek(X)\hat{\pi}_N}(Y - \htmuk(1, X, S)) - \frac{(1 - T)R}{(1 - \hek(X))\hat{\pi}_N}(Y - \htmuk(0, X, S))\bigg\}.
\end{align*}
In \Cref{sec: connection}, we further discuss how this estimator connects to those in \cite{cheng2018efficient,zhang2019high}, and how our result generalizes those in previous literature.

\subsection{Nuisance Estimation}\label{sec: nuisance-vanish}
{The previous subsection proposes to use the ATE estimators $\hat\delta$ in \Cref{def: estimator} and $\hat\delta^{\op{rev}}$ in \Cref{def: estimator2} and derive their asymptotic properties in the $N_l \ll N_u$ regime. However, these two estimators need to first estimate a vanishing propensity score function $\rt_N(T, X, S)$ and a density ratio function $\lambda_N^*(S, X, T)$ respectively.
The estimation of propensity score $\rt_N(T, X, S)$ involves running binary regressions with very imblanaced label data. The estimation of density ratio $\lambda^*_N$ is typically even  harder to implement than running  regressions \cite[e.g., ][]{sugiyama2012machine,sugiyama2012density}, and it also faces the imbalanced data problem. 
In this subsection, we tackle the estimation of these two functions in the $N_l \ll N_u$ regime by 
extending the offset logistic regression method  of \cite{zhang2021double}.}

One of the most widely used approaches to  propensity score estimation is logistic regression, which models the log odds ratio of the propensity score. Under our observation model, the log odds ratio of the labeling propensity score $\rt_N$ in \Cref{eq: prop-N} can be written as 
\begin{align*}
\log\prns{\frac{\rt_N(t, X, S)}{1 - \rt_N(t, X, S)}} = -\log\prns{\lambda^*_0(S, X, t)} + \log\prns{\frac{\pi_{N, t}}{1-\pi_{N, t}}}, ~~ t\in\{0, 1\}.
\end{align*}
This formulation connects the estimation of the labeling propensity score and the estimation of density ratio. It also motivates the following offset logistic regression model: 
\begin{align*}
 \log\prns{\frac{\rt_N(t, X, S)}{1 - \rt_N(t, X, S)}} = \omega_t^\top X + \beta_t^\top S + \log\prns{\frac{\pi_{N, t}}{1-\pi_{N, t}}}, ~~ t\in\{0, 1\}, 
 \end{align*} 
where the linear function $\omega_t^\top X + \beta_t^\top S$ for some unknown coefficient vectors $\omega_t, \beta_t$ can be viewed as a model for the negative logarithm of the density ratio function $\lambda_0^*$. This linear function together with the offset term\footnote{{Directly following \cite{zhang2021double} would use $\log(\pi_{N, t})$ as the offset term, but we use $\log(\frac{\pi_{N, t}}{1-\pi_{N, t}})$ so the linear function $\omega_t^\top X + \beta_t^\top S$ more directly models  the density ratio.}} $\log({\pi_{N, t}}/({1-\pi_{N, t}}))$ provides a model for the log odds ratio of the propensity score.
In this model, the offset term accounts for the vanishing labeling probability, while the linear function part models the density ratio that remains invariant regardless of the sample size.  
In practice, the  probability  $\pi_{N, t}$ in the offset term is unknown but it can be easily estimated by $\hat \pi_{N, t} = \sum_{i=1}^N \indic{R_i=1, T_i=t}/\sum_{i=1}^N\indic{T_i=t}$. Then the unknown  coefficients $\omega_t, \beta_t$ can be estimated by running a logistic regression\footnote{{For example, the $\texttt{glm}$ function used to fit logistic regressions in the $\texttt{R}$ language can take an $\texttt{offset}$ term whose coefficient is coerced to be $1$.}} with an additional offset term $\log(\hat \pi_{N, t}/(1-\hat \pi_{N, t}))$ of a coefficient $1$.
Once we obtain  coefficient estimators $\hat\omega_t$ and $\hat\beta_t$, we can follow \cref{eq: lambda*,eq: prop-N} to  construct estimators for the density ratio function $\lambda^*_0$ and $\lambda^*$ and the labeling   
 propensity score function $\rt_N$:
\begin{equation}\begin{aligned}\label{eq: offset-estimator}
\hat\lambda_0(S, X, t) &= \exp(-\hat \omega_t^\top X - \hat\beta_t^\top S),  ~
 \hat \lambda(S, X, t) = (1-\hat \pi_{N, t})\hat\lambda_0(S, X, t) + \hat \pi_{N, t},  \\ 
\hat r(t, X, S) &= {\hat \pi_{N, t}}/{\hat \lambda(S, X, t)}.
\end{aligned}
\end{equation}

\cite{zhang2021double} provides a general theory for the offset logistic regression estimator with both low dimensional  and high dimensional covariates, assuming correct model specification. 
We can directly apply their theory to the estimation of labeling propensity score. 
For low dimensional $(X, S)$, their 
 theorem 4.1 guarantees that the offset logistic regression coefficient estimator converges to the truth at a $O(\bar{N}_l^{-1/2})$ rate. For high dimensional $(X, S)$, we may assume that the true coefficients are sparse and impose an additional lasso regularization when fitting the offset logistic regression. According to their Theorem 4.2, the resulting coefficient estimator can converge to truth at a $O((s\log d/\bar{N}_l)^{1/2})$ rate, where $s$ is the total number of nonzero coefficients and $d$ is the total dimension of $X$ and $S$. The convergence rates of the coefficient estimators then easily translate into the convergence rates of the resulting propensity score estimator and density ratio estimator. In other words, the theory in \cite{zhang2021double} shows that $\rho_{\bar{N}_l, r}, \rho_{\bar{N}_l, \lambda}$ are $O(\bar{N}_l^{-1/2})$  in the low dimensional regime and $O((s\log d/\bar{N}_l)^{1/2})$ in the high dimensional sparse regime.
In both cases, the conditions $\rho_{\bar{N}_l, r}\rho_{\bar{N}_l, \tilde{\mu}} = o(\bar{N}_l^{-1/2})$ and $\rho_{\bar{N}_l, \lambda}\rho_{\bar{N}_l, \tilde{\mu}} = o(\bar{N}_l^{-1/2})$  in \Cref{thm: normality2} are plausible as long as the regression estimators $\htmuk, k = 1, \dots, K$ are consistent (and the sparsity level $s$ is moderate).

  While the offset logistic regression in \cite{zhang2021double} is a parametric regression model, we consider extending to accommodate more general nonlinear function classes. Specifically, we can model the negative logarithm of density ratio $-\log \lambda^*_0(S, X, t)$ by a  function $f_t(S, X)$ within a general function class $\mathcal{F}$. This gives the following  offset model:
  \begin{align}\label{eq: offset-general}
    \log\prns{\frac{\rt_N(t, X, S)}{1 - \rt_N(t, X, S)}} = f_t(S, X) + \log\prns{\frac{\pi_{N, t}}{1-\pi_{N, t}}}, ~~ t\in\{0, 1\}. 
    \end{align}
    
    Then we can fit this model by maximizing the corresponding likelihood, which is equivalent to fitting a binary regression with an offset term by minimizing the cross-entropy loss. 
    The offset logistic regression is a special example with $\mathcal{F}$ as a class of linear functions  of $S$ and $X$. 
    We may also consider other more flexible nonlinear function classes, such as the linear sieves \citep{chen2007large}, reproducing kernel Hilbert space \citep{smola1998learning}, boosted trees \citep{friedman2001greedy}, neural networks \citep{goodfellow2016deep} and so on.
     These function classes may be more expressive and can  better approximate the (log) density ratio function. 
    Theoretically analyzing the convergence rates of the resulting estimators with general function approximation is an interesting theoretical question for future study. In \Cref{sec: simulation}, we empirically test the class of boosted trees by fitting an offset gradient boosted machine and demonstrate that it can achieve reasonably good performance.   

  Finally, we remark that when using the offset regression above to estimate the labeling propensity score and the density ratio, the resulting estimator $\hat\delta$ in \Cref{def: estimator} and estimator $\hat\delta^{\op{rev}}$ in \Cref{def: estimator2} become identical. Indeed, for the labeling propensity score estimator in \cref{eq: offset-estimator}, we have $1/\hat r(t, X, S) = {\hat\lambda(S, X, t)}/{\hat \pi_{N, t}}$.
  So the resulting estimator $\hat\delta$ is equivalent to the estimator $\hat\delta^{\op{rev}}$ where we use $\hat\lambda(S, X, t)$ to estimate the density ratio function $\lambda^*_N$ and $\hat \pi_N/\hat \pi_{N, t}$ to estimate the parameter\footnote{By Bayes' rule, we can easily show that $\nu_t^* = \pi_N/\pi_{N, t}$, so $\hat \pi_N/\hat \pi_{N, t}$ is a reasonable estimator for $\nu_t^*$.} $\nu_t^*$. This shows the close connection between estimating density ratio and estimating inverse propensity score.

\section{Numerical Studies}\label{sec: empirics}

{
In this section, we demonstrate the performance of our proposed estimators using both real data and numerical simulations. \Cref{sec: numerics-real} uses a real world dataset and focuses on validating the results in the $N_l \asymp N_u$ setting, while \Cref{sec: simulation} uses extensive simulations to validate the results in the $N_l \ll N_u$ setting. 
}.

\subsection{Real-Data Experiment}\label{sec: numerics-real}
In this part, we use experimental data for the Greater
Avenues to Independence (GAIN) job training program, a  job assistance
program designed in the late 1980s for low-income people  in California.
We employ the dataset analyzed in \cite{athey2016estimating}, which  contains results from a large-scale randomized experiment in four counties in California (Alameda, Los Angeles, and San Diego and Riverside). 
For
each experiment participant, this dataset records a binary treatment variable indicating whether
being treated by the GAIN program or not, quarterly earnings after treatment
assignments, and other covariate information.
We use the Riverside data to illustrate the performance of our proposed estimator and other benchmarks in estimating the average treatment effect in long-term earnings. 
We provide additional results for Los Angeles data and San Diego data in \Cref{sec: numeric-more} 
(Alameda dataset is very small and thus omitted).

We construct a labelled dataset and unlabelled dataset based on 
the Riverside data ($N = 5445$, with $4405$ treated units and $1040$ control units). 
We draw a fraction $r \in \braces{0.1, 0.3, 0.5}$ of units from the Riverside data {completely at random} as the labelled data ($R=1$) and use the rest as the unlabelled data ($R=0$). 
In our analysis, the primary outcome is the long-term earning in the $36$th quarter after the treatment assignment, and  surrogates are quarterly earnings up to $s$ quarters after the treatment, where $s \in \braces{8, 16, 24, 32}$.
We also consider additional covariates including age, gender, education, and ethnicity. 
The surrogates and covariates are observed on both labelled and unlabelled data, but the primary outcome is only observed on the labelled data. 

\begin{figure}
\centering 
\includegraphics[width=\textwidth]{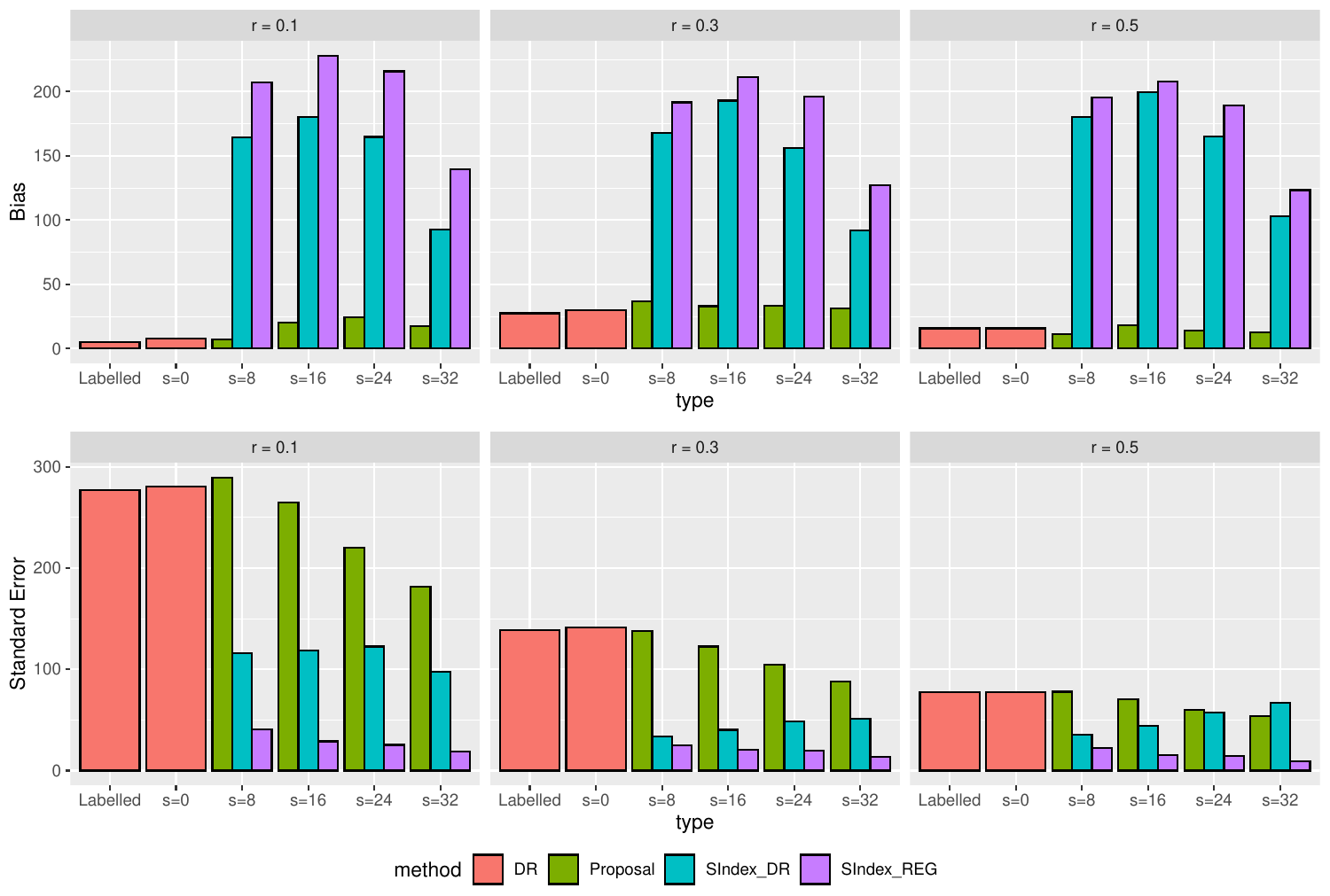}
\caption{Bias and standard error of different estimators over $120$ repetitions of experiments based on Riverside data. All nuisances are estimated by random forests.}
\label{fig: error-river-rf}
\end{figure}

\begin{figure}
\centering 
\includegraphics[width=\textwidth]{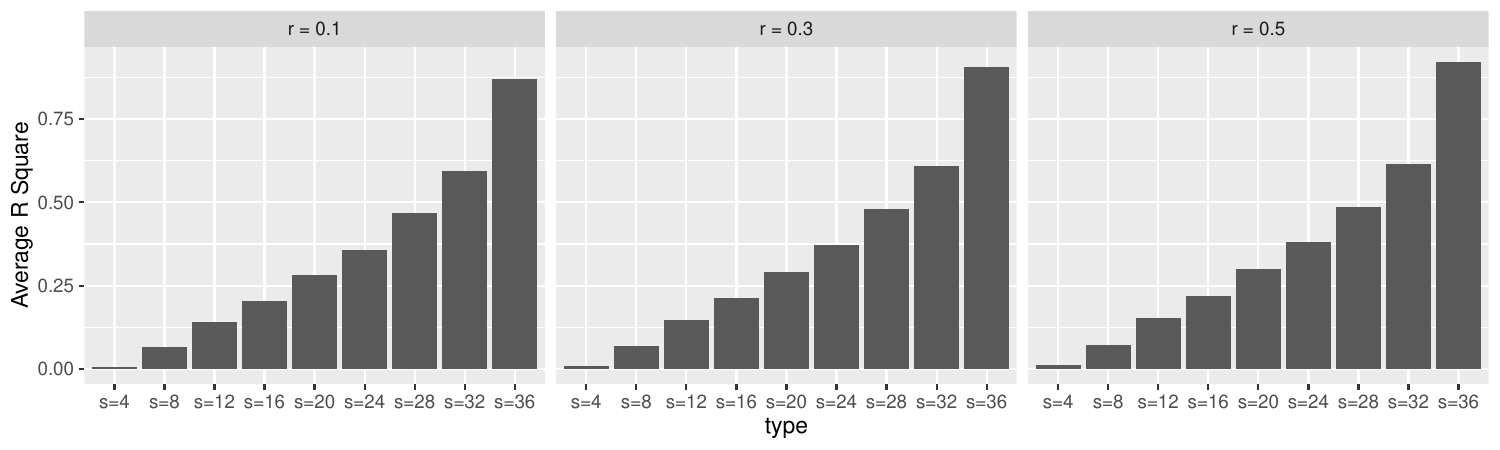}
\caption{Cross-validated R squares of random forest regressions with surrogates relative to baseline random forest regressions with only covariates, both restricted to the treated units. The R squares are averaged over $120$ repetitions of the experiments. 
Results for the control units are very similar and thus omitted. }
\label{fig: R-square}
\end{figure}

We apply five types of estimators to  
estimate the average treatment effect in the primary outcome: our proposed estimator in \Cref{def: estimator} (denoted as ``Proposal''), the surrogate index estimator based on regression imputation proposed in \cite{athey2016estimating} (denoted as ``SIndex\_REG''), the semiparametrically efficient surrogate index estimator proposed in \cite{chen2021semiparametric} (denoted as ``SIndex\_DR''), the doubly robust estimator that uses both datasets but ignores the surrogates, corresponding to the influence function $\psi_I$ in \Cref{thm: efficiency-comparison}  (denoted as ``DR'' with type $s = 0$), and the doubly robust estimator that only uses the labelled dataset (denoted as ``DR'' with type ``Labelled'').
We estimate the nuisance functions in these estimators by fitting random forests, gradient boosting, and LASSO respectively, all cross-fitted with $K = 5$ folds. 
The SIndex\_DR estimator is implemented by the longterm R package developed by \cite{chen2021semiparametric}, where all hyperparameters in nuisance estimation are automatically tuned by cross-validation. 
All other estimators (including our proposal) are implemented using R packages ranger (for fitting random forests), gbm (for fitting gradient boosting), and glmnet (for fitting  LASSO). The hyperparameters in random forests and boosting are set as default values without any tuning, and those in LASSO are tuned by cross-validation.
Note that in our data generating process, both the treatment assignments and the primary outcome missingness are completely at random, so we directly use sample frequencies as estimates for the corresponding propensity nuisances.
These propensity score estimates are trivially consistent. 
In this section, we focus on estimation results with random forest nuisance estimators. 
Additional results for other nuisance estimators are provided in the appendix.

The upper row in \Cref{fig: error-river-rf} shows the bias  of different treatment effect estimators with nuisances estimated by random forests, over $120$ repetitions of the experiments on the Riverside data.  
We observe that the surrogate index  estimators always have high bias. 
This bias is not primarily caused by nuisance estimation, since according to the theory in \cite{chen2021semiparametric}, the efficient surrogate index estimator (SIndex\_DR) should be robust to the nuisance estimation bias given that the propensities are well estimated (its bias is indeed lower than the bias of SIndex\_Reg). 
Instead, the high bias may be plausibly attributed to  the violation of the statistical surrogacy condition assumed for the surrogate index methods. 
As the number of surrogates grows, the violation of the statistical surrogacy condition is alleviated, so the bias of surrogate index estimators drops, but it 
still remains very high, posing serious challenges for statistical inference. 
In contrast, our proposed estimator has very low bias across all settings.

The lower row in \Cref{fig: error-river-rf} shows the standard errors of different  estimators.
We note that the standard errors of our proposed estimator decrease with the number of surrogates.
This is because with more surrogates we can better predict the primary outcome, as we validate in \Cref{fig: R-square}.
Moreover, the amount of standard error reduction due to introducing surrogates is overall higher when the missingness of the primary outcome is more severe (smaller $r$). 
These empirical observations support our qualitative conclusions from the efficiency analysis in \Cref{corollary: efficiency-loss}: the efficiency gains from leveraging surrogates improve for more predictive  surrogates and more missing primary outcome. 
Interestingly, the standard errors of surrogate index estimators may not decrease with the number of surrogates. 

In \cref{sec: numeric-more}, we provide additional results for other types of nuisance estimators (Gradient Boosting and LASSO) and  results for Los Angeles data and San Diego data.
These additional results show similar patterns of bias and standard error.

\subsection{Simulation Experiments}\label{sec: simulation}

  In this part, we simulate the very-large-unlabeled-data setting described in \Cref{sec: extension}. Specifically, we generate samples of total size $N = \{2000, 4000, 8000, 16000, 32000, 64000\}$, and randomly draw a vanishing proportion $\pi_N = N^{-1/4}$ of them as the labeled data $(R = 1)$ while viewing the remaining $1-\pi_N$ of them as the unlabeled data $(R = 0)$.

  We simulate covariates $X \in \R{6}$ for the labeled and unlabeled subsamples according to two different multivariate normal distributions:
  \begin{align*}
  X \mid R = 1 \sim \mathcal{N}\prns{\mu_{1}, \sigma^2_{1}I_{6 \times 6}},
   ~~ X \mid R = 0 \sim \mathcal{N}\prns{\mu_{2}, \sigma^2_{2}I_{6 \times 6}},
  \end{align*}
  where $\mu_{1} = (1, 1, 1, 1, 1, 1)^\top$, $\mu_{2} = (1/2, 1/2, 1/2, 3/2, 3/2, 3/2)^\top$, $\sigma_{1}^2 = 1$,  $\sigma_{2}^2 = 1/2$ and $I_{6 \times 6}$ is a $6 \times 6$ identity matrix. 
  The treatment variable is sampled according to a logistic regression model: given $X = x$, the probability of $T = 1$ is given by $1/(1+\exp({\sum_{j=1}^6 \eta_j x_j}))$, where $(\eta_1, \dots, \eta_6) = (1, -1/2, -1/2, -1/2, -1/2, 1)$. 
  We also simulate  potential surrogate outcomes $S(0) \in \R{5}, S(1) \in \R{5}$ from two different normal distributions: for $j = 1, \dots, 5$ and $t \in \{0, 1\}$, $S_j(t)$'s are independently drawn from the distribution $\mathcal{N}((-1)^{t+1}, 1)$. Furthermore, we simulate the potential target outcome $Y(1) \in \R{}, Y(0) \in \R{}$ as follows:
  \begin{align*}
  &Y(t) = (-1)^{t+1} +  \frac{(-1)^t}{2}\frac{\sum_{j=1}^5 S_j(t)}{5} + \sum_{j=1}^6 \alpha_j X_j^2 + \sum_{j=1}^6 \beta_j X_j  + \varepsilon, ~~ \varepsilon \sim \mathcal{N}(0, 1), \\
  \text{where } & (\alpha_1, \alpha_2, \alpha_3, \alpha_4, \alpha_5, \alpha_6) = (1, 0, 1, 0, 1, 0) \text{ and } (\beta_1, \beta_2, \beta_3, \beta_4, \beta_5, \beta_6) = (0, 1, 0, 1, 0, 1).
  \end{align*}
  We observe  $(X, R, T)$ and $S = S(T)$ on both the labeled and unlabeled data, while observing $Y = Y(T)$ only on the labeled data. It is easy to verify that this simulation corresponds to a missing-at-random setting where the missing indicator $R$ is dependent with the covariates $X$, but conditionally independent with all other variables given $X$. The density ratio $\lambda^*_0$ is a second order polynomial function of $X$, and the induced labeling propensity score $\rt_N$ satisfies the offset logistic regression model in \Cref{sec: nuisance-vanish}.

  We carry out $5$-fold cross-fitting estimation of the nuisance functions   with two different types of models: parametric models (parametric)  and gradient boosting (GB). For parametric models, we use second order polynomial   regressions to estimate $\tmut, \mut$, use linear logistic regression to estimate $\et_N$, and  use offset logistic regressions with second order polynomials to estimate $\rt_N, \lambda^*_0$. These parametric models are all correctly specified\footnote{We also tried misspecified linear models without the squared terms of $X$. We found them perform  much worse than the methods shown in this section due to model misspecification,  so we omitted them for brevity.}. For gradient boosting, we apply the $\texttt{gbm}$   package in $\texttt{R}$ to fit gradient boosted trees with $1000$ trees and learning rate $0.05$ to estimate $\tmut, \mut, \et_N$,
  and 
   use the same specification with a logit link and the offset in \cref{eq: offset-general} to estimate $\rt_N, \lambda^*_0$. The offset term can be easily incorporated, since the $\texttt{gbm}$ function in $\texttt{R}$ can also take an $\texttt{offset}$ term with coefficient $1$. 
   To estimate the long-term average treatment effect, 
    we apply the estimator $\hat\delta$ in \Cref{def: estimator} with these  parametric and GB nuisance  estimates, which is equivalent to the estimator $\hat\delta^{\op{rev}}$ in \Cref{def: estimator2} according to our discussion at the end of \Cref{sec: nuisance-vanish}. For reference, we also calculate the values of $\hat\delta$ with the true values of all nuisances and refer to this as ``oracle''. 

  We first demonstrate that the offset logistic regression and its gradient boosting extension can indeed effectively estimate the labeling propensity score and the density ratio, when the relative proportion $\pi_N$  of the  labeled data vanishes but its absolute size grows. In \Cref{fig: vanish-nuisance-error}, we show  the five-fold cross-validation errors  of the offset logistic regression and offset gradient boosting over $1000$ replications of experiments. The error in estimating the labeling propensity score $\rt_N$ is measured by $\|\rt_N/\hat r - 1\|$ and the error in estimating the  density ratio $\lambda^*_0$ is measured by $\|\log \hat\lambda - \log \lambda^*_0\|$. We observe that both errors decrease as the  sample size $N$  grows. The offset logistic regression model is a correctly specified parametric model and  consistently achieves lower estimation errors, which agrees with the theory in \cite{zhang2021double}.  The offset gradient boosting, as a flexible nonparametric model, does not use the knowledge of the true functional form of nuiances. Its estimation errors are higher but still properly decrease with $N$.

  \Cref{table: ate-0.25} summarizes the results of ATE estimator $\hat\delta$ in \Cref{def: estimator} from $1000$ replications of the experiments, where the plug-in nuisance values are either the truth (oracle) or estimates given by parametric models (Parametric) and gradient boosting (GB). We observe that the biases of all estimators are very small, while the ATE estimators using estimated nuisance values  have higher standard deviations than that using the true nuisance values. 
  This means that nuisance estimation may  result in higher  variance in finite-sample ATE estimation. However, the difference drops with the sample size $N$, verifying that the impact of nuisance estimation is asymptotically negaligible. We also estimate the standard errors of the ATE estimators based on the efficient influence function in \cref{eq: vanishing-if} and the cross-fitted nuisance estimates, and construct the corresponding $95\%$ confidence intervals. \Cref{table: ate-0.25} reports the average length and the coverage frequency of the confidence intervals. We observe that all  coverage is close to the nominal level, showing that the efficient influence function in \cref{eq: vanishing-if} well characterizes the asymptotic behavior of the ATE estimator. 

  In \Cref{sec: numeric-more}, we further show results for $\pi_N = N^{-1/3}$  and $\pi_N = 2.5 N^{-1/2}$ respectively. The proportions of labeled data vanish at faster rates in these two settings\footnote{The scaling factor $2.5$ in $\pi_N = 2.5 N^{-1/2}$ is set merely to ensure the existence of at least $100$ labeled data points  to fit  gradient boosting. }, resulting in smaller labeled data. As a result, the performance of all methods somewhat degrade. However, the qualitative conclusions remain the same.

\begin{figure}
\centering
\includegraphics[width=\textwidth]{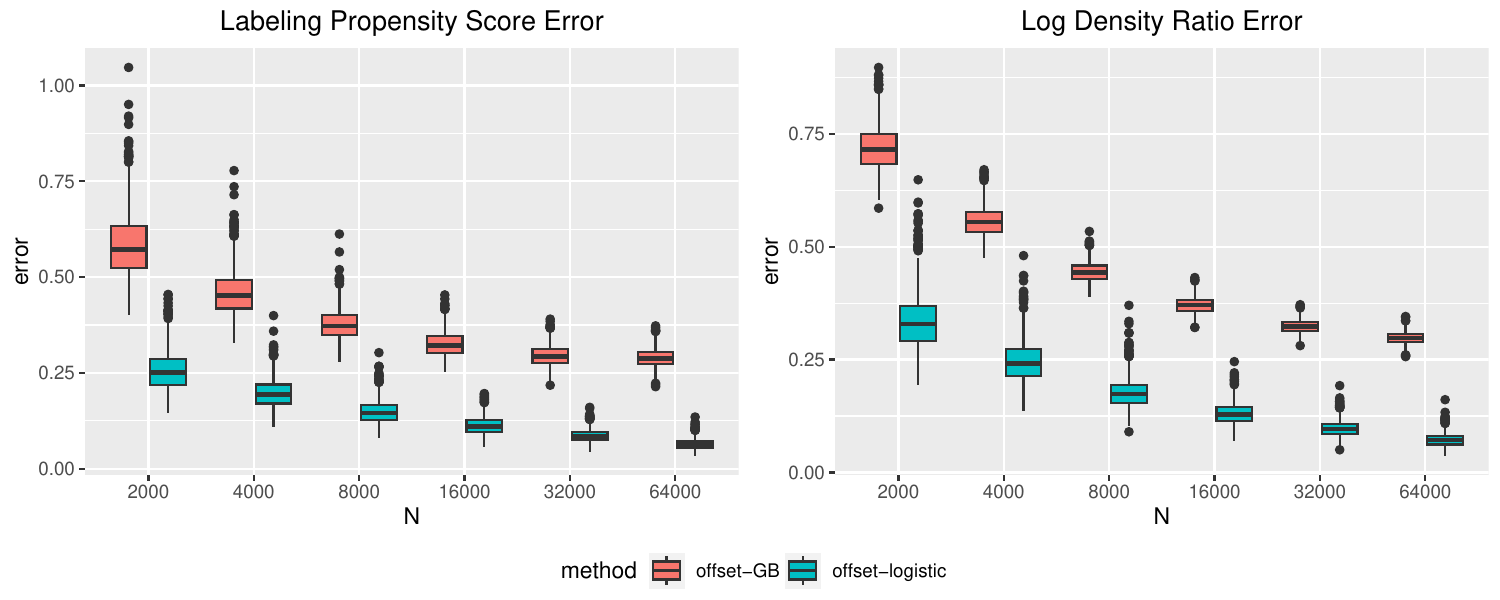}
\caption{Cross-validation errors in estimating the labeling propensity score $\rt_N$ and the logarithm of the density ratio $\lambda^*_0$ when the sample size $N$ growsand the proportion of labeled data is $\pi_N = N^{-1/4}$. The errors are based on $1000$ replications of the experiments.}\label{fig: vanish-nuisance-error}
\end{figure}

\begin{table}
\centering 
\begin{tabular}{cccccccc}
\toprule 
\multirow{2}{*}{Measure}  & \multirow{2}{*}{Nuisance Est.} & \multicolumn{6}{c}{ $N$}  \\
& & $2000$ & $4000$ & $8000$ & $16000$ & $32000$ & $64000$ \\
\cmidrule(lr){1-2}\cmidrule(lr){3-8}
\multirow{3}{*}{Bias} & Oracle & 0.0037 & 0.0041 & 0.0048 & 0.0014 & 0.0012 & 0.0048 \\
 & Parametric & 0.0028 &  0.0042 & 0.0035 & 0.0004 & 0.0013 & 0.0047 \\\vspace{0.25cm}
& GB & 0.0067 & 0.0258 & 0.0001 & 0.0032 & 0.0009 & 0.0008 \\
\multirow{3}{*}{Standard  Deviation} &Oracle & 0.2821 & 0.2283 & 0.1809 & 0.1429 & 0.1096 & 0.0900 \\
 & Parametric & 0.3275 & 0.2507 & 0.1891 & 0.1467 & 0.1105 & 0.0908 \\\vspace{0.25cm}
& GB &  0.5635 & 0.3669 & 0.2377 & 0.1695 & 0.1210 & 0.0937 \\
\multirow{3}{*}{CI Length} & Oracle & 1.0776 & 0.8842 & 0.6933 & 0.5516 & 0.4345 & 0.3395 \\
 & Parametric & 1.2303 & 0.9554 & 0.7208 & 0.5644 & 0.4396 & 0.3423 \\\vspace{0.25cm}
& GB & 2.0629 & 1.3704 & 0.9017 & 0.6435 & 0.4660 & 0.3459 \\
\multirow{3}{*}{CI Coverage} & Oracle &  0.959 & 0.959 & 0.957 & 0.948 & 0.963 & 0.940 \\
& Parametric & 0.943 & 0.945 & 0.950 & 0.944 & 0.963 &  0.940 \\
& GB & 0.943 & 0.946 & 0.945 & 0.942 & 0.953 & 0.942 \\
\bottomrule
\end{tabular}
\caption{Results of ATE estimation  with true nuisance values (oracle) or nuisances estimated by parametric models (Parametric) and gradient boosting (GB).}\label{table: ate-0.25}
\end{table}

\section{Conclusion}\label{sec: conclusion}
We study the estimation of average treatment effect with only a limited number of primary outcome observations but  abundant observations of surrogates. 
Particularly, we avoid stringent surrogacy conditions that are prone to violation in practice and only assume standard causal inference and missing data assumptions. 

We investigated the role of surrogates by comparing the efficiency lower  bounds of ATE  with and without presence of surrogates, and also bounds in some intermediary cases. 
We find that efficiency gains from optimally leveraging surrogates crucially depend on how well surrogates can predict the primary outcome and also the fraction of missing outcome data.
These results provide valuable insights on when leveraging surrogates can be beneficial. 
We also show that the efficiency results are valid in two regimes: when the size of surrogate observations is comparable to the size of primary-outcome observations   (i.e., $N_u \asymp N_l$), and when the former is much larger than the other (i.e., $N_u \gg N_l$).
The second regime violates the overlap condition commonly assumed in the literature and was thus understudied in the past, even though it is highly relevant in modern data collection. 
Our analysis shows that the second regime can be viewed as a limiting case of the first regime, which reveals the intimate connection between these two regimes. 

Moreover, we propose ATE estimators that can employ any flexible machine learning method for nuisance parameter estimation. We provide strong statistical guarantee for the proposed estimators by showing that they are robust to nuisance estimation bias, and they asymptotically achieve the semiparametric efficiency lower bounds under high-level rate conditions for the machine learning nuisance estimators. 
We further develop consistent estimators for the efficiency lower bounds and construct asymptotically valid confidence intervals for ATE. 
In summary, our methods provide a principled approach to optimally leverage surrogate observations when only a limited number of primary-outcome observations are available and without using strong surrogacy assumptions. 

\section*{Acknowledgments}
The authors thank
the Associate Editor and three anonymous reviewers for their insights and suggestions, which have led to significant improvement of this paper. \\

\noindent\emph{Conflict of interest}: We have no conflict of interest to disclose.

\section*{Funding}
Nathan Kallus acknowledges that this material is based upon work supported by the National Science Foundation under Grant No. 1846210. Xiaojie Mao is supported in part by National Natural Science Foundation of China (grant numbers 72201150, 72322001, and 72293561) and National Key
R\&D Program of China (grant number 2022ZD0116700).

\section*{Data availability}
The California GAIN dataset analyzed in \Cref{sec: numerics-real} contains sensitive individual data and cannot be shared publicly. It may be shared upon request. 
The data analyzed in \Cref{sec: simulation} are simulated according to the processes described in that section. The code script used to generate the simulated data is available at \url{https://github.com/CausalML/Efficient_estimation_surrogate}.

\bibliographystyle{plainnat}
\bibliography{semisupervised}

\begin{thebibliography}{73}
\providecommand{\natexlab}[1]{#1}
\providecommand{\url}[1]{\texttt{#1}}
\expandafter\ifx\csname urlstyle\endcsname\relax
  \providecommand{\doi}[1]{doi: #1}\else
  \providecommand{\doi}{doi: \begingroup \urlstyle{rm}\Url}\fi

\bibitem[Athey et~al.(2019)Athey, Chetty, Imbens, and
  Kang]{athey2016estimating}
Susan Athey, Raj Chetty, Guido~W Imbens, and Hyunseung Kang.
\newblock The surrogate index: Combining short-term proxies to estimate
  long-term treatment effects more rapidly and precisely.
\newblock Working Paper 26463, National Bureau of Economic Research, November
  2019.
\newblock URL \url{http://www.nber.org/papers/w26463}.

\bibitem[Athey et~al.(2020)Athey, Chetty, and Imbens]{athey2020combining}
Susan Athey, Raj Chetty, and Guido Imbens.
\newblock Combining experimental and observational data to estimate treatment
  effects on long term outcomes, 2020.

\bibitem[Azriel et~al.(2016)Azriel, Brown, Sklar, Berk, Buja, and
  Zhao]{azriel2016semi}
David Azriel, Lawrence~D Brown, Michael Sklar, Richard Berk, Andreas Buja, and
  Linda Zhao.
\newblock Semi-supervised linear regression.
\newblock \emph{arXiv preprint arXiv:1612.02391}, 2016.

\bibitem[Bickel et~al.(1993)Bickel, Klaassen, Bickel, Ritov, Klaassen, Wellner,
  and Ritov]{bickel1993efficient}
Peter~J Bickel, Chris~AJ Klaassen, Peter~J Bickel, Ya’acov Ritov, J~Klaassen,
  Jon~A Wellner, and YA'Acov Ritov.
\newblock \emph{Efficient and adaptive estimation for semiparametric models},
  volume~4.
\newblock Johns Hopkins University Press Baltimore, 1993.

\bibitem[Breiman(2001)]{breiman2001random}
Leo Breiman.
\newblock Random forests.
\newblock \emph{Machine learning}, 45\penalty0 (1):\penalty0 5--32, 2001.

\bibitem[Carroll and Wand(1991)]{carroll1991semiparametric}
Raymond~J Carroll and Matt~P Wand.
\newblock Semiparametric estimation in logistic measurement error models.
\newblock \emph{Journal of the Royal Statistical Society: Series B
  (Methodological)}, 53\penalty0 (3):\penalty0 573--585, 1991.

\bibitem[Chakrabortty et~al.(2018)Chakrabortty, Cai,
  et~al.]{chakrabortty2018efficient}
Abhishek Chakrabortty, Tianxi Cai, et~al.
\newblock Efficient and adaptive linear regression in semi-supervised settings.
\newblock \emph{The Annals of Statistics}, 46\penalty0 (4):\penalty0
  1541--1572, 2018.

\bibitem[Chakrabortty et~al.(2022{\natexlab{a}})Chakrabortty, Dai, and
  Carroll]{chakrabortty2022semi}
Abhishek Chakrabortty, Guorong Dai, and Raymond~J Carroll.
\newblock Semi-supervised quantile estimation: Robust and efficient inference
  in high dimensional settings.
\newblock \emph{arXiv preprint arXiv:2201.10208}, 2022{\natexlab{a}}.

\bibitem[Chakrabortty et~al.(2022{\natexlab{b}})Chakrabortty, Dai, and
  Tchetgen]{chakrabortty2022general}
Abhishek Chakrabortty, Guorong Dai, and Eric~Tchetgen Tchetgen.
\newblock A general framework for treatment effect estimation in
  semi-supervised and high dimensional settings.
\newblock \emph{arXiv preprint arXiv:2201.00468}, 2022{\natexlab{b}}.

\bibitem[Chen et~al.(2007)Chen, Geng, and Jia]{chen2007criteria}
Hua Chen, Zhi Geng, and Jinzhu Jia.
\newblock Criteria for surrogate end points.
\newblock \emph{Journal of the Royal Statistical Society: Series B (Statistical
  Methodology)}, 69\penalty0 (5):\penalty0 919--932, 2007.

\bibitem[Chen and Ritzwoller(2023)]{chen2021semiparametric}
Jiafeng Chen and David~M. Ritzwoller.
\newblock Semiparametric estimation of long-term treatment effects.
\newblock \emph{Journal of Econometrics}, 237\penalty0 (2, Part A):\penalty0
  105545, 2023.
\newblock ISSN 0304-4076.
\newblock \doi{https://doi.org/10.1016/j.jeconom.2023.105545}.
\newblock URL
  \url{https://www.sciencedirect.com/science/article/pii/S0304407623002610}.

\bibitem[Chen and Breslow(2004)]{chen2004semiparametric}
Jinbo Chen and Norman~E Breslow.
\newblock Semiparametric efficient estimation for the auxiliary outcome problem
  with the conditional mean model.
\newblock \emph{Canadian Journal of Statistics}, 32\penalty0 (4):\penalty0
  359--372, 2004.

\bibitem[Chen et~al.(2003)Chen, Leung, and Qin]{chen2003information}
Song~Xi Chen, Denis H~Y Leung, and Jing Qin.
\newblock Information recovery in a study with surrogate endpoints.
\newblock \emph{Journal of the American Statistical Association}, 98\penalty0
  (464):\penalty0 1052--1062, 2003.

\bibitem[Chen et~al.(2008{\natexlab{a}})Chen, Leung, and
  Qin]{chen2008improving}
Song~Xi Chen, Denis~HY Leung, and Jing Qin.
\newblock Improving semiparametric estimation by using surrogate data.
\newblock \emph{Journal of the Royal Statistical Society: Series B (Statistical
  Methodology)}, 70\penalty0 (4):\penalty0 803--823, 2008{\natexlab{a}}.

\bibitem[Chen and Guestrin(2016)]{chen2016xgboost}
Tianqi Chen and Carlos Guestrin.
\newblock Xgboost: A scalable tree boosting system.
\newblock In \emph{Proceedings of the 22nd acm sigkdd international conference
  on knowledge discovery and data mining}, pages 785--794, 2016.

\bibitem[Chen(2007)]{chen2007large}
Xiaohong Chen.
\newblock Large sample sieve estimation of semi-nonparametric models.
\newblock \emph{Handbook of econometrics}, 6:\penalty0 5549--5632, 2007.

\bibitem[Chen et~al.(2005)Chen, Hong, and Tamer]{chen2005measurement}
Xiaohong Chen, Han Hong, and Elie Tamer.
\newblock Measurement error models with auxiliary data.
\newblock \emph{The Review of Economic Studies}, 72\penalty0 (2):\penalty0
  343--366, 2005.

\bibitem[Chen et~al.(2008{\natexlab{b}})Chen, Hong, Tarozzi,
  et~al.]{chen2008semiparametric}
Xiaohong Chen, Han Hong, Alessandro Tarozzi, et~al.
\newblock Semiparametric efficiency in gmm models with auxiliary data.
\newblock \emph{The Annals of Statistics}, 36\penalty0 (2):\penalty0 808--843,
  2008{\natexlab{b}}.

\bibitem[Chen and Chen(2000)]{chen2000unified}
Yi-Hau Chen and Hung Chen.
\newblock A unified approach to regression analysis under double-sampling
  designs.
\newblock \emph{Journal of the Royal Statistical Society: Series B (Statistical
  Methodology)}, 62\penalty0 (3):\penalty0 449--460, 2000.

\bibitem[Cheng et~al.(2021)Cheng, Ananthakrishnan, and Cai]{cheng2018efficient}
David Cheng, Ashwin~N Ananthakrishnan, and Tianxi Cai.
\newblock Robust and efficient semi-supervised estimation of average treatment
  effects with application to electronic health records data.
\newblock \emph{Biometrics}, 77\penalty0 (2):\penalty0 413--423, 2021.

\bibitem[Chernozhukov et~al.(2018)Chernozhukov, Chetverikov, Demirer, Duflo,
  Hansen, Newey, and Robins]{chernozhukov2018double}
Victor Chernozhukov, Denis Chetverikov, Mert Demirer, Esther Duflo, Christian
  Hansen, Whitney Newey, and James Robins.
\newblock Double/debiased machine learning for treatment and structural
  parameters, 2018.

\bibitem[Cochran(2007)]{cochran2007sampling}
William~G Cochran.
\newblock \emph{Sampling techniques}.
\newblock John Wiley \& Sons, 2007.

\bibitem[Echt et~al.(1991)Echt, Liebson, Mitchell, Peters, Obias-Manno, Barker,
  Arensberg, Baker, Friedman, Greene, et~al.]{echt1991mortality}
Debra~S Echt, Philip~R Liebson, L~Brent Mitchell, Robert~W Peters, Dulce
  Obias-Manno, Allan~H Barker, Daniel Arensberg, Andrea Baker, Lawrence
  Friedman, H~Leon Greene, et~al.
\newblock Mortality and morbidity in patients receiving encainide, flecainide,
  or placebo: the cardiac arrhythmia suppression trial.
\newblock \emph{New England journal of medicine}, 324\penalty0 (12):\penalty0
  781--788, 1991.

\bibitem[Elwert and Winship(2014)]{elwert2014endogenous}
Felix Elwert and Christopher Winship.
\newblock Endogenous selection bias: The problem of conditioning on a collider
  variable.
\newblock \emph{Annual review of sociology}, 40:\penalty0 31--53, 2014.

\bibitem[Engel and Walstra(1991)]{engel1991increasing}
B~Engel and P~Walstra.
\newblock Increasing precision or reducing expense in regression experiments by
  using information from a concomitant variable.
\newblock \emph{Biometrics}, pages 13--20, 1991.

\bibitem[{FDA}(2016)]{FDA2016}
{FDA}.
\newblock Accelerated approval program.
\newblock 2016.
\newblock URL
  \url{https://www.fda.gov/drugs/information-healthcare-professionals-drugs/accelerated-approval-program}.

\bibitem[Fleming and DeMets(1996)]{fleming1996surrogate}
Thomas~R Fleming and David~L DeMets.
\newblock Surrogate end points in clinical trials: are we being misled?
\newblock \emph{Annals of internal medicine}, 125\penalty0 (7):\penalty0
  605--613, 1996.

\bibitem[Fleming et~al.(1994)Fleming, Prentice, Pepe, and
  Glidden]{fleming1994surrogate}
Thomas~R Fleming, Ross~L Prentice, Margaret~S Pepe, and David Glidden.
\newblock Surrogate and auxiliary endpoints in clinical trials, with potential
  applications in cancer and aids research.
\newblock \emph{Statistics in medicine}, 13\penalty0 (9):\penalty0 955--968,
  1994.

\bibitem[Frangakis and Rubin(2002)]{frangakis2002principal}
Constantine~E Frangakis and Donald~B Rubin.
\newblock Principal stratification in causal inference.
\newblock \emph{Biometrics}, 58\penalty0 (1):\penalty0 21--29, 2002.

\bibitem[Freedman et~al.(1992)Freedman, Graubard, and
  Schatzkin]{freedman1992statistical}
Laurence~S Freedman, Barry~I Graubard, and Arthur Schatzkin.
\newblock Statistical validation of intermediate endpoints for chronic
  diseases.
\newblock \emph{Statistics in medicine}, 11\penalty0 (2):\penalty0 167--178,
  1992.

\bibitem[Friedman(2001)]{friedman2001greedy}
Jerome~H Friedman.
\newblock Greedy function approximation: a gradient boosting machine.
\newblock \emph{Annals of statistics}, pages 1189--1232, 2001.

\bibitem[Ghosh(2008)]{ghosh2008semiparametric}
Debashis Ghosh.
\newblock Semiparametric inference for surrogate endpoints with bivariate
  censored data.
\newblock \emph{Biometrics}, 64\penalty0 (1):\penalty0 149--156, 2008.

\bibitem[Goodfellow et~al.(2016)Goodfellow, Bengio, and
  Courville]{goodfellow2016deep}
Ian Goodfellow, Yoshua Bengio, and Aaron Courville.
\newblock \emph{Deep learning}.
\newblock MIT press, 2016.

\bibitem[Hahn(1998)]{hahn1998role}
Jinyong Hahn.
\newblock On the role of the propensity score in efficient semiparametric
  estimation of average treatment effects.
\newblock \emph{Econometrica}, pages 315--331, 1998.

\bibitem[Hotz et~al.(2006)Hotz, Imbens, and Klerman]{hotz2006evaluating}
V~Joseph Hotz, Guido~W Imbens, and Jacob~A Klerman.
\newblock Evaluating the differential effects of alternative welfare-to-work
  training components: A reanalysis of the california gain program.
\newblock \emph{Journal of Labor Economics}, 24\penalty0 (3):\penalty0
  521--566, 2006.

\bibitem[Hou et~al.(2021)Hou, Guo, and Cai]{hou2021surrogate}
Jue Hou, Zijian Guo, and Tianxi Cai.
\newblock Surrogate assisted semi-supervised inference for high dimensional
  risk prediction.
\newblock \emph{arXiv preprint arXiv:2105.01264}, 2021.

\bibitem[Imai et~al.(2011)Imai, Keele, Tingley, and
  Yamamoto]{imai2011unpacking}
Kosuke Imai, Luke Keele, Dustin Tingley, and Teppei Yamamoto.
\newblock Unpacking the black box of causality: Learning about causal
  mechanisms from experimental and observational studies.
\newblock \emph{American Political Science Review}, 105\penalty0 (4):\penalty0
  765--789, 2011.

\bibitem[Imbens et~al.(2022)Imbens, Kallus, Mao, and Wang]{imbens2022long}
Guido Imbens, Nathan Kallus, Xiaojie Mao, and Yuhao Wang.
\newblock Long-term causal inference under persistent confounding via data
  combination.
\newblock \emph{arXiv preprint arXiv:2202.07234}, 2022.

\bibitem[Imbens and Rubin(2015)]{imbens2015causal}
Guido~W Imbens and Donald~B Rubin.
\newblock \emph{Causal inference in statistics, social, and biomedical
  sciences}.
\newblock Cambridge University Press, 2015.

\bibitem[Kang et~al.(2007)Kang, Schafer, et~al.]{kang2007demystifying}
Joseph~DY Kang, Joseph~L Schafer, et~al.
\newblock Demystifying double robustness: A comparison of alternative
  strategies for estimating a population mean from incomplete data.
\newblock \emph{Statistical science}, 22\penalty0 (4):\penalty0 523--539, 2007.

\bibitem[Lauritzen et~al.(2004)Lauritzen, Aalen, Rubin, and
  Arjas]{lauritzen2004discussion}
Steffen~L Lauritzen, Odd~O Aalen, Donald~B Rubin, and Elja Arjas.
\newblock Discussion on causality [with reply].
\newblock \emph{Scandinavian Journal of Statistics}, 31\penalty0 (2):\penalty0
  189--201, 2004.

\bibitem[Lin et~al.(1997)Lin, Fleming, and De~Gruttola]{lin1997estimating}
DY~Lin, TR~Fleming, and V~De~Gruttola.
\newblock Estimating the proportion of treatment effect explained by a
  surrogate marker.
\newblock \emph{Statistics in medicine}, 16\penalty0 (13):\penalty0 1515--1527,
  1997.

\bibitem[Little and Rubin(2019)]{little2019statistical}
Roderick~JA Little and Donald~B Rubin.
\newblock \emph{Statistical analysis with missing data}, volume 793.
\newblock John Wiley \& Sons, 2019.

\bibitem[{NCATS}(2019)]{NIH2019}
{NCATS}.
\newblock About the national center for advancing translational sciences.
\newblock 2019.
\newblock URL \url{https://ncats.nih.gov/about}.

\bibitem[Newey and Robins(2018)]{newey2018cross}
Whitney~K Newey and James~R Robins.
\newblock Cross-fitting and fast remainder rates for semiparametric estimation.
\newblock \emph{arXiv preprint arXiv:1801.09138}, 2018.

\bibitem[Neyman(1923)]{neyman1923applications}
Jersey Neyman.
\newblock Sur les applications de la th{\'e}orie des probabilit{\'e}s aux
  experiences agricoles: Essai des principes.
\newblock \emph{Roczniki Nauk Rolniczych}, 10:\penalty0 1--51, 1923.

\bibitem[Parast et~al.(2017)Parast, Cai, and Tian]{parast2017evaluating}
Layla Parast, Tianxi Cai, and Lu~Tian.
\newblock Evaluating surrogate marker information using censored data.
\newblock \emph{Statistics in medicine}, 36\penalty0 (11):\penalty0 1767--1782,
  2017.

\bibitem[Pearl(2009)]{pearl2009causality}
Judea Pearl.
\newblock \emph{Causality}.
\newblock Cambridge university press, 2009.

\bibitem[Pepe(1992)]{pepe1992inference}
Margaret~Sullivan Pepe.
\newblock Inference using surrogate outcome data and a validation sample.
\newblock \emph{Biometrika}, 79\penalty0 (2):\penalty0 355--365, 1992.

\bibitem[Pepe et~al.(1994)Pepe, Reilly, and Fleming]{pepe1994auxiliary}
Margaret~Sullivan Pepe, Marie Reilly, and Thomas~R Fleming.
\newblock Auxiliary outcome data and the mean score method.
\newblock \emph{Journal of Statistical Planning and Inference}, 42\penalty0
  (1-2):\penalty0 137--160, 1994.

\bibitem[Post et~al.(2010)Post, Buijs, Stolk, de~Vries, and
  Le~Cessie]{post2010analysis}
Wendy~J Post, Ciska Buijs, Ronald~P Stolk, Elisabeth~GE de~Vries, and Saskia
  Le~Cessie.
\newblock The analysis of longitudinal quality of life measures with
  informative drop-out: a pattern mixture approach.
\newblock \emph{Quality of Life Research}, 19\penalty0 (1):\penalty0 137--148,
  2010.

\bibitem[Prentice(1989)]{prentice1989surrogate}
Ross~L Prentice.
\newblock Surrogate endpoints in clinical trials: definition and operational
  criteria.
\newblock \emph{Statistics in medicine}, 8\penalty0 (4):\penalty0 431--440,
  1989.

\bibitem[Price et~al.(2018)Price, Gilbert, and van~der
  Laan]{price2018estimation}
Brenda~L Price, Peter~B Gilbert, and Mark~J van~der Laan.
\newblock Estimation of the optimal surrogate based on a randomized trial.
\newblock \emph{Biometrics}, 74\penalty0 (4):\penalty0 1271--1281, 2018.

\bibitem[Reilly and Pepe(1995)]{reilly1995mean}
Marie Reilly and Margaret~Sullivan Pepe.
\newblock A mean score method for missing and auxiliary covariate data in
  regression models.
\newblock \emph{Biometrika}, 82\penalty0 (2):\penalty0 299--314, 1995.

\bibitem[Robins and Rotnitzky(1995)]{robins1995semiparametric}
James~M Robins and Andrea Rotnitzky.
\newblock Semiparametric efficiency in multivariate regression models with
  missing data.
\newblock \emph{Journal of the American Statistical Association}, 90\penalty0
  (429):\penalty0 122--129, 1995.

\bibitem[Robins et~al.(1994)Robins, Rotnitzky, and Zhao]{robins1994estimation}
James~M Robins, Andrea Rotnitzky, and Lue~Ping Zhao.
\newblock Estimation of regression coefficients when some regressors are not
  always observed.
\newblock \emph{Journal of the American statistical Association}, 89\penalty0
  (427):\penalty0 846--866, 1994.

\bibitem[Rubin(2005)]{rubin2005causal}
Donald~B Rubin.
\newblock Causal inference using potential outcomes: Design, modeling,
  decisions.
\newblock \emph{Journal of the American Statistical Association}, 100\penalty0
  (469):\penalty0 322--331, 2005.

\bibitem[Scharfstein et~al.(1999)Scharfstein, Rotnitzky, and
  Robins]{scharfstein1999adjusting}
Daniel~O Scharfstein, Andrea Rotnitzky, and James~M Robins.
\newblock Adjusting for nonignorable drop-out using semiparametric nonresponse
  models.
\newblock \emph{Journal of the American Statistical Association}, 94\penalty0
  (448):\penalty0 1096--1120, 1999.

\bibitem[Smola and Sch{\"o}lkopf(1998)]{smola1998learning}
Alex~J Smola and Bernhard Sch{\"o}lkopf.
\newblock \emph{Learning with kernels}, volume~4.
\newblock Citeseer, 1998.

\bibitem[Sugiyama and Kawanabe(2012)]{sugiyama2012machine}
Masashi Sugiyama and Motoaki Kawanabe.
\newblock \emph{Machine learning in non-stationary environments: Introduction
  to covariate shift adaptation}.
\newblock MIT press, 2012.

\bibitem[Sugiyama et~al.(2012)Sugiyama, Suzuki, and
  Kanamori]{sugiyama2012density}
Masashi Sugiyama, Taiji Suzuki, and Takafumi Kanamori.
\newblock \emph{Density ratio estimation in machine learning}.
\newblock Cambridge University Press, 2012.

\bibitem[Tsiatis(2007)]{tsiatis2007semiparametric}
Anastasios Tsiatis.
\newblock \emph{Semiparametric theory and missing data}.
\newblock Springer Science \& Business Media, 2007.

\bibitem[van~der Laan and Robins(2003)]{vanderLaan2003}
Mark~J. van~der Laan and James~M. Robins.
\newblock \emph{Unified Approach for Causal Inference and Censored Data}, pages
  311--370.
\newblock Springer New York, New York, NY, 2003.
\newblock ISBN 978-0-387-21700-0.
\newblock \doi{10.1007/978-0-387-21700-0_6}.
\newblock URL \url{https://doi.org/10.1007/978-0-387-21700-0_6}.

\bibitem[van~der Vaart(1998)]{van2000asymptotic}
Aad~W van~der Vaart.
\newblock \emph{Asymptotic statistics}.
\newblock Cambridge University Press, 1998.

\bibitem[VanderWeele(2013)]{vanderweele2013surrogate}
Tyler~J VanderWeele.
\newblock Surrogate measures and consistent surrogates.
\newblock \emph{Biometrics}, 69\penalty0 (3):\penalty0 561--565, 2013.

\bibitem[Wang et~al.(2009)Wang, Scharfstein, Tan, and
  MacKenzie]{wang2009causal}
Weiwei Wang, Daniel Scharfstein, Zhiqiang Tan, and Ellen~J MacKenzie.
\newblock Causal inference in outcome-dependent two-phase sampling designs.
\newblock \emph{Journal of the Royal Statistical Society: Series B (Statistical
  Methodology)}, 71\penalty0 (5):\penalty0 947--969, 2009.

\bibitem[Wang et~al.(2020)Wang, Parast, Tian, and Cai]{wang2020model}
Xuan Wang, Layla Parast, Lu~Tian, and Tianxi Cai.
\newblock Model-free approach to quantifying the proportion of treatment effect
  explained by a surrogate marker.
\newblock \emph{Biometrika}, 107\penalty0 (1):\penalty0 107--122, 2020.

\bibitem[Wasserman(2004)]{wasserman2004all}
Larry Wasserman.
\newblock \emph{All of statistics: a concise course in statistical inference},
  volume~26.
\newblock Springer, 2004.

\bibitem[Yu and Nan(2006)]{yu2006revisit}
Menggang Yu and Bin Nan.
\newblock A revisit of semiparametric regression models with missing data.
\newblock \emph{Statistica Sinica}, pages 1193--1212, 2006.

\bibitem[Zhang et~al.(2019)Zhang, Brown, Cai, et~al.]{zhang2019semi}
Anru Zhang, Lawrence~D Brown, T~Tony Cai, et~al.
\newblock Semi-supervised inference: General theory and estimation of means.
\newblock \emph{The Annals of Statistics}, 47\penalty0 (5):\penalty0
  2538--2566, 2019.

\bibitem[Zhang and Bradic(2019)]{zhang2019high}
Yuqian Zhang and Jelena Bradic.
\newblock High-dimensional semi-supervised learning: in search for optimal
  inference of the mean.
\newblock \emph{arXiv preprint arXiv:1902.00772}, 2019.

\bibitem[Zhang et~al.(2021)Zhang, Chakrabortty, and Bradic]{zhang2021double}
Yuqian Zhang, Abhishek Chakrabortty, and Jelena Bradic.
\newblock Double robust semi-supervised inference for the mean: Selection bias
  under mar labeling with decaying overlap.
\newblock \emph{arXiv preprint arXiv:2104.06667}, 2021.

\bibitem[Zheng and Laan(2011)]{zheng2011cross}
Wenjing Zheng and Mark~J Laan.
\newblock Cross-validated targeted minimum-loss-based estimation.
\newblock In \emph{Targeted Learning}, pages 459--474. Springer, 2011.

\end{thebibliography}

\newpage
\appendix

This appendix is organized as follows. In \Cref{sec: criteria}, we review the statistical surrogacy criterion in \cite{prentice1989surrogate} and discuss its limitations. In \Cref{sec: connection}, we compare our paper with some existing literature in terms of the assumptions and estimation methods. 
\Cref{sec: missing-pattern} extends the efficiency comparisons in 
\Cref{sec: efficiency-settings} by analyzing some additional missing data patterns. \Cref{sec: more-extension} provides some supplementary materials for \Cref{sec: more-extension}. In particular, it extends the efficiency comparisons in \Cref{thm: efficiency-comparison} to the $N_l \ll N_u$ regime. It also studies the ATE on the unlabelled population in this regime. \Cref{sec: att} extends our theory to the average treatment effect on the treated parameter. All proofs are included in \Cref{sec: proof}. Finally, \Cref{sec: numeric-more} presents some additional experimental results related to \Cref{sec: numerics-real,sec: simulation}. 

\appendix 
\section{Statistical Surrogacy Condition}\label{sec: criteria}
In this section, we review the definition of statistical surrogacy condition proposed by \cite{prentice1989surrogate}. Throughout this section, we implicitly condition on pre-treatment variables $X$ in all distributional statements. For example, $Y \perp T \mid S$ stands for $Y \perp T \mid S, X$.

 \cite{prentice1989surrogate} suggested a valid surrogate $S$ satisfy that a test of the null of no effect of the treatment $T$ on surrogate $S$ should serve as a valid test of the null of no effect of treatment $T$ on outcome $Y$. They formalized this by the following ``statistical surrogate'' condition. 

\begin{definition}[Statistical Surrogate]\label{def: stat-surrogate}
$S$ is said to
be a surrogate for the effect of $T$ on $Y$ if (i) $Y \perp T \mid S$; (ii) $S$ and $Y$ are correlated.
\end{definition}

To justify this condition, \cite{prentice1989surrogate} considered a time-to-event primary outcome with surrogates sampled from a stochastic process.
For simplicity, we now adapt their argument to a single-time measurement case. 
Note that under the statistical surrogacy condition, we can easily show that  
\[
    F(y \mid t) = \int F(y \mid t, s)dF(s \mid t) =  \int F(y \mid s)dF(s \mid t),
\]
where $F(y \mid t), F(y \mid t, s), F(s \mid t)$ are conditional cumulative distribution functions for the corresponding random variables. This equation shows that under the statistical surrogacy condition, $T$ is dependent with $Y$ only if $T$ is dependent with $S$. 
See also \cite{freedman1992statistical} for a similar argument for binary outcome. 
However, this type of argument is based purely on the statistical relationship rather causal relationship among the treatment, surrogate, and the primary outcome. 
Thus, the causal implication of this argument is not immediately straightforward. 

In the language of causal diagram \citep{pearl2009causality}, the statistical surrogacy condition is often characterized by \Cref{fig:subim1} \citep{vanderweele2013surrogate,athey2016estimating}. In this diagram, $T$ has no direct effect on $Y$, and $S$ has an effect on $Y$. As a result, $T$ can have an effect on $Y$ only if $T$ has an effect on $S$. 
Also, no direct effect of $T$ on $Y$ implies that $T$ is independent of $Y$ given $S$, namely the condition (i) in the definition of statistical surrogate. 
However, this relationship may be invalidated by any unmeasured confounder between the surrogate and the primary outcome (i.e., the variable $U$ in \Cref{fig:subim2}):
since $S$ is a collider on the causal path $T \rightarrow S \leftarrow U \rightarrow Y$, conditioning on $S$ can induce spurious dependence between $T$ and $Y$,  even though there is no direct effect of $T$ on $Y$ \citep{elwert2014endogenous}.
In other words, no direct effect of the treatment $T$ on the primary outcome $Y$ does not necessarily ensure conditional independence between the treatment $T$ and primary outcome $Y$ given surrogates $S$, if there exists any unmeasured confounder between surrogates $S$ and the primary outcome $Y$.

The following proposition, adapted from Proposition 3 in \cite{athey2016estimating}, reiterates the implication of \Cref{fig:subim1} in language of potential outcomes, and further elucidates the causal assumptions underlying the statistical surrogacy condtion. We denote $Y(t, s)$ as the potential outcome that would have been realized if treatment $T$ had been set to $t$, and surrogate oucomes $S$ had been set to $s$.

\begin{proposition}\label{prop: sufficient-stat-surrogacy}
$S$ satisfies condition (i) in \Cref{def: stat-surrogate} if the following conditions hold:
\begin{enumerate}[label=(\roman*)]
\item $Y(t, s) = Y(t', s)$ for any $t, t' \in \{0, 1\}$ and $s \in \mathcal{S}$; \label{cond: no-direct};
\item $T \perp (Y(0, s), Y(1, s))_{s \in \mathcal{S}}$;  \label{cond: T-Y-unconfound}
\item $S(t) \perp \{Y(t, s)\}_{s \in \mathcal{S}} \mid T = t$ for any $t \in \{0, 1\}$. \label{cond: S-Y-unconfound}
\end{enumerate}
\end{proposition}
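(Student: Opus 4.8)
The plan is to show directly that the conditional law of $Y$ given $(T,S)$ does not depend on $T$, peeling off the three hypotheses in turn. Throughout I condition implicitly on $X$, as stipulated in this section, and I write $\mathbf{Y} = \{Y(t,s)\}_{s \in \mathcal{S}}$ for the full collection of potential outcomes. The first move is to invoke hypothesis \ref{cond: no-direct}: since $Y(t,s) = Y(t',s)$ for all $t,t'$, the potential outcome depends only on the surrogate argument, so I may write $Y(s) := Y(0,s) = Y(1,s)$, and by consistency the observed outcome becomes $Y = Y(T, S) = Y(S)$ with $S = S(T)$. Thus $Y$ is the (random) potential-outcome process evaluated at the observed surrogate value.

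Next I would compute $\pr(Y \in A \mid T = t, S = s)$ for an arbitrary measurable set $A$. By consistency, on the event $\{T = t\}$ we have $S = S(t)$, so this probability equals $\pr(Y(S(t)) \in A \mid T = t, S(t) = s)$, and on the conditioning event $\{S(t) = s\}$ the random argument collapses, giving $\pr(Y(s) \in A \mid T = t, S(t) = s)$. Here $Y(s)$ is a single coordinate of $\mathbf{Y}$ at a \emph{fixed} index $s$, so it no longer depends on $S(t)$.

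The two remaining hypotheses then strip away the conditioning variables. Hypothesis \ref{cond: S-Y-unconfound} states $S(t) \perp \mathbf{Y} \mid T = t$, which under \ref{cond: no-direct} reads $S(t) \perp \{Y(s)\}_{s} \mid T = t$; applying it to the coordinate $Y(s)$ lets me drop $\{S(t) = s\}$, so $\pr(Y(s) \in A \mid T = t, S(t) = s) = \pr(Y(s) \in A \mid T = t)$. Then hypothesis \ref{cond: T-Y-unconfound}, which under \ref{cond: no-direct} reads $T \perp \{Y(s)\}_{s}$, lets me drop $\{T = t\}$, yielding $\pr(Y(s) \in A \mid T = t) = \pr(Y(s) \in A)$. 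The upshot $\pr(Y \in A \mid T = t, S = s) = \pr(Y(s) \in A)$ is free of $t$; marginalizing over $T$ shows $\pr(Y \in A \mid S = s)$ equals the same quantity, which is exactly condition (i), $Y \perp T \mid S$.

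The step I expect to require the most care is the collapse of the random argument in the second paragraph, namely justifying $\pr(Y(S(t)) \in A \mid T = t, S(t) = s) = \pr(Y(s) \in A \mid T = t, S(t) = s)$ and then invoking \ref{cond: S-Y-unconfound} at the matching value $S(t) = s$. The delicate point is that the index $s$ of the coordinate $Y(s)$ coincides with the conditioning value of $S(t)$, so one must argue at the level of the full collection $\mathbf{Y}$: conditional independence of $S(t)$ from the entire process given $T = t$ guarantees that the conditional law of each fixed coordinate $Y(s)$ agrees with its law given only $T = t$ for almost every value of $S(t)$, in particular the value $s$. Making this rigorous is a standard but subtle measure-theoretic point about evaluating a potential-outcome process at its own observed index; everything else is bookkeeping with the three independence statements.
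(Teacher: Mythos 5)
Your proposal is correct and follows essentially the same argument as the paper's proof: rewrite $\pr(Y \le y \mid T = t, S = s)$ via consistency in terms of the potential outcomes $Y(t,s)$, use hypothesis \ref{cond: S-Y-unconfound} to drop the conditioning on $\{S(t) = s\}$, and then use \ref{cond: no-direct} and \ref{cond: T-Y-unconfound} to remove the dependence on $t$. The only difference is cosmetic — you invoke \ref{cond: no-direct} at the outset to collapse notation to $Y(s)$ and push through to the marginal law, whereas the paper keeps both indices and chains the equalities — so the two proofs coincide in substance.
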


\Cref{prop: sufficient-stat-surrogacy} above shows that no direct effect of treatment on the primary outcome (condition \ref{cond: no-direct}), and no unmeasured confounding either between treatment and the primary outcome (condition \ref{cond: T-Y-unconfound}) or between surrogates and primary outcome (condition \ref{cond: S-Y-unconfound}) together ensure statistical surrogacy condition. 
Conditions \ref{cond: T-Y-unconfound}\ref{cond: S-Y-unconfound} are also commonly assumed in mediation analysis that aims to decompose the total effect of treatment $T$ into the direct effect not through post-treatment variable $S$ and the effect mediated by $S$ \citep[\eg, ][]{imai2011unpacking}.
Here condition \ref{cond: T-Y-unconfound} may be satisfied by design in randomized trials where the treatment assignment $T$ is under perfect control. 
However, surrogates $S$ and their relationship to the primary outcome are generally  not manipulatable, so \ref{cond: no-direct} and \ref{cond: S-Y-unconfound} are often (if not always) violated even in perfect randomized trials.  

\begin{figure}[t]
\begin{subfigure}{0.5\textwidth}
\includegraphics[width=0.9\linewidth, height=4cm]{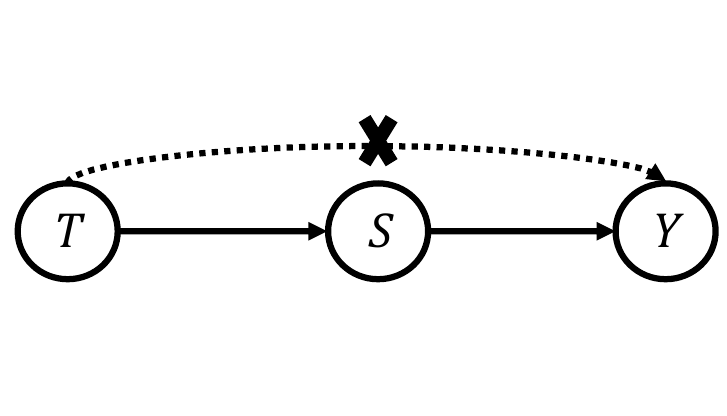} 
\caption{}
\label{fig:subim1}
\end{subfigure}
\begin{subfigure}{0.5\textwidth}
\includegraphics[width=0.9\linewidth, height=4cm]{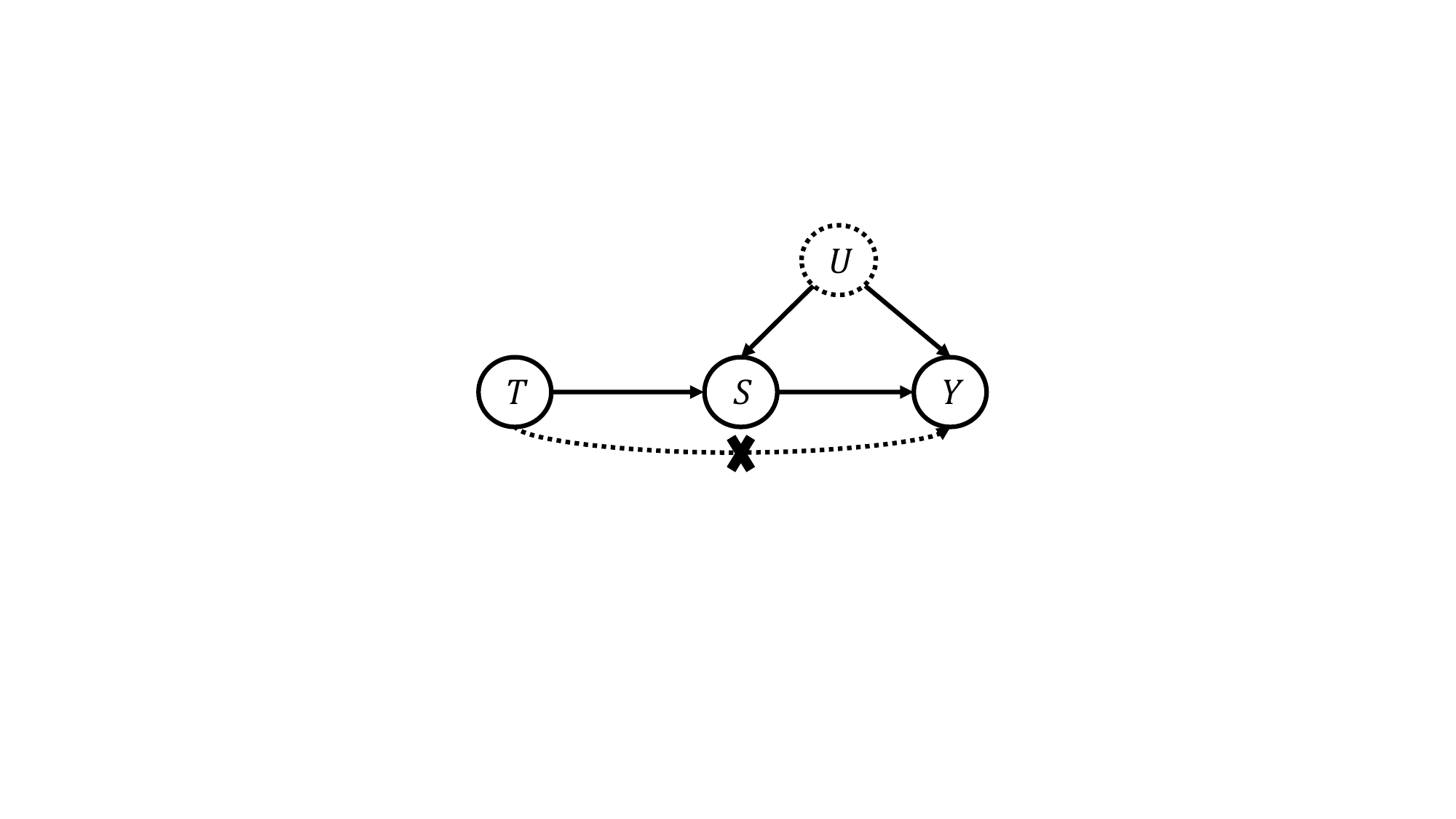}
\caption{}
\label{fig:subim2}
\end{subfigure}
\caption{Causal diagrams illustrating the statistical surrogacy condition: (a) statistical surrogacy condition holds; (b) statistical surrogacy condition can be violated in presence of unmeasured confounder $U$.}
\end{figure}

The discussions above also reveal that it is perhaps misleading to follow the quite common practice of interpreting statistical surrogates as variables that block all causal pathways between the treatment and primary outcome (i.e., no-direct-effect assumption characterized by condition \ref{cond: no-direct} in \Cref{prop: sufficient-stat-surrogacy}). Actually, the no-direct-effect condition is neither sufficient nor necessary for the conditional independence between the treatment and the primary outcome given surrogates (i.e., condition (i) in \Cref{def: stat-surrogate}), since there may exist unmeasured confounders between the surrogates and the primary outcome (i.e., condition \ref{cond: S-Y-unconfound} in \Cref{prop: sufficient-stat-surrogacy} is violated).  
For example, section 5.2 in \cite{frangakis2002principal} provide counter-examples 
to show that statistical surrogates may not satisfy no-direct-effect condition and vice versa.

\section{Comparisons with Previous Literature}\label{sec: connection}
\begin{table}
      \begin{subtable}[t]{.24\linewidth}%
    \centering%
        \begin{tabular}{|cccc|c|}
    \hline 
        $X$ & $T$ & $S$ & $Y$ & $R$ \\
    \hline 
        \checkmark & \checkmark & \checkmark & \checkmark & \multirow{3}{*}{$1$}   \\
        \vdots & \vdots & \vdots & \vdots &    \\
        \checkmark & \checkmark & \checkmark & \checkmark &    \\
    \hline 
         \checkmark & \checkmark & \checkmark & ? & \multirow{3}{*}{$0$}  \\
         \vdots & \vdots & \vdots & \vdots &   \\
         \checkmark & \checkmark & \checkmark & ? &   \\
    \hline 
    \end{tabular}
    \caption{Our paper and \cite{cheng2018efficient}.}\label{table: papers1}
  \end{subtable}
    \begin{subtable}[t]{.24\linewidth}%
    \centering%
    \begin{tabular}{|cccc|c|}
    \hline 
        $X$ & $T$ & $S$ & $Y$ & $R$ \\
    \hline 
        \checkmark & \checkmark & ? & \checkmark & \multirow{3}{*}{$1$}   \\
        \vdots & \vdots & \vdots & \vdots &    \\
        \checkmark & \checkmark & ? & \checkmark &    \\
    \hline 
         \checkmark & \checkmark & ? & ? & \multirow{3}{*}{$0$}  \\
         \vdots & \vdots & \vdots & \vdots &   \\
         \checkmark & \checkmark & ? & ? &   \\
    \hline 
    \end{tabular}
    \caption{\cite{zhang2019high}.}\label{table: papers2}
  \end{subtable}
  \begin{subtable}[t]{.24\linewidth}%
    \centering%
    \begin{tabular}{|cccc|c|}
    \hline 
        $X$ & $T$ & $S$ & $Y$ & $R$ \\
    \hline 
        \checkmark & ? & \checkmark & \checkmark & \multirow{3}{*}{$1$}   \\
        \vdots & \vdots & \vdots & \vdots &    \\
        \checkmark & ? & \checkmark & \checkmark &    \\
    \hline 
         \checkmark & \checkmark & \checkmark & ? & \multirow{3}{*}{$0$}  \\
         \vdots & \vdots & \vdots & \vdots &   \\
         \checkmark & \checkmark & \checkmark & ? &   \\
    \hline 
    \end{tabular}
    \caption{The setting for estimation in \cite{athey2016estimating}.}\label{table: papers3}
  \end{subtable}
    \begin{subtable}[t]{.24\linewidth}%
    \centering%
    \begin{tabular}{|cccc|c|}
    \hline 
        $X$ & $T$ & $S$ & $Y$ & $R$ \\
    \hline 
        \checkmark & \checkmark & \checkmark & \checkmark & \multirow{3}{*}{$1$}   \\
        \vdots & \vdots & \vdots & \vdots &    \\
        \checkmark & \checkmark & \checkmark & \checkmark &    \\
    \hline 
         \checkmark & \checkmark & \checkmark & \checkmark & \multirow{3}{*}{$0$} \\
         \vdots & \vdots & \vdots & \vdots &   \\
         \checkmark & \checkmark & \checkmark & \checkmark &   \\
    \hline 
    \end{tabular}
    \caption{The setting for the efficiency analysis in \cite{athey2016estimating}.}\label{table: papers4}
  \end{subtable} 
\caption{Illustrations for the observed data in our paper and \cite{cheng2018efficient,zhang2019high,athey2016estimating} respectively. Here ``\checkmark'' stands for an observed value, and ``?'' stands for a missing value.}
\end{table}

\subsection{Comparison with \cite{cheng2018efficient}}
\cite{cheng2018efficient} consider the same data configuration as our paper (\Cref{table: papers1}), but they assume that the primary outcome is missing completely at random. 

Recall that our estimator under MCAR setting reduces to 
\begin{align}\label{eq: DML-est3}
    \hat{\delta}^{\op{rev}}&=\frac{1}{K}\sum_{k = 1}^K\expectnk\bigg\{\frac{T}{\hek(X)}(\htmuk(1, X, S) - \hmuk(1 , X)) - \frac{1 - T}{1 - \hek(X)}(\htmuk(0, X, S) - \hmuk(0 , X))\nonumber \\
    &+ \hmuk(1, X) - \hmuk(0, X) + \frac{TR}{\hek(X)\hat{\pi}_N}(Y - \htmuk(1, X, S)) - \frac{(1 - T)R}{(1 - \hek(X))\hat{\pi}_N}(Y - \htmuk(0, X, S))\bigg\}.
\end{align}

This  estimator and the estimator in \cite{cheng2018efficient} both asymptotically achieve the efficiency lower bound  in \Cref{thm: vanishing-if} with $\lambda^*(X) = 1$. 
The estimator in \cite{cheng2018efficient} is valid only under MCAR setting,
while our estimator can be straightforwardly extended to MAR setting, if augmented with a density ratio estimator (\Cref{def: estimator2}).
Moreover, the estimator in \cite{cheng2018efficient} imposes parametric assumptions on the nuisances and relies on computationally intensive resampling methods to construct confidence intervals.  
In contrast, our estimator can leverage the power of any flexible machine learning nuisance estimator under generic rate conditions, and its confidence interval can be easily constructed using a straightforward plug-in estimator for standard errors (\Cref{thm: conf-interval}).
Furthermore, \cite{cheng2018efficient} focuses on the setting of $N_l \ll N_u$, while our analysis accommodates both $N_l \ll N_u$ and $N_l \asymp N_u$, and reveals the the whole spectrum of efficiency limits across two regimes. 

\subsection{Comparison with \cite{zhang2019high}} 
\cite{zhang2019high} focus on the efficiency improvement from unlabeled data, without studying possible efficiency gains from incorporating surrogates (\Cref{table: papers2}, or equivalently the setting I in \Cref{table: setting1}).

This setting can be viewed as a special case of our problem: we can view $S$ as an empty set of random variables and thus $\tmut(T, X, s) = \mut(T, X)$ for any $s \in \mathcal{S}$. Consequently, our estimator in \cref{eq: DML-est3} corresponding MCAR primary outcome reduces to the following form:  
\begin{align}
\label{eq: DML-est4}
&\frac{1}{N}\sum_{i = 1}^N \left[\hat{\mu}_{k(i)}(1, X_i) - \hat{\mu}_{k(i)}(0, X_i)\right] \nonumber \\
&\qquad\qquad+ \frac{1}{N_l}\sum_{i \in \mathcal{I}_l} \left[\frac{T_i}{\hat{e}_{k(i)}(X_i)}(Y_i - \hat{\mu}_{k(i)}(1, X_i)) - \frac{1 - T_i}{(1 - \hat{e}_{k(i)}(X_i))}(Y_i - \hat{\mu}_{k(i)}(0, X_i))\right],
\end{align}
where $k(i)$ is the fold that the $i$th observation belongs to. 
This estimator  recovers the semi-supervised ATE estimator in \cite{zhang2019high}.

\subsection{Comparison with \cite{athey2016estimating}}\label{sec: Athey}
In \cite{athey2016estimating}, they assumed the statistical surrogacy condition that $Y \perp T \mid X, S, R = 1$, namely the observed primary outcome and the treatment on the labeled data are independent given the pre-treatment covariates and surrogates.
This assumption is crucial for the identification of treatment effects in the  setting considered by \cite{athey2016estimating}: the treatment and primary outcome are observed on separate datasets, but surrogates are always observed (\Cref{table: papers3}). 
Their setting is different and more challenging than our setting: in our setting the treatment is always observed (\Cref{table: papers1}), but in their setting the treatment is missing on the labelled data.
Although the statistical surrogacy condition seems inevitable in their setting to fuse the two separate datasets without any complete observation, the causal assumptions underlying this statistical surrogacy condition may be too strong to hold in practice, as we discussed in  \Cref{sec: criteria}.

\section{{Different Missingness Patterns}}\label{sec: missing-pattern}
{In \Cref{sec: efficiency-settings}, we consider four different settings with increasing amount of observed information (see \Cref{table: four-settings}). In particular, 
in setting I 
the surrogate variables are completely missing, in setting II the  surrogate variables are observed if and only if the primary outcome is observed (i.e., the missingness patterns of the surrogate variables and the primary outcome are identical), in setting III the  surrogate variables are fully observed. 
Here we further consider two additional settings with partially observed surrogate variables: one is when the surrogate variables are observed only for a part of units whose primary outcome is observed (which can be viewed as an intermediate setting between setting I and setting II, thus named as setting I-II), and the other is when the surrogate variables are observed for units whose  primary outcome may  not be observed (which can be viewed as an intermediate setting between setting II and setting III, thus named as setting II-III). These two additional settings are illustrated in \Cref{table: additional-setting}(a) and \Cref{table: additional-setting}(b) respectively, where we introduce the variable $R_S$ to indicate the observation of the surrogate variables $S$. Obviously, we have $R = 1$ if $R_S = 1$ in setting I-II while $R_S = 1$ if $R = 1$ in setting II-III.
}

\begin{table}
    \begin{subtable}[t]{.48\linewidth}%
    \centering%
    \begin{tabular}{|cc|cc|cc|}
    \hline 
        $X$ & $T$ & $S$ & $R_S$ & $Y$ & $R$ \\
    \hline 
        \checkmark & \checkmark & \checkmark & 1 & \checkmark & 1 \\
        \vdots & \vdots & \vdots & \vdots & \vdots & \vdots  \\
        \checkmark & \checkmark & \checkmark & 1 & \checkmark & 1 \\
        \cline{3-4} 
        \checkmark & \checkmark & ? & 0 & \checkmark & 1 \\
        \vdots & \vdots & \vdots & \vdots & \vdots & \vdots  \\
        \checkmark & \checkmark & ? & 0 & \checkmark & 1 \\
         \cline{5-6} 
         \checkmark & \checkmark & ? & 0 & ? & 0 \\
         \vdots & \vdots & \vdots & \vdots & \vdots & \vdots  \\
         \checkmark & \checkmark & ? & 0 & ? & 0 \\
         \checkmark & \checkmark & ? & 0 & ? & 0 \\
         \vdots & \vdots & \vdots & \vdots & \vdots & \vdots  \\
         \checkmark & \checkmark & ? & 0 & ? & 0 \\
         \hline 
    \end{tabular}
    \caption{Setting I-II}\label{table: setting1}
  \end{subtable}
  \begin{subtable}[t]{.48\linewidth}%
    \centering%
    \begin{tabular}{|cc|cc|cc|}
    \hline 
        $X$ & $T$ & $S$ & $R_S$ & $Y$ & $R$ \\
    \hline 
        \checkmark & \checkmark & \checkmark & 1 & \checkmark & 1 \\
        \vdots & \vdots & \vdots & \vdots & \vdots & \vdots  \\
        \checkmark & \checkmark & \checkmark & 1 & \checkmark & 1 \\
        \checkmark & \checkmark & \checkmark & 1 & \checkmark & 1 \\
        \vdots & \vdots & \vdots & \vdots & \vdots & \vdots  \\
        \checkmark & \checkmark & \checkmark & 1 & \checkmark & 1 \\
         \cline{5-6} 
         \checkmark & \checkmark & \checkmark & 1 & ? & 0 \\
         \vdots & \vdots & \vdots & \vdots & \vdots & \vdots  \\
         \checkmark & \checkmark & \checkmark & 1 & ? & 0 \\
        \cline{3-4} 
         \checkmark & \checkmark & ? & 0 & ? & 0 \\
         \vdots & \vdots & \vdots & \vdots & \vdots & \vdots  \\
         \checkmark & \checkmark & ? & 0 & ? & 0 \\
         \hline 
    \end{tabular}
    \caption{Setting II-III}\label{table: setting2}
  \end{subtable}
\caption{Illustrations for the observed data in two additional settings. Here ``\checkmark'' stands for an observed value, and ``?'' stands for a missing value. 
}\label{table: additional-setting}
\end{table}

{To enable the use of surrogate variables in these two settings, we need to additionally assume that $R_S$ is also missing at random. Thus we further impose the following assumption  in addition to 
\Cref{assump: mar-1,assump: MAR2}. 
\begin{assumption}\label{assump: mar3}
Suppose that $R_S \perp S(t) \mid X, T$ for any $t = 0, 1$. 
\end{assumption}
}

{It is easy to verify that \Cref{assump: mar3} implies $R_S \perp S \mid X, T$. 
Below, we derive the efficiency bound for setting I-II and setting II-III respectively.}

{\begin{theorem}\label{thm: efficiency-intermediate}
Under assumptions in \Cref{thm: efficiency-comparison} and \Cref{assump: mar3}, the semiparametric efficiency bounds for $\delta^*$ under the setting I-II and the setting II-III are $V^*_{I-II} = \Eb{\psi^2_{\text{I-II}}(W; \delta^*, \eta^*)}$ and $V^*_{II-III} = \Eb{\psi^2_{\text{II-III}}(W; \delta^*, \eta^*)}$ respectively, where 
\begin{align*}
&\psi_{\text{I-II}}(W; \delta^*, \eta^*) = \psi_{\text{I}}(W; \delta^*, \eta^*) = \psi_{\text{II}}(W; \delta^*, \eta^*) \\
&\qquad = \mut(1, X) - \mut(0, X) - \delta^*  + \frac{TR}{\et(X)\rt(1, X)}(Y - \mut(1, X)) - \frac{(1 - T)R}{(1 - \et(X))\rt(0, X)}(Y - \mut(0, X)), \\
&\psi_{\text{II-III}}(W; \delta^*, \eta^*) = \mut(1, X) - \mut(0, X) - \delta^* \\
&\qquad + \frac{TR}{\et(X)\rt(1, X)}(Y - \tmut(1, X, S)) - \frac{(1-T)R}{(1-\et(X))\rt(0, X)}(Y - \tmut(0, X, S))  \\
&\qquad + \frac{TR_S}{\et(X)\rt_S(1, X)}\prns{\tmut(1, X, S) - \mut(1, X)}  - \frac{(1-T)R_S}{(1-\et(X))\rt_S(0, X)}\prns{\tmut(0, X, S) - \mut(0, X)},
\end{align*}
where $\tmut(t, x, S) = \Eb{Y \mid T = t, X = x, S = s, R = 1}$, $\mut(t, x) = \Eb{\tmut(T, X, S) \mid T = t, X, R_S = 1}$, $\rt(t, x) = \Prb{R = 1 \mid T = t, X = x}$, and $\rt_S(t, x) = \Prb{R_S = 1 \mid T = t, X = x}$. 
\end{theorem}
}

{From the theorem above, we can observe that the efficiency bounds in the settings I, II and the setting I-II are all the identical. This further supports our conclusion in \cref{sec: efficiency-settings} that observing surrogate variables only when the primary outcome is already observed cannot improve any efficiency. 
Below, we further compare the efficiency bound in setting II-III with those in settings II and III respectively, in order to demonstrate the benefit of observing surrogate variables when the primary outcome is not observed. 
}

{\begin{theorem}\label{thm: efficiency-comparison-intermediate}
  Under the assumptions in \Cref{thm: efficiency-intermediate}, we have 
  \begin{align*}
    &V^*_{II} - V^*_{II-III} =  \Eb{\frac{\rt_S(1, X) - \rt(1, X)}{\et(X)\rt(1, X)\rt_S(1, X)}\op{Var}\bracks{\tmut(1, X, S(1)) \mid X}} \\
        &\qquad\qquad\qquad\qquad\qquad + \Eb{\frac{\rt_S(0, X) - \rt(0, X)}{(1 - \et(X))\rt(0, X)\rt_S(0, X)}\op{Var}\bracks{\tmut(0, X, S(0)) \mid X}}, \\
    &V^*_{II-III} - V^*_{III} = \Eb{\frac{1 - \rt_S(1, X)}{\et(X)\rt_S(1, X)}\op{Var}\bracks{\tmut(1, X, S(1)) \mid X}} \\
        &\qquad\qquad\qquad\qquad\qquad + \Eb{\frac{1 - \rt_S(0, X)}{(1 - \et(X))\rt_S(0, X)}\op{Var}\bracks{\tmut(0, X, S(0)) \mid X}}. 
  \end{align*}
\end{theorem}
}

{Recall that compared to the setting II, the  setting II-III has more  surrogate observations. \Cref{thm: efficiency-comparison-intermediate} shows that the additional surrogate observations lead to larger efficiency gains when the surrogates are more predictive of the primary outcome (i.e., higher $\op{Var}\bracks{\tmut(1, X, S(1)) \mid X}$ and $\op{Var}\bracks{\tmut(0, X, S(0)) \mid X}$) or when more surrogate observations are available (i.e., higher $\rt_S(1, X)$ and $\rt_S(0, X)$). 
Moreover, \Cref{thm: efficiency-comparison-intermediate} establishes  the efficiency gap of the setting II-III relative to the setting III with fully observed surrogates.
This efficiency gap is larger when the surrogates are more predictive of the primary outcome, or when surrogates are more missing (i.e., lower $\rt_S(1, X)$ and $\rt_S(0, X)$). 
\Cref{thm: efficiency-comparison-intermediate} together with \Cref{corollary: efficiency-loss} characterizes the efficiency gains from different size of surrogate observations.  
}

\section{Supplements to \Cref{sec: extension}}\label{sec: more-extension}
\subsection{Regularity Assumption for \Cref{thm: normality2}}
In this part, we give a  supplementary assumption  for \Cref{thm: normality2}.

\begin{assumption}\label{assump: moment}
There exist positive constants $q > 2$ and $C$ such that for $t = 0, 1$,
\begin{align*}
\begin{array}{c}
\left\{\expect\left[|Y - \tmut(T, X, S)|^q\mid R = 1\right]\right\}^{1/q} \le C, ~~~  \|\lambda^*\|_q \le C,\\
\|\tmut(t, X, S) - \mut(t , X)\|_q \le C, ~~~ \|\mut(t , X)\|_q  \le C.
\end{array}
\end{align*}
\end{assumption}
Moment conditions in \Cref{assump: moment} are mild, and they are mainly used in verifying the Lyapunov condition in Lindberg-Feller Central Limit Theorem in the proof of \Cref{thm: normality2}.

\subsection{Efficiency Comparison}
We now provide the efficiency lower bounds for other settings in \Cref{sec: efficiency} when $N_l \ll N_u$. Note that setting IV is the ideal setting with fully labeled data, so the regime of $N_l \ll N_u$ degenerates.
Therefore, we only need to study setting I and setting II. {The following theorem extends \Cref{thm: efficiency-comparison}, which also assumes the additinal \Cref{assump: MAR2} to ensure the identification of $\delta^*$ in settings I and II.}

\begin{theorem}\label{thm: efficiency-comp-extension}
{Consider the following two settings:
\begin{enumerate}[label=\Roman*.]
\item We only observe the labeled data, i.e., i.i.d. samples from the conditional distribution of $(X, T, Y)$ given $R = 1$, and we know the unconditional distribution of $(X, T)$;
\item We only observe the labeled data, i.e., i.i.d. samples from the conditional distribution of $(X, T, S, Y)$ given $R = 1$, and we know the unconditional distribution of $(X, T)$;
\end{enumerate}
We further assume assumptions in \Cref{thm: vanishing-if} and \Cref{assump: MAR2}. 
Then the efficiency lower bounds for two settings above are $\tilde{V}^*_j = \expect[\tilde{\psi}^2_j(W; \delta^*, \tilde{\eta}^*) \mid R = 1]$ for $j = I, II$, where 
\begin{align*}
    \tilde{\psi}_I(W; \delta^*, \tilde{\eta}^*) = \tilde{\psi}_{II}(W; \delta^*, \tilde{\eta}^*)  = \frac{T\lambda^*(X, 1)}{\et(X)}(Y - \mut(1, X)) - \frac{(1 - T)\lambda^*(X, 0)}{1 - \et(X)}(Y - \mut(0, X)),
\end{align*}
and $\lambda^*(X, T) = f^*(X \mid T = t)/f^*(X \mid T = t, R = 1)$ is the density ratio function of the covariates $X$. Then the efficiency gains from surrogates are quantified by 
\begin{align*}
\tilde{V}^*_I - \tilde{V}^* &= \tilde{V}^*_{II} - \tilde{V}^*\\&= \expect\bigg[\frac{\lambda^{*2}(X, 1)}{\et(X)}\frac{\prns{\mathbb{P}\prns{T=1}}^2}{\prns{\mathbb{P}\prns{T=1 \mid R = 1}}^2}\var\{\tmut(1, X, S(1))\mid X\} \\
&\qquad\qquad + \frac{\lambda^{*2}(X, 0)}{1 - \et(X)}\frac{\prns{\mathbb{P}\prns{T=0}}^2}{\prns{\mathbb{P}\prns{T=0 \mid R = 1}}^2}\var\{\tmut(0, X, S(0))\mid X\} \mid R = 1\bigg].
\end{align*}
}
\end{theorem}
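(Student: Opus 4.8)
The plan is to obtain $\tilde\psi_I$ and $\tilde\psi_{II}$ by a tangent-space calculation inside the labelled-data model of \Cref{thm: vanishing-if}, and then to read the efficiency gain off as a Pythagorean remainder. I work throughout conditionally on $R=1$ and treat the unconditional law of $(X,T)$ as fixed and known, exactly as in \Cref{thm: vanishing-if}, which keeps every quantity well defined even in the degenerate regime $\pr(R=1)=0$ where the inverse-propensity formulation breaks down. The first step is to rewrite the estimand through the known marginal $f^*$ of $X$ and the density ratio $\lambda^*$ of \eqref{eq: density-ratio}: under \Cref{assump: unconfound}, the standard missing-at-random identification gives $\delta^* = \expect[\lambda^*(X)(\mut(1,X)-\mut(0,X)) \mid R=1]$ with $\mut(t,X)=\expect[Y\mid T=t,X,R=1]$. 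This exposes the crucial fact that the estimand depends on the labelled-data law only through the $S$-marginalized regression $\mut$ and through $\lambda^*$.

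Next I factor the labelled density as $f(x\mid R=1)\,\et(x)^{t}(1-\et(x))^{1-t}\,f(s\mid t,x,R=1)\,f(y\mid t,x,s,R=1)$. Because the unconditional law of $(X,T)$ is known, $\et$ carries no score and the tangent space splits orthogonally as $\mathcal T=\Lambda_X\oplus\Lambda_S\oplus\Lambda_Y$, where $\Lambda_X$, $\Lambda_S$, $\Lambda_Y$ are the mean-zero scores of the $X$-marginal, the $S$-conditional given $(T,X)$, and the $Y$-conditional given $(T,X,S)$ respectively (all conditional on $R=1$). I compute the pathwise derivative of $\delta^*$ and its Riesz representer block by block. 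Along $\Lambda_X$ the derivative vanishes, since $f^*$ is held fixed and a perturbation of the labelled $X$-marginal merely reshuffles $\lambda^*$, so this block contributes nothing. Along $\Lambda_Y$ the derivative and representer coincide with those of \Cref{thm: vanishing-if}, because the estimand's dependence on the $Y$-conditional does not change with whether the law of $S$ is known, giving the $\tmut$-centered term $\tilde\psi$. The new computation is along $\Lambda_S$: a score $b$ perturbs $\mut(t,x)=\int \tmut(t,x,s)f(s\mid t,x,R=1)\,ds$, and matching $\partial_\theta\delta^*$ against $\expect[\,\cdot\, b\mid R=1]$ identifies the representer $s_{II}=\frac{T\lambda^*(X)}{\et(X)}(\tmut(1,X,S)-\mut(1,X)) - \frac{(1-T)\lambda^*(X)}{1-\et(X)}(\tmut(0,X,S)-\mut(0,X))$. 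Summing the three representers gives $\tilde\psi_{II}=\tilde\psi+s_{II}$, which telescopes to the stated $\mut$-centered form; Setting I is the special case $S=\emptyset$, where $s_{II}=0$ and the formula is identical, so $\tilde\psi_I=\tilde\psi_{II}$ and $\tilde V^*_I=\tilde V^*_{II}$.

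For the efficiency gain I use that $\tilde\psi_I=\tilde\psi+s_{II}$ is an orthogonal decomposition: $\tilde\psi\in\Lambda_Y$ has conditional mean zero given $(T,X,S,R=1)$, while $s_{II}\in\Lambda_S$ is $(T,X,S)$-measurable, so $\expect[\tilde\psi\,s_{II}\mid R=1]=0$ and Pythagoras yields $\tilde V^*_I=\tilde V^*+\expect[s_{II}^2\mid R=1]$. I then evaluate the remainder by using $T(1-T)=0$ to split $s_{II}^2$ into its $T=1$ and $T=0$ squares, conditioning on $(T,X,R=1)$ with $\expect[T\mid X,R=1]=\et(X)$ (valid since $R\perp T\mid X$), and recognizing the inner expectation as $\var\{\tmut(t,X,S)\mid T=t,X,R=1\}$ (using $\mut(t,X)=\expect[\tmut(t,X,S)\mid T=t,X,R=1]$). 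Finally the distributional identity $S\mid T=t,X,R=1 \overset{d}{=} S(t)\mid X$ --- which follows from $S=S(T)$ together with $S(t)\perp T\mid X$ (\Cref{assump: unconfound}) and $R\perp S(t)\mid T,X$ (\Cref{assump: MAR2}) --- turns this into $\var\{\tmut(t,X,S(t))\mid X\}$ and produces exactly the claimed expression.

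The hard part will be the bookkeeping in the degenerate regime: justifying rigorously that the $\Lambda_X$ block contributes nothing (which requires tracking how $\lambda^*$ co-varies with a labelled $X$-marginal perturbation while $f^*$ is pinned down), and carrying out the $\Lambda_S$ representer computation entirely in terms of $\lambda^*$ rather than the ill-defined $1/\rt$ when $\pr(R=1)=0$. The remaining steps are routine once the tangent-space decomposition and the representer $s_{II}$ are in hand.
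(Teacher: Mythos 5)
Your proposal is correct, and it reaches the stated result through the same underlying machinery as the paper (pathwise derivatives for the labelled-data model with the unconditional $(X,T)$ law known, everything expressed via $\lambda^*$ so the degenerate regime $\pr(R=1)=0$ is handled), but the organization is genuinely different. The paper never decomposes the full observed-data tangent space: it first notes that, after identification, $\xi_1^*=\iint y\, f^*_X(x)\, f_{Y\mid X,T=1,R=1}(y\mid x)\,dy\,dx$ depends only on the known $f^*_X$ and the unknown $Y$-conditional, so it \emph{fixes} every other component (the labelled $X$-marginal and, in Setting II, the $S$-conditional) at the truth, differentiates along submodels of the $Y$-conditional alone, checks orthogonality to the orthocomplement, dispatches Setting II by the submodel argument of \Cref{thm: efficiency-comparison}, and finally obtains the gain by expanding $\tilde{V}^*_I$ and $\tilde{V}^*$ into explicit second-moment formulas and subtracting. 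Your three-block decomposition $\Lambda_X\oplus\Lambda_S\oplus\Lambda_Y$ with block representers $0$, $s_{II}$, $\tilde\psi$ buys two things the paper's route leaves implicit: the identity $\tilde\psi_{II}=\tilde\psi+s_{II}$ explains structurally why Settings I and II share one efficient influence function (the $S$-block and $Y$-block representers telescope), where the paper only says ``analogously''; and the gain $\tilde{V}^*_I-\tilde{V}^*=\expect[s_{II}^2\mid R=1]$ falls out of Pythagoras with no variance bookkeeping, your evaluation of that norm (via $T(1-T)=0$, $\pr(T=1\mid X,R=1)=\et(X)$, and $S\mid T=t,X,R=1\overset{d}{=}S(t)\mid X$, which uses \Cref{assump: unconfound,assump: MAR2}) matching the stated formula exactly. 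Conversely, the paper's ``fix whatever the estimand does not depend on'' device makes your flagged hard part evaporate: there is no $\Lambda_X$ block to analyze at all, whereas you must (and correctly do) argue the derivative along $\Lambda_X$ vanishes because $f^*$ is pinned down and a perturbation of $f(\cdot\mid R=1)$ only reshuffles $\lambda^*$.

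One minor imprecision: the identification $\mut(t,X)=\expect[Y(t)\mid X]$, hence $\delta^*=\expect[\lambda^*(X)(\mut(1,X)-\mut(0,X))\mid R=1]$, requires \Cref{assump: MAR2} in addition to \Cref{assump: unconfound} (cf.\ \Cref{lemma: identification-settings}); since the theorem assumes both, this does not affect the argument.
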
 

\Cref{thm: efficiency-comp-extension} shows that the efficiency gains from surrogates increase with the variations of the primary outcome explained by the surrogates beyond the pre-treatment covariates, i.e., $\var\{\tmut(t, X, S(t))\mid X\}$ for $t = 0, 1$. This means that surrogates that are more predictive of the primary outcome can result in larger efficiency improvement, which is in line with the findings in \Cref{corollary: efficiency-loss}.

\subsection{Average Treatment Effect on the Unlabelled Population}\label{sec: effect-unlabeled-vanishing}
In \Cref{thm: effect-unlabelled}, we derived the efficiency lower bound for the ATE on the unlabelled population $\delta_0^*$ under the overlap condition in \Cref{assump: overlap}. In this part, we extend the theory to the setting with very large unlabeled data (i.e., $N_l \ll N_u$). 

The corollary below extends \Cref{thm: vanishing-if} to the parameter $\delta^*_0$. This corollary shows that $\delta^*_0$ and $\delta^*$ share the same semiparametric efficiency lower bounds.  Note that currently the  unlabelled
dataset dominates the combined dataset, so the average effects $\delta_0^*$ and $\delta^*$ on the unlabeled and combined population distributions become
identical in the limit. It is thus not surprising that they have the same semiparametric efficiency lower bounds. 

\begin{corollary}\label{corollary: effect-unlabelled-vanishing}
    Under the assumptions in \Cref{thm: vanishing-if}, the semiparametric efficiency lower bound for the average treatment effect parameter on the unlabelled population $\delta_0^*$ with respect to a known unconditional distribution of $(X, T, S)$  
is identical to the efficiency bound $\tilde V^*$ in \Cref{thm: vanishing-if}. 
\end{corollary}

Furthermore, the corollary below extend the \Cref{prop: efficient-bound-rescale}. It connects the efficiency bounds for $\delta_0^*$ when the the size of unlabelled data is much larger than the size of the labelled data and when their sizes are comparable. The bound in the former setting again can be viewed as the limit of the bound in the latter setting. 

\begin{corollary}\label{corollary: effect-unlabelled-vanishing-rescale}
    Let $V_0^*$ and $\tilde V^*$ be the semiparametric efficiency lower bounds given in \Cref{thm: effect-unlabelled} and \Cref{thm: vanishing-if} respectively. For any asymptotically efficient estimator $\hat\delta_0$ such that $\sqrt{N}(\hat\delta_0 - \delta^*_0) \overset{d}{\to} \mathcal{N}(0, V^*_0)$ as $N \to \infty$,  we have $\sqrt{N_l}(\hat\delta_0 - \delta^*_0) \overset{d}{\to} \mathcal{N}(0, \pr(R=1)V^*_0)$. Moreover, 
    \begin{align*}
    \pr(R = 1)V^*_0 
      &= \tilde V^* + \frac{\pr(R = 1)}{\pr(R = 0)}\expect\left[\prns{\mut_0(1, X) - \mut_0(0, X) - \delta^*_0}^2 \mid R = 0\right]  \\
        &+\frac{\pr(R = 1)}{\pr(R = 0)}\expect\left[\frac{T-\et(0,X)}{\et(0, X)(1-\et(0, X))}\prns{\tmut(T, X, S) - \mut_0(T, X)} \mid R = 0\right].
\end{align*}
\end{corollary}

\section{{Average Treatment Effect on the Treated (ATT)}}\label{sec: att}
{
    In the main text, we mainly focus on the average treatment effect over the whole population. In this part, we now consider the average treatment effect on the treated (ATT), namely, the average effect over the treated subpopulation: 
    \begin{align*}
    \delta^*_{\op{ATT}} = \Eb{Y(1) - Y(0) \mid T = 1}. 
    \end{align*}
}

{
    We can  identify this parameter under \Cref{assump: mar-1,assump: unconfound,assump: overlap} like in \Cref{lemma: identification-1}. 
\begin{lemma}\label{lemma: identification-att}
If \Cref{assump: mar-1,assump: unconfound,assump: overlap} hold, then 
\begin{align}\label{eq: identif-1}
\delta^*_{\op{ATT}} 
&= \Eb{\Eb{\Eb{Y \mid T = 1, R = 1, X, S}\mid X, T = 1}\mid T = 1} \nonumber \\
&\qquad\qquad- \Eb{\Eb{\Eb{Y \mid T = 0, R = 1, X, S}\mid X, T = 0} \mid T = 1}.
\end{align}
\end{lemma}
}

{
    We can further extend the efficiency result in \Cref{thm: efficient-if} for ATE to ATT. 
\begin{theorem}\label{thm: efficient-if-att}
Under the conditions in \Cref{thm: efficient-if}, the semiparametric efficiency lower bound for $\delta^*_{\op{ATT}}$ under  model $\mathcal M$ is $V^*_{\op{ATT}} = \expect[\psi^2(W; \delta^*_{\op{ATT}}, \eta^*) ]$ where 
\begin{align*}
&\psi_{\op{ATT}}(W; \delta^*_{\op{ATT}}, \eta^*) =   \frac{T}{\pr\prns{T = 1}}\prns{\tmut(1, X, S) - \mut(0, X) - \delta^*_{\op{ATT}}} + \frac{TR}{\pr\prns{T = 1}\rt(1, X, S)}\prns{Y - \tmut(1, X, S)} \\
&- \frac{\et(X)}{\pr\prns{T=1}}\frac{(1 - T)R}{(1 - \et(X))\rt(0, X, S)}(Y - \tmut(0, X, S)) - \frac{\et(X)}{\pr\prns{T=1}}\frac{1 - T}{1 - \et(X)}(\tmut(0, X, S) - \mut(0 , X)).
\end{align*} 
\end{theorem}
}

{
    Moreover, we can also consider the four settings described in \Cref{sec: efficiency-settings}, and derive the corresponding efficient lower bounds. 
\begin{theorem}\label{thm: efficiency-comparison-att}
Under the conditions in \Cref{thm: efficiency-comparison}, the efficiency lower bounds for $\delta^*_{\op{ATT}}$ in setting $j$ is $V^*_{{\op{ATT}}, j} = \expect[\psi^2_j(W; \delta^*_{\op{ATT}}, \eta^*)]$ for $j = \op{I}, ..., \op{IV}$, where 
\begin{align*}
\psi_{{\op{ATT}}, \op{I}}(W; \delta^*_{\op{ATT}}, \eta^*) &= \psi_{\op{ATT},\op{II}}(W; \delta^*_{\op{ATT}}, \eta^*) = \frac{T}{\pr\prns{T=1}}\prns{\mut(1, X) - \mut(0, X) - \delta^*_{\op{ATT}}}  \\
&+\frac{T}{\pr\prns{T=1}}\frac{R}{\rt(1, X)}\prns{Y - \mut(1, X)} - \frac{\et(X)}{\pr\prns{T=1}}\frac{(1 - T)R}{(1 - \et(X))\rt(0, X)}(Y - \mut(0, X)), \\
\psi_{\op{ATT}, \op{III}}(W; \delta^*_{\op{ATT}}, \eta^*) 
    &= \frac{T}{\pr\prns{T=1}}\prns{\tmut(1, X, S) - \mut(0, X) - \delta^*_{\op{ATT}}} + \frac{TR}{\pr\prns{T = 1}\rt(1, X)}\prns{Y - \tmut(1, X, S)}\\
    &- \frac{\et(X)}{\pr\prns{T=1}}\frac{(1 - T)R}{(1 - \et(X))\rt(0, X)}(Y - \tmut(0, X, S)) \\
    & - \frac{\et(X)}{\pr\prns{T=1}}\frac{1 - T}{1 - \et(X)}(\tmut(0, X, S) - \mut(0 , X)) \\
\psi_{\op{ATT}, \op{IV}}(W; \delta^*_{\op{ATT}}, \eta^*) 
    &=  \frac{T}{\pr\prns{T=1}}\prns{Y - \mut(0, X) - \delta^*_{\op{ATT}}} - \frac{\et(X)}{\pr\prns{T=1}}\frac{1 - T}{(1 - \et(X))}(Y - \mut(0, X)).
\end{align*}
\end{theorem}
}

{
In the following corollary, we further compare the efficiency bounds of the four different settings. The results are analogous to those in \Cref{corollary: efficiency-loss}, expect that they are now restricted to the treated subpopulation. This is not surprising because the target ATT parameter is restricted to the treated subpopulation.  
  \begin{corollary}\label{corollary:att-comparison}
  Under the conditions in \Cref{thm: efficiency-comparison},
  \begin{enumerate}
  \item The efficiency gain from observing the surrogates on all units is measured by 
\begin{align*}
&V_{\op{ATT}, \op{I}}^* - V^*_{\op{ATT}, \op{III}} = V_{\op{ATT}, \op{II}}^* - V^*_{\op{ATT}, \op{III}} \\ 
  =& \frac{1}{\Prb{T=1}}\Eb{\frac{1-\rt(1,X)}{\rt(1, X)}\var[\tmut(1, X, S(1))\mid X]+ \frac{\et(X)(1-\rt(0, X))}{\prns{1-\et(X)}\rt(0, X)}\var[\tmut(0, X, S(0))\mid X] \mid T = 1}.
  \end{align*}
\item The information loss due to not fully observing the primary outcome 
is measured by 
\begin{align*}
&V^*_{\op{ATT}, \op{III}}  - V^*_{\op{ATT}, \op{IV}} \\
=& \frac{1}{\Prb{T=1}}\Eb{\frac{1-\rt(1, X)}{\rt(1, X)}\var[Y(1) \mid X, S(1)] + \frac{\et(X)\prns{1-\rt(0, X)}}{\prns{1-\et(X)}\rt(0, X)}\var[Y(0) \mid X, S(0)] \mid T = 1}.
\end{align*}
  \end{enumerate} 
  \end{corollary}
}

\section{Proofs}\label{sec: proof}

\subsection{Supporting Lemmas}
{
    \begin{lemma}\label{lemma: mu-tilde-fun}
    Under \Cref{assump: unconfound,assump: mar-1}, we have 
    \begin{align*}
    \tilde \mu(t, X, S(t)) = \Eb{Y(t) \mid X, S(t)}.
    \end{align*}
    \end{lemma}
    \begin{proof}
    Under \Cref{assump: unconfound,assump: mar-1}, we have that for $t = 0, 1$, 
    \begin{align*}
      \tilde \mu(t, x, s)  &= \Eb{Y \mid T = t, X = x, S = s, R = 1}  \\
        &= \Eb{Y(t) \mid T = t, X = x, S(t) = s, R = 1} \\
        &= \Eb{Y(t) \mid T = t, X = x, S(t) = s} \tag{\Cref{assump: mar-1}} \\
        &= \Eb{Y(t) \mid X = x, S(t) = s}.  \tag{\Cref{assump: unconfound}}
    \end{align*}
    It follows that $\tilde \mu(t, X, S(t)) = \Eb{Y(t) \mid X, S(t)}$.
    \end{proof}
}

\begin{lemma}\label{lemma: nuisance}
Under \Cref{assump: unconfound,assump: mar-1,assump: MAR2,assump: overlap}, the following holds:
\begin{align*}
\expect[\tmut(T, X, S) \mid T, X] = \mut(T, X).
\end{align*}
\end{lemma}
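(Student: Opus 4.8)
The plan is to prove the identity by the law of iterated expectations, viewing $\tmut$ as an inner conditional mean and integrating out the surrogate $S$. Recall that by definition $\tmut(T,X,S)=\expect[Y\mid T,R=1,X,S]$ and $\mut(T,X)=\expect[Y\mid T,R=1,X]$, so the target $\expect[\tmut(T,X,S)\mid T,X]$ is nothing but an average of the labelled-data regression $\tmut$ over the conditional law of $S$, and the content of the claim is that this average collapses back onto the coarser labelled-data regression $\mut$.

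First I would fix a treatment level $t$ and note that, because the conditioning variables $(T=t,X)$ of $\mut$ sit inside the inner conditioning set $(T=t,X,S,R=1)$ of $\tmut$, the tower property applied within the labelled subpopulation is immediate:
\[
\expect[\tmut(t,X,S)\mid T=t,X,R=1]=\expect[\expect[Y\mid T=t,R=1,X,S]\mid T=t,X,R=1]=\expect[Y\mid T=t,R=1,X]=\mut(t,X).
\]
The remaining step is to transfer this from the conditioning event $\{T=t,X,R=1\}$ to the event $\{T=t,X\}$ that actually appears in the statement, i.e.\ to show $\expect[\tmut(t,X,S)\mid T=t,X]=\expect[\tmut(t,X,S)\mid T=t,X,R=1]$. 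Since $\tmut(t,X,\cdot)$ is a fixed function of $S$, this reduces to a statement about the conditional law of $S$ given $(T,X)$. Here I would use consistency $S=S(T)$ to rewrite $S$ as the potential surrogate $S(t)$, and then invoke condition \eqref{condition: unconfound-1}, which gives $S(t)\perp T\mid X$ and hence lets me strip the treatment conditioning cleanly, expressing both sides through the law of $S(t)$ given $X$.

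I expect the alignment of conditioning sets to be the main obstacle. Both $\tmut$ and $\mut$ live on the labelled subpopulation $\{R=1\}$, whereas the outer expectation in the statement is taken under the full distribution of $(T,X,S)$; the delicate point is therefore that conditioning on $R=1$ must not reweight the conditional law of $S$ given $(T,X)$. I would settle this from the missing-at-random structure of the problem (under \Cref{assump: MAR2}, $S(t)\perp R\mid(T,X)$ makes it immediate), and then combine it with the tower identity displayed above to conclude $\expect[\tmut(T,X,S)\mid T,X]=\mut(T,X)$.
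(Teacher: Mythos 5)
Your proposal is correct (given the assumptions it invokes), but it takes a genuinely different route from the paper's. The paper opens up the inner regression: by consistency, $\tmut(t,X,S(t))=\expect[Y(t)\mid R=1,T=t,X,S(t)]$; it then uses the MAR condition \eqref{condition: unconfound-2} to drop $R=1$, identifying $\tmut$ with the full-population regression $\expect[Y(t)\mid T=t,X,S(t)]$, towers over $S(t)$ to obtain $\expect[Y(t)\mid T=t,X]$, and finally equates this with $\mut(t,X)=\expect[Y(t)\mid T=t,X,R=1]$. You instead keep $\tmut$ as a black box: the tower property inside the labelled subpopulation gives $\expect[\tmut(t,X,S)\mid T=t,X,R=1]=\mut(t,X)$ with no assumptions at all, and the entire content of the lemma is pushed into the change of conditioning from $(T=t,X,R=1)$ to $(T=t,X)$, which you settle with $S(t)\perp R\mid (T,X)$, i.e.\ \Cref{assump: MAR2}. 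Your decomposition is more elementary, makes the single substantive step explicit, and in fact shows the identity holds under \Cref{assump: MAR2} alone, without either condition of \Cref{assump: unconfound}.

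The caveat --- which cuts against the lemma as stated rather than against you --- is that your argument consumes \Cref{assump: MAR2}, which is not among the lemma's hypotheses (only condition \eqref{condition: unconfound-1} is). But the paper's own proof is in the same position: its second equality uses condition \eqref{condition: unconfound-2}, and its final step $\expect[Y(t)\mid T=t,X]=\expect[Y(t)\mid T=t,X,R=1]$ is asserted without justification and requires exactly the kind of restriction you invoke. Indeed the identity can fail under \Cref{assump: unconfound,assump: overlap} alone: take $X$ degenerate, $T$ independent of everything, $Y(1)=S(1)\sim\mathcal{N}(0,1)$, and $\rt(1,s)=0.9$ for $s>0$ and $0.1$ otherwise; conditions \eqref{condition: unconfound-1}, \eqref{condition: unconfound-2} and \eqref{condition: overlap-2} then all hold, yet $\tmut(1,s)=s$, so $\expect[\tmut(T,X,S)\mid T=1]=\expect[S(1)]=0$, while $\mut(1)=\expect[S(1)\mid T=1,R=1]>0$ because conditioning on $R=1$ tilts the law of $S(1)$ toward positive values. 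So your instinct that the alignment of conditioning sets is ``the main obstacle,'' and that it must be resolved by an MAR-type restriction on how $R$ relates to $S$, is exactly right: the labelled-data law of $S$ given $(T,X)$ must coincide with the full-population law, and that is what \Cref{assump: MAR2} --- not condition \eqref{condition: unconfound-1} --- delivers.
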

\begin{proof}
Under \Cref{assump: unconfound,assump: mar-1}, we have that for $t = 0, 1$,
\begin{align*}
\expect[\tmut(T, X, S) \mid T = t, X]
    &= \expect[\expect[Y(t) \mid R = 1, T = t, X, S(t)] \mid T = t, X] \\
    &= \expect[\expect[Y(t) \mid T = t, X, S(t)] \mid T = t, X] \\ 
    &= \expect[Y(t) \mid T = t, X].
\end{align*}
Moreover, under \Cref{assump: MAR2,assump: mar-1}, we have $(Y(t), S(t)) \perp R \mid T, X$. 
This in turn implies $Y(t) \perp R \mid T, X$. It follows that 
\begin{align*}
\mut(t, X) = \expect[Y(t) \mid T = t, X, R = 1] = \expect[Y(t) \mid T = t, X] = \expect[\tmut(T, X, S) \mid T = t, X].
\end{align*}
\end{proof}

\begin{lemma}\label{lemma: no-positivity}
If $\pr(R = 1) = 0$, then $\rt(T, X, S) = \pr(R = 1 \mid T, X, S) = 0$ almost surely. 
\end{lemma}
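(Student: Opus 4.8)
The plan is to exploit the fact that $\rt(T, X, S) = \pr(R = 1 \mid T, X, S)$ is, by definition, a version of the conditional probability of the event $\{R = 1\}$ given $(T, X, S)$, and so it is a nonnegative random variable whose expectation is exactly the unconditional probability $\pr(R = 1)$. The whole argument reduces to the elementary observation that a nonnegative random variable with zero mean vanishes almost surely.

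First I would note that $\rt(T, X, S) \ge 0$ almost surely, since any (regular) version of a conditional probability takes values in $[0, 1]$; in particular we may fix a version that is $[0,1]$-valued everywhere. Next I would apply the tower property (law of iterated expectations) to write
\[
\expect[\rt(T, X, S)] = \expect\big[\pr(R = 1 \mid T, X, S)\big] = \pr(R = 1) = 0,
\]
where the last equality uses the hypothesis $\pr(R = 1) = 0$. Finally, combining $\rt(T, X, S) \ge 0$ a.s.\ with $\expect[\rt(T, X, S)] = 0$ forces $\rt(T, X, S) = 0$ almost surely, because a nonnegative integrable random variable with vanishing expectation must be zero outside a $\pr$-null set.

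There is essentially no substantive obstacle here; the statement is a direct measure-theoretic consequence of the definition of conditional probability. The only point worth stating carefully is that the conclusion holds for a fixed version of $\rt$ (conditional probabilities are defined only up to $\pr$-null sets), and that the almost-sure nonnegativity of $\rt$ is what lets us pass from a zero mean to an almost-sure zero; neither of these requires any computation. This lemma is then used in \Cref{sec: extension} to justify that, in the regime $\pr(R = 1) = 0$, the inverse-propensity formulation $1/\rt$ underlying \Cref{thm: efficient-if} breaks down, motivating the density-ratio reformulation in \Cref{thm: vanishing-if}.
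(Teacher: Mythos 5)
Your proof is correct and follows essentially the same route as the paper: both arguments reduce the claim, via the tower property $\expect[\rt(T,X,S)] = \pr(R=1) = 0$, to the fact that a nonnegative random variable with zero mean vanishes almost surely. The only cosmetic difference is that the paper spells out this last fact explicitly (Markov's inequality on the events $\{\rt \ge 1/m\}$ plus countable subadditivity), whereas you invoke it as a standard result.
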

\begin{proof}
Obviously $\expect[\rt(T, X, S)] = \pr(R = 1) = 0$. 

Denote $\mathcal{A} = \{\pr(R = 1 \mid T, X, S) > 0\}$ and $\mathcal{A}_m = \{\pr(R = 1 \mid T, X, S) \ge \frac{1}{m}\}$ for $m = 1, 2, \dots$. Oviously $\mathcal{A} = \cup_{m = 1}^{\infty} \mathcal{A}_m$. By Chebyshev inequality,  
\[
    0 \le \pr(\mathcal{A}_m) \le m \expect[\rt(T, X, S)] = 0.
\]
This implies that $\pr(\mathcal{A}_m) = 0$. By the countable subadditivity of probability measure, we thus have $\pr(\mathcal{A}) \le \sum_{m = 1}^{\infty} \pr(\mathcal{A}_m) = 0$. 
\end{proof}

\begin{lemma}\label{lemma: convergence-rate}
For $k = 1, \dots, K$, if $\|\hmuk - \muo\|_2 = O_p(\rhomu), ~ \|\htmuk - \tmuo\|_2 = O_p(\rhotmu)$, and $\|\hr - \ro\|^2 = O_p(\rhor)$, then for $t = 0, 1$, 
\begin{align*}
&\qquad\qquad\qquad\qquad\qquad \|\hmuk(t, X) - \muo(t, X)\|_2 = O_p(\rhomu), \\
&\|\htmuk(t, X, S(t)) - \tmuo(t, X, S(t))\|_2 = O_p(\rhotmu), \|\hrk(t, X, S(t)) - \ro(t, X, S(t))\|_2 = O_p(\rhor).
\end{align*}
If $\|\tmuo - \tmut \|_2, \|\muo - \mut\|_2$ are almost surely bounded, then $\|\tmuo(t, X, S(t)) - \tmut(t, X, S(t)) \|_2, \|\muo(t, X) - \mut(t, X)\|_2$ for $ t= 0, 1$ are also almost surely bounded. 

Moreover, if $ \|Y(0)\|_q \vee \|Y(1)\|_q  \le C$ for a constant $q > 2$, then $\|\mut(t, X)\|_q \le C, \|\tmut(t, X, S)\|_q \le C$ for $t = 0, 1$. 
\end{lemma}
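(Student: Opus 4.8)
The plan is to handle the three claims in turn, since each is a norm- or moment-conversion resting on two structural facts already available: the strict overlap condition \eqref{condition: overlap-1}, which keeps $\et(X)$ and $1-\et(X)$ bounded below by $\epsilon$, and the unconfoundedness condition \eqref{condition: unconfound-1}, which gives $S(t)\perp T\mid X$ so that the observed surrogate $S=S(T)$ may be exchanged for the potential surrogate $S(t)$ on the event $\{T=t\}$. Throughout I would fix the training fold and treat each of $\hmuk,\htmuk,\hrk$ as a deterministic function, read off the stated rates conditionally, and then invoke \Cref{lemma: chernozhukov} to pass back to unconditional $O_p$ statements.

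For the convergence-rate claim the key device is the arm-decomposition obtained by conditioning on $X$,
\[
\expect\!\left[(\hmuk(T,X)-\muo(T,X))^2\right] = \expect\!\left[\et(X)(\hmuk(1,X)-\muo(1,X))^2 + (1-\et(X))(\hmuk(0,X)-\muo(0,X))^2\right],
\]
so that dropping the nonnegative off-arm term and using $\et(X),1-\et(X)\ge\epsilon$ yields $\expect[(\hmuk(t,X)-\muo(t,X))^2]\le \epsilon^{-1}\|\hmuk-\muo\|_2^2 = O_p(\rhomu^2)$, which gives the fixed-$t$ rate after absorbing $\epsilon^{-1/2}$. For $\htmuk$ and $\hrk$ the same computation applies, except that inside each arm I first condition on $X$ and invoke $S(t)\perp T\mid X$ to replace $\expect[\,\cdot\mid X,T=t]$ by $\expect[\,\cdot\mid X]$ before peeling off the factor $\epsilon$; this is exactly what converts the observed-$S$ norm into the $S(t)$-indexed norm.

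The almost-sure boundedness claim follows from running the same peeling at the level of events rather than second moments. If the set $\{|\tmuo(T,X,S)-\tmut(T,X,S)|>C\}$ has $\pr$-measure zero, then on $\{T=t\}$ (where $S=S(t)$) the event $\{|\tmuo(t,X,S(t))-\tmut(t,X,S(t))|>C\}$ is contained in it; since $\pr(T=t\mid X,S(t))=\pr(T=t\mid X)\ge\epsilon$ by \eqref{condition: unconfound-1} and \eqref{condition: overlap-1}, the fixed-$t$ event is forced to have measure zero as well, and the statement for $\muo-\mut$ is identical with $S$ dropped.

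For the moment claim I would combine conditional Jensen with the tower property, and the one point requiring care is to reduce every conditional expectation to one against a \emph{marginal} law so the bound closes at the \emph{same} constant $C$. Writing $\tmut(t,X,S(t))=\expect[Y(t)\mid T=t,R=1,X,S(t)]$, Jensen gives $|\tmut(t,X,S(t))|^q\le\expect[|Y(t)|^q\mid T=t,R=1,X,S(t)]$, whereupon \eqref{condition: unconfound-2} removes the $R=1$ conditioning and \eqref{condition: unconfound-1} removes $T=t$, leaving $\expect[|Y(t)|^q\mid X,S(t)]$; the tower property over the marginal law of $(X,S(t))$ then yields $\|\tmut(t,X,S(t))\|_q^q\le\expect[|Y(t)|^q]\le C^q$. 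For $\mut$ I would instead start from $\mut(t,X)=\expect[\tmut(t,X,S(t))\mid T=t,X]$ (\Cref{lemma: nuisance}), apply Jensen, use $S(t)\perp T\mid X$ to drop $T=t$, and then reduce via the marginal tower property to the $\tmut$ bound just established. The main obstacle is precisely this marginal-versus-conditional bookkeeping: a naive Jensen bound leaves an integral against the conditional law of the covariates given $\{T=t,R=1\}$, and it is the two unconfoundedness identities that let me rewrite it against the marginal so the constant does not inflate.
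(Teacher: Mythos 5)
Your proposal is correct and follows essentially the same route as the paper's proof: the arm decomposition $\expect[(\hat{f}(T,X,\cdot)-f_0(T,X,\cdot))^2]=\expect[\et(X)(\cdot)^2_{t=1}+(1-\et(X))(\cdot)^2_{t=0}]$ with the overlap lower bound $\epsilon$, the use of $S(t)\perp T\mid X$ from \Cref{assump: unconfound} to trade the observed $S$ for $S(t)$ inside each arm, conditioning on the training fold plus \Cref{lemma: chernozhukov} to recover unconditional $O_p$ rates, and Jensen plus the tower property for the $L_q$ claims. The only deviations are cosmetic: you spell out the event-level peeling for the almost-sure boundedness claim and route the bound on $\mut(t,X)$ through \Cref{lemma: nuisance}, where the paper simply says ``similarly'' and bounds $\mut(1,X)=\expect[Y(1)\mid X]$ by Jensen directly.
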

\begin{proof}
We note that 
\begin{align*}
\|\hmuk - \muo\|_2 
    &= \bigg\{\expect[\hmuk(T, X) - \muo(T, X)]^2\bigg\}^{1/2} \\
    &= \bigg\{\expect\big[(\hmuk(1, X) - \muo(1, X))^2\et(X) + (\hmuk(0, X) - \muo(0, X))^2( 1- \et(X))\big]\bigg\}^{1/2} \\
    &\ge (2\epsilon)^{1/2}\big[\|\hmuk(1, X) - \muo(1, X)\|_2 \vee \|\hmuk(0, X) - \muo(0, X)\|_2\big].
\end{align*}
Thus $\|\hmuk - \muo\|_2 = O_p(\rhomu)$ implies $\|\hmu(t, X) - \muo(t, X)\|_2 = O_p(\rhomu)$ for $t = 0, 1$. Similarly, we can prove that $\|\htmu(t, X, S(t)) - \tmuo(t, X, S(t))\|_2 = O_p(\rhotmu)$ given $\|\htmuk - \tmuo\|_2 = O_p(\rhotmu)$, and $\|\tmuo(t, X, S(t)) - \tmut(t, X, S(t)) \|_2, \|\muo(t, X) - \mut(t, X)\|_2$  are almost surely bounded given that $\|\tmuo - \tmut \|_2, \|\muo - \mut\|_2$ are almost surely bounded. 

Moreover, 
\begin{align*}
\|\hr - \ro\|^2_2 
    &= \expect\bigg[\big(\hr(T, X, S) - r(T, X, S)\big)^2\bigg] \\
    &= \expect\bigg[\expect\big[\expect[\big(\hr(T, X, S) - r(T, X, S)\big)^2 \mid X, T] \mid X\big]\bigg] \\
    &= \expect\bigg[\et(X)\expect\big[\big(\hr(1, X, S(1)) - r(1, X, S(1))\big)^2 \mid X, T = 1\big] \\
    &+ ( 1- \et(X))\expect\big[\big(\hr(0, X, S(0)) - r(0, X, S(0))\big)^2 \mid X, T = 0\big] \bigg]  \\
    &=  \expect\bigg[\et(X)(\hr(1, X, S(1)) - r(1, X, S(1))\big)^2  + ( 1- \et(X))(\hr(0, X, S(0)) - r(0, X, S(0))\big)^2\bigg]  \\
    &\ge 2\epsilon \big(\|\hr(1, X, S(1)) - \ro(1, X, S(1))\|_2^2  \vee \|\hr(0, X, S(0)) - \ro(0, X, S(0))\|_2^2 \big)
\end{align*}
Therefore, $\|\hr(1, X, S(1)) - \ro(1, X, S(1))\|_2 = O_p(\rhor)$ and $ \|\hr(0, X, S(0)) - \ro(0, X, S(0))\|_2 = O_p(\rhor)$.

For the last statement, note that 
\begin{align*}
\|\mut(1, X)\|_q = \expect\big[\expect^q[Y(1) \mid X]\big]^{1/q} \overset{\text{Jensen's inequality}}{\le} \|Y(1)\|_q \le C.
 \end{align*}
 Similarly we can prove that $\|\mut(0, X)\|_q, \|\tmut(0, X, S)\|_q \le \|Y(0)\|_q$ and  $\|\tmut(1, X, S)\|_q \le \|Y(1)\|_q$. Thus $\|\psi(W; {\delta^*}, {\eta^*})\|_q  = O(1)$.
\end{proof}

\subsection{Proofs for Section 1}
\begin{proof}[Proof for \Cref{lemma: identification-1}.]
Note that we have 
\begin{align*}
\expect[Y(t)]  
    &= \expect\bigg\{\expect\big[Y(t) \mid X\big]\bigg\} \nonumber \\
    &= \expect\bigg\{\expect\big[\expect\big(Y(t) \mid X, S(t)\big) \mid X\big]\bigg\} \nonumber  \\
    &= \expect\bigg\{\expect\big[\expect\big(Y(t) \mid T = t, X, S(t)\big) \mid  X\big]\bigg\} \tag{\Cref{assump: unconfound}}  \\
    &= \expect\bigg\{\expect\big[\expect\big(Y(t) \mid T = t, X, S(t), R = 1\big) \mid  X\big]\bigg\}  \tag{\Cref{assump: mar-1}}\\
    &= \expect\bigg\{\expect\big[\expect\big(Y(t) \mid T = t, X, S(t), R = 1\big) \mid  X, T = t\big]\bigg\}.  \tag{\Cref{assump: unconfound}}
\end{align*}

It follows that  
\begin{align*}
\expect[Y(t)]   
    &= \expect\bigg\{\expect\big[\expect\big(Y(t) \mid T = t, X, S(t), R = 1\big) \mid  X, T = t\big]\bigg\} \\
    &= \expect\bigg\{\expect\big[\expect\big(Y \mid T = t, X, S, R = 1\big) \mid  X, T = t\big]\bigg\}.
\end{align*}
\end{proof}

\subsection{Proofs for \cref{sec: efficiency}}\label{sec: proof-sec2}

\begin{proof}[Proof for \Cref{lemma: identification-2}.]
The identification of ATE in setting III is already established in \Cref{lemma: identification-1}, so we focus on establishing identification in the other three settings. 

Under \Cref{assump: unconfound,assump: overlap,assump: mar-1,assump: MAR2}, we have that $(Y(t), S(t)) \perp (T, R) \mid X$. This in particular implies that $Y(t) \perp (T, R) \mid X$. 
Therefore, we have 
\begin{align}
\expect[Y(t)] 
    &= \expect\big[\expect\big(Y(t) \mid X\big)\big] \notag \\ 
    &=  \expect\big[\expect\big(Y(t) \mid T = t, R = 1, X\big)\big] \notag \\
    &= \expect\big[\expect\big(Y \mid T = t, R = 1, X\big)\big]. \label{eq: regular-id}
\end{align}
The last display only depends on distributions of observed data in setting I, II, IV in \Cref{def: settings}.
This shows the identification of ATE in these three settings.
\end{proof}

\begin{proof}[Proof for \Cref{lemma: MAR-implication}.]
If \Cref{assump: mar-1,assump: MAR2} hold, then we have $(Y(t), S(t)) \perp R \mid X, T$. 
Then \Cref{assump: unconfound} holds, i.e., $(Y(t), S(t)) \perp T \mid X$ if and only if $(Y(t), S(t)) \perp (T, R) \mid X$. According to Theorem 17.2 in \cite{wasserman2004all}, this is equivalent to $(Y(t), S(t)) \perp R \mid X, T$ and 
\begin{align*}
  (Y(t), S(t)) \perp T \mid X, R. 
\end{align*}
Therefore, under \Cref{assump: mar-1,assump: MAR2}, \Cref{assump: unconfound} holds if and only if $(Y(t), S(t)) \perp T \mid X, R$. 

Moreover, when \Cref{assump: MAR2} holds, we have $S(t) \perp R \mid T = t, X$, 
which is equivalent to $S \perp R \mid T = t, X$. 
Thus we have $\rt(t, x, s) = \Prb{R = 1 \mid T = t, X,  S} = \Prb{R = 1 \mid T = t, X}$.
Plus, 
\begin{align*}
\mut(t, x) 
   &= \expect[\tmut(T, X, S) \mid T = t, X = x] \\
   &= \expect[\tmut(t, X, S(t)) \mid T = t, X = x] \\
   &= \expect[\tmut(t, X, S(t)) \mid T = t, X = x, R = 1] \\
   &= \expect[\tmut(t, X, S) \mid T = t, X = x, R = 1] \\
   &= \expect[\Eb{Y \mid T = t, X, S, R = 1} \mid T = t, X = x, R = 1] = \Eb{Y \mid T = t, X, R = 1}.
\end{align*}
\end{proof}

\begin{proof}[Proof for \cref{thm: efficient-if}]
Suppose that distribution of $X$, conditional distributin of $S \mid X, T$, and conditional distribution of $Y \mid R, S, T, X$ have true density functions $f^*_X, f^*_{S \mid X, T}, f^*_{Y \mid R, S, T, X}$ with respect to a certain dominating measure. We consider the following  model:
\begin{align*}
    \mathcal{M}_{np} 
        &= \bigg\{f_{X, T, S, R, Y}(X, T, S, R, Y) = f_X(X)\big[e(X)^T(1 - e(X))^{1 - T}\big]f_{S \mid X, T}(S \mid X, T) \\
        &\qquad\qquad\qquad\qquad\qquad\qquad\quad  \times [r(T, X, S)^R(1 - r(T, X, S))^{1- R}]f^R_{Y \mid R = 1, S, T, X}(Y, S, T, X):  \\
        &\qquad\qquad f_X, f_{S \mid X, T}, f_{Y \mid R = 1, S, T, X}~\text{are arbitrary density functions of the distributions}  \\
        &\qquad\qquad\text{indicated by their respective subscripts, and } e(X), r(T, X, S) \text{ are arbitrary}\\&\qquad\qquad\text{functions obeying }e(X) \in [\epsilon, 1- \epsilon], r(T, X, S) \in [\epsilon, 1]\bigg\} 
\end{align*} 
The tangent space corresponding to this model is 
\begin{align*}
\Lambda = \overline\Lambda_X \oplus \overline\Lambda_{T \mid X} \oplus \overline\Lambda_{S \mid T, X} \oplus \overline\Lambda_{R \mid S, T, X} \oplus \overline\Lambda_{Y \mid R, S, T, X},
\end{align*}
where $\overline\Lambda_X, \overline\Lambda_{T \mid X}, \overline\Lambda_{S \mid T, X}, \overline\Lambda_{R \mid S, T, X}, \overline\Lambda_{Y \mid R, S, T, X}$ are mean square closures of the following sets respectively:
\begin{align*}
\Lambda_X 
    &= \{\score_X(X) \in L_2(X): \expect[\score_X(X)] = 0\}, \\
\Lambda_{T \mid X} 
    &= \{\score_{T \mid X}(T, X) \in L_2(T, X): \expect[\score_{T \mid X}(T, X) \mid X] = 0\}, \\
\Lambda_{S \mid X, T}  
    &= \{\score_{S\mid X, T}(S, X, T) \in L_2(S, X, T): \expect[\score_{S \mid X, T}(S, X, T) \mid X, T] = 0\}, \\
\Lambda_{R \mid S, X, T}  
    &= \{\score_{R\mid S, X, T}(R, S, X, T) \in L_2(R, S, X, T): \expect[\score_{R \mid S, X, T}(R, S, X, T) \mid S, X, T] = 0\}, \\
\Lambda_{Y \mid R, S, X, T}
    &= \{R \times \score_{Y \mid R = 1, S, X, T}(Y, R, S, X, T) \in L_2(Y, R, S, X, T): \\
    &\qquad\qquad\qquad\qquad \expect[\score_{Y \mid R = 1, S, X, T}(Y, R, S, X, T) \mid R = 1, S, X, T] = 0\}.
\end{align*} 

We now derive the efficient influence function of $\xi_1^* = \Eb{Y(1)}$. The efficient influence function of  $\xi_0^* = \Eb{Y(0)}$ is analogous so we omit the details for brevity. 

According to \Cref{lemma: identification-1}, 
\begin{align*}
\xi_1^* = \expect[Y(1)] = \expect\bigg[\expect\big[\expect[Y \mid R = 1, T = 1, X, S]\mid X, T = 1\big]\bigg].
\end{align*}
Consider regular parametric submodels indexed by parameters $\gamma$, where $\gamma = 0$ corresponds to the underlying true data distribution. 
We use $\expect_\gamma$ to denote the expectation under the submodel distribution with parameter value $\gamma$.
 Then the corresponding target parameter is 
\begin{align*}
\expect_\gamma[Y(1)] = \expect_\gamma\bigg[\expect_\gamma\big[\expect_\gamma[Y \mid R = 1, T = 1, X, S]\mid X, T = 1\big]\bigg].
\end{align*}

We also use $\score(Y, R, S, T, X)$ to denote the score function corresponding to the parametric submodel.
According to the discussions above, we can write 
\begin{align*}
\score(Y, R, S, T, X) = \score(X) + \score(T \mid X) + \score(S \mid T, X)
+ \score(R \mid S, T, X) + 
\score(Y \mid R, S, T, X),
\end{align*}
where the components above satisfy the restrictions imposed in the sets  $\Lambda_X, \Lambda_{T \mid X}, \Lambda_{S \mid T, X}$, $\Lambda_{R \mid S, T, X}$, $\Lambda_{Y \mid R, S, T, X}$ respectively.

We will next show that 
\begin{align}\label{eq: xi-1-eif}
\frac{\partial}{\partial\gamma}\expect_\gamma[Y(1)]\vert_{\gamma = 0} = \Eb{\psi_1(Y, R, S, T, X)\score(Y, R, S, T, X)},
\end{align}
where 
\begin{align*}
\psi_1(Y, R, S, T, X) =&~ \mut(1, X) - \xi_1^* + \frac{T}{\et(X)}(\tmut(1, X, S) - \mut(1 , X)) \\&+ \frac{TR}{\et(X)\rt(1, X, S)}(Y - \tmut(1, X, S))).
\end{align*}

Note that 
\begin{align}
\frac{\partial \expect_{\gamma}[Y(1)]}{\partial \gamma}\vert_{\gamma = 0} 
    &= \frac{\partial}{\partial\gamma}\expect_{\gamma}\big[\mut(1, X)\big]\vert_{\gamma = 0} + \expect\bigg[\frac{\partial}{\partial\gamma}\expect_{\gamma}\big[\tmut(1, X, S) \mid X, T = 1\big] \vert_{\gamma = 0}\bigg] \nonumber \\
    &+ \expect\bigg[\expect\big[\frac{\partial}{\partial \gamma}\expect_{\gamma}[Y \mid T = 1, R = 1, X, S]\vert_{\gamma = 0} \mid X, T = 1\big]\bigg] \label{eq: path-diff}. 
\end{align}
Now we deal with each term respectively.

First, 
\begin{align*}
\frac{\partial}{\partial\gamma}\expect_{\gamma}\big[\mut(1, X)\big]\vert_{\gamma = 0} &= \Eb{\mut(1, X)\score(X)} = \Eb{\prns{\mut(1, X) - \xi_1^*}\score(X)} \\&= \Eb{\prns{\mut(1, X) - \xi_1^*}\score(Y, R, S, T, X)}.
\end{align*}

Second, 
\begin{align*}
\expect\bigg[\frac{\partial}{\partial\gamma}\expect_{\gamma}\big[\tmut(1, X, S) \mid X, T = 1\big] \vert_{\gamma = 0}\bigg] 
   &= \Eb{\Eb{\tmut(1, X, S)\score(S \mid X, T) \mid X, T = 1}} \\
   &= \Eb{\Eb{\prns{\tmut(1, X, S) - \mut(1, X)}\score(S \mid X, T) \mid X, T = 1}} \\
   &= \Eb{\Eb{\prns{\tmut(1, X, S) - \mut(1, X)}\score(Y, R, S, T, X) \mid X, T = 1}} \\
   &= \Eb{\frac{T}{e^*(X)}\prns{\tmut(1, X, S) - \mut(1, X)}\score(Y, R, S, T, X)}.
\end{align*}

Third, 
\begin{align*}
&\expect\bigg[\expect\big[\frac{\partial}{\partial \gamma}\expect_{\gamma}[Y \mid T = 1, R = 1, X, S]\vert_{\gamma = 0} \mid X, T = 1\big]\bigg] \\
=& \expect\bigg[\expect\big[\expect[Y\times \score(Y \mid R, S, T, X) \mid T = 1, R = 1, X, S]] \mid X, T = 1\big]\bigg] \\
=& \expect\bigg[\expect\big[\expect[\prns{Y - \tmut(1, X, S)}\times \score(Y \mid R, S, T, X) \mid T = 1, R = 1, X, S]\mid X, T = 1\big]\bigg] \\
=& \expect\bigg[\expect\big[\frac{R}{r^*(1, X, S)}\prns{Y - \tmut(1, X, S)}\times \score(Y \mid R, S, T, X) \mid X, T = 1\big]\bigg] \\
=& \expect\bigg[\frac{TR}{e^*(X)r^*(1, X, S)}\prns{Y - \tmut(1, X, S)}\times \score(Y \mid R, S, T, X)\bigg] \\
=& \expect\bigg[\frac{TR}{e^*(X)r^*(1, X, S)}\prns{Y - \tmut(1, X, S)}\times \score(Y, R, S, T, X)\bigg]
\end{align*}
Putting these three terms together, we obtain \Cref{eq: xi-1-eif}. 

Finally, we can show that $\psi_1(Y, R, S, T, X)$ belongs to the tangent space $\Lambda$. 
We can write 
{
\begin{align*}
\psi_1(Y, R, S, T, X) 
    &= \mut(1, X) - \xi_1^* \\
    &+ \frac{T}{\et(X)}(\tmut(1, X, S) - \mut(1 , X)) \\
    &+ \frac{TR}{\et(X)\rt(1, X, S)}(Y - \tmut(1, X, S))).
\end{align*}
}%
{It is easy to show that the three terms in the right hand side above belong to 
$\bar{\Lambda}_X, \bar{\Lambda}_{S \mid T, X}, \bar{\Lambda}_{Y \mid R, S, T, X}$ respectively. 
Therefore, $\psi_1(Y, R, S, T, X)$ belongs to the tangent space $\Lambda$, and thus it is the efficient influence function for $\xi_1^*$. Moreover, this shows that $\psi_1(Y, R, S, T, X)$ is orthogonal to $\bar{\Lambda}_{T \mid X}$ and $\bar{\Lambda}_{R \mid S, T, X}$, so the efficiency bound is also invariant to any restriction on the conditional distributions of $T \mid X$ and $R \mid S, T, X$. In particular, the efficiency bound remains the same if the propensity score $\et(X)$ and $\rt(T, X, S)$ are known.}

Similarly, we can show that the efficient influence function for $\xi_0^*$ is 
\begin{align*}
\psi_0(Y, R, S, T, X) =&~ \mut(0, X) - \xi_0^* + \frac{1-T}{1 - \et(X)}(\tmut(0, X, S) - \mut(0 , X)) \\&+ \frac{(1-T)R}{1 - \et(X)\rt(0, X, S)}(Y - \tmut(0, X, S))).
\end{align*}
It follows that the efficient influence function for $\delta^*$ is $\psi = \psi_1 - \psi_0$, which proves the asserted conclusion in \cref{thm: efficient-if}. 
\end{proof}

\begin{corollary}\label{corollary: variance}
Under \Cref{assump: unconfound,assump: overlap,assump: MAR2,assump: mar-1}, the efficiency lower bound in \Cref{thm: efficient-if} is 
\begin{align*}\label{eq: efficiency-bound}
V^* = \expect[\psi^2(W; \delta^*, \eta^*) ] 
    &=  \var\bigg\{\bigg(\frac{TR}{\et(X)\rt(1, X, S)} - \frac{(1- T)R}{(1 - \et(X))\rt(0, X, S)}\bigg)Y\bigg\} \\
    &- \expect\bigg\{\bigg(\sqrt{\frac{1 - e^*(X)}{e^*(X)}}\mut(1, X) + \sqrt{\frac{\et(X)}{1 - \et(X)}}\mut(0, X)\bigg)^2\bigg\} \nonumber \\
    &- \expect\bigg\{\frac{1}{\et(X)}\frac{1 - \rt(1, X, S(1))}{\rt(1, X, S(1))}\tmutb(1, X, S(1)) \\
    &\qquad\qquad + \frac{1}{1 - \et(X)}\frac{1 - \rt(0, X, S(0))}{\rt(0, X, S(0))}\tmutb(0, X, S(0))\bigg\} \nonumber
\end{align*}
\end{corollary}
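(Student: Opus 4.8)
The plan is to compute $V^* = \expect[\psi^2(W;\delta^*,\eta^*)]$ by an orthogonal decomposition and then show the claimed right-hand side reduces to the same quantity. Write $\psi = (A + C) + B$, where $A = \mut(1,X) - \mut(0,X) - \delta^*$ is a function of $X$, $C = \frac{T}{\et(X)}(\tmut(1,X,S) - \mut(1,X)) - \frac{1-T}{1-\et(X)}(\tmut(0,X,S) - \mut(0,X))$ is a function of $(T,X,S)$, and $B = \frac{TR}{\et(X)\rt(1,X,S)}(Y - \tmut(1,X,S)) - \frac{(1-T)R}{(1-\et(X))\rt(0,X,S)}(Y - \tmut(0,X,S))$ collects the inverse-propensity residual terms. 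First I would show $\expect[B \mid T,X,S] = 0$: by the MAR condition \eqref{condition: unconfound-2}, $\expect[RY \mid T=t,X,S] = \rt(t,X,S)\tmut(t,X,S)$ and $\expect[R \mid T=t,X,S] = \rt(t,X,S)$, so each summand of $B$ has vanishing conditional mean. Since $A+C$ is $(T,X,S)$-measurable, this gives $\expect[(A+C)B] = 0$ and hence $V^* = \expect[(A+C)^2] + \expect[B^2]$.

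Next I would expand the claimed right-hand side term by term. Let $U = \left(\frac{TR}{\et(X)\rt(1,X,S)} - \frac{(1-T)R}{(1-\et(X))\rt(0,X,S)}\right)Y$. Using $T(1-T)=0$ to kill cross terms, $R^2=R$, the MAR identity $\expect[Y^2 R \mid T=t,X,S] = \rt(t,X,S)(\var\{Y\mid R=1,T=t,X,S\} + \tmutb(t,X,S))$, and the change of variables $\expect[\frac{T}{\et(X)}g(X,S)] = \expect[\frac{1}{\et(X)}\expect\{g(X,S(1))\mid X\}]$ (which follows from $S(1)\perp T\mid X$ in condition \eqref{condition: unconfound-1}), I would obtain $\expect[U^2] = \expect[B^2] + \expect[\frac{1}{\et(X)}\frac{\tmutb(1,X,S(1))}{\rt(1,X,S(1))}] + \expect[\frac{1}{1-\et(X)}\frac{\tmutb(0,X,S(0))}{\rt(0,X,S(0))}]$, where the variance-of-$Y$ piece is exactly $\expect[B^2]$. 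A parallel computation gives $\expect[U]=\delta^*$ (via Lemma \ref{lemma: nuisance}), so $\var\{U\} = \expect[U^2] - (\delta^*)^2$.

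Then I would subtract the third expectation in the statement. Because $\frac{1}{\rt} - \frac{1-\rt}{\rt} = 1$, the $\tmutb/\rt$ contributions collapse, leaving $\var\{U\} - (\text{third term}) = \expect[B^2] + \expect[\frac{\tmutb(1,X,S(1))}{\et(X)}] + \expect[\frac{\tmutb(0,X,S(0))}{1-\et(X)}] - (\delta^*)^2$. Applying the decomposition $\expect\{\tmutb(1,X,S(1))\mid X\} = \var\{\tmut(1,X,S(1))\mid X\} + \mutb(1,X)$ (using $\expect\{\tmut(1,X,S(1))\mid X\} = \mut(1,X)$ from Lemma \ref{lemma: nuisance}) together with the companion computation $\expect[(A+C)^2] = \var\{\mut(1,X)-\mut(0,X)\} + \expect[\frac{1}{\et(X)}\var\{\tmut(1,X,S(1))\mid X\}] + \expect[\frac{1}{1-\et(X)}\var\{\tmut(0,X,S(0))\mid X\}]$ (whose cross term $\expect[AC]$ vanishes since $\expect[C\mid X]=0$), the variance-of-$\tmut$ pieces match those already present in $V^*$.

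Finally, matching the remaining $\mut$-only terms reduces to the elementary identity that $\expect[\frac{\mutb(1,X)}{\et(X)}] + \expect[\frac{\mutb(0,X)}{1-\et(X)}] - \var\{\mut(1,X)-\mut(0,X)\} - (\delta^*)^2$ equals the second expectation in the statement, after expanding $(\sqrt{(1-\et(X))/\et(X)}\,\mut(1,X) + \sqrt{\et(X)/(1-\et(X))}\,\mut(0,X))^2$ and using $\expect[\mut(1,X)-\mut(0,X)]=\delta^*$. Assembling the pieces yields $V^* = \var\{U\} - (\text{second term}) - (\text{third term})$, as claimed. The main obstacle is the bookkeeping in the conditional-expectation reductions — in particular justifying each use of the MAR and unconfoundedness conditions and correctly translating between the observed surrogate $S = S(T)$ and the potential surrogates $S(t)$ under $S(t)\perp T\mid X$; the surrounding algebra (completing the square, collapsing the $1/\rt$ factors) is routine once those reductions are in place.
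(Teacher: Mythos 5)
Your proof is correct: I checked each intermediate identity (the orthogonality $\expect[B\mid T,X,S]=0$, hence $V^*=\expect[(A+C)^2]+\expect[B^2]$; the expansion $\expect[U^2]=\expect[B^2]+\expect\big[\tfrac{\tmutb(1,X,S(1))}{\et(X)\rt(1,X,S(1))}\big]+\expect\big[\tfrac{\tmutb(0,X,S(0))}{(1-\et(X))\rt(0,X,S(0))}\big]$ with $\expect[U]=\delta^*$; the telescoping via $\tfrac{1}{\rt}-\tfrac{1-\rt}{\rt}=1$; the law-of-total-variance step and the final $\mut$-only matching) and they all hold. Your organization differs from the paper's. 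The paper expands $\var\{\psi\}$ directly into the variances of its five constituent pieces plus all pairwise covariances ($V_1,\dots,V_9$ in its notation), shows the three genuine cross terms vanish, shows two covariances equal corresponding variances ($V_8=V_2$, $V_9=V_4$), and then simplifies $V_1-V_4$ and $V_3-V_2$ so that the stated formula emerges directly. You instead split $\psi$ into just two orthogonal blocks $(A+C)$ and $B$, compute $V^*$ as a sum of two squares, then independently expand the claimed right-hand side and match; the cancellation of the conditional-variance-of-$Y$ pieces between $\expect[U^2]$ and $\expect[B^2]$ replaces the paper's $V_8=V_2$ bookkeeping. The ingredients are identical (conditional-mean-zero residuals, $\expect[C\mid X]=0$, \Cref{lemma: nuisance}, $T(1-T)=0$, and $S(t)\perp T\mid X$), but your route tracks far fewer terms, at the cost of being a verification-by-matching rather than a derivation that produces the formula. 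One small slip to fix: the change-of-variables identity as you display it, $\expect[\tfrac{T}{\et(X)}g(X,S)]=\expect[\tfrac{1}{\et(X)}\expect\{g(X,S(1))\mid X\}]$, is off by a factor of $\et(X)$ — the left side equals $\expect[g(X,S(1))]$; the version you actually use everywhere, with $T/\et^2(X)$ on the left, is the correct one, and all conclusions you draw from it (e.g.\ the expressions for $\expect[U^2]-\expect[B^2]$ and $\expect[C^2]$) are right.
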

\begin{proof}
By straightforward algebra, we can show that  
\begin{align*}
&V^* = \var[\psi(W; \xi_1^*, \eta^*)]  \\
    &= \var\bigg\{\bigg(\frac{TR}{\et(X)\rt(1, X, S)} - \frac{(1- T)R}{(1 - \et(X))\rt(0, X, S)}\bigg)Y\bigg\} \\
    &+ \underbrace{\var\{\mut(1, X) - \mut(0, X)\}}_{V_1} + \underbrace{\var\bigg\{\bigg(\frac{TR}{\et(X)\rt(1, X, S)} - \frac{(1- T)R}{(1 - \et(X))\rt(0, X, S)}\bigg)\tmut(T, X, S)\bigg\}}_{V_2} \\
    &+ \underbrace{\var\bigg\{\bigg(\frac{T}{\et(X)} - \frac{1 - T}{1 - \et(X)}\bigg)\tmut(T, X, S)\bigg\}}_{V_3} + \underbrace{\var\bigg\{\bigg(\frac{T}{\et(X)} - \frac{1 - T}{1 - \et(X)}\bigg)\mut(T, X)\bigg\}}_{V_4} \\
    &+ 2\underbrace{\cov\bigg(\mut(1, X) - \mut(0, X), \bigg(\frac{TR}{\et(X)\rt(1, X, S)} - \frac{(1- T)R}{(1 - \et(X))\rt(0, X, S)}\bigg)(Y - \tmut(T, X, S)\bigg)}_{V_5} \\
    &+ 2\underbrace{\cov\bigg(\mut(1, X) - \mut(0, X), \bigg(\frac{T}{\et(X)} - \frac{1 - T}{1 - \et(X)}\bigg)(\tmut(T, X, S) - \mut(T, X))\bigg)}_{V_6}  \\
    &+ 2\underbrace{\begin{array}{l}\cov\bigg\{\bigg(\frac{T}{\et(X)} - \frac{1 - T}{1 - \et(X)}\bigg)(\tmut(T, X, S) - \mut(T, X)),\\\qquad\qquad\qquad\qquad \bigg(\frac{TR}{\et(X)\rt(1, X, S)} - \frac{(1- T)R}{(1 - \et(X))\rt(0, X, S)}\bigg)(Y - \tmut(T, X, S) \bigg\}\end{array}}_{V_7} \\
    &- 2\underbrace{\begin{array}{l}\cov\bigg\{\bigg(\frac{TR}{\et(X)\rt(1, X, S)} - \frac{(1- T)R}{(1 - \et(X))\rt(0, X, S)}\bigg)Y,\\\qquad\qquad\qquad\qquad \bigg(\frac{TR}{\et(X)\rt(1, X, S)} - \frac{(1- T)R}{(1 - \et(X))\rt(0, X, S)}\bigg)\tmut(T, X, S)\bigg\}\end{array}}_{V_8} \\
    &- 2\underbrace{\cov\bigg\{\bigg(\frac{T}{\et(X)} - \frac{1 - T}{1 - \et(X)}\bigg)\tmut(T, X, S), \bigg(\frac{T}{\et(X)} - \frac{1 - T}{1 - \et(X)}\bigg)\mut(T, X)\bigg\}}_{V_9}
\end{align*}
Now we compute these terms one by one. For $V_1$:
\begin{align*}
 \var\{\mut(1, X) - \mut(0, X)\} &= \expect[(\mut(1, X) - \mut(0, X))^2] - \expect^2[\mut(1, X) - \mut(0, X)] \\
    &= \expect[(\mut(1, X) - \mut(0, X))^2] - (\xi_1^* - \xi_0^*)^2 
\end{align*}
For $V_2 \sim V_4$:
\begin{align*}
V_2 &= \var\bigg\{\bigg(\frac{TR}{\et(X)\rt(1, X, S)} - \frac{(1- T)R}{(1 - \et(X))\rt(0, X, S)}\bigg)\tmut(T, X, S)\bigg\} \\
    &= \var\bigg\{\frac{TR}{\et(X)\rt(1, X, S(1))}\tmut(1, X, S(1))\bigg\} + \var\bigg\{\frac{(1 - T)R}{(1 - \et(X))\rt(0, X, S(0))}\tmut(0, X, S(0))\bigg\} \\
    &- 2\cov\bigg(\frac{TR}{\et(X)\rt(1, X, S(1))}\tmut(1, X, S(1)), \frac{(1 - T)R}{(1 - \et(X))\rt(0, X, S(0))}\tmut(0, X, S(0))\bigg) \\
    &= \expect\bigg\{\frac{1}{\et(X)\rt(1, X, S(1))}\tmutb(1, X, S(1))\bigg\} + \expect\bigg\{\frac{1}{(1 - \et(X))\rt(0, X, S(0))}\tmutb(0, X, S(0))\bigg\} \\&- (
    \xi_1^* - \xi_0^*)^2.
\end{align*}
since  
\begin{align*}
\var\bigg\{\frac{TR}{\et(X)\rt(1, X, S(1))}\tmut(1, X, S(1))\bigg\} 
&= \expect\bigg\{\frac{\expect[TR \mid X, S(1)]}{(\et(X)\rt(1, X, S(1)))^2}\tmutb(1, X, S(1))\bigg\} \\
&- \expect^2\bigg\{\frac{\expect[TR \mid X, S(1)]}{\et(X)\rt(1, X, S(1))}\tmut(1, X, S(1))\bigg\} \\
&= \expect\bigg\{\frac{1}{\et(X)\rt(1, X, S(1))}\tmutb(1, X, S(1))\bigg\} - \xi_1^{*2}, \\ 
\var\bigg\{\frac{(1 - T)R}{(1 - \et(X))\rt(0, X, S(0))}\tmut(0, X, S(0))\bigg\} &=  \expect\bigg\{\frac{1}{( 1- \et(X))\rt(0, X, S(0))}\tmutb(0, X, S(0))\bigg\} - \xi_0^{*2},
\end{align*}
and 
\begin{align*}
&\cov\bigg(\frac{TR}{\et(X)\rt(1, X, S(1))}\tmut(1, X, S(1)), \frac{(1 - T)R}{( 1- \et(X))\rt(0, X, S(0))}\tmut(0, X, S(0))\bigg) \\
&= -\expect\bigg[\frac{TR}{\et(X)\rt(1, X, S(1))}\tmut(1, X, S(1))\bigg]\expect\bigg[\frac{(1 - T)R}{( 1- \et(X))\rt(0, X, S(0))}\tmut(0, X, S(0)) \bigg] = -\xi_1^*\xi_0^*.
\end{align*}
Similarly, 
\begin{align*}
V_3 &= \expect\bigg\{\frac{1}{\et(X)}\tmutb(1, X, S(1))\bigg\} + \expect\bigg\{\frac{1}{1 - \et(X)}\tmutb(0, X, S(0))\bigg\} - (
    \xi_1^* - \xi_0^*)^2 \\
V_4 &= \expect\bigg\{\frac{1}{\et(X)}\tmutb(1, X)\bigg\} + \expect\bigg\{\frac{1}{1 - \et(X)}\tmutb(0, X)\bigg\} - (
    \xi_1^* - \xi_0^*)^2.
\end{align*}
For $V_5 \sim V_7$:
\begin{align*}
V_5 &= \cov\bigg(\mut(1, X) - \mut(0, X), \bigg(\frac{TR}{\et(X)\rt(1, X, S)} - \frac{(1- T)R}{(1 - \et(X))\rt(0, X, S)}\bigg)(Y - \tmut(T, X, S))\bigg) \\
    &= \expect\bigg\{(\mut(1, X) - \mut(0, X))\bigg(\frac{TR}{\et(X)\rt(1, X, S)} - \frac{(1- T)R}{(1 - \et(X))\rt(0, X, S)}\bigg)\\&\times(\expect[Y \mid R = 1, T, X, S] - \tmut(T, X, S))\bigg\}  \\
    &- \expect\bigg\{(\mut(1, X) - \mut(0, X))\bigg\}\expect\bigg\{\bigg(\frac{TR}{\et(X)\rt(1, X, S)} - \frac{(1- T)R}{(1 - \et(X))\rt(0, X, S)}\bigg)\\&\times(\expect[Y \mid R = 1, T, X, S] - \tmut(T, X, S))\bigg\} = 0,
\end{align*}
since $\expect[Y \mid R = 1, T, X, S] = \tmut(T, X, S)$. Similarly $V_7 = 0$. It is analogous to prove that $V_6 = 0$ by noting that $\expect[\tmut(T, X, S) \mid X, T] = \mut(T, X)$ according to \cref{lemma: nuisance}.

For $V_8$ and $V_9$:
\begin{align*}
V_8 &= \expect\bigg\{\bigg(\frac{TR}{\et(X)\rt(1, X, S)} - \frac{(1- T)R}{(1 - \et(X))\rt(0, X, S)}\bigg)^2Y\tmut(T, X, S)\bigg\} \\
    &- \expect\bigg\{\bigg(\frac{TR}{\et(X)\rt(1, X, S)} - \frac{(1- T)R}{(1 - \et(X))\rt(0, X, S)}\bigg)\tmut(T, X, S)\bigg\}\\&\times\expect\bigg\{\bigg(\frac{TR}{\et(X)\rt(1, X, S)} - \frac{(1- T)R}{(1 - \et(X))\rt(0, X, S)}\bigg)Y\bigg\} \\
    &= \expect\bigg\{\bigg(\frac{TR}{\et(X)\rt(1, X, S)} - \frac{(1- T)R}{(1 - \et(X))\rt(0, X, S)}\bigg)^2\tmutb(T, X, S)\bigg\}  \\
    &- \expect^2\bigg\{\bigg(\frac{TR}{\et(X)\rt(1, X, S)} - \frac{(1- T)R}{(1 - \et(X))\rt(0, X, S)}\bigg)\tmut(T, X, S)\bigg\} \\
    &= \var\bigg\{\bigg(\frac{TR}{\et(X)\rt(1, X, S)} - \frac{(1- T)R}{(1 - \et(X))\rt(0, X, S)}\bigg)\tmut(T, X, S)\bigg\} = V_2,
\end{align*}
where the second equality holds because $\expect[Y \mid R, T, X, S] = \tmut(T, X, S)$. Analogously, we can prove that $V_9 = V_4$ by again noting that $\expect[\tmut(T, X, S) \mid X, T] = \mut(T, X)$ according to \cref{lemma: nuisance}.

Therefore, 
\begin{align*}
V^* 
    &= \var\bigg\{\bigg(\frac{TR}{\et(X)\rt(1, X, S)} - \frac{(1- T)R}{(1 - \et(X))\rt(0, X, S)}\bigg)Y\bigg\}  + V_1 + V_3 - V_2 - V_4 \\
    &=  \var\bigg\{\bigg(\frac{TR}{\et(X)\rt(1, X, S)} - \frac{(1- T)R}{(1 - \et(X))\rt(0, X, S)}\bigg)Y\bigg\} \nonumber \\
    &- \expect\bigg\{\bigg(\sqrt{\frac{1 - e^*(X)}{e^*(X)}}\mut(1, X) + \sqrt{\frac{\et(X)}{1 - \et(X)}\mut(0, X)}\bigg)^2\bigg\} \nonumber \\
    &- \expect\bigg\{\frac{1}{\et(X)}\frac{1 - \rt(1, X, S(1))}{\rt(1, X, S(1))}\tmutb(1, X, S(1)) + \frac{1}{1 - \et(X)}\frac{1 - \rt(0, X, S(0))}{\rt(0, X, S(0))}\tmutb(0, X, S(0))\bigg\} \nonumber,
\end{align*}
since 
\begin{align*}
V_1 - V_4 
    &= -\expect\bigg\{\frac{1 - \et(X)}{\et(X)}\tmutb(1, X) + \frac{\et(X)}{1 - \et(X)}\tmutb(0, X) + 2\tmut(1, X)\tmut(0, X)\bigg\} \\
    &= -\expect\bigg\{\bigg(\sqrt{\frac{1 - e^*(X)}{e^*(X)}}\mut(1, X) + \sqrt{\frac{\et(X)}{1 - \et(X)}\mut(0, X)}\bigg)^2\bigg\},
\end{align*}
and 
\begin{align*}
V_3 - V_2 = - \expect\bigg\{\frac{1}{\et(X)}\frac{1 - \rt(1, X, S(1))}{\rt(1, X, S(1))}\tmutb(1, X, S(1)) + \frac{1}{1 - \et(X)}\frac{1 - \rt(0, X, S(0))}{\rt(0, X, S(0))}\tmutb(0, X, S(0))\bigg\}.
\end{align*}
\end{proof}

\begin{proof}[Proof for \cref{thm: efficiency-comparison}]
We derive the efficient influence functions and efficiency bounds for different settings respectively. In all parts, we focus on efficient influence function for $\xi_1^*$. The efficient influence function for $\xi_0^*$ and $\delta^*$ can be derived analogously. 

\paragraph{Efficienct influence function in setting I.}
Consider the following model: 
\begin{align*}
    \mathcal{M}_{np, I} 
        &= \bigg\{f_{X, T, R, Y}(X, T, R, Y) = f_X(X)\big[e(X)^T(1 - e(X))^{1 - T}\big][r(T, X)^R(1 - r(T, X))^{1- R}]\\&\qquad\qquad\qquad\qquad\qquad\qquad \times f^R_{Y \mid R = 1, T, X}(Y, R, T, X): \\
        &\qquad\qquad\qquad  f_X \text{ and } f_{Y \mid R = 1, T, X} \text{are arbitrary density functions,} \\
        &\qquad\qquad\qquad \text{and } e(X), r(T, X) \text{ are arbitrary functions obeying } e(X) \in [\epsilon, 1- \epsilon], r(T, X) \in [\epsilon, 1]  \bigg\}. 
\end{align*}
The corresponding tangent space is $\Lambda_I = \overline\Lambda_X \oplus \overline\Lambda_{T \mid X} \oplus \overline\Lambda_{R \mid T, X} \oplus \overline\Lambda_{Y \mid R, T, X}$, where $\overline\Lambda_X$ and $\overline\Lambda_{T \mid X}$ are given in the proof of \Cref{thm: efficient-if} and $\overline\Lambda_{R \mid T, X}$,  $\overline\Lambda_{Y \mid R, T, X}$ are mean square closures of the following sets:
\begin{align*}
\Lambda_{R \mid X, T}  
    &= \{\score_{R\mid X, T}(R, X, T) \in L_2(R, X, T): \expect[\score_{R \mid X, T}(R, X, T) \mid X, T] = 0\}, \\ 
 \Lambda_{Y \mid R, X, T}
    &= \{R \times \score_{Y \mid R = 1, X, T}(Y, R, X, T) \in L_2(Y, R, X, T): \\
    &\qquad\qquad\qquad\qquad \expect[\score_{Y \mid R = 1, X, T}(Y, R, X, T) \mid R = 1, X, T] = 0\}.
 \end{align*} 
We again derive the influence function of $\xi_1^* = \Eb{Y(1)}$, which can be written as\break $\xi_1^* = \Eb{\Eb{Y \mid X, T = 1, R = 1}}$ according to \Cref{lemma: identification-2}.
Consider regular parametric submodels indexed by $\gamma$ with score function 
\begin{align*}
\score(Y, R, T, X) = \score( X) + \score(T \mid X) 
+ \score(R \mid T, X) + 
\score(Y \mid R,  T, X),
\end{align*}
where the components above satisfy the restrictions imposed in the sets  $\Lambda_X, \Lambda_{T \mid X}, \Lambda_{R \mid T, X}, \Lambda_{Y \mid R, T, X}$ respectively.
The corresponding target parameter is 
\begin{align*}
\E_\gamma\bracks{Y(1)} = \E_\gamma\bracks{\E_\gamma\bracks{Y \mid X, T = 1, R = 1}}.
\end{align*}
Then 
\begin{align*}
\frac{\partial}{\partial\gamma} \E_\gamma\bracks{Y(1)}\vert_{\gamma=0} = \frac{\partial}{\partial\gamma}\E_\gamma\bracks{\E\bracks{Y \mid X, T = 1, R = 1}}\vert_{\gamma=0} +  \E\bracks{\frac{\partial}{\partial\gamma}\E_\gamma\bracks{Y \mid X, T = 1, R = 1}\vert_{\gamma=0}}.
\end{align*}
We have 
\begin{align*}
\frac{\partial}{\partial\gamma}\E_\gamma\bracks{\E\bracks{Y \mid X, T = 1, R = 1}}\vert_{\gamma=0} = \Eb{\mut(1, X)\score(X)} = \Eb{\prns{\mut(1, X) - \xi_1^*}\score(Y, R, T, X)}
\end{align*}
and 
\begin{align*}
 \E\bracks{\frac{\partial}{\partial\gamma}\E_\gamma\bracks{Y \mid X, T = 1, R = 1}\vert_{\gamma=0}} 
   &= \Eb{\Eb{Y \times \score(Y \mid X, T, R) \mid X, T = 1, R = 1}} \\
   &= \Eb{\frac{TR}{\et(X)\rt(1, X)}\prns{Y - \mut(1, X)}\score(Y, X, T, R)}.
\end{align*}
It follows that 
\begin{align*}
 \frac{\partial}{\partial\gamma} \E_\gamma\bracks{Y(1)}\vert_{\gamma=0} = \Eb{\psi_{I, 1}(Y, X, T, R)\score(Y, X, T, R)},
 \end{align*} 
where 
\begin{align*}
\psi_{I, 1}(Y, X, T, R) = \frac{TR}{\et(X)\rt(1, X)}\prns{Y - \mut(1, X)} + \mut(1, X) - \xi_1^*.
\end{align*}
Finally, we can show that $\psi_{I, 1}(Y, R, T, X)$ belongs to the tangent space $\Lambda_I$. 
We can write 
{
\begin{align*}
\psi_{I, 1}(Y, R, T, X) 
    &= \mut(1, X) - \xi_1^* \\
    &+ \frac{TR}{\et(X)\rt(1, X)}\prns{Y - \mut(1, X)}.
\end{align*}
}
It is easy to verify that the terms on the right hand side of the equation above belong to\break 
$\bar{\Lambda}_X, \bar{\Lambda}_{Y \mid R, T, X}$ respectively. 
Thus $\psi_{I, 1}(Y, R, T, X)$ belongs to the tangent space $\Lambda_I$. 
This shows that $\psi_{I, 1}$ is the efficient influence function of $\xi_1^*$.
Similarly, we can derive the efficient influence function $\psi_{I, 0}$ for $\xi_0^*$ as follows: 
\begin{align*}
\psi_{I, 0}(Y, R, T, X)  = \frac{(1-T)R}{\et(X)\rt(0, X)}\prns{Y - \mut(0, X)} + \mut(0, X) - \xi_0^*.
\end{align*}
It follows that $\psi_{I, 1} - \psi_{I, 0}$ is the efficient influence function for $\delta^*$. {Moreover, from the analysis above, we can see that the efficient influence function is orthogonal to $\bar{\Lambda}_{T \mid X}$ and $\bar{\Lambda}_{R \mid T, X}$, so the corresponding efficiency bound is invariant to any restriction on $\bar{\Lambda}_{T \mid X}$ or $\bar{\Lambda}_{R \mid T, X}$. In particular, the corresponding efficiency bound is invariant to the knowledge of $\et, \rt$. }

\paragraph{Efficienct influence function in setting II.} 
Now we consider the model 
\begin{align*}
    \mathcal{M}_{np, II} 
        &= \bigg\{f_{X, T, R, Y, S}(X, T, R, Y, S) = f_X(X)\big[e(X)^T(1 - e(X))^{1 - T}\big][r(T, X)^R(1 - r(T, X))^{1- R}]: \\
        &\qquad\qquad\qquad \qquad\qquad\qquad \times f^R_{Y \mid R = 1, T, X}(Y, R, T, X) f^R_{S \mid R = 1, T, X, Y}(S, R, T, X, Y), \\
        &\qquad\qquad\qquad f_X, f_{Y \mid R = 1, T, X}, f_{S \mid R = 1, T, X, Y} \text{ are arbitrary density functions,} \\
        &\qquad\qquad\qquad \text{and } e(X), r(T, X) \text{ are arbitrary functions obeying } e(X) \in [\epsilon, 1- \epsilon], r(T, X) \in [\epsilon, 1]  \bigg\}. 
\end{align*}
The corresponding tangent space is $\Lambda_{II} = \overline\Lambda_X \oplus \overline\Lambda_{T \mid X} \oplus \overline\Lambda_{R \mid T, X} \oplus \overline\Lambda_{Y \mid R, T, X} \oplus \overline\Lambda_{S \mid Y, R, T, X}$, where $\overline\Lambda_X$ and $\overline\Lambda_{T \mid X}$ are given in the proof of \Cref{thm: efficient-if} and $\overline\Lambda_{R \mid T, X}$,  $\overline\Lambda_{Y \mid R, T, X}$, $\overline\Lambda_{S \mid Y, R, T, X}$ are mean square closures of the following sets:
\begin{align*}
\Lambda_{R \mid X, T}  
    &= \{\score_{R\mid X, T}(R, X, T) \in L_2(R, X, T): \expect[\score_{R \mid X, T}(R, X, T) \mid X, T] = 0\}, \\ 
 \Lambda_{Y \mid R, X, T}
    &= \{R \times \score_{Y \mid R = 1, X, T}(Y, R, X, T) \in L_2(Y, R, X, T): \\
    &\qquad\qquad\qquad\qquad \expect[\score_{Y \mid R = 1, X, T}(Y, R, X, T) \mid R = 1, X, T] = 0\} \\
\Lambda_{S \mid Y, R, X, T}
    &= \{R \times \score_{S \mid Y, R = 1, X, T}(S, Y, R, X, T) \in L_2(S, Y, R, X, T): \\
    &\qquad\qquad\qquad\qquad \expect[\score_{S \mid Y, R = 1, X, T}(S, Y, R, X, T) \mid Y, R = 1, X, T] = 0\}.
 \end{align*} 
 Consider regular parametric submodels indexed by $\gamma$ with score function 
\begin{align*}
\score(S, Y, R, T, X) = \score( X) + \score(T \mid X) 
+ \score(R \mid T, X) + 
\score(Y \mid R,  T, X) + \score(S \mid Y, R, T, X),
\end{align*}
where the components above satisfy the restrictions imposed in the sets  $\Lambda_X, \Lambda_{T \mid X}, \Lambda_{R \mid T, X}$, $\Lambda_{Y \mid R, T, X}$, $\Lambda_{S \mid Y, R, T, X}$ respectively.
The corresponding target parameter is 
\begin{align*}
\E_\gamma\bracks{Y(1)} = \E_\gamma\bracks{\E_\gamma\bracks{Y \mid X, T = 1, R = 1}}.
\end{align*}
The analyses for setting I already shows that 
\begin{align*}
 \frac{\partial}{\partial\gamma} \E_\gamma\bracks{Y(1)}\vert_{\gamma=0} = \Eb{\psi_{I, 1}(Y, X, T, R)\score(Y, X, T, R)},
 \end{align*} 
 It follows that 
 \begin{align*}
 \frac{\partial}{\partial\gamma} \E_\gamma\bracks{Y(1)}\vert_{\gamma=0} = \Eb{\psi_{I, 1}(Y, X, T, R)\score(S, Y, R, T, X)},
 \end{align*} 
 since
 \begin{align*}
  \Eb{\psi_{I, 1}(Y, X, T, R)\score(S \mid Y, R, T, X)} = \Eb{\psi_{I, 1}(Y, X, T, R)\Eb{\score(S \mid Y, R, T, X)\mid Y, R, T, X}} = 0.
  \end{align*} 
  This means that $\psi_{I, 1}$ is also an influence function for $\xi_1^*$ under the model $\mathcal{M}_{np, II}$.

  Moreover, we showed that $\psi_{I, 1} \in \Lambda_I$. 
  Since $\Lambda_I \subset \Lambda_{II}$, we also have $\psi_{I, 1} \in \Lambda_{II}$.
  It follows that $\psi_{I, 1}$ is also the efficient influence function for $\xi_1^*$ under the model $\mathcal{M}_{np, II}$.
  Similarly, we can validate that 
$\psi_{I, 1} - \psi_{I, 0}$ is the efficient influence function of $\delta^*$ udner the model $\mathcal{M}_{np, II}$.
{From the analysis for setting I, we can also see that the efficient influence function is orthogonal to $\bar{\Lambda}_{T \mid X}$ and $\bar{\Lambda}_{R \mid T, X}$, so the corresponding efficiency bound is invariant to the knowledge of $\et, \rt$ as well. }

\paragraph{Efficienct influence function in setting III.} 
Under the additional \Cref{assump: MAR2}, we have $R \perp S \mid T, X$, thus the tangent space under \Cref{assump: unconfound,assump: MAR2,assump: mar-1,assump: overlap} now becomes 
\begin{align*}
\Lambda_{III} = \overline\Lambda_X \oplus \overline\Lambda_{T \mid X} \oplus \overline\Lambda_{S \mid T, X} \oplus \overline\Lambda_{R \mid T, X} \oplus \overline\Lambda_{Y \mid R, S, T, X},
\end{align*}
where $\overline\Lambda_X, \overline\Lambda_{T \mid X}, \overline\Lambda_{S \mid T, X}, \overline\Lambda_{Y \mid R, S, T, X}$ are given in the proof for \Cref{thm: efficient-if}, and $\overline\Lambda_{R \mid T, X}$ is the mean-square closure of the set 
\begin{align*}
\Lambda_{R \mid X, T}  
    &= \{\score_{R\mid X, T}(R, X, T) \in L_2(R, X, T): \expect[\score_{R \mid X, T}(R, X, T) \mid X, T] = 0\}.
\end{align*}

The function $\psi$ in \Cref{eq: efficient-IF} is again an influence function in setting III with the additional \Cref{assump: MAR2}. 
Moreover, it is easy to show that 
\begin{align*}
&\mut(1, X) - \mut(0, X) - \delta^* \in \overline\Lambda_X \oplus \overline\Lambda_{T \mid X}  \\
&\frac{T}{\et(X)}(\tmut(1, X, S) - \mut(1 , X)) - \frac{1 - T}{1 - \et(X)}(\tmut(0, X, S) - \mut(0 , X)) \in \overline\Lambda_{S \mid T, X} \\
&\frac{TR}{\et(X)\rt(1, X, S)}(Y - \tmut(1, X, S)) - \frac{(1 - T)R}{(1 - \et(X))\rt(0, X, S)}(Y - \tmut(0, X, S)) \in \overline\Lambda_{Y \mid R, S, T, X}.
\end{align*}

It follows that $\psi \in \Lambda_{III}$. 
Thus $\psi$ is again the efficient influence function. {From the analysis in the proof for \cref{thm: efficient-if}, we also know that the corresponding efficiency bound is invariant to the knowledge of $\et, \rt$. }

\paragraph{Efficienct influence function in setting IV.}
The efficient influence function {and its invariance to the knowledge of $\et$} in setting IV directly follows from \cite{hahn1998role} so we omit the details. Moreover, in this setting the $\rt$ is always known to be equal to $1$. 

\end{proof}  

\begin{corollary}\label{corollary: variance-setting}
Under \Cref{assump: unconfound,assump: overlap,assump: MAR2,assump: mar-1}, the efficiency lower bounds for setting I-IV in \Cref{def: settings} are given as follows: 
\begin{align*}
V_I^* = V_\text{II}^*  
    &= \var\left\{\left(\frac{TR}{\et(X)\rt(1, X)} - \frac{(1 - T)R}{(1 - \et(X))\rt(0, X)}\right)Y\right\} \\
    &- \expect\bigg\{\frac{\pr(R = 0 \mid X)}{\et(X)\rt(1, X)}\mutb(1, X) + \frac{\pr(R = 0 \mid X)}{\et(X)\rt(0, X)}\mutb(0, X)\bigg\} \\
    &-\expect\bigg\{\bigg(\sqrt{\frac{1 - e^*(X)}{e^*(X)}\frac{\rt(0, X)}{\rt(1, X)}}\mut(1, X) + \sqrt{\frac{\et(X)}{1 - \et(X)}\frac{\rt(1, X)}{\rt(0, X)}}\mut(0, X)\bigg)^2\bigg\} \\
V^*_\text{III}
    &=  \var\bigg\{\bigg(\frac{TR}{\et(X)\rt(1, X)} - \frac{(1- T)R}{(1 - \et(X))\rt(0, X)}\bigg)Y\bigg\} \\
    &- \expect\bigg\{\bigg(\sqrt{\frac{1 - e^*(X)}{e^*(X)}}\mut(1, X) + \sqrt{\frac{\et(X)}{1 - \et(X)}}\mut(0, X)\bigg)^2\bigg\} \nonumber \\
    &- \expect\bigg\{\frac{1}{\et(X)}\frac{1 - \rt(1, X)}{\rt(1, X)}\tmutb(1, X, S(1)) + \frac{1}{1 - \et(X)}\frac{1 - \rt(0, X)}{\rt(0, X)}\tmutb(0, X, S(0))\bigg\} \\
V_\text{IV}^* 
    &= \var\left\{\left(\frac{T}{\et(X)} - \frac{1 - T}{1 - \et(X)}\right)Y\right\} - \expect\bigg\{\bigg(\sqrt{\frac{1 - e^*(X)}{e^*(X)}}\mut(1, X) + \sqrt{\frac{\et(X)}{1 - \et(X)}}\mut(0, X)\bigg)^2\bigg\}.
\end{align*}
\end{corollary}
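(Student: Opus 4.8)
The plan is to evaluate each bound as $V_j^*=\expect[\psi_j^2(W;\delta^*,\eta^*)]$ by the same variance-expansion bookkeeping already carried out for $V^*$ in \Cref{corollary: variance}, using that each $\psi_j$ has mean zero. The bound for setting III requires no new work: under \Cref{assump: MAR2} the labeling propensity collapses to $\rt(t,X,S)=\rt(t,X)$, so substituting this into \Cref{corollary: variance} immediately yields the stated $V_{III}^*$. Likewise $V_{II}^*=V_I^*$ is automatic, since $\psi_{II}=\psi_I$ in \Cref{thm: efficiency-comparison}. Hence only $V_{IV}^*$ and $V_I^*$ demand fresh computation.

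For $V_{IV}^*$ I would first use $T(1-T)=0$ to merge the two inverse-propensity corrections, writing $\psi_{IV}+\delta^*=[\mut(1,X)-\mut(0,X)]+\big(\tfrac{T}{\et(X)}-\tfrac{1-T}{1-\et(X)}\big)\big(Y-\mut(T,X)\big)$. Because $\expect[Y-\mut(T,X)\mid T,X]=0$, the cross term between the residual and $\mut(1,X)-\mut(0,X)$ vanishes, leaving $V_{IV}^*=\var\{\mut(1,X)-\mut(0,X)\}+\expect\{(\tfrac{T}{\et(X)}-\tfrac{1-T}{1-\et(X)})^2(Y-\mut(T,X))^2\}$. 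I would then re-expand the main term $\var\{(\tfrac{T}{\et(X)}-\tfrac{1-T}{1-\et(X)})Y\}$, use arm-disjointness together with $\expect[\tfrac{T}{\et(X)}Y]=\xi_1^*$ and $\expect[\tfrac{1-T}{1-\et(X)}Y]=\xi_0^*$ (both from \Cref{assump: unconfound}), and complete the square to recover $\expect\{(\sqrt{(1-\et(X))/\et(X)}\,\mut(1,X)+\sqrt{\et(X)/(1-\et(X))}\,\mut(0,X))^2\}$. As a cross-check, $V_{IV}^*$ is exactly the $\rt\equiv1$ specialization of $V_{III}^*$, since then $R=1$ almost surely and the $(1-\rt)/\rt$ terms drop out.

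The computation of $V_I^*$ follows the identical template but now carries the indicator $R$ and the weights $A_1=\tfrac{TR}{\et(X)\rt(1,X)}$, $A_0=\tfrac{(1-T)R}{(1-\et(X))\rt(0,X)}$, with $\mut$ replacing $\tmut$ throughout (there is no surrogate, hence no intermediate imputation term). Writing $\psi_I+\delta^*=[\mut(1,X)-\mut(0,X)]+(A_1-A_0)(Y-\mut(T,X))$, I would establish $\expect[(A_1-A_0)(Y-\mut(T,X))\mid T,X]=0$ by conditioning further on $S$ and invoking \Cref{assump: unconfound} condition \eqref{condition: unconfound-2} with \Cref{assump: MAR2} to replace $\expect[R\mid T,X,S]$ by $\rt(t,X)$, and \Cref{lemma: nuisance} to get $\expect[Y\mid R=1,T,X]=\mut(T,X)$; the cross term again vanishes. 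Then $A_1A_0=0$ splits the residual variance arm by arm, and the reduction $\expect[A_1^2(Y-\mut(1,X))^2]=\expect[A_1^2Y^2]-\expect[\mutb(1,X)/(\et(X)\rt(1,X))]$ (one power of $\rt$ being absorbed when taking $\expect[R\mid\cdot]$) rewrites $V_I^*$ in terms of $\var\{(A_1-A_0)Y\}$ plus $\mut$- and $\mutb$-quadratics.

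The main obstacle is the final regrouping for $V_I^*$: the coefficient $1-1/(\et(X)\rt(1,X))$ multiplying $\mutb(1,X)$ must be recast into the stated $\pr(R=0\mid X)$ factor plus the completed square. This $\pr(R=0\mid X)$ coefficient is not accidental; it comes from the identity $\pr(R=1\mid X)=\et(X)\rt(1,X)+(1-\et(X))\rt(0,X)$, valid under \Cref{assump: MAR2}, which makes $1-1/(\et(X)\rt(1,X))$ regroup exactly as $-\pr(R=0\mid X)/(\et(X)\rt(1,X))-\tfrac{1-\et(X)}{\et(X)}\tfrac{\rt(0,X)}{\rt(1,X)}$, and symmetrically for the $\mut(0,X)$ arm. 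This step plays the combined role of $V_2,V_4,V_8,V_9$ in \Cref{corollary: variance}, and the only real hazard is losing track of which terms carry $\rt^2$ versus $\rt$ while performing the cancellations.
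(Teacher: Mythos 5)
Your proposal is correct and follows essentially the same route as the paper: settings II and III are dispatched exactly as the paper does, and for settings I and IV you perform the same variance-expansion bookkeeping (your centering of $Y-\mut(T,X)$ merely reorganizes the paper's pairwise cancellations $V_{11}=V_{12}$, $V_{13}=V_{14}$), culminating in the identical key regrouping via $\pr(R=1\mid X)=\et(X)\rt(1,X)+(1-\et(X))\rt(0,X)$ that produces the $\pr(R=0\mid X)$ coefficient and the completed square.
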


\begin{proof}
\textbf{Efficiency bound in setting I.} The semiparametric efficiency bound is given by $\var\{\psi_{I}(W; \delta^*, \eta^*)\}$:
\begin{align*}
V_I^* 
    &= \var\{\psi_{I}(W; \delta^*, \eta^*)\}   \\
    &=\var\left\{\left(\frac{TR}{\et(X)\rt(1, X)} - \frac{(1 - T)R}{(1 - \et(X))\rt(0, X)}\right)Y\right\} + \underbrace{\var\left\{\mut(1, X) - \mut(0, X)\right\}}_{V_{10}} \\
    &+ \underbrace{\var\left\{\left(\frac{TR}{\et(X)\rt(1, X)} - \frac{(1 - T)R}{(1 - \et(X))\rt(0, X)}\right)\mut(T, X)\right\}}_{V_{11}} \\
    &- 2\underbrace{\begin{array}{l}\cov\biggl\{\left(\frac{TR}{\et(X)\rt(1, X)} - \frac{(1 - T)R}{(1 - \et(X))\rt(0, X)}\right)Y,\\\qquad\qquad\qquad \left(\frac{TR}{\et(X)\rt(1, X)} - \frac{(1 - T)R}{(1 - \et(X))\rt(0, X)}\right)\mut(T, X)\biggr\}\end{array}}_{V_{12}} \\
    &- 2\underbrace{\cov\left\{\mut(1, X) - \mut(0, X), \left(\frac{TR}{\et(X)\rt(1, X)} - \frac{(1 - T)R}{(1 - \et(X))\rt(0, X)}\right)\mut(T, X)\right\}}_{V_{13}} \\
    &+ 2\underbrace{\cov\left\{\mut(1, X) - \mut(0, X), \left(\frac{TR}{\et(X)\rt(1, X)} - \frac{(1 - T)R}{(1 - \et(X))\rt(0, X)}\right)Y\right\}}_{V_{14}}
\end{align*}
Similarly to Step IV in the proof of \Cref{corollary: variance}, we can show that 
\begin{align*}
V_{10} 
    &= \expect\left[\mut(1, X) - \mut(0, X)\right]^2 - (\xi_1^* - \xi_0^*)^2\\ 
V_{11} = V_{12}
    &= \expect\left\{\frac{{\mut}^2(1, X)}{\et(X)\rt(1, X)} + \frac{{\mut}^2(0, X)}{\et(X)\rt(0, X)}\right\} - (\xi_1^* - \xi_0^*)^2 \\
V_{13} &= V_{14}.
\end{align*}
Therefore,
\begin{align*}
 V_I^*  = \var\{\psi_{I}(W; \delta^*, \eta^*)\} 
    &= \var\left\{\left(\frac{TR}{\et(X)\rt(1, X)} - \frac{(1 - T)R}{(1 - \et(X))\rt(0, X)}\right)Y\right\} + V_{10} - V_{11} \\
    &= \var\left\{\left(\frac{TR}{\et(X)\rt(1, X)} - \frac{(1 - T)R}{(1 - \et(X))\rt(0, X)}\right)Y\right\} \\
    &- \expect\bigg\{\bigg(\sqrt{\frac{1 - e^*(X)}{e^*(X)}\frac{\rt(0, X)}{\rt(1, X)}}\mut(1, X) + \sqrt{\frac{\et(X)}{1 - \et(X)}\frac{\rt(1, X)}{\rt(0, X)}}\mut(0, X)\bigg)^2\bigg\} \\
    &- \expect\bigg\{\frac{\pr(R = 0 \mid X)}{\et(X)\rt(1, X)}\mutb(1, X) + \frac{\pr(R = 0 \mid X)}{\et(X)\rt(0, X)}\mutb(0, X)\bigg\}.
 \end{align*} 
 The second equality above holds because
 \begin{align*} 
 V_{10} - V_{11} 
    &=  \expect\left[\mut(1, X) - \mut(0, X)\right]^2 - \expect\left\{\frac{{\mut}^2(1, X)}{\et(X)\rt(1, X)} + \frac{{\mut}^2(0, X)}{\et(X)\rt(0, X)}\right\}  \\
    &= -\expect\left\{\frac{1 - \pr(T = 1, R = 1 \mid X)}{\pr(T = 1, R = 1 \mid X)}{\mut}^2(1, X) + \frac{1 - \pr(T = 0, R = 1 \mid X)}{\pr(T = 0, R = 1 \mid X)}{\mut}^2(0, X)\right\} \\
    &\qquad\qquad- 2\expect\left\{\mut(1, X)\mut(0, X)\right\} \\
    &= -\expect\bigg\{\frac{\pr(R = 1 \mid X) - \pr(T = 1, R = 1 \mid X)}{\pr(T = 1, R = 1 \mid X)}{\mut}^2(1, X) \\ 
    &\qquad + \frac{\pr(R = 1 \mid X) - \pr(T = 0, R = 1 \mid X)}{\pr(T = 0, R = 1 \mid X)}{\mut}^2(0, X)\bigg\}  \\
    &- \expect\left\{\frac{\pr(R = 0 \mid X)}{\pr(T = 1, R = 1 \mid X)}{\mut}^2(1, X) + \frac{\pr(R = 0 \mid X)}{\pr(T = 0, R = 1 \mid X)}{\mut}^2(0, X)\right\} - 2\expect\left\{\mut(1, X)\mut(0, X)\right\} \\
    &= -\expect\left\{\frac{(1 - \et(X))\rt(0, X)}{\et(X)\rt(1, X)}{\mut}^2(1, X) + \frac{\et(X)\rt(1, X)}{(1 - \et(X))\rt(0, X)}{\mut}^2(0, X)\right\} - 2\expect\left\{\mut(1, X)\mut(0, X)\right\} \\
    &- \expect\left\{\frac{\pr(R = 0 \mid X)}{\et(X)\rt(1, X)}{\mut}^2(1, X) + \frac{\pr(R = 0 \mid X)}{(1 - \et(X))\rt(0, X)}{\mut}^2(0, X)\right\} \\
    &= - \expect\bigg\{\bigg(\sqrt{\frac{1 - e^*(X)}{e^*(X)}\frac{\rt(0, X)}{\rt(1, X)}}\mut(1, X) + \sqrt{\frac{\et(X)}{1 - \et(X)}\frac{\rt(1, X)}{\rt(0, X)}}\mut(0, X)\bigg)^2\bigg\} \\
    &- \expect\bigg\{\frac{\pr(R = 0 \mid X)}{\et(X)\rt(1, X)}\mutb(1, X) + \frac{\pr(R = 0 \mid X)}{\et(X)\rt(0, X)}\mutb(0, X)\bigg\}.
 \end{align*}

 \textbf{Efficiency lower bound in setting II.} From the proof of \Cref{thm: efficiency-comparison}, we know that $V_2^* = V_1^*$. 

 \textbf{Efficiency lower bound in setting III. } The conclusion follows directly from \Cref{corollary: variance} by noting that $\rt(t, X, S) = \rt(t, X)$.

 \textbf{Efficiency lower bound in setting IV.} The efficiency lower bound is given by $\expect\{\psi^2_{IV}(W; \delta^*, \eta^*)\}$:
\begin{align*}
&V_\text{IV}^* = \expect\{\psi^2_{IV}(W; \delta^*, \eta^*)\}  \\
    &= \var\left\{\left(\frac{T}{\et(X)} - \frac{1 - T}{1 - \et(X)}\right)Y\right\} + \var\left\{\mut(1, X) - \mut(0, X) \right\}  \\
    &+  \var\left\{\left(\frac{T}{\et(X)} - \frac{1 - T}{1 - \et(X)}\right)\mut(T, X)\right\} \\
    &+ 2\cov\left\{\left(\frac{T}{\et(X)} - \frac{1 - T}{1 - \et(X)}\right)Y, \mut(1, X) - \mut(0, X)\right\} \\
    &- 2\cov\left\{\left(\frac{T}{\et(X)} - \frac{1 - T}{1 - \et(X)}\right)\mut(T, X), \mut(1, X) - \mut(0, X)\right\} \\
    &- 2\cov\left\{\left(\frac{T}{\et(X)} - \frac{1 - T}{1 - \et(X)}\right)Y, \left(\frac{T}{\et(X)} - \frac{1 - T}{1 - \et(X)}\right)\mut(T, X)\right\}.
\end{align*}
Analogously to step IV in the proof of \Cref{corollary: variance}, we can show that 
\begin{align*}
 &\var\left\{\left(\frac{T}{\et(X)} - \frac{1 - T}{1 - \et(X)}\right)\mut(T, X)\right\} 
    \\
    &\qquad\qquad= \cov\left\{\left(\frac{T}{\et(X)} - \frac{1 - T}{1 - \et(X)}\right)Y, \left(\frac{T}{\et(X)} - \frac{1 - T}{1 - \et(X)}\right)\mut(T, X)\right\}\\
    &\qquad\qquad = \expect\left\{\frac{1}{\et(X)}{\mut}^2(1, X) + \frac{1}{1 - \et(X)}{\mut}^2(0, X)\right\} - (\lambda^*_1 - \lambda^*_0)^2 \\
&\cov\left\{\left(\frac{T}{\et(X)} - \frac{1 - T}{1 - \et(X)}\right)Y, \mut(1, X) - \mut(0, X)\right\} \\
    &\qquad\qquad = \cov\left\{\left(\frac{T}{\et(X)} - \frac{1 - T}{1 - \et(X)}\right)\mut(T, X), \mut(1, X) - \mut(0, X)\right\}. 
\end{align*}
Thus 
\begin{align*}
&\var\{\psi_{IV}(W; \delta^*, \eta^*)\}  \\
    &= \var\left\{\left(\frac{T}{\et(X)} - \frac{1 - T}{1 - \et(X)}\right)Y\right\} - \expect\left\{\frac{1}{\et(X)}{\mut}^2(1, X) + \frac{1}{1 - \et(X)}{\mut}^2(0, X)\right\} \\
    &+ \expect\left[\mut(1, X) - \mut(0, X)\right]^2 \\
    &= \var\left\{\left(\frac{T}{\et(X)} - \frac{1 - T}{1 - \et(X)}\right)Y\right\} - \expect\bigg\{\bigg(\sqrt{\frac{1 - e^*(X)}{e^*(X)}}\mut(1, X) + \sqrt{\frac{\et(X)}{1 - \et(X)}}\mut(0, X)\bigg)^2\bigg\}.
\end{align*}
\end{proof}

\begin{proof}[Proof for \cref{corollary: efficiency-loss}]
According to \cref{corollary: variance-setting} and \cref{corollary: variance}, we can verify that 
\begin{align*}
&\qquad\qquad\qquad V_\text{I}^* - V^*_\text{III}  = V_\text{II}^* - V^*_\text{III} \\
&=\expect\bigg\{\bigg(\sqrt{\frac{1 - e^*(X)}{e^*(X)}}\mut(1, X) + \sqrt{\frac{\et(X)}{1 - \et(X)}}\mut(0, X)\bigg)^2\bigg\} \\
&- \expect\bigg\{\frac{\pr(R = 0 \mid X)}{\et(X)\rt(1, X)}\mutb(1, X) + \frac{\pr(R = 0 \mid X)}{\et(X)\rt(0, X)}\mutb(0, X)\bigg\}\\
&\qquad -\expect\bigg\{\bigg(\sqrt{\frac{1 - e^*(X)}{e^*(X)}\frac{\rt(0, X)}{\rt(1, X)}}\mut(1, X) + \sqrt{\frac{\et(X)}{1 - \et(X)}\frac{\rt(1, X)}{\rt(0, X)}}\mut(0, X)\bigg)^2\bigg\}\\
&+ \expect\bigg\{\frac{1}{\et(X)}\frac{1 - \rt(1, X)}{\rt(1, X)}\tmutb(1, X, S(1)) + \frac{1}{1 - \et(X)}\frac{1 - \rt(0, X)}{\rt(0, X)}\tmutb(0, X, S(0))\bigg\} \\
&= \expect\bigg\{\frac{1 - e^*(X)}{e^*(X)}{\mut}^2(1, X) + \frac{e^*(X)}{1 - e^*(X)}{\mut}^2(0, X) \bigg\} +  2\expect\left\{\mut(1, X)\mut(0, X)\right\}\\
&\qquad -\expect\left\{\frac{1 - \pr(T = 1, R = 1 \mid X)}{\pr(T = 1, R = 1 \mid X)}{\mut}^2(1, X) + \frac{1 - \pr(T = 0, R = 1 \mid X)}{\pr(T = 0, R = 1 \mid X)}{\mut}^2(0, X)\right\} \\
&\qquad - 2\expect\left\{\mut(1, X)\mut(0, X)\right\} \\
&\qquad + \expect\bigg\{\frac{1}{\et(X)}\frac{1 - \rt(1, X)}{\rt(1, X)}\tmutb(1, X, S(1)) + \frac{1}{1 - \et(X)}\frac{1 - \rt(0, X)}{\rt(0, X)}\tmutb(0, X, S(0))\bigg\}\\ 
&=  \expect\bigg\{\frac{1 - e^*(X)}{e^*(X)}{\mut}^2(1, X) + \frac{e^*(X)}{1 - e^*(X)}{\mut}^2(0, X) \bigg\} \\
&\qquad- \expect\left\{\frac{1 - \et(X)\rt(1, X)}{\et(X)\rt(1, X)}{\mut}^2(1, X) + \frac{1 - (1 - \et(X))\rt(0, X)}{(1 - \et(X))\rt(0, X)}{\mut}^2(0, X)\right\}   \\ 
&\qquad+ \expect\bigg\{\frac{1}{\et(X)}\frac{1 - \rt(1, X)}{\rt(1, X)}\tmutb(1, X, S(1)) + \frac{1}{1 - \et(X)}\frac{1 - \rt(0, X)}{\rt(0, X)}\tmutb(0, X, S(0))\bigg\} \\
&= \expect\bigg\{\frac{1}{\et(X)}\frac{1 - \rt(1, X)}{\rt(1, X)}(\tmutb(1, X, S(1)) - \mutb(1, X)) \\
&\qquad+ \frac{1}{1 - \et(X)}\frac{1 - \rt(0, X)}{\rt(0, X)}(\tmutb(0, X, S(0)) - \mutb(0, X))\bigg\} 
\end{align*}
Then the conclusion in statement 1 follows from the fact that $\mut(t, x) = \expect[Y \mid T = t, X = x,R = 1] = \Eb{\tmut(t, X, S(t)) \mid X = x}$ under \Cref{assump: MAR2} according to \Cref{lemma: MAR-implication}.
{Moreover, according to \Cref{lemma: mu-tilde-fun}, \Cref{assump: unconfound,assump: mar-1} imply that $\tmut(t, X, S(t)) = \Eb{Y(t) \mid X, S(t)}$, so we have $\var[\tmut(t, X, S(t)) \mid X] = \var[\Eb{Y(t) \mid X, S(t)} \mid X]$ for $t = 0, 1$. }

Furthermore, 
\begin{align*}
&\qquad\qquad\qquad\qquad\qquad V_\text{III}^* - V^*_\text{IV}  \\ 
&= \var\bigg\{\bigg(\frac{TR}{\et(X)\rt(1, X)} - \frac{(1- T)R}{(1 - \et(X))\rt(0, X)}\bigg)Y\bigg\} - \var\bigg\{\bigg(\frac{T}{\et(X)} - \frac{(1- T)}{(1 - \et(X))}\bigg)Y\bigg\}  \\
&- \expect\bigg\{\frac{1}{\et(X)}\frac{1 - \rt(1, X)}{\rt(1, X)}\tmutb(1, X, S(1)) + \frac{1}{1 - \et(X)}\frac{1 - \rt(0, X)}{\rt(0, X)}\tmutb(0, X, S(0))\bigg\} \\
&= \expect\bigg\{\frac{1}{\et(X)\rt(1, X)}Y^2(1) + \frac{1}{\et(X)\rt(0, X)}Y^2(0)\bigg\} - (\xi_1^* - \xi_0^*)^2 - \expect\bigg\{\frac{1}{\et(X)}Y^2(1) + \frac{1}{\et(X)}Y^2(0)\bigg\} \\
&\qquad\qquad\qquad + (\xi_1^* - \xi_0^*)^2  \\
&- \expect\bigg\{\frac{1}{\et(X)}\frac{1 - \rt(1, X)}{\rt(1, X)}\tmutb(1, X, S(1)) + \frac{1}{1 - \et(X)}\frac{1 - \rt(0, X)}{\rt(0, X)}\tmutb(0, X, S(0))\bigg\} \\
&= \expect\bigg\{\frac{1}{\et(X)}\frac{1 - \rt(1, X)}{\rt(1, X)}(Y^2(1) - \tmutb(1, X, S(1))) + \frac{1}{1 - \et(X)}\frac{1 - \rt(0, X)}{\rt(0, X)}(Y^2(0) - \tmutb(0, X, S(0)))\bigg\}.
\end{align*}
This obviously implies the conclusion in statement 2. 
\end{proof}

\begin{proof}[Proof for \Cref{thm: effect-unlabelled}.]
Note that 
\begin{align*}
&\Eb{\Eb{\Eb{Y \mid S, X, T = t, R = 1} \mid X, T =t, R = 0} \mid R = 0} \\
=& \Eb{\Eb{\Eb{Y(t) \mid S(t), X, T = t, R = 1} \mid X, T = t, R = 0} \mid R = 0} \\
=& \Eb{\Eb{\Eb{Y(t) \mid S(t), X, T = t, R = 0} \mid X, T = t, R = 0} \mid R = 0} \tag{\Cref{assump: mar-1}} \\
=& \Eb{\Eb{\Eb{Y(t) \mid S(t), X, R = 0} \mid X, R = 0} \mid R = 0}  \tag{$\prns{Y(t), S(t)} \perp T \mid X, R = 0$} \\
=& \Eb{Y(t) \mid R = 0}. 
\end{align*}
This proves the identification of $\delta_0^*$. 

Now we derive the efficiency bound for $\Eb{Y(1) \mid R = 0}$, based on the $\mathcal{M}_{np}$ model and the corresponding parametric submodels in the proof for \Cref{thm: efficient-if}.
Consider the target parameter under parametric submodels indexed by parameters $\gamma$. Then 
\begin{align*}
&\frac{\partial}{\partial\gamma}\E_{\gamma}\bracks{Y(1) \mid R = 0}\vert_{\gamma = 0} \\
   =&  \frac{\partial}{\partial\gamma}\E_\gamma\bracks{\E_\gamma\bracks{\E_\gamma\bracks{Y \mid S, X, T = 1, R = 1} \mid X, T =1, R = 0} \mid R = 0}\vert_{\gamma = 0} \\
   =& \frac{\partial}{\partial\gamma}\E_{\gamma}\bracks{\mut_0(1, X) \mid R = 0}\vert_{\gamma = 0} + \frac{\partial}{\partial\gamma}\Eb{\E_\gamma\bracks{\tmut(1, X, S)\mid X, T=1,R=0} \mid R = 0}\vert_{\gamma = 0} \\
   +& \frac{\partial}{\partial\gamma}\E\bracks{\E\bracks{\E_\gamma\bracks{Y \mid S, X, T = 1, R = 1} \mid X, T =1, R = 0} \mid R = 0}\vert_{\gamma = 0}.
\end{align*}
Again, we deal with each term respectively. 

First of all, 
\begin{align*}
\frac{\partial}{\partial\gamma}\E_{\gamma}\bracks{\mut_0(1, X) \mid R = 0}\vert_{\gamma = 0} = \Eb{\frac{1-R}{\Prb{R=0}}\prns{\mut_0(1, X) - \Eb{Y(1) \mid R = 0}}\score(Y, R, S, T, X)}.
\end{align*}

Secondly, 
\begin{align*}
&\frac{\partial}{\partial\gamma}\Eb{\E_\gamma\bracks{\tmut(1, X, S)\mid X, T=1,R=0} \mid R = 0}\vert_{\gamma = 0} \\
=& \Eb{\Eb{{\tmut(1, X, S)}\score(S \mid X, T, R) \mid X, T = 1, R = 0}\mid R = 0} \\
=& \Eb{\Eb{\prns{\tmut(1, X, S) - \mut(1, X)}\score(S \mid X, T, R) \mid X, T = 1, R = 0}\mid R = 0} \\
=& \Eb{\Eb{\prns{\tmut(1, X, S) - \mut(1, X)}\score(Y, S, X, T, R) \mid X, T = 1, R = 0}\mid R = 0} \\
=& \Eb{\frac{T(1-R)}{\Prb{T=1\mid R=0, X}\Prb{R= 0}}\prns{\tmut(1, X, S) - \mut_0(1, X)}\score(Y, S, X, T, R)}.
\end{align*}

Thirdly, 
\begin{align*}
&\frac{\partial}{\partial\gamma}\E\bracks{\E\bracks{\E_\gamma\bracks{Y \mid S, X, T = 1, R = 1} \mid X, T =1, R = 0} \mid R = 0}\vert_{\gamma = 0} \\
=& \E\bracks{\E\bracks{\E\bracks{Y \times\score(Y \mid S, X, T, R) \mid S, X, T = 1, R = 1} \mid X, T =1, R = 0} \mid R = 0} \\
=& \E\bracks{\E\bracks{\E\bracks{\prns{Y - \tmut(1, X, S)} \times\score(Y \mid S, X, T, R) \mid S, X, T = 1, R = 1} \mid X, T =1, R = 0} \mid R = 0} \\
=& \E\bracks{\E\bracks{\E\bracks{\prns{Y - \tmut(1, X, S)} \times\score(Y, S, X, T, R) \mid S, X, T = 1, R = 1} \mid X, T =1, R = 0} \mid R = 0} \\
=& \E\big[\E\big[\frac{R}{\Prb{R=1\mid S, X, T=1}}\frac{\Prb{R=0\mid S, X, T=1}}{\Prb{R= 0\mid X, T=1}}\prns{Y - \tmut(1, X, S)} \\&\qquad\qquad\qquad\times\score(Y, S, X, T, R) \mid X, T = 1\big] \mid R = 0\big] \\
=& \mathbb{E}\bigg[\frac{R}{\Prb{R=1\mid S, X, T=1}}\frac{\Prb{R=0\mid S, X, T=1}}{\Prb{R= 0\mid X, T=1}}\frac{T}{\Prb{T=1\mid X}}\frac{\Prb{R= 0 \mid X}}{\Prb{R=0}}\\
&\qquad\qquad\qquad \times \prns{Y - \tmut(1, X, S)}\score(Y, S, X, T, R)\bigg] \\
=& \Eb{\frac{R}{\Prb{R=1\mid S, X, T=1}}\frac{T}{\Prb{T=1\mid R=0,X}}\frac{\Prb{R=0\mid S, X, T=1}}{\Prb{R= 0}}\prns{Y - \tmut(1, X, S)}\score(Y, S, X, T, R)}.
\end{align*}

Putting the three equations above together, we have 
\begin{align*}
&\frac{\partial}{\partial\gamma}\E_{\gamma}\bracks{Y(1) \mid R = 0}\vert_{\gamma = 0}= \Eb{\tilde\psi_1(W)\score(Y, S, X, T, R)},
\end{align*}
where 
\begin{align*}
\tilde\psi_1(W) 
   &= \frac{1-R}{\Prb{R=0}}\prns{\mut_0(1, X) - \Eb{Y(1) \mid R = 0}} + \frac{T(1-R)}{\Prb{R= 0}\Prb{T=1\mid R=0, X}}\prns{\tmut(1, X, S) - \mut_0(1, X)} \\
  &+ \frac{TR}{\Prb{R= 0}\Prb{T=1\mid R=0,X}}\frac{\Prb{R=0\mid S, X, T}}{\Prb{R=1\mid S, X, T}}\prns{Y - \tmut(1, X, S)}.
\end{align*}
We can also use the decomposition in the proof for \Cref{thm: efficient-if} to show that $\tilde\psi_1$ belongs to the tangent space, so it is also the efficient influence function for $\Eb{Y(1) \mid R=0}$.
We can similarly derive the efficient influence function for $\Eb{Y(0) \mid R=0}$.
The final efficient influence function for $\delta_0^*$  is 
\begin{align*}
&\frac{1-R}{\Prb{R=0}}\prns{\mut_0(1, X) - \mut_0(0, X) - \delta_0^*} \\
+& \frac{1-R}{\Prb{R=0}}\biggl\{\frac{T}{\Prb{T=1\mid R=0, X}}\prns{\tmut(1, X, S) - \mut_0(1, X)} \\&- \frac{1- T}{1 - \Prb{T=1\mid R=0, X}}\prns{\tmut(0, X, S) - \mut_0(0, X)}\biggr\} \\
+& \frac{R}{\Prb{R=0}}\frac{\Prb{R=0\mid S, X, T}}{\Prb{R=1\mid S, X, T}} \times \\
&\qquad \braces{\frac{T}{\Prb{T=1\mid R=0,X}}\prns{Y - \tmut(1, X, S)} - \frac{1-T}{1-\Prb{T=1\mid R=0,X}}\prns{Y - \tmut(0, X, S)}}
\end{align*}
\end{proof}

\begin{proof}[Proof for \Cref{prop: latent-unconf}]
{We note that 
\begin{align*}
 &\Eb{\Eb{\Eb{Y(t) \mid S, X, T = t, R = 1} \mid X, T = t, R = 0} \mid R = 0} \\
=&\Eb{\Eb{\Eb{Y(t) \mid S(t), X, T = t, R = 1} \mid X, T = t, R = 0} \mid R = 0} \\
=& \Eb{\Eb{\Eb{Y(t) \mid S(t), X, R = 1} \mid X, R = 0} \mid R = 0} \\
=& \Eb{\Eb{\Eb{Y(t) \mid S(t), X, R = 0} \mid X, R = 0} \mid R = 0} \\
=& \Eb{Y(t) \mid R = 0}.
\end{align*}
Here the second equality follows from the assumptions $Y(t) \perp T \mid X, S(t), R = 1$, $S(t) \perp T \mid X, R = 0$ and the third equality follows from the assumption $Y(t) \perp R \mid X, S(t)$. This shows that the identification formula in \Cref{thm: effect-unlabelled} is still valid. 
}

{
Moreover, the asserted assumptions (all in terms of counterfactuals) impose no additional restrictions on the distributions of the observed variables, so we can still consider the model class $\mathcal{M}_{np}$ and its associated tangent space as we do in the proof of \Cref{thm: effect-unlabelled}. Because both the tanget space and the identification formula do not change, the efficiency bounds do not change either. 
}
\end{proof}

\subsection{Proofs for \cref{sec: estimation}}
\begin{proof}[Proof for \Cref{lemma: multiplicative-bias}.]
Let 
\begin{align*}
&\psi_1(W; \xi_1, \eta) = \mu(1, X) - \xi_1 + \frac{T}{e(X)}(\mu(1, X, S) - \mu(1 , X)) + \frac{TR}{e(X)r(1, X, S)}(Y -\tmu(1, X, S))), \\
&\psi_0(W; \xi_0, \eta) = \mu(0, X) - \xi_0 + \frac{1-T}{1 - e(X)}(\tmu(0, X, S) - \mu(0 , X)) + \frac{(1-T)R}{\prns{1 - e(X)}r(0, X, S)}(Y - \tmu(0, X, S))).
\end{align*}
Then $\psi(W; \delta, \eta) = \psi_1(W; \xi_1, \eta) - \psi_0(W; \xi_0, \eta)$ for any $\delta = \xi_1 + \xi_0$. 

By straightforward algebra, we can show that 
\begin{align*}
\Eb{\psi_1(W; \xi_1, \eta_0)-\psi_1(W; \xi_1, \eta^*)} 
   &= \Eb{(1-\frac{T}{\et(X)})\prns{\mut(1, X) - \mu_0(1, X)}} \\
   &+ \Eb{\frac{T}{\et(X)}\prns{1-\frac{R}{\rt(1, X, S)}}\prns{\tmu_0(1, X, S) - \tmut(1, X, S)}} \\
   &+ \Eb{\prns{\frac{T}{e_0(X)} - \frac{T}{\et(X)}}\prns{\mut(1, X) - \mu_0(1, X)}} \\
   &+ \Eb{\frac{T}{e_0(X)}\prns{1-\frac{R}{r_0(1, X, S)}}\prns{\tmu_0(1, X, S) - \tmut(1, X, S)}} \\
   &+ \Eb{\frac{R}{r_0(1, X, S)}\prns{\frac{T}{e_0(X)} - \frac{T}{\et(X)}}\prns{\tmu_0(1, X, S) - \tmut(1, X, S)}}. 
\end{align*}
Here the first two terms on the right hand side are equal to $0$ because $\Eb{T \mid X} = \et(X)$ and $\Eb{R \mid T = 1, X, S} = \rt(1, X, S)$. Moreover, 
\begin{align*}
\Eb{\prns{\frac{T}{e_0(X)} - \frac{T}{\et(X)}}\prns{\mut(1, X) - \mu_0(1, X)}} = \Eb{\frac{\et(X) - e_0(X)}{e_0(X)}\prns{\mut(1, X) - \mu_0(1, X)}},
\end{align*}
and 
\begin{align*}
\Eb{\frac{T}{e_0(X)}\prns{1-\frac{R}{r_0(1, X, S)}}\prns{\tmu_0(1, X, S) - \tmut(1, X, S)}} 
&= \mathbb{E}\bigg[\frac{\et(X)\prns{r_0(1, X, S) - \rt(1, X, S)}}{e_0(X)r_0(1, X, S)}\\
&\qquad \times \prns{\tmu_0(1, X, S) - \tmut(1, X, S)}\bigg],
\end{align*}
and 
\begin{align*}
&\Eb{\frac{R}{r_0(1, X, S)}\prns{\frac{T}{e_0(X)} - \frac{T}{\et(X)}}\prns{\tmu_0(1, X, S) - \tmut(1, X, S)}} \\
=& \Eb{\frac{r^*(1, X, S)}{r_0(1, X, S)}\frac{\et(X) - e_0(X)}{e_0(X)}\prns{\tmu_0(1, X, S) - \tmut(1, X, S)}}. 
\end{align*}

Similarly, we can derive $\Eb{\psi_0(W; \xi_1, \eta_0)-\psi_0(W; \xi_1, \eta^*)}$ and obtain 
\begin{align*}
\abs{\Eb{\psi(W; \delta, \eta_0)-\psi(W; \delta, \eta^*)}} 
   &\lesssim \|\et - e_0\|_2\prns{\|\mut(1, X) - \mu_0(1, X)\|_2 + \|\mut(0, X) - \mu_0(0, X)\|_2} \\
   &+ \|\et - e_0\|_2\prns{\|\tmut(1, X, S) - \tmu_0(1, X, S)\|_2 + \|\tmut(0, X, S) - \tmu_0(0, X, S)\|_2} \\
   &+ \|r_0(1, X, S) - \rt(1, X, S)\|_2\|\tmu_0(1, X, S) - \tmut(1, X, S)\|_2 \\
   &+ \|r_0(0, X, S) - \rt(0, X, S)\|_2\|\tmu_0(0, X, S) - \tmut(0, X, S)\|_2. 
\end{align*}
By following the proof of \Cref{lemma: convergence-rate}, we have
\begin{align*}
 &\max\braces{\|\tmu_0(0, X, S) - \tmut(0, X, S)\|_2, \|\tmu_0(1, X, S) - \tmut(1, X, S)\|_2} \lesssim \|\tmu_0(T, X, S) - \tmut(T, X, S)\|_2, \\
 &\max\braces{\|r_0(0, X, S) - \rt(0, X, S)\|_2, \|r_0(1, X, S) - \rt(1, X, S)\|_2} \lesssim \|r_0(T, X, S) - \rt(T, X, S)\|_2.
 \end{align*} 
 Therefore, 
 \begin{align*}
 \abs{\Eb{\psi(W; \delta, \eta_0)-\psi(W; \delta, \eta^*)}} \lesssim \|e_0 - \et\|_2\|\mu_0 - \mut\|_2 + \|e_0 - \et\|_2\|\tmu_0 - \tmut\|_2 +  \|r_0 - \rt\|_2\|\tmu_0 - \tmut\|_2.
 \end{align*}
\end{proof}

\begin{proof}[Proof for \cref{thm: DR}]
With slight abuse of notation, we define the following function for  $\eta = \prns{e, r, \mu, \tmu}$: 
\begin{align*}
\psi(W; \eta)
  & =  \mu(1, X) - \mu(0, X)  \\
&\phantom{=}+ \frac{TR}{e(X)r(1, X, S)}(Y - \tmu(1, X, S)) - \frac{(1 - T)R}{(1 - e(X))r(0, X, S)}(Y - \tmu(0, X, S)) \nonumber \\
    &\phantom{=}+ \frac{T}{e(X)}(\tmu(1, X, S) - \mu(1 , X)) - \frac{1 - T}{1 - e(X)}(\tmu(0, X, S) - \mu(0 , X)).
\end{align*}
Then our estimator is 
\begin{align*}
\hat\delta = \frac{1}{K}\sum_{k=1}^K \expectnk\left[\psi(W; \hat\eta_k)\right].
\end{align*}

We can decompose the estimation error of $\hat\delta$ as follows
\begin{align*}
\hat\delta - \delta^*&= \frac{1}{K}\sum_{k=1}^K\underbrace{\bracks{\prns{\expectnk\left[\psi(W; \hat\eta_k)\right] - \Eb{\psi(W; \hat\eta_k)\mid  \hat\eta_k}} - \prns{\expectnk\left[\psi(W; \eta_0)\right] - \expect\left[\psi(W; \eta_0)\right]}}}_{\mathcal{R}_{1, k}}\nonumber \\
&+ \frac{1}{K}\sum_{k=1}^K\underbrace{\bracks{\Eb{\psi(W; \hat\eta_k)\mid  \hat\eta_k} - \expect\left[\psi(W; \eta_0)\right]}}_{\mathcal{R}_{2, k}} + \underbrace{\frac{1}{N}\sum_{i=1}^N \bracks{\psi(W_i; \eta_0) - \delta^*}}_{\mathcal{R}_{3}}, 
\end{align*}
where $\eta_0$ is the limit of $\hat\eta_k$ for $k = 1, \dots, K$ as $N \to \infty$.

Here we can easily show that as $N \to \infty$, $\|\hat\eta_k - \eta_0\| \to 0$, we have 
\begin{align*}
\operatorname{Var}\prns{\mathcal{R}_{1, k} \mid \hat\eta_k} = o_p(1/N) \to 0.
\end{align*}
Moreover, we have $\mathcal{R}_{2, k} \to 0$ because $\|\hat\eta_k - \eta_0\| \to 0$. 
Finally, by law of large number, we have 
\begin{align*}
\abs{ \mathcal{R}_3} \to \abs{\Eb{\psi(W_i; \eta_0) - \delta^*} } = \abs{\Eb{\psi(W_i; \eta_0) - \psi(W_i; \eta^*)}}.
 \end{align*} 
 According to \Cref{lemma: multiplicative-bias}, the last display is equal to $0$ as long as 
 \[(\tmuo - \tmut)(\ro - \rt) = 0, ~~ (\muo - \mut)(\eo - \et) = 0, ~~ (\tmuo - \tmut)(\eo - \et) = 0. \] 
 Therefore, $\hat\delta \to \delta^*$ as $N \to \infty$ as long as $(\tmuo - \tmut)(\ro - \rt) = 0$, $(\muo - \mut)(\eo - \et) = 0$ and $(\tmuo - \tmut)(\eo - \et) = 0$.
\end{proof}

\begin{proof}[Proof for \Cref{thm: normality}.]
We start with the error decomposition in the proof for  \cref{thm: DR} with $\eta_0 = \eta^*$. 
As we mentioned there, 
\begin{align*}
\operatorname{Var}\prns{\mathcal{R}_{1, k} \mid \hat\eta_k} = o_p(1/N).
\end{align*}
So by Chebyshev's inequality, we have $\abs{\mathcal{R}_{1, k}} = o_p(N^{-1/2})$. 

Moreover, by \Cref{lemma: multiplicative-bias}, we have 
\begin{align*}
\abs{\mathcal{R}_{2, k}} 
   &\lesssim \|\hat e_k - \et\|_2\|\hat\mu_k - \mut\|_2 + \|\hat e_k - \et\|_2\|\hat\tmu_k - \tmut\|_2 + \|\hat r_k - \rt\|_2\|\hat\tmu_k - \tmut\|_2 \\
   &\le \sqrt{2}\epsilon^{-3/2} + \epsilon^{-3} = O_p\prns{\rho_{N, e}\rho_{N, \mu} + \rho_{N, e}\rho_{N, \tmu} + \rho_{N, r}\rho_{N, {\tmu}}}.
\end{align*}
So  under the conditions that $\rhor\rhotmu = o(N^{-1/2})$, 
$\rhoe\rhotmu = o(N^{-1/2})$, 
$\rhoe\rhomu = o(N^{-1/2})$, we have $\abs{\mathcal{R}_{2, k}} = o_p(N^{-1/2})$. 

Therefore,
\begin{align*}
\sqrt{N}\prns{\hat\delta - \delta^*}  = \frac{1}{\sqrt{N}}\sum_{i=1}^n \bracks{\psi(W_i; \eta^*) - \delta^*} + o_p(1).
\end{align*}
Then the asserted conclusion follows from  central limit theorem. 
\end{proof}

\begin{proof}[Proof for \cref{thm: conf-interval}]
Note that 
\[
|\hat{V} -  V^*| = |\hat{V} -  \expect[\psi^2(W; {\delta^*}, {\eta^*})]| = \frac{1}{K}\sum_{k = 1}^K|\expectnk[\psi^2(W; \hat{\delta}, \hat{\eta}_k)] - \expect[\psi^2(W; {\delta^*}, {\eta^*})]|,
\]
We only need to prove that $|\expectnk[\psi^2(W; \hat{\delta}, \hat{\eta}_k)] - \expect[\psi^2(W; {\delta^*}, {\eta^*})]| = o_p(1)$. Consider the following decomposition:
\begin{align*}
|\expectnk[\psi^2(W; \hat{\delta}, \hat{\eta}_k)] - \expect[\psi^2(W; {\delta^*}, {\eta^*})]|  
    &\le |\expectnk[\psi^2(W; \hat{\delta}, \hat{\eta}_k)] - \expectnk[\psi^2(W; {\delta^*}, {\eta^*})]| \\
    &+ |\expectnk[\psi^2(W; {\delta^*}, {\eta^*})] - \expect[\psi^2(W; {\delta^*}, {\eta^*})]| \\
    &= \mathcal{R}_{4} + \mathcal{R}_{5}.
\end{align*}
Thus we only need to prove that both $\mathcal{R}_{4}$ and $\mathcal{R}_{5}$ are $o_p(1)$.

\paragraph*{Bounding $\mathcal{R}_{5}$. } According to \Cref{lemma: convergence-rate}, 
\begin{align*}
\|\psi(W; {\delta^*}, {\eta^*})\|_q 
    &\le \frac{1}{\epsilon^2}(\|Y(1)\|_q + \|Y(0)\|_q) + \frac{1 + \epsilon}{\epsilon^2}(\|\tmut(1, X, S(1))\|_q + \|\tmut(0, X, S(0))\|_q) \\
    &+ \frac{1 + \epsilon}{\epsilon} (\|\mut(1, X)\|_q + \|\mut(0, X)\|_q) \le \left[\frac{2}{\epsilon^2} + \frac{2(1 + \epsilon)}{\epsilon^2} + \frac{2(1 + \epsilon)}{\epsilon}\right]C.
\end{align*}

If $q$ in \cref{assump: bounded} satisfies that $q \ge 4$, then 
\begin{align*}
\expect[\mathcal{R}_{5}^2] = \frac{1}{n}\var\{\psi^2(W; {\delta^*}, {\eta^*})\} \le \frac{1}{n}\var\{\psi^4(W; {\delta^*}, {\eta^*})\}.
\end{align*}
By Markov inequality, $\mathcal{R}_{5} = O_p(N^{-1/2})$.

If $2 < q < 4$, then we apply the von Bahr-Esseen inequality with $p = q/2 \in (1, 2)$: 
\begin{align*}
\expect[\mathcal{R}_{5}^{q/2}] \le 2n^{-q/2 + 1}\expect[\psi^q(W; {\delta^*}, {\eta^*})] = 2n^{-q/2 + 1}\|\psi(W; {\delta^*}, {\eta^*})\|_q^q.
\end{align*}

Thus $\expect[\mathcal{R}_{5}^{q/2}] = O(2N^{-q/2 + 1})$, which implies that $\mathcal{R}_{5} = O_p(N^{-1 + 2/q})$ according to Markov inequality.

Therefore $$\mathcal{R}_{5} = O_p(N^{-[(1 - 2/q) \vee 1/2]}) = o_p(1).$$

\paragraph*{Bounding $\mathcal{R}_{4}$.} Simple algebra shows that for any $a, \delta a$, 
\[
(a + \delta a)^2 - a^2 = \delta a(2a + \delta a).
\]
Now take $a + \delta a = \psi(W; \hat{\delta}, \hat{\eta}_k)$ and $a = \psi(W; {\delta^*}, {\eta^*})$, then
\begin{align*}
    &|\expectnk[\psi^2(W; \hat{\delta}, \hat{\eta}_k)] - \expectnk[\psi^2(W; {\delta^*}, {\eta^*})]| \\
=&  \left|\expectnk \bigg(\psi(W; \hat{\delta}, \hat{\eta}_k) - \psi(W; {\delta^*}, {\eta^*})\bigg)\bigg(2\psi(W; {\delta^*}, {\eta^*}) + \psi(W; \hat{\delta}, \hat{\eta}_k) - \psi(W; {\delta^*}, {\eta^*})\bigg)\right| \\
\le & \bigg(\expectnk \big[\psi(W; \hat{\delta}, \hat{\eta}_k) - \psi(W; {\delta^*}, {\eta^*})\big]^2\bigg)^{1/2}\bigg(\expectnk \big[2\psi(W; {\delta^*}, {\eta^*}) + \psi(W; \hat{\delta}, \hat{\eta}_k) - \psi(W; {\delta^*}, {\eta^*})\big]^2\bigg)^{1/2} \\
\le& \mathcal{R}_{6}^{1/2} \times (\mathcal{R}_{6}^{1/2} + 2(\expectnk [\psi^2(W; \delta^*, \eta^*)])^{1/2}).
\end{align*}
where $\mathcal{R}_{6} =\expectnk \big[\psi(W; \hat{\delta}, \hat{\eta}_k) - \psi(W; {\delta^*}, {\eta^*})\big]^2$.

Since $\expect [\psi^2(W; \delta^*, \eta^*)] = O(1)$, Markov inequality implies that $\expectnk [\psi^2(W; \delta^*, \eta^*)] = O_p(1)$.

Moreover,
\begin{align*}
\mathcal{R}_{6} \le 2\expectnk (\hat{\delta} - \delta^*)^2 + 2\expectnk \big[\psi(W; {\delta^*}, \hat{\eta}_k) - \psi(W; {\delta^*}, {\eta^*})\big]^2.
\end{align*}
Since $\hat{\delta} - \delta^* = o_p(1)$ according to \cref{thm: DR}, thus we only need to prove $\expectnk \big[\psi(W; {\delta^*}, \hat{\eta}_k) - \psi(W; {\delta^*}, {\eta^*})\big]^2 = o_p(1)$ as well. 
We can further decompose this term: 
\begin{align*}
\expectnk \big[\psi(W; {\delta^*}, \hat{\eta}_k) - \psi(W; {\delta^*}, {\eta^*})\big]^2 
   &\le \bigg|\expectnk\bracks{\prns{\psi(W; {\delta^*}, \hat{\eta}_k) - \psi(W; {\delta^*}, {\eta^*})}^2}\\
   &\qquad\qquad -\expect\bracks{\prns{\psi(W; {\delta^*}, \hat{\eta}_k) - \psi(W; {\delta^*}, {\eta^*})}^2 \mid \hat\eta_k}\bigg| \\
   &+ \expect\bracks{\prns{\psi(W; {\delta^*}, \hat{\eta}_k) - \psi(W; {\delta^*}, {\eta^*})}^2 \mid \hat\eta_k}. 
\end{align*}
Note that 
\begin{align*}
\Eb{\expectnk\bracks{\prns{\psi(W; {\delta^*}, \hat{\eta}_k) - \psi(W; {\delta^*}, {\eta^*})}^2} - \expect\bracks{\prns{\psi(W; {\delta^*}, \hat{\eta}_k) - \psi(W; {\delta^*}, {\eta^*})}^2 \mid \hat\eta_k} \mid \hat\eta_k} = 0,
\end{align*}
so  by Markov inequality, we have 
\begin{align*}
\abs{\expectnk\bracks{\prns{\psi(W; {\delta^*}, \hat{\eta}_k) - \psi(W; {\delta^*}, {\eta^*})}^2} - \expect\bracks{\prns{\psi(W; {\delta^*}, \hat{\eta}_k) - \psi(W; {\delta^*}, {\eta^*})}^2 \mid \hat\eta_k}} = o_p(1).
\end{align*}
Moreover, it is easy to verify that $\expect\bracks{\prns{\psi(W; {\delta^*}, \hat{\eta}_k) - \psi(W; {\delta^*}, {\eta^*})}^2 \mid \hat\eta_k} = o_p(1)$ as $\|\hat\eta_k - \eta^*\| = o_p(1)$.

Putting all above together, we have 
\begin{align*}
\abs{\mathcal{R}_4} = o_p(1).
\end{align*}

\paragraph*{Conclusion.} 
Therefore, $\hat{V} = V^* + o_p(1)$, and by Slutsky's theorem, 
\[
\frac{\sqrt{N}(\hat{\delta}  - \delta^*)}{\sqrt{\hat{V}}} \overset{d}{\to} \mathcal{N}(0, 1), 
\]
so that $\pr(\delta^* \in \operatorname{CI}) \to 1 - \alpha$.
\end{proof}

\subsection{Proofs for \cref{sec: extension}}\label{sec: proof-extension}

\begin{proof}[Proof for \cref{thm: vanishing-if}]
Since we only consider labeled data drawn from the conditional distribution of $(X, T, S, Y)$ given $R = 1$, we consider the following model for the distribution of the observed data 
\begin{align*}
    \tilde{\mathcal{M}}_{np} 
        = \bigg\{&f_{X, T, S, Y \mid R = 1}(X, T, S, Y \mid R = 1) 
            = f_{X \mid R= 1}(X \mid R = 1)\big[e(1, X)^T(1 - e(1, X))^{1 - T}\big]\\
            &\qquad f_{S \mid X, T, R = 1}(S \mid X, T, R = 1) f_{Y \mid S, T, X, R = 1}(Y\mid  S, T, X, R = 1):  \forall e(1, X) \in [\epsilon, 1- \epsilon], \\
            &\qquad  f_{X \mid R = 1}, f_{S \mid T, X, R = 1} \text{ and } f_{Y \mid S, T, X, R = 1} \text{are arbitrary density functions}  \bigg\}.
\end{align*}
The corresponding tangent space is 
\begin{align*}
\tilde{\Lambda}_{np} 
    &= \{\score\prns{Y, X, T, S} \in \mathcal{L}_2(Y, X, T, S): \expect[\score\prns{Y, X, T, S} \mid R = 1] = 0\}. 
\end{align*}

When \Cref{assump: unconfound,assump: mar-1} hold and $\Prb{T  = 1 \mid R = 1, X, S} \in (0, 1)$ almost surely, we can easily verify that the conclusion of \cref{lemma: identification-1} is still valid. In particular, we have 
\begin{align*}
\xi_1^* = \Eb{Y(1)} = \Eb{\Eb{\Eb{Y \mid T = 1, R = 1, X, S} \mid X, T = 1}}.
\end{align*}

Since $f^*_X, f^*_{S \mid X, T}$ are assumed to be known, when we analyze the path-differentiability of $\xi^*$ under parametric submodels for $\tilde{\mathcal{M}}_{np}$, we can fix these two distributions and only vary the distribution of $Y \mid T = 1, R = 1, X, S$.  In the following part, we suppress the subscripts in the density functions $f^*$, and the meaning of the density functions should be self-evident from the arguments. 
We have 
\begin{align*}
&\frac{\partial}{\partial \gamma}\expect_{\gamma}[Y(1)] \vert_{\gamma = \gamma^*} \\
    =&  \int f^*(x) f^*(s \mid X = x, T = 1)\frac{\partial}{\partial \gamma}\expect_{\gamma}[Y \mid X = x, S = s, T = 1, R = 1] \vert_{\gamma = 0}\diff x \diff s \\
    =& \int f^*(x) f^*(s \mid X = x, T = 1)\expect[Y \times \score\prns{Y \mid X, S, T, R=1} \mid X = x, S = s, T = 1, R = 1]\diff x \diff s \\
    =& \int f^*(x)\Eb{\frac{f^*(S \mid X, T = 1)}{f^*(S \mid X, T = 1, R = 1)}\prns{Y - \tmut(T, X, S)} \times \score\prns{Y \mid X, S, T, R=1} \mid X = x, T = 1, R = 1}\diff x \\
    =&  \int f^*(x)\Eb{\frac{T}{\et(1, X)}\frac{f^*(S \mid X, T = 1)}{f^*(S \mid X, T = 1, R = 1)}\prns{Y - \tmut(T, X, S)} \times \score\prns{Y \mid X, S, T, R=1} \mid X = x, R = 1}\diff x \\
    =& \Eb{\frac{T}{\et(1, X)}\frac{f^*(S \mid X, T = 1)}{f^*(S \mid X, T = 1, R = 1)}\frac{f^*(X)}{f^*(X \mid R = 1)}\prns{Y - \tmut(1, X, S)} \times \score\prns{Y \mid X, S, T, R=1} \mid R = 1} \\
    =& \Eb{\frac{T}{\et(1, X)}\frac{f^*(S \mid X, T = 1)}{f^*(S \mid X, T = 1, R = 1)}\frac{f^*(X)}{f^*(X \mid R = 1)}\prns{Y - \tmut(1, X, S)} \times \score\prns{Y, X, S, T \mid R=1} \mid R = 1}. 
\end{align*}
In the equation above, we use $\et(1, X)$ to denote $\Prb{T = 1 \mid R = 1, X}$. 

By repeatedly applying Bayes' rule, we can show that 
\begin{align*}
&\frac{T}{\et(1, X)}\frac{f^*(S \mid X, T = 1)}{f^*(S \mid X, T = 1, R = 1)}\frac{f^*(X)}{f^*(X \mid R = 1)} \\
=& \frac{T}{\et(X)}\frac{\Prb{T=1}}{\Prb{T=1\mid R=1}}\frac{f^*(S, X \mid T = 1)}{f^*(S, X \mid T = 1, R = 1)}
\end{align*}

This means that the following is an influence function for $\xi_1^*$: 
\begin{align*}
  \frac{T}{\et(X)}\frac{\Prb{T=0}}{\Prb{T=0\mid R=1}}\frac{f^*(S, X \mid T = 1)}{f^*(S, X \mid T = 1, R = 1)}\prns{Y - \tmut(1, X, S)}. 
  \end{align*} 
Similarly, we can show that an influence function for $\xi_0^*$ is given by 
\begin{align*}
\frac{1 - T}{1 - \et(X)}\frac{\Prb{R=1}}{\Prb{R=1\mid T=0}}\frac{f^*(S, X \mid T = 0)}{f^*(S, X \mid  T = 0, R = 1)}(Y - \tmut(0, X, S)). 
\end{align*}
These together mean that \Cref{eq: vanishing-if} gives an influence function for the average treatment effect $\Eb{Y(1) - Y(0)}$. Obviously this influence function belongs to the tangent space $\tilde{\Lambda}_{np}$, so it is also the efficient influence function for the average treatment effect. 
\end{proof}

\begin{proposition}\label{lemma: overlap-t}
If \Cref{assump: overlap} holds, then $\Prb{T = 1 \mid R= 1, X, S} \in \prns{\epsilon^2, \frac{1-\epsilon}{\epsilon}}$. 
\end{proposition}
\begin{proof}[Proof for \Cref{lemma: overlap-t}]
By Bayes' rule, 
\begin{align*}
\Prb{T = 1 \mid R = 1, X, S} = \frac{\Prb{R = 1 \mid T = 1, X, S}\Prb{T = 1 \mid X, S}}{\Prb{R = 1 \mid X, S}} \in \prns{\epsilon^2, \frac{1-\epsilon}{\epsilon}}.
\end{align*}
\end{proof}

\begin{proof}[Proof for \Cref{prop: efficient-bound-rescale}]
We only need to prove that $\tilde V^* = \expect[\tilde{\psi}^2(W; \delta^*, \tilde{\eta}^*) \mid R = 1]$ is equal to the following quantity: 
\begin{align*}
& \Prb{R=1}\Eb{\prns{\frac{TR}{\et(X)\rt(1, X, S)}(Y - \tmut(1, X, S)) - \frac{(1 - T)R}{(1 - \et(X))\rt(0, X, S)}(Y - \tmut(0, X, S))}^2} \\
&=\Eb{\frac{T^2\mathbb{P}^2\prns{R=1}}{e^{*2}(X)r^{*2}(1, X, S)}(Y - \tmut(1, X, S))^2 + \frac{(1-T)^2\mathbb{P}^2\prns{R=1}}{(1-e^{*}(X))^2r^{*2}(0, X, S)}(Y - \tmut(0, X, S))^2 \mid R = 1}
\end{align*}
According to the Bayes' rule, 
\begin{align*}
\frac{\Prb{R=1}}{\rt(t, X, S)} = \frac{f^*(S, X \mid T = t)}{f^*(S, X \mid T = t, R = 1)}\frac{\Prb{T = t}}{\Prb{T = t \mid R = 1}} = \lambda^*(S, X, t)\frac{\Prb{T = t}}{\Prb{T = t \mid R = 1}}.
\end{align*}
Thus 
\begin{align*}
&\Eb{\frac{T^2\mathbb{P}^2\prns{R=1}}{e^{*2}(X)r^{*2}(1, X, S)}(Y - \tmut(1, X, S))^2 + \frac{(1-T)^2\mathbb{P}^2\prns{R=1}}{(1-e^{*}(X))^2r^{*2}(1, X, S)}(Y - \tmut(0, X, S))^2 \mid R = 1} \\
=&\expect\bigg[\frac{T^2 \lambda^{*2}(S, X, T)\mathbb{P}^2\prns{T = 1}}{e^{*2}(X)\mathbb{P}^2\prns{T = 1 \mid R = 1}}(Y - \tmut(1, X, S))^2 + \frac{(1-T)^2\lambda^{*2}(S, X, T)\mathbb{P}^2\prns{T=0}}{(1-e^{*}(X))^2\mathbb{P}^2\prns{T=0 \mid R = 1}}(Y - \tmut(0, X, S))^2 \mid R = 1\bigg] \\
=& \expect[\tilde{\psi}^2(W; \delta^*, \tilde{\eta}^*) \mid R = 1] = \tilde V^*.
\end{align*}
\end{proof}

\begin{proof}[Proof for \Cref{thm: normality2}]
We use $\expect^{(N)}$ and $\op{Var}^{(N)}$ to denote the expectation and variance operators with respect to the $\pr^{(N)}$ distribution described in \Cref{sec: est-vanish}. 
We only need to prove the following: 
\begin{align*}
\sqrt{\bar{N}_l}(\hat\delta - \delta^*) \overset{d}{\to} \mathcal{N}(0, \tilde V^*), ~~ \sqrt{\bar{N}_l}(\hat\delta^{\op{rev}} - \delta^*) \overset{d}{\to} \mathcal{N}(0, \tilde V^*). 
\end{align*}
Then the asserted conclusions follow from Slutsky's theorem and  that $N_l/\bar{N}_l = (N_l/N)/\pi_N = o_p(1)$. 

\textbf{Proving the first statement regarding $\hat\delta$}. We can directly use the error decomposition in the proof for \cref{thm: DR} with $\eta_0$ there being replaced by $\eta^*_N = (\et_N, \rt_N, \mut, \tmut)$. 
We can show that given $\|\hmuk - \mut\| = o_p(1)$,  $\|\htmuk - \tmut\| = o_p(1)$, $\|\hek - \et_N\| = o_p(1)$, and $\|\rt_N/\hrk - 1\| = o_p(1)$, the stochastic equicontinuity term $\mathcal{R}_{1, k}$ satisfies that 
\begin{align*}
&\op{Var}^{(N)}(\mathcal{R}_{1, k} \mid \hat\eta_k)\\ 
=& o_p(N^{-1}) + \frac{1}{N}\expect^{(N)}\left[{\prns{\frac{TR}{\hek \hrk} - \frac{TR}{\et_N \rt_N}}^2(Y - \htmuk)^2} \mid \hek, \hrk, \htmuk\right] + \frac{1}{N}\expect^{(N)}\left[\frac{TR}{e^{*2}_N r^{*2}_N}(\tmut - \htmuk)^2 \mid \htmuk \right] \\
+& \frac{1}{N}\expect^{(N)}\left[{\prns{\frac{(1-T)R}{(1-\hek) \hrk} - \frac{(1-T)R}{(1-\et_N) \rt_N}}^2(Y - \htmuk)^2} \mid \hek, \hrk, \htmuk\right] + \frac{1}{N}\expect^{(N)}\left[\frac{(1-T) R}{(1-e^{*}_N)^2 r^{*2}_N}(\tmut - \htmuk)^2 \mid \htmuk \right].
\end{align*}
Note that 
\begin{align*}
&\frac{1}{N}\expect^{(N)}\left[{\prns{\frac{TR}{\hek \hrk} - \frac{TR}{\et_N \rt_N}}^2(Y - \htmuk)^2} \mid \hek, \hrk, \htmuk\right]  \\
  \lesssim& \frac{1}{N}\expect^{(N)}\left[\frac{\prns{\et_N\rt_N/\hek\hrk - 1}^2}{\et_N\rt_N}\mid \hek, \hrk\right] \\
  \lesssim& \frac{1}{N}\expect^{(N)}\left[\frac{e^{*2}_N/\hek^2(\rt_N/\hrk - 1)^2}{\et_N\rt_N} + \frac{(\hek - \et_N)^2}{\hek^2\et_N\rt_N}\mid \hek, \hrk\right] \\
  \lesssim& \frac{1}{\pi_N N}\prns{\|\rt_N/\hrk - 1\|^2 + \|\hek - \et_N\|^2} = o_p((N\pi_N)^{-1}) = o_p(\bar{N}_l^{-1}), 
\end{align*}
and 
\begin{align*}
\frac{1}{N}\expect^{(N)}\left[\frac{TR}{e^{*2}_N r^{*2}_N}(\tmut - \htmuk)^2 \mid \htmuk \right] 
&\lesssim \frac{1}{N}\expect^{(N)}\left[\frac{(\tmut - \htmuk)^2}{\et_N\rt_N}\mid \htmuk \right] \\
&\lesssim \frac{1}{\pi_N N}\|\htmuk - \tmut\| = o_p((N\pi_N)^{-1}) = o_p(\bar{N}_l^{-1}).
\end{align*}
Similarly, we can show that other terms in the decomposition of $\op{Var}^{(N)}(\mathcal{R}_{1, k} \mid \hat\eta_k)$ are also $o_p(\bar{N}_l^{-1})$. Therefore, $\op{Var}^{(N)}(\mathcal{R}_{1, k} \mid \hat\eta_k) = o_p(\bar{N}_l^{-1})$. By Chebyshev inequality, we have $\abs{\mathcal{R}_{1, k}} = o_p(\bar{N}_l^{-1/2})$. 

Moreover, we can follow the proof of \Cref{lemma: multiplicative-bias} to show that 
\begin{align*}
\abs{\mathcal{R}_{2, k}}
\le & \abs{\expect^{(N)}\left[\prns{\frac{T}{\hek(X)} - \frac{T}{\et_N(X)}}\prns{\mut(1, X) - \hmuk(1, X)} \mid \hmuk, \hek \right]} \\
+& \abs{\expect^{(N)}\left[\frac{T}{\hek(X)}\prns{1-\frac{R}{\hrk(1, X, S)}}\prns{\htmuk(1, X, S) - \tmut(1, X, S)} \mid \hek, \htmuk\right]} \\
+& \abs{\expect^{(N)}\left[\prns{\frac{1-T}{1-\hek(X)} - \frac{1-T}{1-\et_N(X)}}\prns{\mut(0, X) - \hmuk(0, X)} \mid \hmuk, \hek \right]} \\
+& \abs{\expect^{(N)}\left[\frac{1-T}{1-\hek(X)}\prns{1-\frac{R}{\hrk(0, X, S)}}\prns{\htmuk(0, X, S) - \tmut(0, X, S)} \mid \hek, \htmuk\right]} \\
+& \abs{\expect^{(N)}\left[\frac{\rt_N(1, X, S)}{\hrk(1, X, S)}\prns{\frac{T}{\hek(X)} - \frac{T}{\et_N(X)}}\prns{\htmuk(1, X, S) - \tmut(1, X, S)} \mid \hek, \htmuk \right]} \\
+& \abs{\expect^{(N)}\left[\frac{\rt_N(0, X, S)}{\hrk(0, X, S)}\prns{\frac{1-T}{1-\hek(X)} - \frac{1-T}{1-\et_N(X)}}\prns{\htmuk(0, X, S) - \tmut(0, X, S)} \mid \hek, \htmuk \right]} \\
\lesssim & \|\hek - \et_N\|\|\hmuk - \tmu\| + \|\hek - \et_N\|\|\htmuk - \tmut\| + \left\|\frac{\rt_N}{\hrk} - 1\right\|\|\htmuk - \mut\| \\
=& O_p\prns{\rho_{N, e}\rho_{\bar{N}_l, \mu} + \rho_{N, e}\rho_{\bar{N}_l, \tmu} + \rho_{\bar{N}_l, r}\rho_{\bar{N}_l, \tilde\mu}} = o_p\prns{\bar{N}_l^{-1/2}}. 
\end{align*}
Given that $\abs{\mathcal{R}_{1, k}} = o_p(\bar{N}_l^{-1/2})$ and $\abs{\mathcal{R}_{2, k}} = o_p(\bar{N}_l^{-1/2})$ for $k = 1, \dots, K$, we have 
\begin{align*}
&\sqrt{\bar{N}_l}\prns{\hat\delta - \delta^*} = \sum_{i=1}^N Z_{i, N} + o_p(1), 
\end{align*}
where 
\begin{align*}
Z_{i, N} 
=& \frac{\sqrt{\pi_N}}{\sqrt{N}}\bigg\{\mut(1, X_i) - \mut(0, X_i)  - \delta^* \\
&+ \frac{T_i}{\et_N(X_i)}(\tmut(1, X_i, S_i) - \mut(1 , X_i)) - \frac{1 - T_i}{1 - \et_N(X_i)}(\tmut(0, X_i, S_i) - \mut(0 , X_i)) \\
&+ \frac{T_iR_i}{\et_N(X_i)\rt_N(1, X_i, S_i)}(Y_i - \tmut(1, X_i, S_i)) - \frac{(1 - T_i)R_i}{(1 - \et_N(X_i))\rt_N(0, X_i, S_i)}(Y_i - \tmut(0, X_i, S_i))\bigg\}.
\end{align*}
Let $S_N = \sum_{i=1}^N Z_{i, N}$. 
We can easily verify that $\expect^{(N)}\bracks{S_{N}} = \sum_{i=1}^N\expect^{(N)}\bracks{Z_{i, N}} = 0$, and 
\begin{align*}
\op{Var}^{(N)}\prns{S_N} 
  &= \pi_N \expect^{(N)}\bracks{\prns{\mut(1, X_i) - \mut(0, X_i)  - \delta^*}^2} \\
  &+ \pi_N \expect^{(N)}\bracks{\prns{\frac{T_i}{\et_N(X_i)}(\tmut(1, X_i, S_i) - \mut(1 , X_i)) - \frac{1 - T_i}{1 - \et_N(X_i)}(\tmut(0, X_i, S_i) - \mut(0 , X_i))}^2} \\
  &+ \pi_N \expect^{(N)}\bigg[\bigg(\frac{T_iR_i}{\et_N(X_i)\rt_N(1, X_i, S_i)}(Y_i - \tmut(1, X_i, S_i)) \\
  &\qquad\qquad\qquad\qquad\qquad - \frac{(1 - T_i)R_i}{(1 - \et_N(X_i))\rt_N(0, X_i, S_i)}(Y_i - \tmut(0, X_i, S_i))\bigg)^2\bigg] \\
  &\to 0 + 0 + \tilde V^*. 
\end{align*}
To prove the asymptotic normality, we will  use the Lindberg-Feller Central Limit Theorem. To this end, we now verify the Lyapunov condition. Note that for any $q > 2$, the above already shows that $\prns{\op{Var}^{(N)}\prns{S_N}}^q = O(1)$. Then we only need to verify that 
\begin{align*}
\expect^{(N)}\bracks{\sum_{i=1}^N \abs{Z_{i, N}}^q} 
  &= N \expect^{(N)}\bracks{\abs{Z_{i, N}}^q} = N\|Z_{i, N}\|_q^q \to 0.
\end{align*}
We note that 
\begin{align*}
& N\|Z_{i, N}\|_q^q 
  \le \bigg[\frac{\pi_N^{1/2}}{N^{1/2-1/q}}
  \left\|{\mut(1, X_i) - \mut(0, X_i)  - \delta^*}\right\|_q \\
  &\quad + \frac{\pi_N^{1/2}}{N^{1/2-1/q}}\left\|\frac{T_i}{\et_N(X_i)}(\tmut(1, X_i, S_i) - \mut(1 , X_i)) - \frac{1 - T_i}{1 - \et_N(X_i)}(\tmut(0, X_i, S_i) - \mut(0 , X_i))\right\|_q \\
  &\quad + \frac{\pi_N^{1/2}}{N^{1/2-1/q}}\left\|\frac{T_iR_i}{\et_N(X_i)\rt_N(1, X_i, S_i)}(Y_i - \tmut(1, X_i, S_i)) - \frac{(1 - T_i)R_i}{(1 - \et_N(X_i))\rt_N(0, X_i, S_i)}(Y_i - \tmut(0, X_i, S_i))\right\|_q \bigg]^q.
\end{align*}
Under the regularity conditions in \Cref{sec: more-extension} \Cref{assump: moment}, we have  
\begin{align*}
&\frac{\pi_N^{1/2}}{N^{1/2-1/q}}
  \left\|{\mut(1, X_i) - \mut(0, X_i)  - \delta^*}\right\|_q = O\prns{\frac{\pi_N^{1/2}}{N^{1/2-1/q}}} \to 0, \\
&\frac{\pi_N^{1/2}}{N^{1/2-1/q}} \left\|\frac{T_i}{\et_N(X_i)}(\tmut(1, X_i, S_i) - \mut(1 , X_i)) - \frac{1 - T_i}{1 - \et_N(X_i)}(\tmut(0, X_i, S_i) - \mut(0 , X_i))\right\|_q = O\prns{\frac{\pi_N^{1/2}}{N^{1/2-1/q}}} \to 0.  
\end{align*}
Moreover, according to the relationship between $\pi_N$ and $\rt_N$, we have 
\begin{align*}
 &\frac{\pi_N^{1/2}}{N^{1/2-1/q}}\left\|\frac{T_iR_i}{\et_N(X_i)\rt_N(1, X_i, S_i)}(Y_i - \tmut(1, X_i, S_i)) - \frac{(1 - T_i)R_i}{(1 - \et_N(X_i))\rt_N(0, X_i, S_i)}(Y_i - \tmut(0, X_i, S_i))\right\|_q \\
 =& \frac{\pi_N^{1/2}}{N^{1/2-1/q}}\bigg\|\frac{R_i}{\pi_N}\frac{T_i\lambda^*(S_i, X_i, T_i)\Prb{T_i=1}}{\et_N(X_i)\Prb{T_i = 1\mid R_i=1}}(Y_i - \tmut(1, X_i, S_i))  \\
 &\qquad\qquad\qquad\qquad\qquad\qquad  -\frac{R_i}{\pi_N} \frac{(1 - T_i)\lambda^*(S_i, X_i, T_i)\Prb{T_i=0}}{(1 - \et_N(X_i))\Prb{T_i = 0 \mid R_i = 1}}(Y_i - \tmut(0, X_i, S_i))\bigg\|_q \\
 =& O\prns{\frac{1}{(N\pi_N)^{1/2-1/q}}} \to 0.  
 \end{align*} 
 This means that the Lyapunov condition holds. Then by the Lindberg-Feller Central Limit Theorem, 
 \begin{align*}
  \frac{\sqrt{\bar{N}_l}\prns{\hat\delta - \delta^*}}{\sqrt{\op{Var}^{(N)}\prns{S_N}}} \overset{d}{\to} \mathcal{N}(0, 1). 
  \end{align*} 
  Since $\op{Var}^{(N)}\prns{S_N} \to \tilde V^*$, we further have 
   \begin{align*}
  \sqrt{\bar{N}_l}\prns{\hat\delta - \delta^*} \overset{d}{\to} \mathcal{N}(0, \tilde V^*). 
  \end{align*}

\textbf{Proving the second statement regarding $\hat\delta^{\op{rev}}$}. For any $\eta = (e, \lambda, \mu, \tmu)$ and $\pi, \nu_1, \nu_2$, we define 
\begin{align*}
\tilde \psi'(W; \pi, \nu_1, \nu_0, \eta)   &= \frac{T}{e(X)}\prns{\tmu(1, X, S) - \mu(1, X)} - \frac{1-T}{1 - e(X)}\prns{\tmu(0, X, S) - \mu(0, X)} + \mu(1, X) - \mu(0, X) \\
  &\qquad + \frac{R\lambda(S, X, T)}{\pi}\frac{T\nu_1}{e(X)}(Y - \tmu(1, X, S)) - \frac{R\lambda(S, X, T)}{\pi}\frac{(1-T)\nu_0}{1-e(X)}(Y - \tmu(1, X, S)). 
\end{align*}
Then 
\begin{align*}
\hat\delta^{\op{rev}} = \frac{1}{K}\sum_{k=1}^K \hat\expect_k\bracks{\tilde \psi'(W; \hat \pi_N, \hat \nu_1, \hat \nu_0, \hat{\tilde\eta}_k)}.
\end{align*}
We can further decompose the estimation error of $\hat\delta^{\op{rev}}$ as follows: 
\begin{align*}
\hat\delta^{\op{rev}} - \delta^* = \frac{1}{K}\sum_{k=1}^K \tilde{\mathcal{R}}_{1, k} + \tilde{\mathcal{R}}_{2, k} + \tilde{\mathcal{R}}_{3, k},
\end{align*}
where 
\begin{align*}
\tilde{\mathcal{R}}_{1, k} 
  &= \prns{\hat\expect_k\bracks{\tilde \psi'(W; \hat \pi_N, \hat \nu_1, \hat \nu_0, \hat{\tilde\eta}_k)} - \Eb{\tilde \psi'(W; \hat \pi_N, \hat \nu_1, \hat \nu_0, \hat{\tilde\eta}_k) \mid \hat{\tilde{\eta}}_k}} \\
  &\qquad - \prns{\hat\expect_k\bracks{\tilde \psi'(W; \pi_N, \nu^*_1,  \nu^*_0, {\tilde\eta}^*_N)} - \Eb{\tilde \psi'(W; \pi_N, \nu^*_1,  \nu^*_0, {\tilde\eta}^*_N)}},  \\
\tilde{\mathcal{R}}_{2, k} &= \Eb{\tilde \psi'(W; \hat \pi_N, \hat \nu_1, \hat \nu_0, \hat{\tilde\eta}_k) \mid \hat{\tilde{\eta}}_k} - \Eb{\tilde \psi'(W; \pi_N, \nu^*_1,  \nu^*_0, {\tilde\eta}^*_N)},  \\
\tilde{\mathcal{R}}_{3, k} &=\hat\expect_k\bracks{\tilde \psi'(W; \pi_N, \nu^*_1,  \nu^*_0, {\tilde\eta}^*_N)} - \delta^*,
\end{align*}
and $\tilde\eta^*_N = \prns{\et, \lambda^*_N, \mut, \tmut}$. 

Here $\tilde{\mathcal{R}}_{1, k}$ is again a stochastic equicontinuity term. It is again $o_p\prns{\bar{N}_l^{-1/2}}$ because of sample splitting and $\abs{\hat \pi_N/\pi_N - 1} = o_p(1)$, $\abs{\hat \nu_1 -  \nu^*_1} = o_p(1)$, $\abs{\hat \nu_0 -  \nu^*_0} = o_p(1)$, and all nuisance estimators in $\hat{\tilde{\eta}}_k$ are consistent.  

Moreover, we can easily verify that 
\begin{align*}
\abs{\tilde{\mathcal{R}}_{2, k}} 
  &\lesssim \|\hek - \et_N\|\|\hmuk - \mut\| + \prns{\|\hat\lambda_k - \lambda^*_N\| + \|\hek - \et_N\| + |\hat \nu_1 - \nu_1^*| + |\hat \nu_0 - \nu_0^*| + \abs{\frac{\pi_N}{\hat \pi_N} - 1}}\|\htmuk - \tmut\| \\
  &= o_p(\bar{N}_l^{-1/2}). 
\end{align*}

Therefore, 
\begin{align*}
\sqrt{\bar{N}_l}\prns{\hat\delta^{\op{rev}} - \delta^*} = \sqrt{\bar{N}_l}\prns{\hat\expect_k\bracks{\tilde \psi'(W; \pi_N, \nu^*_1,  \nu^*_0, {\tilde\eta}^*_N)} - \delta^*} + o_p(1). 
\end{align*}

Finally, we can similarly  apply the Lindberg-Feller Central Limit Theorem to show that 
\begin{align*}
\sqrt{\bar{N}_l}\prns{\hat\expect_k\bracks{\tilde \psi'(W; \pi_N, \nu^*_1,  \nu^*_0, {\tilde\eta}^*_N)} - \delta^*} \overset{d}{\to} \mathcal{N}\prns{0, \tilde V^*}. 
\end{align*}
\end{proof}

\subsection{Proofs for \Cref{sec: criteria}}\label{sec: proof-criteria}
\begin{proof}[Proof for \Cref{prop: sufficient-stat-surrogacy}]
In order to prove that $T \perp Y \mid X, S$, we need to verify that for any $x \in \mathcal{X}$, $s \in \mathcal{S}$, and $y \in \mathcal{Y}$,
\[
    \pr(Y \le y \mid T = 1, X = x, S = s) = \pr(Y \le y \mid T = 0, X = x, S = s), 
\]
or equivalently, 
\begin{align}\label{eq: eq-to-verify}
\pr(Y(1, s) \le y \mid T = 1, X = x, S(1) = s) = \pr(Y(0, s) \le y \mid T = 0, X = x, S(0) = s). 
\end{align}

We note that condition \ref{cond: S-Y-unconfound} in \Cref{prop: sufficient-stat-surrogacy} implies that 
\begin{align*}
\pr(Y(0, s) \le y \mid T = 0, X = x, S(0) = s)  = \pr(Y(0, s) \le y \mid T = 0, X = x), \\
 \pr(Y(1, s) \le y \mid T = 1, X = x, S(1) = s)  = \pr(Y(1, s) \le y \mid T = 1, X = x).
\end{align*}
Then the condition \ref{cond: no-direct} in \Cref{prop: sufficient-stat-surrogacy} implies that 
\[
    \pr(Y(0, s) \le y \mid T = 0, X = x) = \pr(Y(1, s) \le y \mid T = 0, X = x). 
\]
Moreover, the condition \ref{cond: T-Y-unconfound} in  \Cref{prop: sufficient-stat-surrogacy} implies that 
\[
    \pr(Y(1, s) \le y \mid T = 0, X = x) =  \pr(Y(1, s) \le y \mid T = 1, X = x).
\]
These equations together ensure \cref{eq: eq-to-verify}. 
\end{proof}

\subsection{Proofs for \Cref{sec: missing-pattern}}
\begin{proof}[Proof for \Cref{thm: efficiency-intermediate}]
    First, we consider the following model:
    \begin{align*}
        \mathcal{M}_{np, I-II} 
        &= \bigg\{f_{X, T, R, Y, S, R_S}(X, T, R, Y, S, R_S) = f_X(X)\big[e(X)^T(1 - e(X))^{1 - T}\big][r(T, X)^R(1 - r(T, X))^{1- R}] \\
        &\times f^R_{Y \mid R = 1, T, X}(Y, R, T, X)\times [r_S(T, X, R, Y)^{R_S}(1- r_S(T, X, R, Y))^{1 - R_S}] f^{R_S}_{S \mid R_S = 1, T, X, Y}(S, R_S, T, X, Y): \\
        &\qquad f_X,  f_{Y \mid R = 1, T, X}, f_{S \mid R_S = 1, T, X, Y} \text{ are arbitrary density functions, and } e(X), r(T, X), r_S(T, X) \\
        &\qquad \text{ are arbitrary functions obeying } e(X) \in [\epsilon, 1- \epsilon], r(T, X) \in [\epsilon, 1], r_S(T, X) \in [\epsilon, 1]  \bigg\}. 
    \end{align*}
    The tangent space of this model is equal to 
    \begin{align*}
        \Lambda_{I-II} = \Lambda_I + \oplus \bar{\Lambda}(R_S \mid X, T, R, Y) + \oplus \bar{\Lambda}(S \mid R_S, X, T, R, Y), 
    \end{align*} 
    where $\Lambda_I$ is the tangent space for the model $\mathcal{M}_{np, I}$ in the proof for \cref{thm: efficiency-comparison}, and $\bar{\Lambda}_{R_S \mid X, T, R, Y}$ and $\bar{\Lambda}_{S \mid R_S, X, T, R, Y}$ are mean square closures of the following sets: 
    \begin{align*}
        {\Lambda}_{R_S \mid X, T, R, Y}
        &= \{\score_{R_S \mid X, T, R, Y}\prns{R_S,  X, T, R, Y} \in L_2(X, T, R, Y): \\
        &\qquad\qquad\qquad\qquad  \Eb{\score_{R_S \mid X, T, R, Y}\prns{R_S,  X, T, R, Y} \mid  X, T, R, Y} = 0\}, \\
        \Lambda_{S \mid R_S, Y, R, X, T}
        &= \{R_S \times \score_{S \mid R_S = 1, Y, R, X, T}(S, Y, R, X, T) \in L_2(S, Y, R, X, T): \\
        &\qquad\qquad\qquad\qquad \expect[\score_{S \mid R_S = 1, Y, R, X, T}(S, Y, R, X, T) \mid R_S = 1, Y, R, X, T] = 0\}.
    \end{align*}
    Then we can easily show that the efficient influence function of $\delta^*$ corresponding to this tangent space is identical to the efficient influence function of $\delta^*$ corresponding to the tangent space $\Lambda_i$, by following the proof for \cref{thm: efficiency-comparison}. Indeed,
    we first note that the efficient influence function $\psi_{I, 1}$ of $\Eb{Y(1)}$ is based on path differentiability analysis of the  following identification formula under parametric submodels:
    \begin{align*}
        \E\bracks{Y(1)} = \E\bracks{\E\bracks{Y \mid X, T = 1, R = 1}}.
    \end{align*} 
    This identification formula is also valid under the setting I-II and we already know that 
    \begin{align*}
        \frac{\partial}{\partial\gamma} \E_\gamma\bracks{Y(1)}\vert_{\gamma=0} = \Eb{\psi_{I, 1}(Y, X, T, R)\score(Y, X, T, R)},
    \end{align*} 
    for $\score(Y, X, T, R) \in \Lambda_I$. Moreover, we can easily verify that  for any $\score(S, R_S \mid Y, X, T, R) \in {\Lambda}_{R_S \mid X, T, R, Y} \oplus  \Lambda_{S \mid R_S, Y, R, X, T}$, we have $\Eb{\score(S, R_S, Y, X, T, R) \mid  Y, X, T, R = 0}$. 
    This further implies that 
    \begin{align*}
        \frac{\partial}{\partial\gamma} \E_\gamma\bracks{Y(1)}\vert_{\gamma=0} = \Eb{\psi_{I, 1}(Y, X, T, R)\prns{\score(Y, X, T, R) + \score(S, R_S \mid Y, X, T, R)}}.
    \end{align*} 
    Thus $\psi_{I, 1}$ is also an influence function of $\Eb{Y(1)}$ under model $\mathcal{M}_{np, I-II}$. Moreover, we have $\psi_{I, 1} \in \Lambda_I \subseteq \Lambda_{I-II}$, so $\psi_{I, 1}$ is also the efficient influence function of $\Eb{Y(1)}$ under model $\mathcal{M}_{np, I-II}$. Similarly, we can prove that   the efficient influence function of $\delta^*$ under the model $\mathcal{M}_{np, I-II}$ is also the efficient influence function $\psi_I$ under the model $\mathcal{M}_{np, I}$. This gives our desired conclusion. 
    
    Next, we note that under the asserted assumptions, we have 
    $R \perp S \mid T, X, R_S$, and $R_S \perp Y \mid T, X, R, S$. We consider the following model: 
    \begin{align*}
        \mathcal{M}_{np, II-III} 
        &= \bigg\{f_{X, T, S, R, Y}(X, T, S, R, Y) = f_X(X)\big[e(X)^T(1 - e(X))^{1 - T}\big][r_S(X, T)^{R_S}(1 - r_S(X, T))^{1-R_S}]\\
        &\qquad  f^{R_S}_{S \mid X, T, R_S = 1}(S, X, T)[r(X, T, R_S)^R(1 - r(X, T, R_S))^{1- R}]f^R_{Y \mid S, X, T, R = 1}(Y, S, X, T):  \\
        &\qquad\qquad f_X, f_{S \mid X, T, R_S = 1}, f_{Y \mid S, X, T, R = 1}~\text{are arbitrary density functions of the distributions}  \\
        &\qquad\qquad\text{indicated by their respective subscripts, and } e(X), r_S(X, T), r(X, T) \text{ are arbitrary}\\&\qquad\qquad\text{functions obeying }e(X) \in [\epsilon, 1- \epsilon], r_S(X, T), r(X, T, R_S) \in [\epsilon, 1]\bigg\}.
    \end{align*}
    The corresponding tangent space is given by 
    \begin{align*}
        \Lambda_{II-III} = \overline\Lambda_X \oplus \overline\Lambda_{T \mid X} \oplus \overline\Lambda_{R_S \mid X, T} \oplus  \overline\Lambda_{S \mid R_S, X, T} \oplus \overline\Lambda_{R \mid X, T, R_S}  \oplus  \overline\Lambda_{Y \mid  S, X, T, R},
    \end{align*}
    where $\overline\Lambda_X, \overline\Lambda_{T \mid X}, \overline{\Lambda}_{R \mid X, T}$ are given in the proof for \Cref{thm: efficiency-comparison}, and $\overline\Lambda_{R_S \mid X, T}, \overline\Lambda_{S \mid R_S, X, T}$ and $\overline{\Lambda}_{Y \mid  S, X, T,  R}$ are the mean-square closures of the following sets:
    \begin{align*}
        \Lambda_{R_S \mid X, T}  &= \{\score_{R_S\mid X, T}(R_S, X, T) \in L_2(R_S, X, T): \expect[\score_{R_S \mid X, T}(R_S, X, T) \mid X, T] = 0\} \\
        \Lambda_{S \mid R_S, X, T}  &= \{R_S \times \score_{S \mid R_S = 1, X, T}(S, X, T) \in L_2(S, R_S, X, T): \expect[\score_{S \mid R_S = 1, X, T}(S, X, T) \mid R_S = 1, X, T] = 0\} \\
        \Lambda_{R \mid X, T, R_S}  &= \{\score_{R \mid X, T, R_S}(R, X, T, R_S) \in L_2(R, X, T, R): \expect[\score_{R \mid X, T, R_S}(R, X, T, R_S) \mid X, T, R_S] = 0\} \\
        \Lambda_{Y \mid S, X, T, R}  &= \{R \times \score_{Y \mid S, X, T, R = 1}(Y, S, X, T) \in L_2(Y, S, X, T, R): \\
        &\qquad\qquad\qquad\qquad \expect[\score_{Y \mid S, X, T, R = 1}(Y, S, X, T) \mid S, X, T, R = 1] = 0\}.
    \end{align*} 
    
    Again, we focus on the counterfactual mean $\xi_1^* = \Eb{Y(1)}$. We note that under the asserted assumptions, we have 
    \begin{align*}
        &\Eb{\Eb{\Eb{Y \mid X, T = 1, S, R = 1} \mid R_S = 1, X, T = 1}} \\
        =& \Eb{\Eb{\Eb{Y(1) \mid X, T = 1, S(1), R = 1} \mid R_S = 1, X, T = 1}} \\
        =&  \Eb{\Eb{\Eb{Y(1) \mid X, T = 1, S(1)} \mid X, T = 1}} \\
        =& \Eb{\Eb{\Eb{Y(1) \mid X, S(1)} \mid X}} \\
        =& \Eb{Y(1)},
    \end{align*}
    where the second equality holds because $R \perp (Y(t), S(t)) \mid X, T$ according to \cref{assump: mar-1,assump: MAR2} and $R_S \perp S(t) \mid X, T$ according to \cref{assump: mar3}   
    and the third equality holds because $T \perp (Y(1), S(1)) \mid X$.
    Then, to derive an influence function of $\Eb{Y(1)}$, we need consider the following path-differentiability analysis under a regular parametric submodel indexed by a parameter $\gamma$:
        \begin{align*}
        &\frac{\partial}{\partial\gamma}\E_{\gamma}\bracks{\E_{\gamma}\bracks{\E_{\gamma}\bracks{Y \mid X, T = 1, S, R = 1} \mid R_S = 1, X, T = 1}}\vert_{\gamma = 0} \\
        =&\frac{\partial}{\partial\gamma}\E_{\gamma}\bracks{\E\bracks{\E\bracks{Y \mid X, T = 1, S, R = 1} \mid R_S = 1, X, T = 1}}\vert_{\gamma = 0} \\
        +& \frac{\partial}{\partial\gamma}\E\bracks{\E_{\gamma}\bracks{\E\bracks{Y \mid X, T = 1, S, R = 1} \mid R_S = 1, X, T = 1}}\vert_{\gamma = 0} \\
        +& \frac{\partial}{\partial\gamma}\E\bracks{\E\bracks{\E_{\gamma}\bracks{Y \mid X, T = 1, S, R = 1} \mid R_S = 1, X, T = 1}}\vert_{\gamma = 0}.
    \end{align*}
    We can evaluate each of the derivatives respectively. We have 
    \begin{align*}
        \frac{\partial}{\partial\gamma}\E_{\gamma}\bracks{\E\bracks{\E\bracks{Y \mid X, T = 1, S, R = 1} \mid R_S = 1, X, T = 1}}\vert_{\gamma = 0} = \Eb{\prns{\mut(1, X) - \xi_1^*}\score(X, T, R_S, R, S, Y)},
    \end{align*}
    and 
    \begin{align*}
        &\frac{\partial}{\partial\gamma}\E\bracks{\E_{\gamma}\bracks{\E\bracks{Y \mid X, T = 1, S, R = 1} \mid R_S = 1, X, T = 1}}\vert_{\gamma = 0} \\
        &=  \Eb{\Eb{\tmut(1, X, S) \score(S \mid R_S, X, T) \mid R_S = 1, X, T = 1}} \\
        &= \Eb{\Eb{(\tmut(1, X, S) - \mut(1, X)) \score(S \mid R_S, X, T) \mid R_S = 1, X, T = 1}} \\
        &=  \Eb{\Eb{\frac{TR_S}{\et(X)\rt_S(1, X)}(\tmut(1, X, S) - \mut(1, X)) \score(X, T, R_S, S) \mid X}} \\
        &= \Eb{\frac{TR_S}{\et(X)\rt_S(1, X)}(\tmut(1, X, S) - \mut(1, X)) \score((X, T, R_S, S, R, Y)},
    \end{align*}
    and 
        \begin{align*}
        &\frac{\partial}{\partial\gamma}\E\bracks{\E\bracks{\E_{\gamma}\bracks{Y \mid X, T = 1, S, R = 1} \mid R_S = 1, X, T = 1}}\vert_{\gamma = 0} \\
        =& \E\bracks{\E\bracks{\E\bracks{\prns{Y - \tmut(T, X, S)}\score(Y \mid X, T, S, R) \mid X, T = 1, S, R = 1} \mid R_S = 1, X, T = 1}} \\
        =& \Eb{\Eb{\Eb{\frac{R}{\rt(T, X)}\prns{Y - \tmut(T, X, S)}\score(Y \mid X, T, S, R) \mid X, T = 1, S} \mid R_S = 1, X, T= 1}} \\
        =& \Eb{\Eb{\Eb{\frac{R}{\rt(T, X)}\prns{Y - \tmut(T, X, S)}\score(Y \mid X, T, S, R) \mid X, T = 1, S} \mid X, T= 1}} \\
        =& \Eb{\Eb{\frac{R}{\rt(T, X)}\prns{Y - \tmut(T, X, S)}\score(Y \mid X, T, S, R) \mid X, T= 1}} \\
        =& \Eb{\frac{RT}{\rt(T, X)\et(X)}\prns{Y - \tmut(T, X, S)}\score(X, T, S, R_S, R, Y)},
    \end{align*}
    where the second equality holds because $R \perp S \mid X, T$ following \cref{assump: mar-1,assump: MAR2}, the third equality holds because $R_S \perp S \mid X, T$ following \cref{assump: mar3}.
    
    Thus we have that the following function $\psi_{1, II-III}$ is an influence function of $\xi_1^*$: 
    \begin{align*}
        \psi_{1, II-III}(X, T, R, R_S, S, Y) 
        &= \mut(1, X) - \xi_1^* + \frac{TR_S}{\et(X)\rt_S(1, X)}(\tmut(1, X, S) - \mut(1, X)) \\
        +& \frac{TR}{\et(X)\rt(T, X)}\prns{Y - \tmut(T, X, S)}.
    \end{align*}
    It is easy to verify that 
    \begin{align*}
        &\mut(1, X) - \xi_1^* \in \Lambda_{X}, \\
        &\frac{TR_S}{\et(X)\rt_S(1, X)}(\tmut(1, X, S) - \mut(1, X)) \in \Lambda_{S \mid R_S, X, T}, \\
        &\frac{TR}{\et(X)\rt(T, X)}\prns{Y - \tmut(T, X, S)} \in \Lambda_{Y \mid S, X, T, R}.
    \end{align*}
    This means that $\psi_{1, II-III}(X, T, R, R_S, S, Y) \in \Lambda_{II-III}$, so it is the efficient influence function. We can similarly derive the efficient influence function of $\xi_0^* = \Eb{Y(0)}$ and verify that the efficient influence function of $\delta^*$ is given by $\psi_{II-III}$ states in this theorem. 
\end{proof}

\begin{proof}[Proof for \Cref{thm: efficiency-comparison-intermediate}]
    \textbf{We first derive $V^*_{II} - V^*_{II-III}$}. We can decompose $\psi_{II-III}$ into six different terms: 
    \begin{align*}
    \psi_{\text{II-III}}(W; \delta^*, \eta^*) = \Psi_1 + \Psi_2 + \Psi_3 - (\Psi_4 + \Psi_5 + \Psi_6),
    \end{align*}
    where 
    \begin{align*}
    &\Psi_1 = \mut(1, X)  - \xi_1^*, \Psi_2 = \frac{TR}{\et(X)\rt(1, X)}(Y - \tmut(1, X, S)),  \Psi_3 =  \frac{TR_S}{\et(X)\rt_S(1, X)}\prns{\tmut(1, X, S) - \mut(1, X)} \\
    &\Psi_4 = \mut(0, X)  - \xi_0^*, \Psi_5 = \frac{(1-T)R}{(1-\et(X))\rt(0, X)}(Y - \tmut(0, X, S)), \\
    &\Psi_6 = \frac{(1-T)R_S}{(1-\et(X))\rt_S(0, X)}\prns{\tmut(0, X, S) - \mut(0, X)}. 
    \end{align*}
    Then 
    \begin{align*}
    \Eb{\psi^2_{\text{II-III}}(W; \delta^*, \eta^*)} = \op{Var}(\psi_{\text{II-III}}(W; \delta^*, \eta^*)) = \sum_{i=1}^6 \op{Var}(\Psi_i) + \sum_{i\ne j}\op{Cov}(\Psi_i, \Psi_j). 
    \end{align*}
    It is easy to verify that $\op{Cov}(\Psi_i, \Psi_j) = 0$ for all $i, j$ except $i = 1, j = 4$. So we have 
    \begin{align*}
    V^*_{II-III} = \Eb{\psi^2_{\text{II-III}}(W; \delta^*, \eta^*)} = \op{Var}(\Psi_1 - \Psi_4) + \op{Var}(\Psi_2) + \op{Var}(\Psi_3) + \op{Var}(\Psi_5) + \op{Var}(\Psi_6). 
    \end{align*}
    Similarly, we have that 
    \begin{align*}
    V^*_{II} = \op{Var}(\Psi_1 - \Psi_4) + \op{Var}\prns{\frac{TR}{\et(X)\rt(1, X)}(Y - \mut(1, X))} + \op{Var}\prns{\frac{(1 - T)R}{(1 - \et(X))\rt(0, X)}(Y - \mut(0, X))}.
    \end{align*}
    We note that 
    $$\frac{TR}{\et(X)\rt(1, X)}(Y - \mut(1, X)) = \Psi_2 + \Psi_3 + \prns{\frac{R}{\rt(1, X)} - \frac{R_S}{\rt_S(1, X)}}\frac{T}{\et(X)}(\tmut(1, X, S) - \mut(1, X)).$$
    Moreover, we can easily verify that  
    \begin{align*}
    \op{Cov}(\Psi_2, \Psi_3) 
     &= \op{Cov}\left(\Psi_2, \prns{\frac{R}{\rt(1, X)} - \frac{R_S}{\rt_S(1, X)}}\frac{T}{\et(X)}(\tmut(1, X, S) - \mut(1, X))\right)= 0 
    \end{align*}
    It follows that 
    \begin{align*}
    &\op{Var}\prns{\frac{TR}{\et(X)\rt(1, X)}(Y - \mut(1, X))} \\
    =& \op{Var}\prns{\Psi_2} + \op{Var}\prns{\Psi_3} + 2\op{Cov}\left(\Psi_3, \prns{\frac{R}{\rt(1, X)} - \frac{R_S}{\rt_S(1, X)}}\frac{T}{\et(X)}(\tmut(1, X, S) - \mut(1, X))\right) \\
    +& \Eb{\prns{\frac{R}{\rt(1, X)} - \frac{R_S}{\rt_S(1, X)}}^2\frac{T}{(\et(X))^2}(\tmut(1, X, S) - \mut(1, X))^2} \\
    =& \op{Var}\prns{\Psi_2} + \op{Var}\prns{\Psi_3} + \Eb{\frac{T}{(\et(X))^2}\prns{\frac{R}{\rt(1, X)} - \frac{R_S}{\rt_S(1, X)}}\prns{\tmut(1, X, S) - \mut(1, X)}^2} \\
    =& \op{Var}\prns{\Psi_2} + \op{Var}\prns{\Psi_3} + \Eb{\frac{\rt_S(1, X) - \rt(1, X)}{\et(X)\rt(1, X)\rt_S(1, X)}\prns{\tmut(1, X, S) - \mut(1, X)}^2}
    \end{align*}
    Similarly, we have 
    \begin{align*}
    &\op{Var}\prns{\frac{(1 - T)R}{(1 - \et(X))\rt(0, X)}(Y - \mut(0, X))} \\
    =& \op{Var}\prns{\Psi_5} + \op{Var}\prns{\Psi_6} + \Eb{\frac{1 - T}{(1 - \et(X))^2}\prns{\frac{R}{\rt(0, X)} - \frac{R_S}{\rt_S(0, X)}}\prns{\tmut(0, X, S) - \mut(0, X)}^2} \\
    =& \op{Var}\prns{\Psi_5} + \op{Var}\prns{\Psi_6} + \Eb{\frac{\rt_S(0, X) - \rt(0, X)}{(1 - \et(X))\rt(0, X)\rt_S(0, X)}\prns{\tmut(0, X, S) - \mut(0, X)}^2}. 
    \end{align*}
    Therefore, we have 
    \begin{align*}
     &V_{II}^* - V^*_{II-III} \\
    =& \Eb{\frac{\rt_S(1, X) - \rt(1, X)}{\et(X)\rt(1, X)\rt_S(1, X)}\op{Var}\bracks{\tmut(1, X, S(1)) \mid X} + \frac{\rt_S(0, X) - \rt(0, X)}{(1 - \et(X))\rt(0, X)\rt_S(0, X)}\op{Var}\bracks{\tmut(0, X, S(0)) \mid X}}.
    \end{align*}

    \textbf{Now we derive $V^*_{II-III} - V^*_{III}$}. We can similarly show that 
    \begin{align*}
    V_{III}^* 
        &=  \op{Var}(\Psi_1 - \Psi_4) + \op{Var}(\Psi_2) + \op{Var}(\Psi_5)  \\
        &+ \op{Var}\left(\Psi_3 + \frac{T}{\et(X)}\prns{1 - \frac{R_S}{\rt_S(1, X)}}\prns{\tmut(1, X, S) - \mut(1, X)}\right) \\
        &+ \op{Var}\left(\Psi_6 + \frac{1 - T}{1 - \et(X)}\prns{1 - \frac{R_S}{\rt_S(0, X)}}\prns{\tmut(0, X, S) - \mut(0, X)}\right). 
    \end{align*}
    Note that 
    \begin{align*}
    &\op{Var}\left(\Psi_3 + \frac{T}{\et(X)}\prns{1 - \frac{R_S}{\rt_S(1, X)}}\prns{\tmut(1, X, S) - \mut(1, X)}\right) \\
    =& \op{Var}\left(\Psi_3\right) + \Eb{\frac{T}{(\et(X))^2}\prns{1 - \frac{R_S}{(\rt_S(1, X))^2}}\prns{\tmut(1, X, S) - \mut(1, X)}^2} \\
    =& \op{Var}\left(\Psi_3\right) - \Eb{\frac{1 - \rt_S(1, X)}{\et(X)\rt_S(1, X)}\op{Var}\bracks{\tmut(1, X, S(1)) \mid X}},
    \end{align*}
    and similarly, 
    \begin{align*}
     &\op{Var}\left(\Psi_6 + \frac{1 - T}{1 - \et(X)}\prns{1 - \frac{R_S}{\rt_S(0, X)}}\prns{\tmut(0, X, S) - \mut(0, X)}\right)  \\
     =& \op{Var}\left(\Psi_6\right) - \Eb{\frac{1 - \rt_S(0, X)}{(1 - \et(X))\rt_S(0, X)}\op{Var}\bracks{\tmut(0, X, S(0)) \mid X}}. 
    \end{align*}
    Therefore, 
    \begin{align*}
    V_{II-III}^* - V_{III}^* 
        &= \Eb{\frac{1 - \rt_S(1, X)}{\et(X)\rt_S(1, X)}\op{Var}\bracks{\tmut(1, X, S(1)) \mid X}} \\
        &+ \Eb{\frac{1 - \rt_S(0, X)}{(1 - \et(X))\rt_S(0, X)}\op{Var}\bracks{\tmut(0, X, S(0)) \mid X}}.
    \end{align*}
\end{proof}

\subsection{Proofs for \Cref{sec: more-extension}}
\begin{proof}[Proof for \Cref{thm: efficiency-comp-extension}]
By following the proof of \Cref{thm: efficiency-comparison}, we can show that the efficient influence functions for settings I and II are identical. We thus only need to consider setting I. Specifically, 
consider the following model: 
\begin{align*}
    \tilde{\mathcal{M}}_{np, I} 
        = \bigg\{&f_{X, T, Y \mid R = 1}(X, T, Y \mid R = 1) 
            = f_{X \mid R= 1}(X \mid R = 1)\big[e(1, X)^T(1 - e(1, X))^{1 - T}\big]\\
            &\qquad f_{Y \mid  T, X, R = 1}(Y\mid T, X, R = 1):  \forall e(1, X) \in [\epsilon, 1- \epsilon], \\
            &\qquad  f_{X \mid R = 1} \text{ and } f_{Y \mid S, T, X, R = 1} \text{are arbitrary density functions}  \bigg\}.
\end{align*}
The corresponding tangent space is 
\begin{align*}
\tilde{\Lambda}_{np, I} 
    &= \{\score\prns{Y, X, T} \in \mathcal{L}_2(Y, X, T): \expect[\score\prns{Y, X, T} \mid R = 1] = 0\}. 
\end{align*}

Note that under assumptions in  \Cref{thm: efficiency-comp-extension},
\begin{align*}
\xi_1^* 
    &= \expect[Y(1)] =  \expect\left[\expect[Y(1) \mid X]\right] =  \expect\left[\expect[Y \mid X, T = 1, R = 1]\right] \\
    &=\iint yf^*_X(x)f_{Y \mid X, T = 1, R = 1}(y \mid x, T = 1, R = 1)dxdy,
\end{align*}
where the unconditional density function $f^*_X(x)$ is known. 

Again we consider parametric submodels indexed by $\gamma$ in path-differentiability analysis for $\xi^*_1$. In the following analysis, we suppress the subscripts in the density functions to ease the notations. 
\begin{align*}
&\frac{\partial}{\partial \gamma}\expect_{\gamma}[Y(1)] \vert_{\gamma = \gamma^*} \\
    &=  \int f^*(x) \frac{\partial}{\partial \gamma}\expect_{\gamma}[Y \mid X = x, T = 1, R = 1] \vert_{\gamma = 0}dxds \\
    &= \int f^*(x)\expect[Y \times \score\prns{Y \mid X, T} \mid X = x, T = 1, R = 1]dx  \\
    &= \int f^*(x)\Eb{\frac{T}{\et(1, X)}(Y - \mut(T, X))\score\prns{Y \mid X, T} \mid X = x, R = 1}dx \\
    &= \Eb{\frac{f^*(X)}{f^*(X \mid R = 1)}\frac{T}{\et(1, X)}(Y - \mut(T, X))\score\prns{Y \mid X, T}  \mid R = 1} \\
        &= \Eb{\frac{f^*(X)}{f^*(X \mid R = 1)}\frac{T}{\et(1, X)}(Y - \mut(T, X))\score\prns{Y, X, T}  \mid R = 1}. 
\end{align*}
Moreover, we can apply Bayes' rule to show that
\begin{align*}
 \frac{f^*(X)}{f^*(X \mid R = 1)}\frac{T}{\et(1, X)} = \frac{T\lambda^*(X, 1)}{\et(1, X)}\frac{\Prb{T=1}}{\Prb{T=1 \mid R = 1}}.
 \end{align*}
 This means that  $\frac{T\lambda^*(X, 1)}{\et(1, X)}\frac{\Prb{T=1}}{\Prb{T=1 \mid R = 1}}\prns{Y - \mut(T, X)}$ is an influence function for $\xi_1^*$. It is easy to verify that this influence function belongs to the tangent space, so it is also the efficient influence function for $\xi_1^*$. Similarly, we can show that the efficient influence function for $\xi_0^*$ is $\frac{(1-T)\lambda^*(X, 0)}{1-\et(1, X)}\frac{\Prb{T=0}}{\Prb{T=0 \mid R = 1}}\prns{Y - \mut(T, X)}$. This establishes the efficient influence function in \Cref{thm: efficiency-comp-extension}: 
 \begin{align*}
    \tilde{\psi}_I(W; \delta^*, \tilde{\eta}^*) = \tilde{\psi}_{II}(W; \delta^*, \tilde{\eta}^*)  = \frac{T\lambda^*(X, 1)}{\et(X)}(Y - \mut(1, X)) - \frac{(1 - T)\lambda^*(X, 0)}{1 - \et(X)}(Y - \mut(0, X)). 
\end{align*}

 Moreover, under the additional \Cref{assump: MAR2}, we can easily show that $\lambda^*(S, X, t) = \lambda^*(X, t)$, so the efficient influence function $\tilde\psi(W; \delta^*; \tilde\eta^*)$ in \Cref{thm: vanishing-if} reduces to 
 \begin{align*}
 \tilde\psi(W; \delta^*; \tilde\eta^*) 
 &= \frac{T\lambda^*(X, 1)}{\et(1, X)}\frac{\Prb{T=1}}{\Prb{T=1 \mid R = 1}}\prns{Y - \tmut(T, X, S)} \\
  &\quad -  \frac{(1-T)\lambda^*(X, 0)}{\et(1, X)}\frac{\Prb{T=0}}{\Prb{T=0 \mid R = 1}}\prns{Y - \tmut(T, X, S)}.
 \end{align*}

We note that 
\begin{align*}
\tilde{\psi}_I(W; \delta^*, \tilde{\eta}^*) = \tilde{\psi}_{II}(W; \delta^*, \tilde{\eta}^*) = \tilde\psi(W; \delta^*; \tilde\eta^*)  + \omega(W; \delta^*, \tilde\eta^*), 
\end{align*}
where 
\begin{align*}
\omega(W; \delta^*, \tilde\eta^*) 
  &= \frac{T\lambda^*(X, 1)}{\et(1, X)}\frac{\Prb{T=1}}{\Prb{T=1 \mid R = 1}}\prns{\tmut(T, X, S) - \mut(T, X)} \\
  & -  \frac{(1-T)\lambda^*(X, 0)}{\et(1, X)}\frac{\Prb{T=0}}{\Prb{T=0 \mid R = 1}}\prns{\tmut(T, X, S)  - \mut(T, X)}. 
\end{align*}
It is easy to verify that $\omega(W; \delta^*, \tilde\eta^*)$ is uncorrelated with $\tilde\psi(W; \delta^*; \tilde\eta^*)$ given $R = 1$. Therefore, 
\begin{align*}
\tilde V^*_I - \tilde V^*  = \tilde V^*_I - \tilde V^* &= \Eb{\omega^2(W; \delta^*, \tilde\eta^*) \mid R = 1} \\
        &= \expect\bigg[\frac{\lambda^{*2}(X, 1)}{\et(X)}\frac{\prns{\mathbb{P}\prns{T=1}}^2}{\prns{\mathbb{P}\prns{T=1 \mid R = 1}}^2}\var\{\tmut(1, X, S(1))\mid X\} \\
        &\qquad\qquad + \frac{\lambda^{*2}(X, 0)}{1 - \et(X)}\frac{\prns{\mathbb{P}\prns{T=0}}^2}{\prns{\mathbb{P}\prns{T=0 \mid R = 1}}^2}\var\{\tmut(0, X, S(0))\mid X\} \mid R = 1\bigg].
\end{align*}
\end{proof}

\begin{proof}[Proof for \Cref{corollary: effect-unlabelled-vanishing}]
    The proof is identical to the proof for \Cref{thm: vanishing-if}, noting that the distribution of $(X, T, S)$ on the unlabelled population $R = 0$ is identical to its distribution on the combined population. 
\end{proof}

\begin{proof}[Proof for \Cref{corollary: effect-unlabelled-vanishing-rescale}]
    Again, we only need to prove that $\tilde{V}^* =  \expect[\tilde{\psi}^2(W; \delta^*, \tilde{\eta}^*) \mid R = 1]$ for $\tilde{\psi}$ in \Cref{eq: vanishing-if} is equal to the following quantity: 
    \begin{align*}
        &\pr(R = 1)\Eb{\prns{\frac{R}{\Prb{R=0}}\frac{\Prb{R=0\mid S, X, T}}{\Prb{R=1\mid S, X, T}}{
\frac{T-\et(0,X)}{\et(0, X)(1-\et(0, X))}\prns{Y - \tmut(T, X, S)}}}^2} \\
    =& \Eb{\frac{T^2\pr^2(R = 1)}{e^{*2}(0, X)\rt(1, X, S)}\prns{Y - \tmut(1, X, S)}^2 + \frac{(1-T)^2\pr^2(R = 1)}{e^{*2}(0, X)\rt(0, X, S)}\prns{Y - \tmut(0, X, S)}^2 \mid R = 1}
    \end{align*}
    
We can again apply Bayes' rule to $\pr(R = 1)/\rt(t, X, S)$, and show that the quantity above is equal to   
\begin{align*}
    &\expect\bigg[\frac{T^2 \lambda^{*2}(S, X, T)\mathbb{P}^2\prns{T = 1}}{e^{*2}(0, X)\mathbb{P}^2\prns{T = 1 \mid R = 1}}(Y - \tmut(1, X, S))^2 \\
    &\qquad\qquad + \frac{(1-T)^2\lambda^{*2}(S, X, T)\mathbb{P}^2\prns{T=0}}{(1-e^{*}(0, X))^2\mathbb{P}^2\prns{T=0 \mid R = 1}}(Y - \tmut(0, X, S))^2 \mid R = 1\bigg].
\end{align*}
In the limit we have $\et(0, X) = \pr(T = 1 \mid R = 0, X) = \pr(T = 1 \mid X) = \et(X)$. So this is identical to $\tilde V^* = \expect[\tilde{\psi}^2(W; \delta^*, \tilde{\eta}^*) \mid R = 1]$. 
\end{proof}

\subsection{Proofs for \Cref{sec: att}}
\begin{proof}[Proof for \Cref{lemma: identification-att}]
{
We note that 
\begin{align*}
\Eb{\Eb{Y \mid T = 1, R = 1, X, S} \mid T = 1}
    &= \Eb{\Eb{Y(1) \mid T = 1, X, S(1)} \mid T = 1} \\
    &= \Eb{Y(1) \mid T = 1}. 
\end{align*}
Moreover, 
\begin{align*}
\Eb{\Eb{Y \mid T = 0, R = 1, X, S}\mid X, T = 0}
&= \Eb{\Eb{Y(0) \mid T = 0, R = 1, X, S(0)}\mid X, T = 0} \\
&= \Eb{\Eb{Y(0) \mid T = 0, X, S(0)}\mid X, T = 0} \\
&= \Eb{\Eb{Y(0) \mid T = 0, X, S(0)}\mid X, T = 1} \\
&= \Eb{\Eb{Y(0) \mid T = 1, X, S(0)}\mid X, T = 1} \\
&=\Eb{Y(0)  \mid X, T = 1}.
\end{align*}
Thus 
\begin{align*}
\Eb{\Eb{\Eb{Y \mid T = 0, R = 1, X, S}\mid X, T = 0} \mid T = 1} = \Eb{Y(0) \mid T = 1}.
\end{align*}
The equations above imply the conclusion in \Cref{eq: identif-1}. 
}
\end{proof}

\begin{proof}[Proof for \Cref{thm: efficient-if-att}]
{
Again, we consider parametric submodels indexed by parameters $\gamma$ as in the proof for \Cref{thm: efficient-if}.
} 

{
We first note that 
\begin{align*}
\frac{\partial}{\partial\gamma}\expect_\gamma\bracks{\expect_{\gamma}\bracks{Y \mid T = 1, R = 1, X, S}\mid T = 1}\vert_{\gamma = 0} 
    =& \frac{\partial}{\partial\gamma}\expect_\gamma\bracks{\expect\bracks{Y \mid T = 1, R = 1, X, S}\mid T = 1}\vert_{\gamma = 0} \\
    +& \frac{\partial}{\partial\gamma}\expect\bracks{\expect_{\gamma}\bracks{Y \mid T = 1, R = 1, X, S}\mid T = 1}\vert_{\gamma = 0}. 
\end{align*}
Here 
\begin{align*}
&\frac{\partial}{\partial\gamma}\expect_\gamma\bracks{\expect\bracks{Y \mid T = 1, R = 1, X, S}\mid T = 1}\vert_{\gamma = 0} \\
    =& \Eb{\prns{\tmut(1, X, S) - \Eb{\tmut(1, X, S)\mid T = 1}}\times\score(T, S, X) \mid T = 1} \\
    =& \Eb{\frac{T}{\pr\prns{T = 1}}\prns{\tmut(1, X, S) - \Eb{\tmut(1, X, S)\mid T = 1}}\times\score(Y, R, T, S, X)}
\end{align*}
and 
\begin{align*}
&\frac{\partial}{\partial\gamma}\expect\bracks{\expect_{\gamma}\bracks{Y \mid T = 1, R = 1, X, S}\mid T = 1}\vert_{\gamma = 0} \\
=& {\Eb{\Eb{Y \times \score(Y \mid T, R, S, X) \mid T = 1, R = 1, X, S} \mid T = 1}}  \\
=& {\Eb{\Eb{\prns{Y - \tmut(1, X, S)} \times \score(Y \mid T, R, S, X) \mid T = 1, R = 1, X, S} \mid T = 1}} \\
=& {\Eb{\Eb{\frac{R}{\rt(1, X, S)}\prns{Y - \tmut(1, X, S)} \times \score(Y, T, R, S, X) \mid T = 1, X, S} \mid T = 1}} \\
=& {{\Eb{\frac{R}{\rt(1, X, S)}\prns{Y - \tmut(1, X, S)} \times \score(Y, T, R, S, X) \mid T = 1}}} \\
=& \Eb{\frac{TR}{\pr\prns{T=1}\rt(1, X, S)}\prns{Y - \tmut(1, X, S)} \times \score(Y, T, R, S, X)}.
\end{align*}
This means that the part of the influence function corresponding to $\Eb{Y \mid T = 1}$ is 
\begin{align*}
\frac{T}{\pr\prns{T = 1}}\braces{\prns{\tmut(1, X, S) - \Eb{\tmut(1, X, S)\mid T = 1}} + \frac{R}{\rt(1, X, S)}\prns{Y - \tmut(1, X, S)}}. 
\end{align*}
}

{
Now we further derive 
\begin{align*}
 &\frac{\partial}{\partial\gamma}\expect_{\gamma}\bracks{\expect_{\gamma}\bracks{\expect_{\gamma}\bracks{Y \mid T = 0, R = 1, X, S}\mid X, T = 0} \mid T = 1}  \\
=& \frac{\partial}{\partial\gamma}\expect_{\gamma}\big[\mut(0, X) \mid T = 1\big]\vert_{\gamma = 0} + \expect\bigg[\frac{\partial}{\partial\gamma}\expect_{\gamma}\big[\tmut(0, X, S) \mid X, T = 0\big] \vert_{\gamma = 0} \mid T = 1\bigg] \nonumber \\
+& \expect\bigg[\expect\big[\frac{\partial}{\partial \gamma}\expect_{\gamma}[Y \mid T = 0, R = 1, X, S]\vert_{\gamma = 0} \mid X, T = 0\big] \mid T = 1\bigg].
 \end{align*} 
 First, 
 \begin{align*}
 \frac{\partial}{\partial\gamma}\expect_{\gamma}\big[\mut(0, X) \mid T = 1\big]\vert_{\gamma = 0} = \Eb{\frac{T}{\pr\prns{T = 1}} \prns{\mut(0, X) - \Eb{\mut(0, X) \mid T = 1}}\times \score\prns{Y, R, S, T, X}}.
 \end{align*}
 Second, 
 \begin{align*}
 &\expect\bigg[\frac{\partial}{\partial\gamma}\expect_{\gamma}\big[\tmut(0, X, S) \mid X, T = 0\big] \vert_{\gamma = 0} \mid T = 1\bigg] \\
    =& \expect\bigg[\expect\big[\tmut(0, X, S) \times \score(S \mid X, T)\mid X, T = 0\big] \mid T = 1\bigg] \\
    =& \expect\bigg[\expect\big[\prns{\tmut(0, X, S) - \mut(0, X)} \times \score(S \mid X, T)\mid X, T = 0\big] \mid T = 1\bigg] \\
    =& \expect\bracks{\frac{\et(X)}{\pr\prns{T = 1}}\frac{1-T}{1-\et(X)}\prns{\tmut(0, X, S) - \mut(0, X)} \times \score(Y, R, S, T, X)}. 
 \end{align*}
 Third, 
 \begin{align*}
 &\expect\bigg[\expect\big[\frac{\partial}{\partial \gamma}\expect_{\gamma}[Y \mid T = 0, R = 1, X, S]\vert_{\gamma = 0} \mid X, T = 0\big] \mid T = 1\bigg] \\
 =& \expect\bigg[\expect\big[\expect[(Y - \tmut(0, X, S)) \times \score(Y \mid R, S, T, X) \mid T = 0, R = 1, X, S] \mid X, T = 0\big] \mid T = 1\bigg] \\
 =& \expect\bigg[\expect\big[\expect[\frac{R}{\rt(0, X, S)}(Y - \tmut(0, X, S)) \times \score(Y, R, S, T, X) \mid T = 0, X, S] \mid X, T = 0\big] \mid T = 1\bigg] \\
 =& \expect\left[\expect\left[\frac{R}{\rt(0, X, S)}(Y - \tmut(0, X, S)) \times \score(Y, R, S, T, X) \mid X, T = 0 \right] \mid T = 1\right] \\
 =& \expect\left[\frac{\et(X)}{\pr\prns{T = 1}}\frac{1-T}{1-\et(X)}\frac{R}{\rt(0, X, S)}(Y - \tmut(0, X, S)) \times \score(Y, R, S, T, X)\right].
 \end{align*}
Combining the equations above, we have that 
\begin{align*}
&\frac{\partial}{\partial\gamma}\expect_{\gamma}\bracks{Y \mid T = 1}\vert_{\gamma = 0} - \expect\bigg[\expect\big[\frac{\partial}{\partial \gamma}\expect_{\gamma}[Y \mid T = 0, R = 1, X, S]\vert_{\gamma = 0} \mid X, T = 0\big] \mid T = 1\bigg] \\
=& \expect\left[\psi_{\op{ATT}}(W; \delta^*_{\op{ATT}}, \eta^*) \times \score(Y, R, S, T, X)\right].
\end{align*}
Moreover, 
\begin{align*}
\psi_{\op{ATT}}(W; \delta^*_{\op{ATT}}, \eta^*) 
    &=  \frac{\et(X)}{\pr\prns{T=1}}\prns{\mut(1, X) - \mut(0, X) - \delta^*_{\op{ATT}}} \\
    &+ \frac{T - \et(X)}{\pr\prns{T = 1}}\prns{\mut(1, X) - \mut(0, X) - \delta^*_{\op{ATT}}} \\
    &+ \frac{T}{\pr\prns{T = 1}}\prns{\tmut(1, X, S) - \mut(1, X)} \\
    &- \frac{\et(X)}{\pr\prns{T=1}}\frac{1 - T}{1 - \et(X)}(\tmut(0, X, S) - \mut(0 , X)) \\
    &+\frac{TR}{\pr\prns{T=1}\rt(1, X, S)}(Y - \tmut(1, X, S)) \\
    &- \frac{\et(X)}{\pr\prns{T=1}}\frac{(1 - T)R}{(1 - \et(X))\rt(0, X, S)}(Y - \tmut(0, X, S)). 
\end{align*}
It is easy to show that the six terms  in the right hand side above belong to $\Lambda_{X}$, $\Lambda_{T \mid X}$,  $\Lambda_{S \mid T, X}$, $\Lambda_{S \mid T, X}$, $\Lambda_{Y \mid R, T, S, X}$ and $\Lambda_{Y \mid R, T, S, X}$ in the proof for \Cref{thm: efficient-if}, respectively. 
Therefore, $\psi_{\op{ATT}}(W; \delta^*_{\op{ATT}}, \eta^*)$ belongs to the tangent space and is therefore the efficient influence function. 
From this analysis, we can also see that $\psi_{\op{ATT}}(W; \delta^*_{\op{ATT}}, \eta^*)$ is orthogonal to $\Lambda_{R \mid S, T, X}$, so the efficiency bound is invariant to any restriction on the conditional distribution of $R \mid S, T, X$. 
}
\end{proof}

\begin{proof}[Proof for \Cref{thm: efficiency-comparison-att}]
{
In setting I, 
\begin{align*}
\delta^*_{\op{ATT}} = \Eb{\Eb{Y \mid T = 1, R = 1, X}\mid T = 1} - \Eb{\Eb{Y \mid X, T = 0, R = 1} \mid T = 1}
\end{align*}
By standard path differentiability analyses, we can easily show that 
\begin{align*}
&\frac{\partial}{\partial\gamma}\expect_\gamma\bracks{\expect_\gamma\bracks{Y \mid T = 1, R = 1, X}\mid T = 1}\vert_{\gamma=0} \\
=& \Eb{\prns{\frac{T}{\pr\prns{T=1}}\prns{\mut(1, X) - \Eb{\mut(1, X)\mid T = 1}} + \frac{T}{\pr\prns{T=1}}\frac{R}{\rt(1, X)}\prns{Y - \mut(1, X)}} \times \score\prns{Y, R, T, X}}
\end{align*}
Similarly, we have 
\begin{align*}
&\frac{\partial}{\partial\gamma}\expect_\gamma\bracks{\expect_\gamma\bracks{Y \mid T = 0, R = 1, X}\mid T = 1}\vert_{\gamma=0} \\
=& \expect\bigg[\prns{\frac{T}{\pr\prns{T=1}}\prns{\mut(0, X) - \Eb{\mut(0, X)\mid T = 1}} + \frac{\et(X)}{\pr\prns{T=1}}\frac{1-T}{1-\et(X)}\frac{R}{\rt(0, X)}\prns{Y - \mut(0, X)}} \\
&\qquad\qquad\qquad\qquad\qquad\qquad\qquad\qquad\qquad \times \score\prns{Y, R, T, X}\bigg]
\end{align*}
These give the form of the efficient influence function $\psi_{\op{ATT}, \op{I}}$ for setting I. 
According to the proof for \cref{thm: efficiency-comparison}, the efficient influence function $\psi_{\op{ATT}, \op{II}}$ in setting II is identical to $\psi_{\op{ATT}, \op{I}}$.
}

{
According to the proof for \Cref{thm: efficient-if-att}, the efficient influence function in \Cref{thm: efficient-if-att} is invariant to restrictions on the conditional distribution of $R \mid S, T, X$ so the additional \Cref{assump: MAR2} does not change the efficient influence function in setting III, the setting also considered in \Cref{thm: efficient-if-att}. Thus the efficient influence function  in setting III follows from the efficient influence function in \Cref{thm: efficient-if-att} with the additional fact that $\rt(0, X, S) = \rt(0, X)$ under \Cref{assump: MAR2}.  
}

{
The efficient influence function in setting IV follows from \cite{hahn1998role}. 
}
\end{proof}

\begin{proof}[Proof for \Cref{corollary:att-comparison}]
Note that 
\begin{align*}
\psi_{{\op{ATT}}, \op{I}}(W; \delta^*_{\op{ATT}}, \eta^*) &= \psi_{\op{ATT},\op{II}}(W; \delta^*_{\op{ATT}}, \eta^*) = \psi_{\op{ATT},\op{III}}(W; \delta^*_{\op{ATT}}, \eta^*) + \omega_{\op{ATT}, \op{I-III}}(W; \delta^*_{\op{ATT}}, \eta^*), 
\end{align*}
where 
\begin{align*}
\omega_{\op{ATT}, \op{I-III}}(W; \delta^*_{\op{ATT}}, \eta^*) 
  &= \frac{T}{\Prb{T=1}}\prns{\frac{R}{\rt(1, X)} - 1}\prns{\tmut(1, X, S) - \mut(1, X)}
 \\
  &+ \frac{\et(X)}{\Prb{T=1}}\frac{1-T}{1-\et(X)}\prns{1 - \frac{R}{\rt(0, X)}}\prns{\tmut(0, X, S) - \mut(0, X)}.
 \end{align*}
 We can easily show that $\omega_{\op{ATT}, \op{I-III}}(W; \delta^*_{\op{ATT}}, \eta^*)$ and $\psi_{\op{ATT},\op{III}}(W; \delta^*_{\op{ATT}}, \eta^*)$ are uncorrelated based on the facts that $\Eb{\omega_{\op{ATT}, \op{I-III}}(W; \delta^*_{\op{ATT}}, \eta^*)  \mid T, X, S} = 0$ and  that
 \begin{align*}
  \Eb{\frac{TR}{\pr\prns{T = 1}\rt(1, X)}\prns{Y - \tmut(1, X, S)} - \frac{\et(X)}{\pr\prns{T=1}}\frac{(1 - T)R}{(1 - \et(X))\rt(0, X)}(Y - \tmut(0, X, S)) \mid R, T, X, S} = 0.
  \end{align*} 
  It then follows that 
  \begin{align*}
  &V_{\op{ATT}, \op{I}}^* - V^*_{\op{ATT}, \op{III}} 
    = V_{\op{ATT}, \op{II}}^* - V^*_{\op{ATT}, \op{III}} = \Eb{\omega^2_{\op{ATT}, \op{I-III}}(W; \delta^*_{\op{ATT}}, \eta^*) } \\
    = & \Eb{\frac{1-\rt(1,X)}{\Prb{T=1}\rt(1, X)}\var[\tmut(1, X, S(1))\mid X]+ \frac{\et(X)(1-\rt(0, X))}{\Prb{T=1}\prns{1-\et(X)}\rt(0, X)}\var[\tmut(0, X, S(0))\mid X] \mid T = 1}. 
  \end{align*}

  Moreover, 
  \begin{align*}
  \psi_{{\op{ATT}}, \op{III}}(W; \delta^*_{\op{ATT}}, \eta^*) = \psi_{\op{ATT},\op{IV}}(W; \delta^*_{\op{ATT}}, \eta^*) + \omega_{\op{ATT}, \op{III-IV}}(W; \delta^*_{\op{ATT}}, \eta^*),
  \end{align*}
  where 
  \begin{align*}
  \omega_{\op{ATT}, \op{III-IV}}(W; \delta^*_{\op{ATT}}, \eta^*) 
  &= \frac{T}{\Prb{T = 1}}\prns{\frac{R}{\rt(1, X)} - 1}\prns{Y - \tmut(1, X, S)} \\
  &- \frac{\et(X)}{\Prb{T = 1}}\prns{\frac{R}{\rt(0, X)} - 1}\prns{Y - \tmut(0, X, S)} 
  \end{align*}
  Therefore, we hve 
  \begin{align*}
  V_{\op{ATT}, \op{III}}^* - V^*_{\op{ATT}, \op{IV}} = \Eb{\omega^2_{\op{ATT}, \op{III-IV}}(W; \delta^*_{\op{ATT}}, \eta^*) } + 2\op{Cov}\prns{\psi_{\op{ATT},\op{IV}}(W; \delta^*_{\op{ATT}}, \eta^*), \omega_{\op{ATT}, \op{III-IV}}(W; \delta^*_{\op{ATT}}, \eta^*) }.
  \end{align*}
  We can easily show that 
  \begin{align*}
  \Eb{\omega^2_{\op{ATT}, \op{III-IV}}(W; \delta^*_{\op{ATT}}, \eta^*) } &= \expect\bigg[\frac{T}{(\Prb{T=1})^2}\prns{\frac{R}{\rt(1, X)} - 1}^2\prns{Y - \tmut(1, X,S)}^2\bigg] \\
    &+ \expect\bigg[\frac{(\et(X))^2}{(\Prb{T=1})^2}\frac{1-T}{(1-\et(X))^2}\prns{\frac{R}{\rt(0, X)} - 1}^2\prns{Y - \tmut(0, X,S)}^2\bigg],
  \end{align*}
  and meanwhile 
  \begin{align*}
  &2\op{Cov}\prns{\psi_{\op{ATT},\op{IV}}(W; \delta^*_{\op{ATT}}, \eta^*), \omega_{\op{ATT}, \op{III-IV}}(W; \delta^*_{\op{ATT}}, \eta^*) } \\
  =& 2\expect\bigg[\frac{T}{(\Prb{T=1})^2}\prns{\frac{R}{\rt(1, X)} - 1}\prns{Y - \tmut(1, X,S)}^2\bigg] \\
  +& 2\expect\bigg[\frac{(\et(X))^2}{(\Prb{T=1})^2}\frac{1-T}{(1-\et(X))^2}\prns{\frac{R}{\rt(0, X)} - 1}\prns{Y - \tmut(0, X,S)}^2\bigg].
  \end{align*}
  We can combine them and get 
  \begin{align*}
  &V_{\op{ATT}, \op{III}}^* - V^*_{\op{ATT}, \op{IV}}= \expect\bigg[\frac{T}{(\Prb{T=1})^2}\prns{\frac{R}{(\rt(1, X))^2} - 1}\prns{Y - \tmut(1, X,S)}^2\bigg] \\
    &+ \expect\bigg[\frac{(\et(X))^2}{(\Prb{T=1})^2}\frac{1-T}{(1-\et(X))^2}\prns{\frac{R}{(\rt(0, X))^2} - 1}\prns{Y - \tmut(0, X,S)}^2\bigg] \\
    &= \frac{1}{\Prb{T=1}}\Eb{\frac{1-\rt(1, X)}{\rt(1, X)}\var[Y(1) \mid X, S(1)] + \frac{\et(X)\prns{1-\rt(0, X)}}{\prns{1-\et(X)}\rt(0, X)}\var[Y(0) \mid X, S(0)] \mid T = 1}.
  \end{align*}
\end{proof}

\section{Additional Numerical Results}\label{sec: numeric-more}
In this section we provide additional results for the experiment in \cref{sec: empirics}.

\subsection{Real Data Experiment}
 In \cref{sec: numerics-real} \cref{fig: error-river-rf}  we presented the results for Riverside county with nuisances estimated by random forests. Here, in \cref{fig: error-river-extra} we present the results for Riverside county with other nuisance estimators: gradient boosting in \cref{fig: error-river-gb} and lasso in \cref{fig: error-river-lasso}. We also present results for Los Angeles county in \cref{fig: error-la-rf,fig: error-la-gb,fig: error-la-lasso} and for Los Angeles county in \cref{fig: error-sd-rf,fig: error-sd-gb,fig: error-sd-lasso}. For both counties, \cref{fig: error-lasd-rf} presents the results with nuisances fitted using random forests, \cref{fig: error-la-gb} with gradient boosting, and \cref{fig: error-la-lasso} with lasso.

\begin{figure}[h]
\centering 
\begin{subfigure}[b]{\textwidth}
         \centering
         \includegraphics[width=0.9\textwidth]{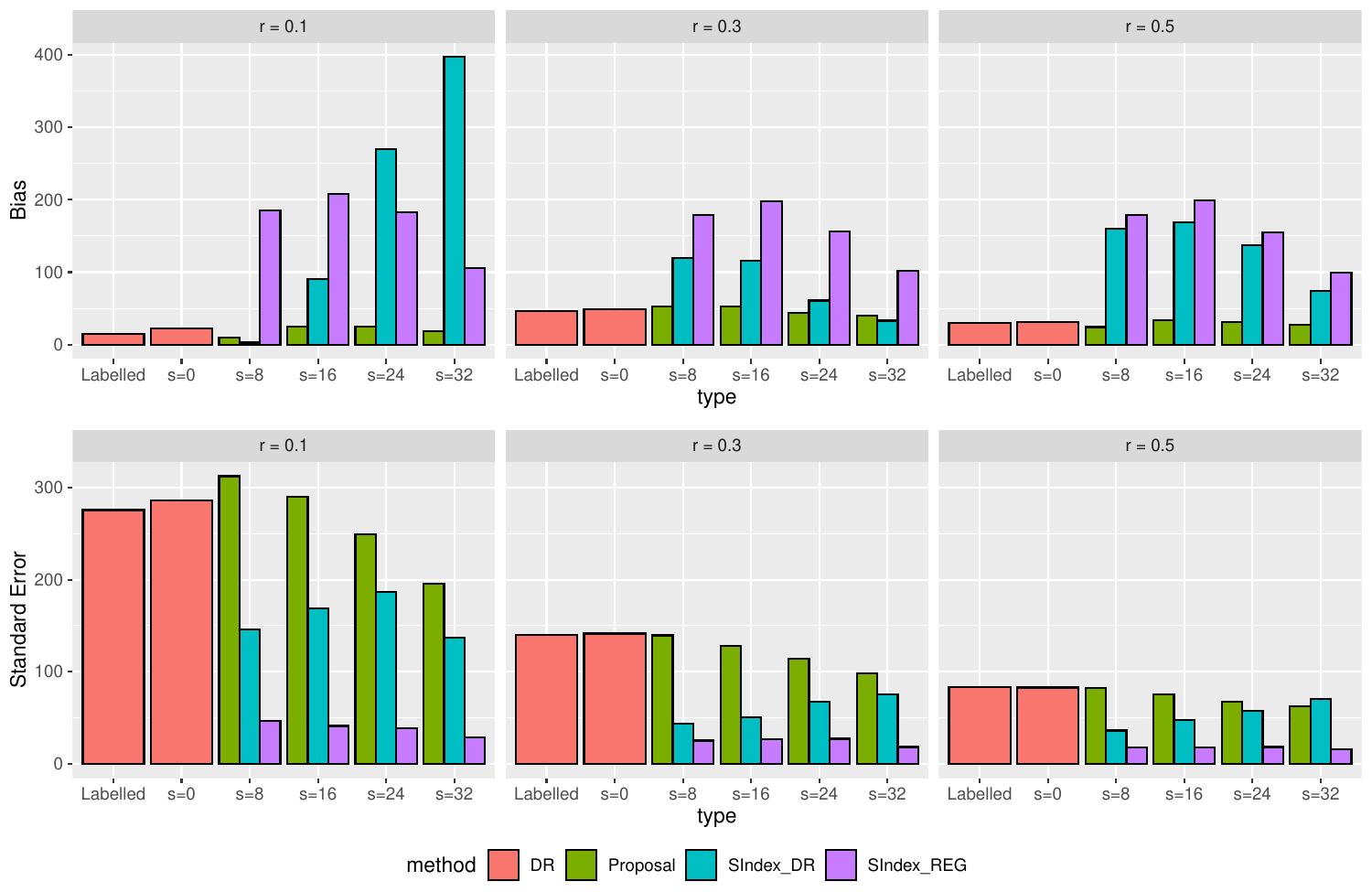}
         \caption{Gradient boosting nuisance estimation}\label{fig: error-river-gb}
 \end{subfigure}
 \begin{subfigure}[b]{\textwidth}
         \centering
         \includegraphics[width=0.9\textwidth]{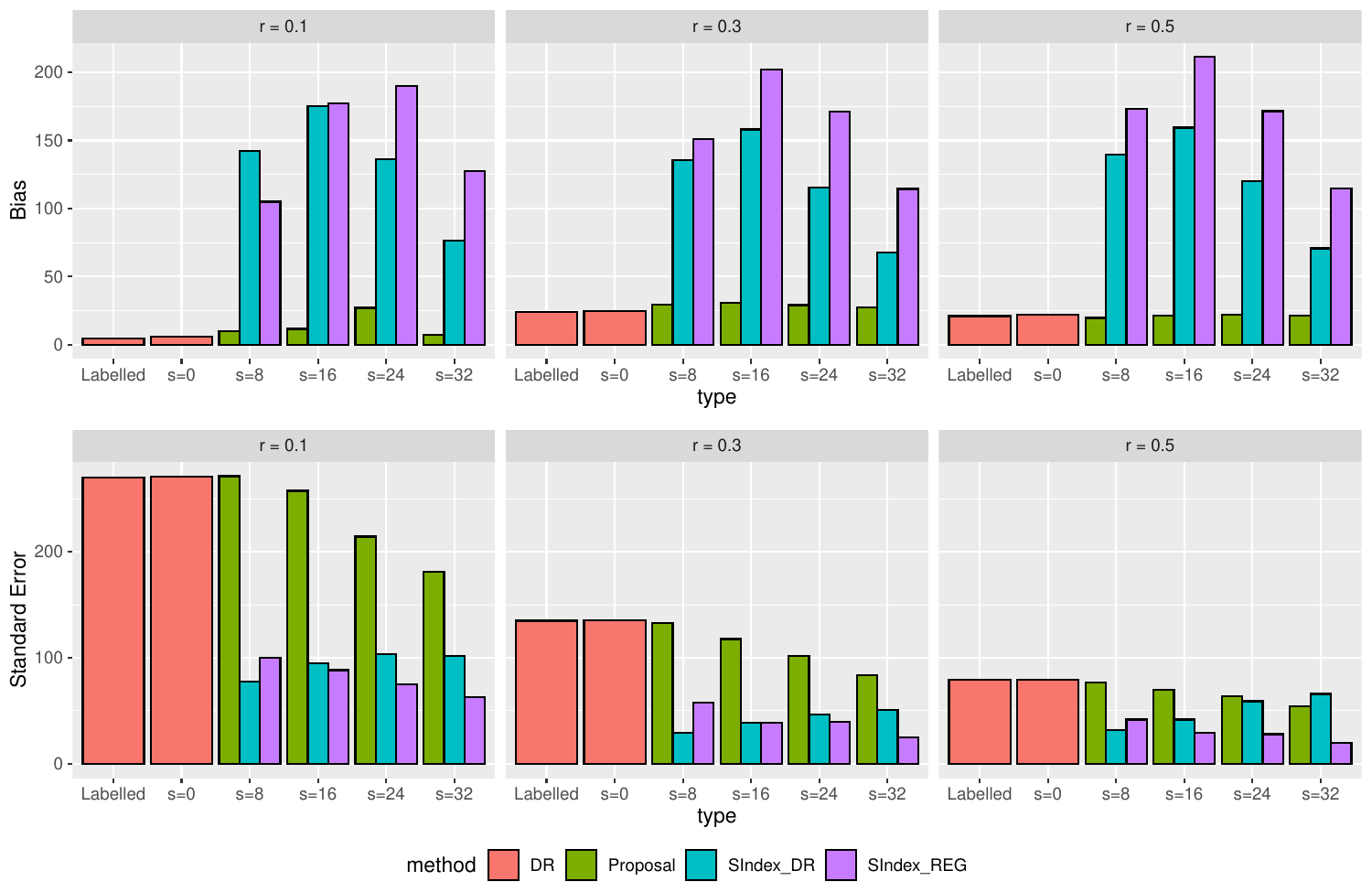}
         \caption{LASSO nuisance estimation}\label{fig: error-river-lasso}
 \end{subfigure}
\caption{Bias and standard error of different estimators over $120$ repetitions of experiments based on Riverside data.}\label{fig: error-river-extra}
\end{figure}

\begin{figure}[h]
\centering 
\begin{subfigure}[b]{\textwidth}
         \centering
         \includegraphics[width=0.9\textwidth]{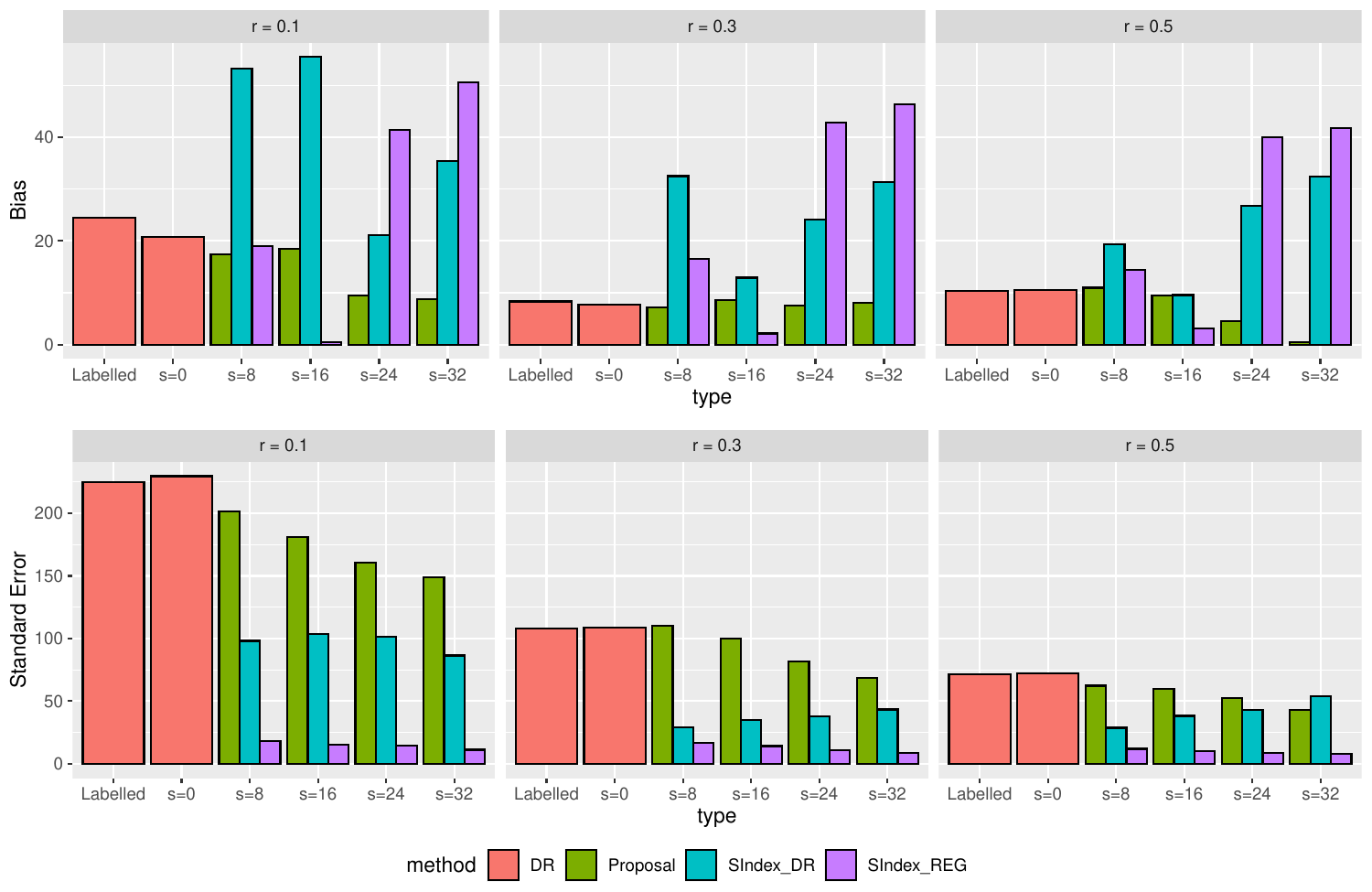}
         \caption{Los Angeles data.}\label{fig: error-la-rf}
 \end{subfigure}
 \begin{subfigure}[b]{\textwidth}
         \centering
         \includegraphics[width=0.9\textwidth]{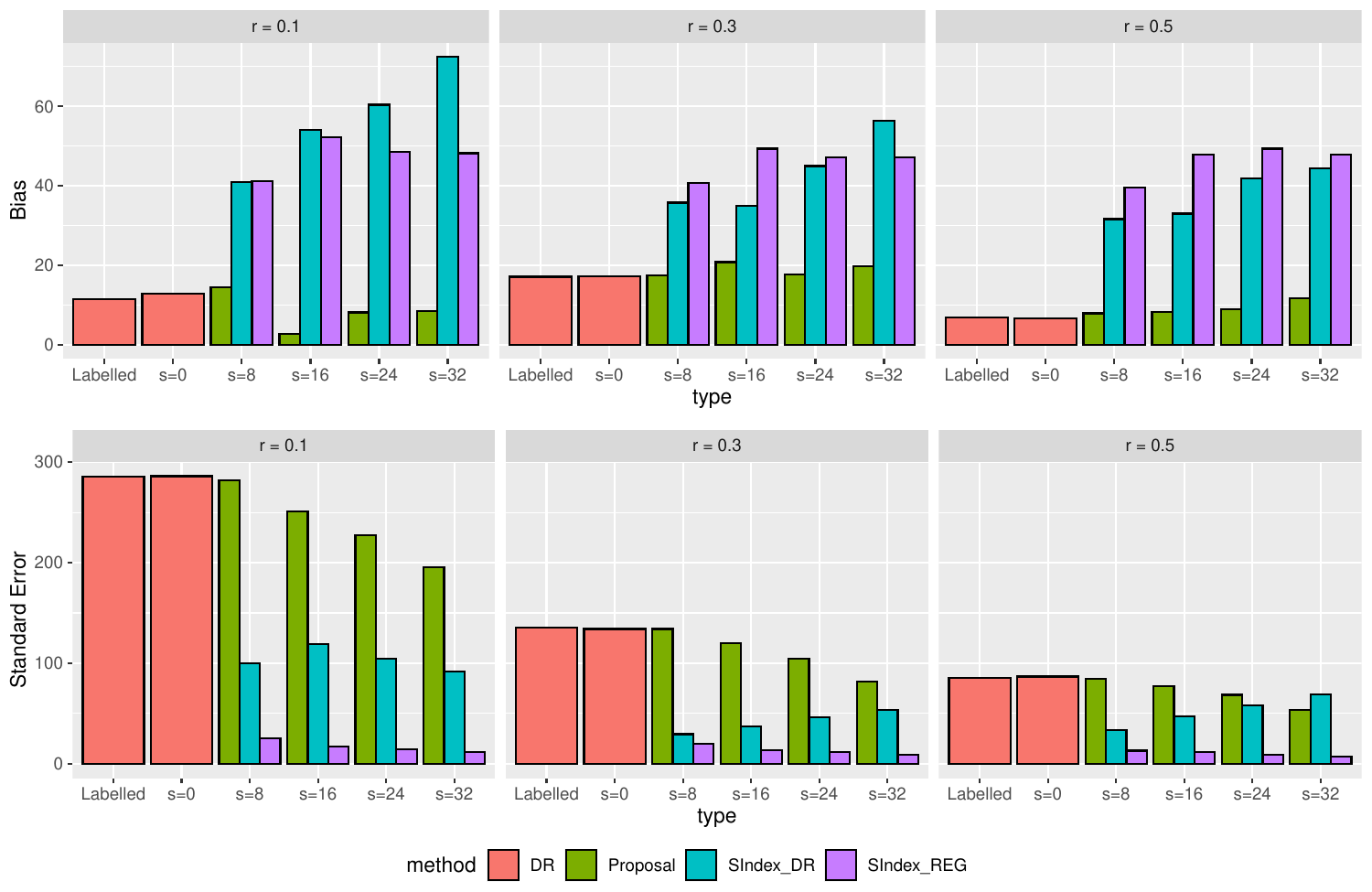}
         \caption{San Diego data.}\label{fig: error-sd-rf}
 \end{subfigure}
\caption{Bias and standard error of different estimators over $120$ repetitions of experiments based on Los Angeles data and San Diego data respectively. Nuisances are estimated by random forests.}\label{fig: error-lasd-rf}
\end{figure}

\begin{figure}[h]
\centering 
\begin{subfigure}[b]{\textwidth}
         \centering
         \includegraphics[width=0.9\textwidth]{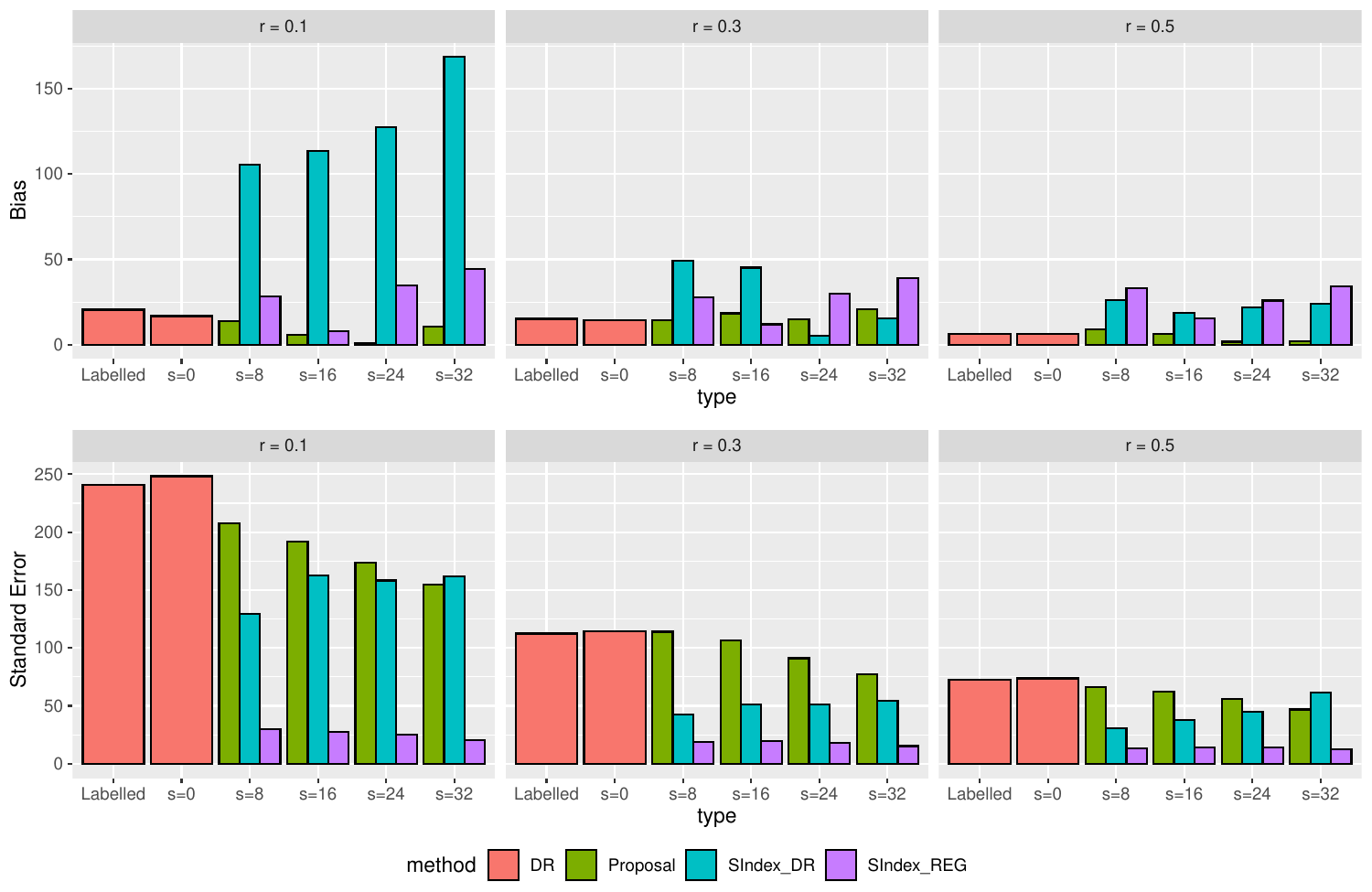}
         \caption{Los Angeles data.}\label{fig: error-la-gb}
 \end{subfigure}
 \begin{subfigure}[b]{\textwidth}
         \centering
         \includegraphics[width=0.9\textwidth]{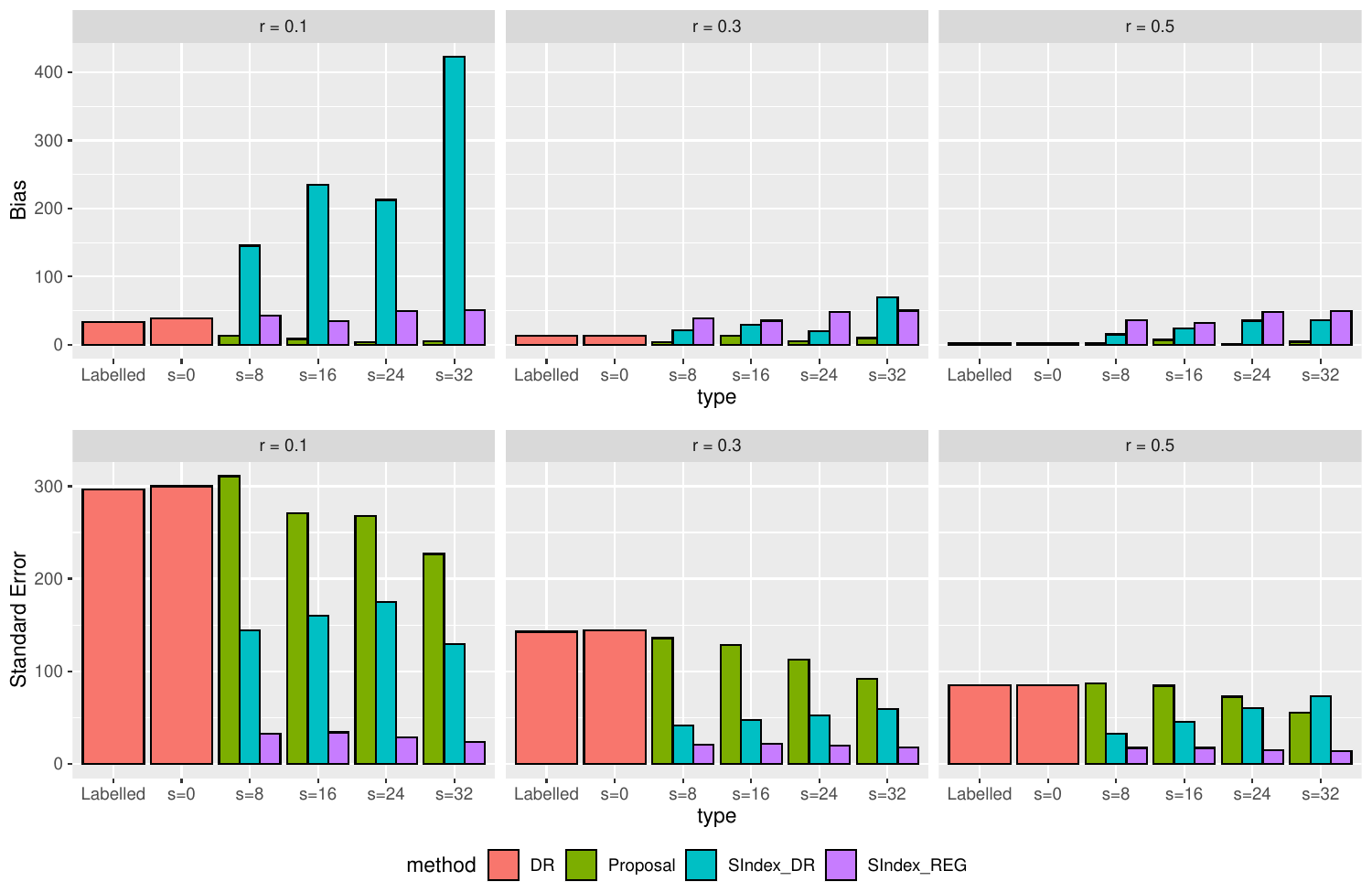}
         \caption{San Diego data.}\label{fig: error-sd-gb}
 \end{subfigure}
\caption{Bias and standard error of different estimators over $120$ repetitions of experiments based on Los Angeles data and San Diego data respectively. Nuisances are estimated by gradient boosting.}\label{fig: error-lasd-gb}
\end{figure}

\begin{figure}[h]
\centering 
\begin{subfigure}[b]{\textwidth}
         \centering
         \includegraphics[width=0.9\textwidth]{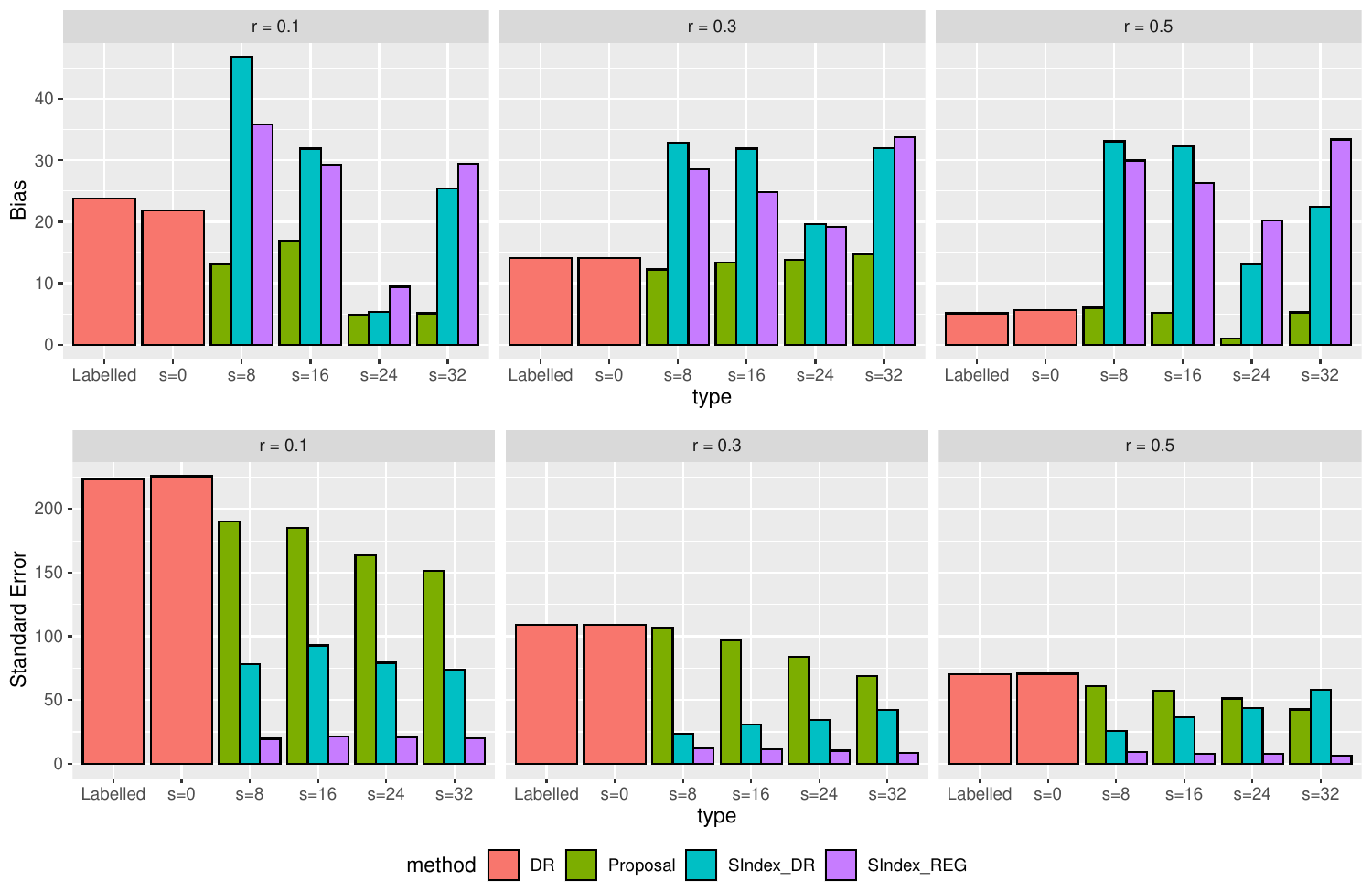}
         \caption{Los Angeles data.}\label{fig: error-la-lasso}
 \end{subfigure}
 \begin{subfigure}[b]{\textwidth}
         \centering
         \includegraphics[width=0.9\textwidth]{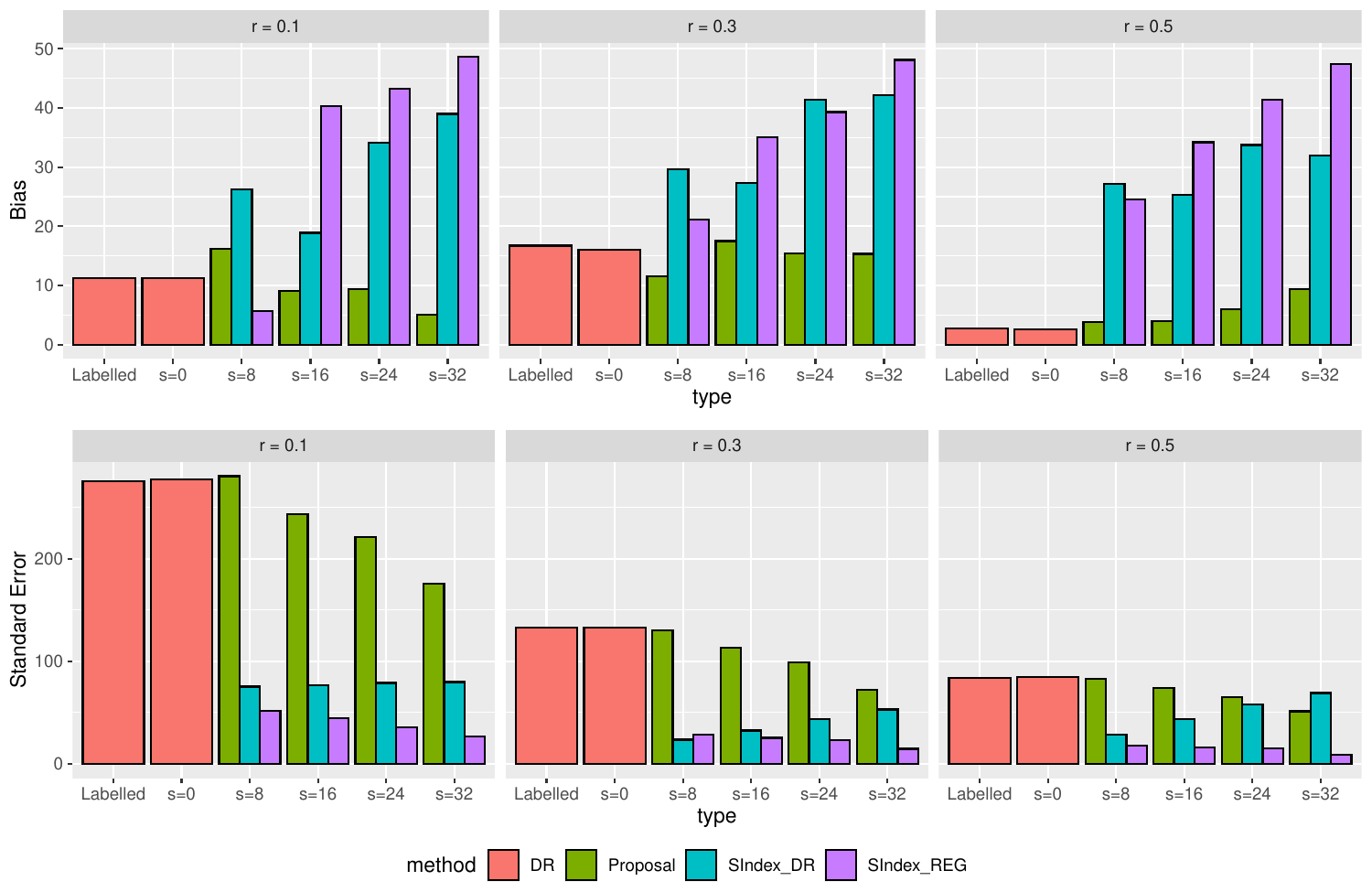}
         \caption{San Diego data.}\label{fig: error-sd-lasso}
 \end{subfigure}
\caption{Bias and standard error of different estimators over $120$ repetitions of experiments based on Los Angeles data and San Diego data respectively. Nuisances are estimated by LASSO.}\label{fig: error-lasd-lasso}
\end{figure}

\subsection{Simulation Experiment}

{In \Cref{sec: simulation} \Cref{table: ate-0.25}, we show the results of ATE estimation using the estimator $\hat\delta$ in \Cref{def: estimator} when the proportion of labeled data is $\pi_N = N^{-1/4}$. Here, in \Cref{table: ate-0.33} and \Cref{table: ate-0.5}, we also show the estimation results for $\pi_N = N^{-1/3}$ and $\pi_N = 2.5 N^{-1/2}$ respectively. These two settings correspond to smaller labeled data, so all methods tend to have worse performance (higher standard deviation, wider confidence intervals and lower confidence interval coverage). But  the  qualitative conclusions  in these two setting remain the same as those in \Cref{sec: simulation}}. 

\begin{table}
\centering 
\begin{tabular}{cc|cccccc}
\toprule 
\multirow{2}{*}{Measure}  & \multirow{2}{*}{Nuisance Est.} & \multicolumn{6}{c}{ $N$}  \\
& & $2000$ & $4000$ & $8000$ & $16000$ & $32000$ & $64000$ \\
\midrule 
\multirow{3}{*}{Bias} & Oracle & 0.0028 &  0.0123 & 0.0035 & 0.0069 & 0.0134 & 0.0025 \\
 & Parametric & 0.0237 &  0.0198 & 0.0081 & 0.0078 & 0.0143 & 0.0033 \\
& GB & 0.0335 & 0.0166 & 0.0195 & 0.0113 & 0.0203 & 0.0020 \\
\midrule 
\multirow{3}{*}{Standard  Deviation} &Oracle & 0.4148 & 0.3427 & 0.2672 & 0.2250 & 0.1825 & 0.1442 \\
 & Parametric & 0.5603 & 0.3937 & 0.2940 & 0.2395 & 0.1892 & 0.1480 \\
& GB &  1.0269 & 0.6488 & 0.4201 & 0.2946 & 0.2127 & 0.1618 \\
\midrule 
\multirow{3}{*}{CI Length} & Oracle & 1.5272 & 1.2711 & 1.0349 & 0.8465 & 0.6834 & 0.5540 \\
 & Parametric & 1.9737 & 1.4517 & 1.1290 & 0.8891 & 0.7044 & 0.5636 \\
& GB & 3.5349 & 2.3158 & 1.5425 & 1.0932 & 0.7828 & 0.5964 \\
\midrule 
\multirow{3}{*}{CI Coverage} & Oracle &  0.967 & 0.953 & 0.962 & 0.949 & 0.946 & 0.940 \\
& Parametric & 0.933 & 0.941 & 0.956 & 0.936 & 0.943 &  0.933 \\
& GB & 0.930 & 0.936 & 0.946 & 0.943 & 0.933 & 0.934 \\
\bottomrule
\end{tabular}
\caption{Results of ATE estimation  with true nuisance values (oracle) or nuisances estimated by parametric models (Parametric) and gradient boosting (GB) when $\pi_N = N^{-1/3}$.}\label{table: ate-0.33}
\end{table}

\begin{table}
\centering 
\begin{tabular}{cc|cccccc}
\toprule 
\multirow{2}{*}{Measure}  & \multirow{2}{*}{Nuisance Est.} & \multicolumn{6}{c}{ $N$}  \\
& & $2000$ & $4000$ & $8000$ & $16000$ & $32000$ & $64000$ \\
\midrule 
\multirow{3}{*}{Bias} & Oracle & 0.0203 & 0.0146 & 0.0095 & 0.0048 & 0.0052 & 0.0099 \\
 & Parametric & 0.0027 &  0.0185 & 0.0091 & 0.0082 & 0.0077 & 0.0108 \\
& GB & 0.0142 & 0.0120 & 0.0258 & 0.0098 & 0.0066 & 0.0041 \\
\midrule 
\multirow{3}{*}{Standard  Deviation} &Oracle & 0.4814 & 0.4097 & 0.3617 & 0.3080 & 0.2696 & 0.2246 \\
 & Parametric & 0.7673 & 0.5487 & 0.4310 & 0.3412 & 0.2895 & 0.2401 \\
& GB &  1.4401 & 1.0128 & 0.6461 & 0.5079 & 0.3825 & 0.2822 \\
\midrule 
\multirow{3}{*}{CI Length} & Oracle & 1.7612 & 1.5553 & 1.3684 & 1.1615 & 1.0141 & 0.8705 \\
 & Parametric & 2.5709 & 1.9734 & 1.6019 & 1.2799 & 1.0802 & 0.9072 \\
& GB & 4.7977 & 3.3593 & 2.4116 & 1.7775 & 1.3491 & 1.0334 \\
\midrule 
\multirow{3}{*}{CI Coverage} & Oracle &  0.965 & 0.961 & 0.967 & 0.950 & 0.943 & 0.959 \\
& Parametric & 0.916 & 0.939 & 0.936 & 0.946 & 0.940 &  0.939 \\
& GB & 0.922 & 0.931 & 0.941 & 0.947 & 0.935 & 0.931 \\
\bottomrule
\end{tabular}
\caption{Results of ATE estimation  with true nuisance values (oracle) or nuisances estimated by parametric models (Parametric) and gradient boosting (GB) when $\pi_N = 2.5 N^{-1/2}$.}\label{table: ate-0.5}
\end{table}

\end{document}